%\PassOptionsToPackage{round}{natbib}
\documentclass[11pt]{article}

% if you need to pass options to natbib, use, e.g.:
%     \PassOptionsToPackage{numbers, compress}{natbib}
% before loading neurips_2023

% ready for submission
%\PassOptionsToPackage{numbers}{natbib}
%\usepackage[preprint]{neurips_2023}

\usepackage{xspace}

\usepackage{amsmath}
\usepackage{graphicx}
\usepackage{framed}
\usepackage{bbm}
\usepackage{algorithm}
\usepackage{amsthm}
\usepackage[algo2e, vlined, ruled, linesnumbered]{algorithm2e}

\usepackage{natbib}

\usepackage{mathrsfs}
\usepackage{amssymb}
\usepackage{bm}
\usepackage[colorlinks=true, linkcolor=blue, citecolor=blue,urlcolor=black,linktocpage=true]{hyperref}
\usepackage{enumitem}
\setlist[itemize]{leftmargin=1cm}
\usepackage{makecell}
\usepackage{tabulary}
\usepackage{xcolor}
\usepackage{prettyref}
\usepackage{mathtools}
\usepackage{amsmath}
\usepackage{multirow}
\usepackage{nicefrac}
\usepackage{amssymb}% http://ctan.org/pkg/amssymb
\usepackage{pifont}% http://ctan.org/pkg/pifont

\definecolor{Green}{rgb}{0.13, 0.65, 0.3}
\usepackage{colortbl}
\usepackage[]{color-edits}
\addauthor{HL}{red}
\DeclareMathOperator*{\argmin}{argmin} % no space, limits underneath in displays
 % no space, limits underneath in displays

\newcommand{\eps}{\varepsilon}
\newcommand{\alglog}{\text{logdet-FTRL}\xspace}

\newcommand{\e}{\mathrm{e}\xspace}

\newcommand{\hatx}{\hat{x}}
\newcommand{\ind}{\mathbb{I}}

\newcommand{\Reg}{\text{\rm Reg}}

\newcommand{\hatSigma}{\hat{\Sigma}}
\newcommand{\hatH}{\hat{H}}

\newcommand{\calT}{\mathcal{T}}

\newcommand{\calA}{\mathcal{A}}

\newcommand{\calE}{\mathcal{E}}
\newcommand{\calZ}{\mathcal{Z}}
\newcommand{\calH}{\mathcal{H}}

\newcommand{\calP}{\mathcal{P}}
\newcommand{\calD}{\mathcal{D}}

\newcommand{\tildep}{\tilde{p}}

\newcommand{\tildeH}{\tilde{H}}

\newcommand{\otil}{\widetilde{\order}}

\newcommand{\poly}{\text{poly}}
\newcommand{\E}{\mathbb{E}}
\newcommand{\order}{\mathcal{O}}

\newcommand{\hatell}{\hat{\ell}}

\newcommand{\norm}[1]{\left\|#1\right\|}

\newcommand{\calF}{\mathcal{F}}

\newcommand{\inner}[1]{\langle#1\rangle}

\newcommand{\stable}{\text{STABILISE}\xspace}

\newcommand{\alg}{{\small\textsf{\textup{ALG}}}\xspace}

\newtheorem{theorem}{Theorem}
\newtheorem{assumption}{Assumption}

\newtheorem{lemma}[theorem]{Lemma}
\newtheorem{corollary}[theorem]{Corollary}
\newtheorem{proposition}{Proposition}
\newtheorem{protocol}{Protocol}
\newtheorem{definition}[theorem]{Definition}

\usepackage{tikz}

%\newenvironment{protocol}[1][htb]
%  {\renewcommand{\algorithmcfname}{Protocol}% Update algorithm name
%   \begin{algorithm2e}[#1]%
%   }{\end{algorithm2e}}

\newcommand{\nonl}{\renewcommand{\nl}{\let\nl}}

\usepackage{prettyref}
\newcommand{\pref}[1]{\prettyref{#1}}

\newcommand{\savehyperref}[2]{\texorpdfstring{\hyperref[#1]{#2}}{#2}}
\newrefformat{eq}{\savehyperref{#1}{Eq.~\textup{(\ref*{#1})}}}
\newrefformat{eqn}{\savehyperref{#1}{Equation~\ref*{#1}}}
\newrefformat{lem}{\savehyperref{#1}{Lemma~\ref*{#1}}}
\newrefformat{lemma}{\savehyperref{#1}{Lemma~\ref*{#1}}}
\newrefformat{def}{\savehyperref{#1}{Definition~\ref*{#1}}}
\newrefformat{line}{\savehyperref{#1}{Line~\ref*{#1}}}
\newrefformat{thm}{\savehyperref{#1}{Theorem~\ref*{#1}}}
\newrefformat{corr}{\savehyperref{#1}{Corollary~\ref*{#1}}}
\newrefformat{cor}{\savehyperref{#1}{Corollary~\ref*{#1}}}
\newrefformat{sec}{\savehyperref{#1}{Section~\ref*{#1}}}
\newrefformat{subsec}{\savehyperref{#1}{Section~\ref*{#1}}}
\newrefformat{app}{\savehyperref{#1}{Appendix~\ref*{#1}}}
\newrefformat{assum}{\savehyperref{#1}{Assumption~\ref*{#1}}}
\newrefformat{ex}{\savehyperref{#1}{Example~\ref*{#1}}}
\newrefformat{fig}{\savehyperref{#1}{Figure~\ref*{#1}}}
\newrefformat{alg}{\savehyperref{#1}{Algorithm~\ref*{#1}}}
\newrefformat{rem}{\savehyperref{#1}{Remark~\ref*{#1}}}
\newrefformat{conj}{\savehyperref{#1}{Conjecture~\ref*{#1}}}
\newrefformat{prop}{\savehyperref{#1}{Lemma~\ref*{#1}}}
\newrefformat{proto}{\savehyperref{#1}{Protocol~\ref*{#1}}}
\newrefformat{prob}{\savehyperref{#1}{Problem~\ref*{#1}}}
\newrefformat{claim}{\savehyperref{#1}{Claim~\ref*{#1}}}
\newrefformat{que}{\savehyperref{#1}{Question~\ref*{#1}}}
\newrefformat{op}{\savehyperref{#1}{Open Problem~\ref*{#1}}}
\newrefformat{fn}{\savehyperref{#1}{Footnote~\ref*{#1}}}
\newrefformat{tab}{\savehyperref{#1}{Table~\ref*{#1}}}
\newrefformat{fig}{\savehyperref{#1}{Figure~\ref*{#1}}}
\newrefformat{proc}{\savehyperref{#1}{Procedure~\ref*{#1}}}

\usepackage{caption}

%\DeclareCaptionFormat{myformat}{#3}
%\captionsetup[algorithm]{format=myformat}

\setlength\parindent{0pt}
\setlength{\parskip}{5pt}
\usepackage{times} 
\usepackage{amsmath}

\newcommand{\calC}{\mathcal{C}}

\newcommand{\CW}[1]{{\color{red}{\small{[CW: #1]}}}}

\newcommand{\emp}{\frac{1}{t-1}\sum_{\tau=1}^{t-1}}

\DeclareMathOperator{\Cov}{Cov}
\DeclareMathOperator{\tr}{Tr}
\DeclareMathOperator{\Var}{Var}
\DeclareMathOperator{\supp}{supp}
% \DeclareMathOperator{\vec}{vec}

% to compile a preprint version, e.g., for submission to arXiv, add add the
% [preprint] option:
%     \usepackage[preprint]{neurips_2023}

% to compile a camera-ready version, add the [final] option, e.g.:
%     \usepackage[final]{neurips_2023}

% to avoid loading the natbib package, add option nonatbib:
%    \usepackage[nonatbib]{neurips_2023}

\usepackage[utf8]{inputenc} % allow utf-8 input
\usepackage[T1]{fontenc}    % use 8-bit T1 fonts
\usepackage{hyperref}       % hyperlinks
\usepackage{url}            % simple URL typesetting
\usepackage{booktabs}       % professional-quality tables
\usepackage{amsfonts}       % blackboard math symbols
\usepackage{nicefrac}       % compact symbols for 1/2, etc.
\usepackage{microtype}      % microtypography
\usepackage{xcolor}         % colors
\usepackage{titletoc}
\usepackage[toc,page,header]{appendix}
\usepackage{algorithmic}

\oddsidemargin -0.15in
\evensidemargin -0.15in
\textwidth 6.7in
\topmargin -0.5in
\textheight 9.0in

\title{Bypassing the Simulator: \\ Near-Optimal Adversarial Linear Contextual Bandits}

% The \author macro works with any number of authors. There are two commands
% used to separate the names and addresses of multiple authors: \And and \AND.
%
% Using \And between authors leaves it to LaTeX to determine where to break the
% lines. Using \AND forces a line break at that point. So, if LaTeX puts 3 of 4
% authors names on the first line, and the last on the second line, try using
% \AND instead of \And before the third author name.

\date{}

% \renewcommand{\thefootnote}{*}
% \footnotetext{The authors are listed in alphabetical order}

\author{%
    Haolin Liu\thanks{The authors are listed in alphabetical order. } \\
    \scalebox{0.9}{University of Virginia}\\
    \scalebox{0.9}{\texttt{srs8rh@virginia.edu}} 
    \and 
    Chen-Yu Wei$^*$\thanks{This work was done when Chen-Yu Wei was at MIT Institute for Data, Systems, and Society. } \\
    \scalebox{0.9}{University of Virginia} \\ 
    \scalebox{0.9}{\texttt{chenyu.wei@virginia.edu}} 
    \and Julian Zimmert$^*$ \\
    \scalebox{0.9}{Google Research} \\ 
    \scalebox{0.9}{\texttt{zimmert@google.com}}
}

\begin{document}

\maketitle

\begin{abstract}
We consider the adversarial linear contextual bandit problem, 
where the loss vectors are selected fully adversarially and the per-round action set (i.e. the context) is drawn from a fixed distribution.
%where the contexts are drawn from a fixed distribution and the loss vectors are fully adversarial. 
Existing methods for this problem either require access to a simulator to generate free i.i.d. contexts, achieve a sub-optimal regret no better than $\otil(T^{\nicefrac{5}{6}})$, or are computationally inefficient. 
We greatly improve these results by achieving a regret of $\otil(\sqrt{T})$ without a simulator, while maintaining computational efficiency when the action set in each round is small. 
In the special case of sleeping bandits with adversarial loss and stochastic arm availability, our result answers affirmatively the open question by \cite{saha2020improved} on whether there exists a polynomial-time algorithm with  $\poly(d)\sqrt{T}$ regret. 
%The key elements in our approach are 1) tighter concentration bounds and a more refined estimator for the feature covariance matrix that enable efficient reuse of previously collected contexts, and 2) a combination of the log-determinant barrier in the follow-the-regularized-leader framework and the biasing technique that facilitates the construction of negatively biased loss estimators. 
Our approach naturally handles the case where the loss is linear up to an additive misspecification error, and our regret shows near-optimal dependence on the magnitude of the error.  
\end{abstract}

%\begin{abstract}
%  The abstract paragraph should be indented \nicefrac{1}{2}~inch (3~picas) on
%  both the left- and right-hand margins. Use 10~point type, with a vertical
%  spacing (leading) of 11~points.  The word \textbf{Abstract} must be centered,
%  bold, and in point size 12. Two line spaces precede the abstract. The abstract
%  must be limited to one paragraph.
%\end{abstract}

\section{Introduction}

Contextual bandit is a widely used model for sequential decision making. The interaction between the learner and the environment proceeds in rounds: in each round, the environment provides a context; based on it, the learner chooses an action and receive a reward. The goal is to maximize the total reward across multiple rounds. This model has found extensive applications in fields such as medical treatment \citep{tewari2017ads}, personalized recommendations \citep{beygelzimer2011contextual}, and online advertising \citep{chu2011contextual}.

Algorithms for contextual bandits with provable guarantees have been developed under various assumptions. In the linear regime, the most extensively studied model is the \emph{stochastic linear contextual bandit}, in which the context can be arbitrarily distributed in each round, while the reward is determined by a fixed linear function of the context-action pair. Near-optimal algorithms for this setting have been established in, e.g., \citep{chu2011contextual, abbasi2011improved, li2019nearly, foster2020adapting}. Another model, which is the focus of this paper, is the \emph{adversarial linear contextual bandit}, in which the context is drawn from a fixed distribution, while the reward is determined by a time-varying linear function of the context-action pair. \footnote{Apparently, the stochastic and adversarial linear contextual bandits defined here are incomparable, and their names do not fully capture their underlying assumptions. However,  these are the terms commonly used in the literature (e.g., \citep{abbasi2011improved, neu2020efficient}). } A computationally efficient algorithm for this setting is first proposed by \cite{neu2020efficient}. However, existing research for this setting still faces challenges in achieving near-optimal regret and sample complexity when the context distribution is unknown. 

The algorithm by \cite{neu2020efficient} requires the learner to have \emph{full knowledge} on the context distribution, and access to an \emph{exploratory policy} that induces a feature covariance matrix with a smallest eigenvalue at least $\lambda$. Under these assumptions, their algorithm provides a regret guarantee of $\otil(\sqrt{d\log(|\calA|)T/\lambda})$\footnote{The linear contextual bandit problem formulation in \cite{neu2020efficient} is different from ours. However, it can be reduced to our setting with dimension $d|\calA|$, where $|\calA|$ is the maximum number of actions in each round. The $\otil(\sqrt{d\log(|\calA|)T/\lambda})$ bound reported here is obtained by adopting their techniques to our setting. }, where $d$ is the feature dimension, $|\calA|$ is the maximum size of the action set, and $T$ is the number of rounds. These assumptions are relaxed in the work of \cite{luo2021policy}, who studied a more general linear MDP setting. When specialized to linear contextual bandits, \cite{luo2021policy} only requires access to a \emph{simulator} from which the learner can draw free i.i.d. contexts. Their algorithm achieves a $\otil((d\log(|\calA|)T^2)^{\nicefrac{1}{3}}))$ regret. The regret is further improved to the near-optimal  one $\otil(\sqrt{d\log(|\calA|)T})$ by \cite{dai2023refined} through refined loss estimator construction. 

All results that attain $\otil(T^{\nicefrac{2}{3}})$ or $\otil(\sqrt{T})$ regret bound discussed above rely on access to the simulator. In their algorithms, the number of calls to the simulator significantly exceeds the number of interactions between the environment and the learner, but this is concealed from the regret bound. Therefore, their regret bounds do not accurately reflect the sample complexity of their algorithms. Another set of results for linear MDPs \citep{luo2021policy, dai2023refined, sherman2023improved, kong2023improved} also consider the simulator-free scenario, essentially using interactions with the environment to fulfill the original purpose of the simulator. When applying their techniques to linear contextual bandits, their algorithms only achieve a regret bound of $\otil(T^{\nicefrac{5}{6}})$ at best (see detailed analysis and comparison in \pref{app: comparison}). 

Our result significantly improves the previous ones: without simulators, we develop an algorithm that ensures a regret bound of order $\otil(d^2\sqrt{T})$, and it is computationally efficient as long as the size of the action set is small in each round (similar to all previous work). 
Unlike previous algorithms which always collect new contexts (through simulators or interactions with the environment) to estimate the feature covariance matrix, we leverage the context samples the learner received in the past to do this. Although natural, establishing a near-tight regret requires highly efficient use of context samples, necessitating a novel way to construct the estimator of feature covariance matrix and a tighter concentration bound for it. Additionally, to address the potentially large magnitude and the bias of the loss estimator, we turn to the use of log-determinant (logdet) barrier in the follow-the-regularized-leader (FTRL) framework. Logdet accommodates larger loss estimators and induces a larger bonus term to cancel the bias of the loss estimator, both of which are crucial for our result.

Our setting subsumes sleeping bandits with stochastic arm availability \citep{kanade2009sleeping, saha2020improved} and combinatorial semi-bandits with stochastic action sets \citep{neu2014online}. 
Our result answers affirmatively the main open question left by \cite{saha2020improved} on whether there exists a polynomial-time algorithm with  $\poly(d)\sqrt{T}$ regret for sleeping bandits with adversarial loss and stochastic availability. 

As a side result, we give a computationally inefficient algorithm that achieves an improved $\otil(d\sqrt{T})$ regret without a simulator. While this is a direct extension from the EXP4 algorithm \citep{auer2002nonstochastic}, such a result has not been established to our knowledge, so we include it for completeness. %\footnote{\cite{olkhovskaya2023first, kong2023improved} also proposed EXP4-based algorithms for linear CBs/MDPs. The setting in \cite{olkhovskaya2023first} is slightly different from ours, so their bound inevitably has a $\poly(|\calA|)$ dependence. The way \cite{kong2023improved} constructs loss estimators is different from ours, which leads to sub-optimal regret in CB. }

\renewcommand{\arraystretch}{1.4}
\begin{table}[t]
\caption{\fontsize{0.80\baselineskip}{1.0\baselineskip}\selectfont Related works in the ``S-A'' category. CB stands for contextual bandits and SB stands for semi-bandits. The relations among settings are as follows: Sleeping Bandit $\subset$ Contextual SB $\subset$ Linear CB, Linear CB $\subset$ Linear MDP, and Linear CB $\subset$ General CB. The table compares our results with the Pareto frontier of the literature. For algorithms dealing more general settings, we have carefully translated their techniques to Linear CB and reported the resulting bounds. $\Sigma_{\pi}$ denotes the feature covariance matrix induced by policy $\pi$. $|\calA|$ and $|\Pi|$ are sizes of the action set and the policy set. }
\vspace*{5pt}
\centering
%\hspace*{-32pt}
\scalebox{0.90}{
    \begin{tabular}{|c|c|c|c|c|c|}
\hline
Target Setting & Algorithm & Regret & Simulator & Computation & Assumption\\
\hline
General CB & \cite{syrgkanis2016improved} & $(\log|\Pi|)^{\nicefrac{1}{3}}(|\calA|T)^{\nicefrac{2}{3}}$ & \checkmark & $\poly(|\calA|, \log|\Pi|, T)$ & ERM Oracle \\
\hline
\multirow{4}{*}{Linear MDP} & \cite{dai2023refined} & $\sqrt{dT\log|\calA|}$ & \checkmark & $\poly(|\calA|, d,T)$ & \\
\cline{2-6}
 & \makecell{ \\[-8pt]\cite{dai2023refined}\\
 \cite{sherman2023improved}} & $d(\log|\calA|)^{\nicefrac{1}{6}}T^{\nicefrac{5}{6}}$ & & $\poly(|\calA|,d,T)$ & \\
\cline{2-6}
 & \cite{kong2023improved} & $(d^7T^4)^{\nicefrac{1}{5}} + \poly\left(\frac{1}{\lambda}\right)$ & & $T^d$ & $\exists \pi, \Sigma_\pi \succeq \lambda I$ \\
\hline
\multirow{2}{*}{Linear CB} & \pref{alg: FTRL} &  $d^2 \sqrt{T}$ & & $\poly(|\calA|,d,T)$ & \\
\cline{2-6}
 & \pref{alg: linear exp4} & $d\sqrt{T}$ & & $T^d$ & \\
\hline
\makecell{Contextual SB} & \cite{neu2014online} & $(dT)^{\nicefrac{2}{3}}$ & & $\poly(d,T)$ & \\
\hline
Sleeping Bandit & \cite{saha2020improved} & $\sqrt{2^d T}$ & & $\poly(d,T)$ ($|\calA|\leq d$) & \\
\hline
\end{tabular}
    }
\end{table}
\subsection{Related work}
We review the literature of various contextual bandit problems, classifying them based on the nature of the context and the reward function, specifically whether they are stochastic/fixed or adversarial. 

\paragraph{Contextual bandits with i.i.d. contexts and fixed reward functions (S-S)} Significant progress has been made in contextual bandits with i.i.d. contexts and fixed reward functions, under general reward function classes or policy classes \citep{langford2007epoch, dudik2011efficient, agarwal2012contextual, agarwal2014taming, simchi2022bypassing, xu2020upper}. In the work by \cite{dudik2011efficient, agarwal2012contextual, agarwal2014taming}, the algorithms also use previously collected contexts to estimate the inverse probability of selecting actions under the current policy. However, these results only obtain regret bounds that polynomially depend on the number of actions. Furthermore, these results rely on having a fixed reward function, making their techniques not directly applicable to our case. %For the linear case, \cite{hanna2022contexts} provides a reduction from the original problem to one with a fixed action set and fixed reward function. Our work can be viewed as a generalization of their result to the adversarial reward setting. 

\paragraph{Contextual bandits with adversarial contexts and fixed reward functions (A-S)}
In this category, the most well-known results are in the linear setting \citep{chu2011contextual, abbasi2011improved, zhao2023variance}. Besides the linear case, previous work has investigated specific reward function classes \citep{russo2013eluder, li2022understanding, foster2018practical}.  Recently,  \cite{foster2020beyond} introduced a general approach to deal with general function classes with a finite number of actions, which has since been improved or extended by \cite{foster2021efficient, foster2021instance, zhang2022feel}. This category of problems is not directly comparable to the setting studied in this paper, but both capture a certain degree of non-stationarity of the environment.  

\paragraph{Contextual bandits with i.i.d. contexts and adversarial reward functions (S-A)}
This is the category which our work falls into. Several oracle efficient algorithms that require simulators have been proposed for general policy classes \citep{rakhlin2016bistro, syrgkanis2016improved}. The oracle they use (i.e., the empirical risk minimization, or ERM oracle), however, is not generally implementable in an efficient manner. For the linear case, the first computationally efficient algorithm is by \cite{neu2020efficient}, under the assumption that the context distribution is known. This is followed by \cite{olkhovskaya2023first} to obtain refined data-dependent bounds. A series of works \citep{neu2021online, luo2021policy, dai2023refined, sherman2023improved} apply similar techniques to linear MDPs, but when specialized to linear contextual bandits, they all assume known context distribution, or access to a simulator, or only achieves a regret no better than $\otil(T^{\nicefrac{5}{6}})$. The work of \cite{kong2023improved} also studies linear MDPs; when specialized to contextual bandits, they obtain a regret bound of $\otil(T^{\nicefrac{4}{5}}+\poly(\frac{1}{\lambda}))$ without a simulator but with a computationally inefficient algorithm and an undesired inverse dependence on the smallest eigenvalue of the covariance matrix. 
Related but simpler settings have also been studied. The sleeping bandit problem with stochastic arm availability and adversarial reward \citep{kleinberg2010regret, kanade2009sleeping, saha2020improved} is a special case of our problem where the context is always a subset of standard unit vectors. Another special case is the combinatorial semi-bandit
problem with stochastic action sets and adversarial reward \citep{neu2014online}. While these are special cases, the regret bounds in these works are all worse than $\otil(\poly(d)\sqrt{T})$. Therefore, our result also improves upon theirs. \footnote{For combinatorial semi-bandit problems, our algorithm is not as computationally efficient as \cite{neu2014online}, which can handle exponentially large action sets. }

\paragraph{Contextual bandits with adversarial contexts and adversarial reward functions (A-A)}
When both contexts and reward functions are adversarial, there are computational \citep{kanade2014learning} and oracle-call \citep{hazan2016computational} lower bounds  showing that no sublinear regret is achievable unless the computational cost scales polynomially with the size of the policy set. Even for the linear case, \cite{neu2020efficient} argued that the problem is at least as hard as online learning a one-dimensional threshold function, for which sublinear regret is impossible. For this challenging category, besides using the inefficient EXP4 algorithm, previous work makes stronger assumptions on the contexts \citep{syrgkanis2016efficient} or resorts to alternative benchmarks such as dynamic regret \citep{luo2018efficient, chen2019new} and approximate regret \citep{emamjomeh2021adversarial}. 

\paragraph{Lifting and exploration bonus for high-probability adversarial linear bandits}  
Our technique is related to those obtaining high-probability bounds for linear bandits. Early development in this line of research only achieves computational efficiency when the action set size is small \citep{bartlett2008high} or only applies to special action sets such as two-norm balls \citep{abernethy2009beating}. 
Recently, near-optimal high-probability bounds for general convex action sets have been obtained by lifting the problem to a higher dimensional one, which allows for a computationally efficient way to impose bonuses \citep{lee2020bias, zimmert2022return}. 
The lifting and the bonus ideas we use are inspired by them, though for different purposes. However, due to the extra difficulty arising in the contextual case, currently we only obtain a computationally efficient algorithm when the action set size is small. 

%2. Existing approach requires simulator: this is due to the construction of covariance matrix. There was also a barrier to improve over $T^{2/3}$ even under simulator -- when using standard exponential weight, it cannot tolerate losses that has large magnitude

%3. sqrt{K} is achieved by Dai et al. under simulator. Their first solution log-barrier has poly(A) regret dependence, the second solution use reduced-magnitude loss estimator which requires additional calls to simulators. Dai and Sherman also has simulator-free setting, which simply spend online sample to estimate covariance. The best bound obtained from this is $T^{5/6}$. 

%4. Other related work: Stochastic context + adversarial loss: oracle efficient + simulator ($T^{2/3}$), transductive setting. Gergo's sleeping semi-bandits with FTPL and its related work. 

%5. stochastic context + stochastic loss: monster, minimonster, RegElimination, Falcon

%6. adversarial context + stochastic loss: LinUCB, RegCB, SquareCB

%7. Computational hardness on adversarial contexts + adversarial loss? 

\subsection{Computational Complexity}
Our main algorithm is based on log-determinant barrier optimization similar to \cite{foster2020adapting,zimmert2022return}. 
Computing its action distribution is closely related to computing the D-optimal experimental design \citep{khachiyan1990complexity}. 
Per step, this is shown to require $\otil(|\calA_t|\poly(d))$ computational and $\otil(\log(|\calA_t|)\poly(d))$ memory complexity \cite[Proposition 1]{foster2020adapting}, where $|\calA_t|$ is the action set size at round $t$. The computational bottleneck comes from  
(approximately) maximizing a quadratic function over the action set. 
It is an open question whether linear optimization oracles or other type of oracles can lead to efficient implementation of our algorithm for continuous action sets. 

In the literature, there are few linear context bandit algorithms that provably avoid $|\calA|$ computation per round. The LinUCB algorithm \citep{chu2011contextual, abbasi2011improved} suffers from the same quadratic function maximization issue, and therefore is computationally comparable to our algorithm. The SquareCB.Lin algorithm by \cite{foster2020adapting} is based on the same log-determinant barrier optimization. Another recent algorithm by \cite{zhang2022feel} only admits an efficient implementation for continuous action sets in the Bayesian setting but not in the frequentist setting (though they provided an efficient heuristic implementation in their experiments). The Thompson sampling algorithm by 
\cite{agrawal2013thompson}, which has efficient implementation, also relies on well-specified Gaussian prior. The only work that we know can avoids $|\calA|$ computation in the frequentist setting is 
\cite{zhu2022contextual}, but their technique is only known to handle the A-S setting.

\section{Preliminaries}
We study the adversarial linear contextual bandit problem where the loss vectors are selected fully adversarially and the per-round action set (i.e. the context) is drawn from a fixed distribution. %each round $t$ is generated from a fixed distribution $D$. Let $supp(D)$ be the support set of $D$. 
The learner and the environment interact in the following way. Let $\mathbb{B}_2^d$ be the L2-norm unit ball in $\mathbb{R}^d$. 

For $t = 1, \cdots, T$, 
\begin{enumerate}
    \item The environment decides an adversarial loss vector $y_t \in \mathbb{B}_2^d$, and  generates a random action set (i.e., context) $\calA_t\subset \mathbb{B}_2^d$ from a fixed distribution $D$ independent from anything else. 
    \item The learner observes $\calA_t$, and (randomly) chooses an action $a_t \in \calA_t$. 
    \item  The learner receives the loss $\ell_t\in[-1,1]$ with $\E[\ell_t] = \langle a_t, y_t \rangle$.
\end{enumerate}
A policy $\pi$ is a mapping which, given any action set $\calA\subset\mathbb{R}^d$, maps it to an element in the convex hull of $\calA$. We use $\pi({\calA})$ to refer to the element that it maps $\calA$ to. The learner's \emph{regret with respect to policy $\pi$} is defined as the expected performance difference between the learner and policy $\pi$: 
%to choose appropriate actions such that the cumulative loss $\sum_{t=1}^T \langle a_t, y_t \rangle $ is as small as possible. The learner is given a policy set $\Pi$, in which every policy $\pi\in\Pi$ is a mapping from an arbitrary action set $\calA$ to an element in the action set $\calA$. We use $\pi(\calA)$ to denote this mapping, and thus $\pi(\calA)\in\calA$. The learner's regret with respect to policy $\pi\in\Pi$ is defined as 
\begin{equation*}
\Reg(\pi) =  \E\left[\sum_{t=1}^T  \langle a_t, y_t \rangle - \sum_{t=1}^T \langle \pi(\calA_t), y_t \rangle  \right]
\end{equation*}
where the expectation is taken over all randomness from the environment ($y_t$ and $\calA_t$) and from the learner ($a_t$).    
The \emph{pseudo-regret} (or just \emph{regret}) is defined as $\Reg=\max_{\pi}\Reg(\pi)$, where the maximization is taken over all possible policies. %With abuse of notation, %we also use $u$ to  denote the expected action played by policy $u$ under action set distribution $D$, that is, $u=\E_{\calA\sim D}[u^{\calA}]\in \mathbb{R}^d$. As will be clear, $\E_{\calA\sim D}[u^{\calA}]$ is sufficient to determine the expected loss of policy $u$.  % We assume $\calA_t\subset \mathbb{B}_2^d$ and $\|y_t\|_2\leq 1$ with probability $1$. 

%Inherited from adversarial bandits literature, the comparator of our regret is the best fixed policy $\pi^\star$ in a given policy set $\Pi$. Namely, the learner tries to minimize the pseudo regret defined as
%\begin{equation*}
%\Reg = \max\limits_{\pi \in \Pi} \sum_{t=1}^T\mathbb{E}\left[ \langle a_t, y_t \rangle - \langle \pi(\calA_t), y_t \rangle \right]
%\end{equation*}

% For any $\calA \in supp(D)$, let $\Delta(\calA)$ be the space of probability measures on $\calA$ with the Borel $\sigma$-algebra. Let $\mathcal{F}_t = \sigma(\calA_s, a_s, \forall s \le t)$ be the $\sigma$-algebra for round $t$. Define $\mathbb{E}_t[\cdot] = \mathbb{E}[\cdot | \mathcal{F}_{t-1}]$ and $\mathbb{P}_t[\cdot] = \mathbb{P}[\cdot|\mathcal{F}_{t-1}]$. 

\textbf{Notations } For any matrix $A$, we use $\lambda_{\max}(A)$ and $\lambda_{\min}(A)$ to denote the maximum and minimum eigenvalues of $A$, respectively. We use $\tr(A)$ to denote the trace of matrix $A$. %We use $A \succeq 0$ to denote that $A$ is a positive semi-definite matrix. For matrix $A,B$, $A \succeq B$ means $A - B \succeq 0$. 
For any action set $\calA$, let $\Delta(\calA)$ be the space of probability measures on $\calA$. Let $\mathcal{F}_t = \sigma(\calA_s, a_s, \forall s \le t)$ be the $\sigma$-algebra at round $t$. Define $\mathbb{E}_t[\cdot] = \mathbb{E}[\cdot | \mathcal{F}_{t-1}]$.  
Given a differentiable convex
function $F: \mathbb{R}^d \rightarrow \mathbb{R} \cup \{\infty\}$, the Bregman divergence with respect to $F$ is defined as $D_F(x, y) = F(x) - F(y) - \langle \nabla F(y), x-y\rangle$. Given a positive semi-definite (PSD) matrix $A$,  for any vector $x$, define the norm generated by $A$ as $\|x\|_A = \sqrt{x^\top A x}$. 
For any context $\calA \subset \mathbb{R}^d$ and $p \in \Delta(\calA)$, define $\mu(p) = \mathbb{E}_{a \sim p}[a]$ and $\Cov(p) = \mathbb{E}_{a \sim p}[(a-\mu(p))(a-\mu(p))^{\top}]$. 
For any $a$, define the lifted action $\pmb{a} = (a, 1)^{\top}$ and the lifted covariance matrix
$\widehat{\Cov}(p) = \mathbb{E}_{a \sim p}[\pmb{a}\pmb{a}^{\top}] =  \mathbb{E}_{a \sim p}\begin{bmatrix} aa^{\top} & a\\ a^{\top} & 1 \end{bmatrix} = \begin{bmatrix} \Cov(p) + \mu(p)\mu(p)^{\top} & \mu(p)\\ \mu(p)^{\top} & 1 \end{bmatrix}$. We use \textbf{bold} matrices to denote matrices in the lifted space (e.g., in \pref{alg: FTRL} and \pref{def: additional notation}).

\section{Follow-the-Regularized-Leader with the Log-Determinant Barrier}

In this section, we present our main algorithm, \pref{alg: FTRL}. This algorithm can be viewed as instantiating an individual Follow-The-Regularized-Leader (FTRL) algorithm on each action set (\pref{line: FTRL}), with all FTRLs sharing the same loss vectors. This perspective has been taken by previous works \cite{neu2020efficient, olkhovskaya2023first} and simplifies the understanding of the problem. The rationale comes from the following calculation due to \cite{neu2020efficient}: for any policy $\pi$ that may depend on $\calF_{t-1}$, 
\begin{align*}
    \E_t\left[ \langle \pi(\calA_t), y_t
 \rangle\right] &= \E_{\calA_t}\left[\E_{y_t} \left[\langle \pi(\calA_t), y_t
 \rangle~|~\calF_{t-1}\right]\right] = \E_{\calA_0}\left[\E_{y_t} \left[\langle \pi(\calA_0), y_t
 \rangle~|~\calF_{t-1}\right]\right] = \E_t\left[ \langle \pi(\calA_0), y_t
 \rangle\right]
\end{align*}
where $\calA_0$ is a sample drawn from $D$ independent of all interaction history. This allows us to calculate the regret as 
\begin{align}
     \E\left[ \sum_{t=1}^T \langle \pi_t(\calA_t) - \pi(\calA_t), y_t \rangle \right] = \E\left[ \sum_{t=1}^T \langle \pi_t(\calA_0) - \pi(\calA_0), y_t \rangle \right]  \label{eq: ghost view}
\end{align}
where $\pi_t$ is the policy used by the learner at time $t$. Note that this view does not require the learner to simultaneously ``run'' an algorithm on every action set since the learner only needs to calculate the policy on $\calA$ whenever $\calA_t=\calA$. 
In the regret analysis, in view of \pref{eq: ghost view}, it suffices to consider a single fixed action set $\calA_0$ drawn from $D$ and bound the regret on it, even though the learner may never execute the policy on it. This $\calA_0$ is called a ``ghost sample'' in \cite{neu2020efficient}.

\begin{algorithm}[t]
    \caption{Logdet-FTRL for linear contextual bandits}
    \label{alg: FTRL}
    \textbf{Definitions}: $F(\pmb{H}) = -\log\det{(\pmb{H})}, \eta_t = \frac{1}{64d\sqrt{t}}, \alpha_t = \frac{d}{\sqrt{t}}, \beta_t = \frac{100(d+1)^3\log(3T)}{t-1}$. \\
    %\nl Initialize $\hat{\gamma}_0 - \eta \pmb{\hatSigma}_0^{-1} = 0$ and $\hat{\gamma}_1 = 0$\\
    \nl  \For{$t=1, 2, \ldots$}{
    \nl For all $\calA$,  define $\pmb{H}_t^{\calA} = \argmin\limits_{\pmb{H}\in \mathcal{H}^{\calA}} \sum_{s=1}^{t-1}\langle \pmb{H}, \hat{\gamma}_s - \alpha_s \pmb{\hat\Sigma}_s^{-1}\rangle + \frac{F(\pmb{H})}{\eta_t} $.   \label{line: FTRL} \\
    \nl For all $\calA$,  define $p_t^{\calA}\in\Delta(\calA)$ such that $\pmb{H}^\calA_t = \widehat{\Cov}(p_t^{\calA})$.  \label{line: define pt} \\
    \nl  Receive $\calA_t$ and sample $a_t\sim p_t^{\calA_t}$. \label{line: receive At}\\
    \nl      Observe $\ell_t\in[-1,1]$ with  $\E[\ell_t] = a_t^\top y_t$ and construct $\hat{y}_t = \hat\Sigma_t^{-1}(a_t -\hat{x}_t)\ell_t$, 
    where \label{line: loss esimator construct}
          \begin{align}
    \hat{x}_t &= \emp \E_{a\sim p_t^{\calA_{\tau}}} [ a ], \ \  
              \hatH_t = \emp \E_{a\sim p_t^{\calA_{\tau}}}\left[(a-\hat{x}_t)(a-\hat{x}_t)^\top\right], \ \  
              \hatSigma_t = \hatH_t + \beta_t I.  \nonumber %\label{eq: several def}
         \end{align}
    \nl      Define $\pmb{\hatH}_{t} = \frac{1}{t-1}\sum_{\tau = 1}^{t-1}\pmb{H}_t^{\calA_{\tau}}$ and $\pmb{\hatSigma}_t = \pmb{\hatH}_t + \beta_t  \pmb{I}$ and 
    $
         \hat{\gamma}_t = \begin{bmatrix}0& \frac{1}{2}\hat{y}_t\\ \frac{1}{2}\hat{y}_t^\top&0 \end{bmatrix}
    $. 
    \\
    (If $t=1$, define $\hatSigma_t^{-1} $ and $\pmb{\hatSigma}_t^{-1}$ as zeros). 
    }
\end{algorithm}

\subsection{The lifting idea and the execution of \pref{alg: FTRL}}

Our algorithm is built on the \alglog algorithm developed by \cite{zimmert2022return} for high-probability adversarial linear bandits, which lifts the original $d$-dimensional problem over the feature space to a $(d+1)\times(d+1)$ one over the covariance matrix space, with the regularizer being the negative log-determinant function. In our case, we instantiate an individual \alglog on each action set. The motivation behind \cite{zimmert2022return} to lift the problem to the space of covariance matrix is that it casts the problem to one in the positive orthant, which allows for an easier way to construct the \emph{bonus} term that is crucial to compensate the variance of the losses, enabling a high-probability bound in their case. 
In our case, we use the same technique to introduce the bonus term, but the goal is to compensate the \emph{bias} resulting from the estimation error in the covariance matrix (see \pref{sec: bonus term}). This bias only appears in our contextual case but not in the linear bandit problem originally considered in \cite{zimmert2022return}. 

As argued previously, we can focus on the learning problem over a fixed action set $\calA$, and our algorithm operates in the lifted space of covariance matrices $\mathcal{H}^{\calA} = \{\widehat{\Cov}(p): p \in \Delta(\calA) \} \subset \mathbb{R}^{(d+1)\times (d+1)}$. For this space, we define the lifted loss $\gamma_t = \begin{bmatrix}0& \frac{1}{2}y_t\\ \frac{1}{2}y_t^\top &0 \end{bmatrix} \in \mathbb{R}^{(d+1)\times(d+1)}$ so that $\langle \widehat{\Cov}(p),\gamma_t \rangle = \mathbb{E}_{a\sim p} [a^{\top}y_t] = \langle \mu(p), y_t\rangle $ and thus the loss value in the lifted space (i.e., $\langle \widehat{\Cov}(p),\gamma_t \rangle$) is the same as that in the original space (i.e., $\langle \mu(p), y_t\rangle$).  

In each round $t$, the FTRL on $\calA$ outputs a lifted covariance matrix $\pmb{H}_t^{\calA}\in\calH^{\calA}$ that corresponds to a probability distribution $p_t^{\calA}\in \Delta(\calA)$ such that $\widehat{\Cov}(p_t^{\calA}) = \pmb{H}_t^{\calA}$ (\pref{line: FTRL} and \pref{line: define pt}). Upon receiving $\calA_t$, the learner samples an action from $p_t^{\calA_t}$ and the agent constructs the loss estimator $\hat{y}_t$ (\pref{line: loss esimator construct}). Similarly to the construction of $\gamma_t$, we define the lifted loss estimator $\hat{\gamma}_t = \begin{bmatrix}0& \frac{1}{2}\hat{y}_t\\ \frac{1}{2}\hat{y}_t^\top&0 \end{bmatrix}$ which makes $\langle \widehat{\Cov}(p),\hat{\gamma}_t \rangle = \mathbb{E}_{a\sim p} [a^{\top}\hat{y}_t] = \langle \mu(p), \hat{y}_t\rangle $. The lifted loss estimator, along with the \emph{bonus} term $-\alpha_t \pmb{\hatSigma_t}^{-1}$, is then fed to the FTRL on all $\calA$'s. The purpose of the bonus term will be clear in \pref{sec: bonus term}. 

In the rest of this section, we use the following notation in addition to those defined in \pref{alg: FTRL}.  
\begin{definition}\label{def: additional notation}
   Define $
       x_t^\calA=\E_{a\sim p^{\calA}_t}[a], \  x_t=\E_{\calA\sim D}[x_t^{\calA}],  \ \  
       H_t^{\calA} = \E_{a\sim p^{\calA}_t}[(a-\hat{x}_t)(a-\hat{x}_t)^\top], \     H_t= \E_{\calA\sim D}[H^{\calA}_t], \  \pmb{H}_t = \E_{\calA\sim D}[\pmb{H}^\calA_t].  
   $
   Let $p_\star^{\calA}\in\Delta(\calA)$ be the action distribution used by the benchmark policy on $\calA$, and define 
   $
        u^{\calA} = \E_{a\sim p_\star^\calA}[a], \  u = \E_{\calA\sim D}[u^\calA], \   \pmb{U}^{\calA} = \E_{a\sim p_\star^\calA}[\pmb{a}\pmb{a}^\top], \  \pmb{U} = \E_{\calA\sim D}[\pmb{U}^{\calA}]. 
   $
   Notice that the $x_t^{\calA}$ and $u^{\calA}$ defined here is equivalent to the $\pi_t(\calA)$ and $\pi(\calA)$ in \pref{eq: ghost view}, respectively.  
\end{definition}

\subsection{The construction of loss estimators and feature covariance matrix estimators} \label{sec: estimator contruction}
Our goal is to make $\hat{y}_t$ in \pref{line: loss esimator construct} an estimator of $y_t$ with controllable bias and variance. If the context distribution is known (as in \cite{neu2020efficient}), then a standard unbiased estimator of $y_t$ is 
\begin{align}
    \hat{y}_t = \Sigma_t^{-1} a_t\ell_t, \qquad \text{where}\quad \Sigma_t = \E_{\calA\sim D}\E_{a\sim p_t^{\calA}}\left[ aa^\top \right]. \label{eq: ideal yt hat}
\end{align}
To see its unbiasedness, notice that $\E[a_t\ell_t] = \E_{\calA\sim D
} \E_{a\sim p^{\calA}_t}[aa^\top y_t]$ and thus $\E[\hat{y}_t]=y_t$. This $\hat{y}_t$, however, can have a variance that is inversely related to the smallest eigenvalue of the covariance matrix $\hatSigma_t$, which can be unbounded in the worst case. This is the main reason why \cite{neu2020efficient} does not achieve the optimal bound, and requires the bias-variance-tradeoff techniques in \cite{dai2023refined} to close the gap. %Notice that we intentionally offset the actions by its mean $x_t$, which is actually not necessary for constructing an unbiased estimator (that is, \pref{eq: ideal yt hat} is still unbiased if we replace all $a-x_t$ in it by $a$), but this will be crucial in our algorithm to get a smaller variance. 
When the context distribution is unknown but the learner has access to a simulator \citep{luo2021policy, dai2023refined, sherman2023improved, kong2023improved}, the learner can draw free contexts to estimate the covariance matrix $\hatSigma_t$ up to a very high accuracy without interacting with the environment, making the problem close to the case of known context distribution. 

Challenges arise when the learner has no knowledge about the context distribution and there is no simulator. In this case, there are two natural ways to estimate the covariance matrix under the current policy. One is to draw new samples from the environment, treating the environment like a simulator. This approach is essentially taken by all previous work studying linear models in the ``S-A'' category. 
However, this is very expensive, and it causes the simulator-equipped bound $\sqrt{T}$ in \cite{dai2023refined} to deteriorate to the simulator-free bound $T^{\nicefrac{5}{6}}$ at best (see \pref{app: comparison} for details). The other is to use the contexts received in time $1$ to $t$ to estimate the covariance matrix under the policy at time $t$. 
This demands a very high efficiency in reusing the contexts samples, and existing ways of constructing the covariance matrix and the accompanied analysis by \cite{dai2023refined, sherman2023improved} are insufficient to achieve the near-optimal bound even with context reuse. This necessitates our tighter construction of the covariance matrix estimator and tighter concentration bounds for it.  

Our construction of the loss estimator (\pref{line: loss esimator construct}) is 
\begin{align}
     \hat{y}_t = \hat{\Sigma}_t^{-1}(a_t-\hat{x}_t)\ell_t \qquad \text{where}\quad \hatSigma_t =
\E_{\calA\sim \hat{D}_t}\E_{a\sim p_t^{\calA}}\left[(a-\hat{x}_t)(a-\hat{x}_t)^\top\right] + \beta_t I \label{eq: est yt hat}
     %\hat{x}_t = \emp \E_{a\sim p_t^{\calA_{\tau}}} [ a ], \quad 
      %        \hatSigma_t = \emp \E_{a\sim p_t^{\calA_{\tau}}}\left[(a-\hat{x}_t)(a-\hat{x}_t)^\top\right] + \beta_t I
\end{align}
where $\hat{D}_t=\text{Uniform}\{\calA_1, \calA_2, \ldots, \calA_{t-1}\}$, $\hat{x}_t = \E_{\calA\sim \hat{D}_t}, \E_{a\sim p_t^{\calA}}[a]$, and $\beta_t=\otil(d^3/t)$.  Comparing \pref{eq: est yt hat} with \pref{eq: ideal yt hat}, we see that besides using the empirical context distribution $\hat{D}_t$ in place of the ground truth $D$ and adding a small term $\beta_t I$ to control the smallest eigenvalue of the covariance matrix, we also centralize the features by $\hat{x}_t$, an estimation of the mean features under the current policy. The centralization is important in making the bias $y_t-\hat{y}_t$ appear in a nice form that can be compensated by a bonus term. 
The estimator might seem problematic on first sight, because $p_t^{\calA}$ is strongly dependent on $\hat{D}_t$, which rules out canonical concentration bounds.
We circumvent this issue by leveraging the special structure of $p_t$ in \pref{alg: FTRL}, which allows for a union bound over a sufficient covering of all potential policies (\pref{app: union bounds}).
The analysis on the bias of this loss estimator is also non-standard, which is the key to achieve the near-optimal bound . In the next two subsections, we explain how to bound the \emph{bias} of this loss estimator (\pref{sec: bias term}), and how the \emph{bonus} term can be used to compensate the bias (\pref{sec: bonus term}). 

\subsection{The bias of the loss estimator} \label{sec: bias term}
Since the true loss vector is $y_t$ and we use the loss estimator $\hat{y}_t$ in the update, there is a bias term emerging in the regret bound at time $t$: 
\begin{align*}
   \E_{t} \left[ \langle x_t^{\calA_0} - u^{\calA_0}, y_t - \hat{y}_t \rangle \right]= \E_t\left[ \langle x_t - u, y_t - \hat{y}_t \rangle\right] = \E_t\left[\left( x_t - u\right)^\top \left( I - \hatSigma_t^{-1} (a_t-\hat{x}_t)a_t^\top \right)y_t\right]
\end{align*}
where definitions of $x_t^{\calA}, u^{\calA}, x_t, u$ can be found in \pref{def: additional notation}, and we use the definition of $\hat{y}_t$ in \pref{eq: est yt hat} in the last equality. 
Now taking expectation over $\calA_t$ and $a_t$ conditioned on $\calF_{t-1}$, we can further bound the expectation in the last expression by 
\begin{align}
    &(x_t - u)^\top \left(I - \hatSigma_t^{-1} H_t \right) y_t - (x_t - u)^\top \hatSigma_t^{-1}\left(x_t - \hat{x}_t\right)\hat{x}_t^\top y_t  \nonumber\\
    &\leq \|x_t-u\|_{\hatSigma_t^{-1}} \|(\hatSigma_t - H_t)y_t\|_{\hatSigma_t^{-1}} + \|x_t-u\|_{\hatSigma_t^{-1}} \|x_t-\hat{x}_t\|_{\hatSigma_t^{-1}} \label{eq: bias term decompose}
\end{align}
(see \pref{def: additional notation} for the definition of $H_t$). The two terms $\|(\hatSigma_t - H_t)y_t\|_{\hatSigma_t^{-1}}$ and $\|x_t-\hat{x}_t\|_{\hatSigma_t^{-1}}$ in \pref{eq: bias term decompose} are related to the error between the empirical context distribution $\hat{D}_t=\text{Uniform}\{\calA_1, \ldots, \calA_{t-1}\}$ and the true distribution $D$. 
We handle them through novel analysis and bound both of them by $\otil\big(\sqrt{d^3/t}\big)$. See \pref{lem: local norm concentration for vectors 2}, \pref{lem: local norm concentration for matrix 2}, \pref{lem: local norm concentration for vectors}, and \pref{lem: local norm concentration for matrix} for details. The techniques we use in these lemmas surpass those in \cite{dai2023refined, sherman2023improved}. 
As a comparison, a similar term as $\|(\hatSigma_t - H_t)y_t\|_{\hatSigma_t^{-1}}$ is also presented in Eq.~(16) of \cite{dai2023refined} and Lemma B.5 of \cite{sherman2023improved} when bounding the bias. While their analysis uses off-the-shelf matrix concentration inequalities, our analysis expands this expression by its definition, and applies concentration inequalities for \emph{scalars} on individual entries. Overall, our analysis is more tailored for this specific expression. Previous works ensure that this term can be bounded by $\order(\sqrt{\beta})$ after collecting $\order(\beta^{-2})$ new samples (Lemma 5.1 of \cite{dai2023refined} and Lemma B.1 of \cite{sherman2023improved}), we are able to bound it by $\order(1/\sqrt{t})$ only using $t$ samples that the learner received up to time $t$. This essentially improves their $\order(\beta^{-2})$ sample complexity bound to $\order(\beta^{-1})$. See \pref{app: comparison} for detailed comparison with  \cite{dai2023refined} and \cite{sherman2023improved}.

Now we have bounded the regret due to bias of $\hat{y}_t$ by the order of $\sqrt{d^3/t}\|x_t-u\|_{\hatSigma_t^{-1}} $. The next problem is how to mitigate this term. This is also a problem in previous work \citep{luo2021policy, dai2023refined, sherman2023improved}, and it has become clear that this can be handled by incorporating \emph{bonus} in the algorithm. 

\subsection{The bonus term}\label{sec: bonus term}
To handle a bias term in the form of $\|x_t-u\|_{\hatSigma_t^{-1}}$, we resort to the idea of \emph{bonus}. 
%From the analysis in \pref{sec: bias term}, if we only use $\hat{y}_t$ to update the FTRL algorithms, then the regret bound would be of the following form (omitting $d$ dependence): 
%\begin{align*}
%    \Reg &= \E\left[\sum_{t=1}^T (x_t - u)^\top \hat{y}_t\right] + \E\left[\sum_{t=1}^T (x_t - u)^\top (y_t - \hat{y}_t)\right] 
%    \leq \otil(\sqrt{T}) + \otil\left(\E\left[\sum_{t=1}^T \frac{1}{\sqrt{t}}\|x_t-u\|_{\hatSigma_t^{-1}} \right]\right)
%\end{align*}
%where we assume for now that the FTRL algorithm can give us a $\sqrt{T}$ bound for the first term. The second term that comes from the bias, however, cannot be bounded by $\sqrt{T}$ in general, and can be arbitrarily large. The same issue has appeared also in previous works \cite{luo2021policy, dai2023refined, sherman2023improved}, and the solution is to incorporate an exploration bias in the update. 
To illustrate this, suppose that instead of feeding $\hat{y}_t$ to the FTRLs, we feed $\hat{y}_t - b_t$ for some $b_t$. Then this would give us a regret bound of the following form: 
\begin{align}
    \Reg &= \E\left[\sum_{t=1}^T \langle x_t - u, \hat{y}_t - b_t\rangle\right] + \E\left[\sum_{t=1}^T \langle x_t - u,  y_t - \hat{y}_t\rangle\right] + \E\left[\sum_{t=1}^T \langle x_t - u,  b_t\rangle \right] \nonumber \\
    &\lesssim \otil(d^2\sqrt{T}) + \E\left[\sum_{t=1}^T \sqrt{\frac{d^3}{t}}\|x_t-u\|_{\hatSigma_t^{-1}}\right]  + \E\left[\sum_{t=1}^T \langle x_t-u,  b_t\rangle\right]   \label{eq: change of measure}
\end{align}
where we assume that FTRL can give us $\otil(d^2\sqrt{T})$ bound for the loss sequence $\hat{y}_t-b_t$. Our hope here is to design a $b_t$ such that $\langle x_t-u, b_t\rangle$ provides a negative term that can be used to cancel the bias term $\sqrt{d^3/t}\|x_t-u\|_{\hatSigma_t^{-1}}$ in the following manner: 
\begin{align}
     \text{bias} + \text{bonus} = \sum_{t=1}^T \left(\sqrt{\frac{d^3}{t}}\|x_t-u\|_{\hatSigma_t^{-1}} + \langle x_t - u,  b_t\rangle\right) \lesssim \otil(d^2\sqrt{T}).  \label{eq: desired property}
\end{align}
which gives us a $\otil(d^2\sqrt{T})$ overall regret by \pref{eq: change of measure}.  
%which, after combined with \pref{eq: change of measure}, gives us a $\otil(d^2\sqrt{T})$ bound. 
This approach relies on two conditions to be satisfied. First, we have to find a $b_t$ that makes \pref{eq: desired property} hold. Second, we have to ensure that the FTRL algorithm achieves a $\otil(d^2\sqrt{T})$ bound under the loss sequence $\hat{y}_t - b_t$.

To meet the first condition, we take inspiration from \cite{zimmert2022return} and lift the problem to the space of covariance matrix in $\mathbb{R}^{(d+1)\times(d+1)}$. Considering the bonus term $\alpha_t \pmb{\hatSigma}_t^{-1}$ in the lifted space, we have
\begin{align}
     \langle \pmb{H}_t - \pmb{U}, \alpha_t  \pmb{\hatSigma}_t^{-1} \rangle = \alpha_t \tr(\pmb{H}_t\pmb{\hatSigma}^{-1}_t) -  \alpha_t  \tr(\pmb{U}\pmb{\hatSigma}_t^{-1})  \label{eq: lifted space bonus}
\end{align}
Using \pref{lem: main concentration} and \pref{cor: special lifted trace bound}, we can upper bound \pref{eq: lifted space bonus} by $\order\left(d \alpha_t \right) -\frac{\alpha_t}{4}\|u-\hat{x}_t\|_{\hat{\Sigma}_t^{-1}}^2$. This gives  %Though the negative part does not match the bias $\sqrt{\frac{d^3}{t}}\|x_t-u\|_{\hatSigma_{t}^{-1}}$, cancellation still happens since 
\vspace*{-5pt}
\begin{align*}
    \text{bias} + \text{bonus} &\leq \sum_{t=1}^T \left(\sqrt{\frac{d^3}{t}}\|x_t-u\|_{\hatSigma_t^{-1}} +  d\alpha_t - \frac{\alpha_t}{4}\|\hatx_t-u\|_{\hatSigma_t^{-1}}^2 \right) \\
    &\leq \otil(d^2\sqrt{T}) + \sum_{t=1}^T \sqrt{\frac{d^3}{t}}\|x_t-\hatx_t\|_{\hatSigma_t^{-1}} + \sum_{t=1}^T \left(\sqrt{\frac{d^3}{t}}\|\hatx_t-u\|_{\hatSigma_t^{-1}}  - \frac{\alpha_t}{4}\|\hatx_t-u\|_{\hatSigma_t^{-1}}^2\right).
\end{align*}
%By \pref{lem:matrix concentration}, the first term can be upper bounded by $\E \left[\sum_{t=1}^T 8\eta \tr(\pmb{H}_t\pmb{H}^{-1}_t)\right] = 8\eta(d+1)T=\order(\sqrt{T})$. By \pref{cor: special lifted trace bound}, the second term can be upper bounded by $-\E\left[\sum_{t=1}^T \eta \tr(\pmb{U}\pmb{\hatSigma}_t^{-1})\right] \leq -\frac{\eta}{3}\|u-\hat{x}_t\|_{\hat{\Sigma}_t^{-1}}^2 \leq \frac{\eta}{6}\|u-x_t\|^2_{\hatSigma_t^{-1}} + \frac{\eta}{3}\|x_t-\hat{x}_t\|_{\hatSigma_t^{-1}}^2$. Overall, the right-hand side of \pref{eq: lifted space bonus} is upper bounded by 
%\begin{align*}
%     &\order\left(d\sum_{t=1}^T \alpha_t \right) -\frac{\alpha_t}{4}\|u-\hat{x}_t\|_{\hat{\Sigma}_t^{-1}}^2 
%     \leq \otil(d^2\sqrt{T}) - \sum_{t=1}^T \frac{\alpha_t}{4}\|u-x_t\|^2_{\hatSigma_t^{-1}} + \sum_{t=1}^T \frac{\alpha_t}{4} \|x_t-\hat{x}_t\|^2_{\hatSigma_t^{-1}} \\
%     &\leq \otil(d^2\sqrt{T})  - \sum_{t=1}^T \sqrt{\frac{d^3}{t}}\|u-x_t\|_{\hatSigma_t^{-1}} + \sum_{t=1}^T 
% \frac{d^3}{\alpha_t t} \leq \otil(d^2\sqrt{T})  - \sum_{t=1}^T \sqrt{\frac{d^3}{t}}\|u-x_t\|_{\hatSigma_t^{-1}}, 
%\end{align*}
%where in the second inequality we use \pref{lem: local norm concentration for vectors} and AM-GM. 
Using \pref{lem: local norm concentration for vectors} to bound the second term above by $\otil(\sum_t d^3/t)=\otil(d^3)$, and AM-GM to bound the third term by $\otil(\sum_t d^3/(t\alpha_t))=\otil(d^2\sqrt{T})$, we get \pref{eq: desired property}, through the help of lifting. 

To meet the second condition, we have to analyze the regret of FTRL under the loss $\hat{y}_t-b_t$. The key is to show that the bonus $\alpha_t \pmb{\hatSigma}_t^{-1}$ introduces small \emph{stability term} overhead. Thanks to the use of the logdet regularizer and its self-concordance property, the extra stability term introduced by the bonus can indeed be controlled by the order $\sqrt{T}$. The key analysis is in \pref{lem: bonus stability}. 

Previous works rely on exponential weights \citep{luo2021policy, dai2023refined, sherman2023improved} rather than \alglog, which comes with the following drawbacks. 1) In \cite{luo2021policy, sherman2023improved} where exponential weights is combined with standard loss estimators, the bonus introduces large stability term overhead. Therefore, their bound can only be $T^{\nicefrac{2}{3}}$ at best even with simulators. 2) In \cite{dai2023refined} where exponential weights is combined with magnitude-reduced loss estimators, the loss estimator for action $a$ can no longer be represented as a simple linear function $a^\top \hat{y}_t$. Instead, it becomes a complex non-linear function. This restricts the algorithm's potential to leverage linear optimization oracle over the action set and achieve computational efficiency. 

\subsection{Overall regret analysis}\label{sec: overall regret}
With all the algorithmic elements discussed above, now we give a formal statement for our regret guarantee and perform a complete regret analysis. 
Our main theorem is the following. 
\begin{theorem}
     \pref{alg: FTRL} ensures $\Reg\leq \order(d^2\sqrt{T}\log T)$. 
\end{theorem}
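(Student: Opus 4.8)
The plan is to combine the ingredients developed in the preceding subsections (loss-estimator construction, bias control, and the bonus) into a single regret bound. Fix an arbitrary benchmark policy $\pi$ with per-context action distributions $p_\star^{\calA}$ and, as in \pref{def: additional notation}, let $x_t^{\calA_0}$ and $u^{\calA_0}$ denote the learner's and $\pi$'s mean actions on a ghost sample $\calA_0\sim D$ drawn independently of everything. By the ghost-sample identity \pref{eq: ghost view} and the independence of $y_t$ from $\calA_0$ given $\calF_{t-1}$, $\Reg(\pi)=\E\big[\sum_{t=1}^T\langle x_t-u,y_t\rangle\big]$. Splitting $y_t=\hat{y}_t+(y_t-\hat{y}_t)$, using $\langle x_t^{\calA_0},\hat{y}_t\rangle=\langle\pmb{H}_t^{\calA_0},\hat\gamma_t\rangle$ and $\langle u^{\calA_0},\hat{y}_t\rangle=\langle\pmb{U}^{\calA_0},\hat\gamma_t\rangle$ with $\pmb{U}^{\calA_0}=\widehat{\Cov}(p_\star^{\calA_0})\in\calH^{\calA_0}$, and applying the pathwise FTRL regret bound for the update in \pref{line: FTRL} on the fixed set $\calA_0$ (with losses $\hat\gamma_s-\alpha_s\pmb{\hatSigma}_s^{-1}$ and comparator $\pmb{U}^{\calA_0}$), then taking expectations, gives
\[
\Reg(\pi)\ \le\ \underbrace{\E[\text{penalty}+\text{stability}]}_{\text{(I): FTRL regret}}\ +\ \underbrace{\E\Big[\textstyle\sum_t\langle\pmb{H}_t-\pmb{U},\alpha_t\pmb{\hatSigma}_t^{-1}\rangle\Big]}_{\text{(II): bonus}}\ +\ \underbrace{\E\Big[\textstyle\sum_t\langle x_t-u,y_t-\hat{y}_t\rangle\Big]}_{\text{(III): bias}}.
\]

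For term (I), I would carry out the time-varying-regularizer FTRL analysis with the logdet barrier $F/\eta_t$ on each fixed $\calA$, comparing against a $1/T$-shrunk comparator $\tilde{\pmb{U}}^{\calA}=(1-\tfrac1T)\pmb{U}^{\calA}+\tfrac1T\pmb{H}_1^{\calA}$, needed because $\pmb{U}^{\calA}$ can sit on the boundary where $F=-\log\det$ is infinite. Since the action set lies in the unit ball we have $F\ge0$ on $\calH^{\calA}$ and $F(\tilde{\pmb{U}}^{\calA})=\otil(d\log T)$, so the penalty is $\otil(d\log T/\eta_T)=\otil(d^2\sqrt{T}\log T)$ and the shrinkage costs only $\otil(1)$. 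The delicate piece is the stability sum $\sum_t\eta_t\|\hat\gamma_t-\alpha_t\pmb{\hatSigma}_t^{-1}\|^2_{\nabla^{-2}F(\pmb{H}_t)}$: the contribution of the loss estimator $\hat\gamma_t$ is $\otil(d^2\sqrt{T})$ — here the logdet barrier is essential, as it absorbs the potentially large magnitude of $\hat{y}_t=\hatSigma_t^{-1}(a_t-\hat{x}_t)\ell_t$ in a way exponential weights cannot — while the extra stability overhead from the bonus $-\alpha_t\pmb{\hatSigma}_t^{-1}$ is kept at $\otil(\sqrt{T})$ (rather than the $T^{2/3}$-type blow-up exponential weights would incur) by \pref{lem: bonus stability}, whose proof exploits the self-concordance of $F$. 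I expect this stability control — in particular \pref{lem: bonus stability}, since the bonus is a ``negative loss'' of uncontrolled sign that must nonetheless leave the FTRL iterates stable — to be the main obstacle.

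For term (III), I would reproduce the computation of \pref{sec: bias term}: taking $\E_t[\cdot]$, using $\E[\ell_t\mid a_t,\calF_{t-1}]=a_t^\top y_t$ together with the centralized form of $\hat{y}_t$, and Cauchy--Schwarz in the $\hatSigma_t^{-1}$-norm (with $|\hat{x}_t^\top y_t|\le1$), gives the decomposition \pref{eq: bias term decompose}, bounding the per-round bias by $\|x_t-u\|_{\hatSigma_t^{-1}}\big(\|(\hatSigma_t-H_t)y_t\|_{\hatSigma_t^{-1}}+\|x_t-\hat{x}_t\|_{\hatSigma_t^{-1}}\big)$. On a high-probability event obtained by a union bound over an $\eps$-cover of the possible FTRL policies $\{p_t^{\calA}\}$ (\pref{app: union bounds}), \pref{lem: local norm concentration for vectors}, \pref{lem: local norm concentration for matrix}, \pref{lem: local norm concentration for vectors 2}, and \pref{lem: local norm concentration for matrix 2} bound both bracketed factors by $\otil(\sqrt{d^3/t})$; the complementary event has probability $\otil(1/T^2)$ by the $\log(3T)$ inside $\beta_t$ and contributes only $\otil(1)$ to the expected regret since the instantaneous regret is at most $2$. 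For term (II), \pref{lem: main concentration} and \pref{cor: special lifted trace bound} bound $\langle\pmb{H}_t-\pmb{U},\alpha_t\pmb{\hatSigma}_t^{-1}\rangle$ by $\order(d\alpha_t)-\tfrac{\alpha_t}{4}\|u-\hat{x}_t\|^2_{\hatSigma_t^{-1}}$, and $\sum_t d\alpha_t=\sum_t d^2/\sqrt{t}=\otil(d^2\sqrt{T})$.

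Finally I would combine the surviving pieces of (II) and (III). Writing $\|x_t-u\|_{\hatSigma_t^{-1}}\le\|x_t-\hat{x}_t\|_{\hatSigma_t^{-1}}+\|\hat{x}_t-u\|_{\hatSigma_t^{-1}}$, the first summand contributes $\sum_t\sqrt{d^3/t}\cdot\otil(\sqrt{d^3/t})=\otil(d^3)$ (again via \pref{lem: local norm concentration for vectors}), and the second pairs with the negative quadratic from the bonus: by AM--GM, $\sqrt{d^3/t}\,\|\hat{x}_t-u\|_{\hatSigma_t^{-1}}-\tfrac{\alpha_t}{4}\|\hat{x}_t-u\|^2_{\hatSigma_t^{-1}}\le d^3/(t\alpha_t)=d^2/\sqrt{t}$, which sums to $\otil(d^2\sqrt{T})$. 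Adding the contributions of (I), (II) and (III) yields $\Reg(\pi)\le\order(d^2\sqrt{T}\log T)$, and since $\pi$ was arbitrary, $\Reg=\max_\pi\Reg(\pi)\le\order(d^2\sqrt{T}\log T)$.
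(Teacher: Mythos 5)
Your proposal follows the paper's own proof essentially step by step: ghost-sample reduction, the three-way split into FTRL-regret, bonus, and bias, a shrunk comparator to keep the $\log\det$ penalty finite, the pathwise FTRL analysis with the time-varying logdet regularizer and \pref{lem: bonus stability} for the bonus stability, the Cauchy--Schwarz decomposition of the bias with the covering-based concentration lemmas, \pref{lem: main concentration}/\pref{cor: special lifted trace bound} for the bonus, and the AM--GM cancellation in terms of $\|\hat{x}_t - u\|_{\hatSigma_t^{-1}}$. This is the same route, not a different one.

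Two small imprecisions worth flagging, neither of which breaks the argument. First, the $d$-dependencies in your stability estimates are swapped relative to what the paper actually obtains: \textbf{Stability-1} (from $\hat\gamma_t$) is $2d\sum_t\eta_t=\order(\sqrt T)$ (not $d^2\sqrt T$), while \textbf{Stability-2} (from the bonus) is $d\sum_t\alpha_t=\order(d^2\sqrt T)$ (not $\sqrt T$); the sum still gives $\order(d^2\sqrt T)$, so the final bound is unaffected. Second, your justification for discarding the complementary event is slightly off: you cannot use ``instantaneous regret at most $2$'' directly, because the decomposition you carry into the bad-event analysis involves $\hat y_t=\hatSigma_t^{-1}(a_t-\hat x_t)\ell_t$, whose norm can be $\order(T)$ since $\hatSigma_t\succeq\frac1T I$ is the only lower bound available there; the correct reasoning (as in the appendix) is that each of Bias/Bonus/Stability is individually bounded by $\order(T^2)$ on the bad event, and multiplying by $\Pr[\overline{\calE}]=\order(1/T^2)$ yields $\order(1)$. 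Your shrinkage factor $1/T$ toward $\pmb H_1^{\calA}$ versus the paper's $1/T^2$ toward $\pmb H_*^{\calA}$ is an immaterial difference (these coincide since $\pmb H_1^{\calA}=\argmin F$).
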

\begin{proof}[Proof sketch]
Let $\calA_0$ be drawn from $D$ independently from all the interaction history between the learner and the environment. Recalling the definitions in \pref{def: additional notation}, we have 
\begin{align*}
&\Reg=\mathbb{E}\left[\sum_{t=1}^{T} \langle a_t-u^{\calA_t}, y_t \rangle\right] = \mathbb{E}\left[\sum_{t=1}^{T} \langle \pmb{H}_t^{\calA_t}-\pmb{U}^{\calA_t}, \gamma_t \rangle\right] =  \mathbb{E}\left[\sum_{t=1}^{T} \langle \pmb{H}_t^{\calA_0}-\pmb{U}^{\calA_0}, \gamma_t \rangle\right]
\\
&\le 
\underbrace{\mathbb{E}\left[\sum_{t=1}^{T} \langle \pmb{H}_t^{\calA_0}-\pmb{U}^{\calA_0}, \gamma_t - \hat{\gamma}_t \rangle\right]}_{\textbf{Bias}}+
 \underbrace{\mathbb{E}\left[\sum_{t=1}^{T} \langle\pmb{H}_t^{\calA_0}-\pmb{U}^{\calA_0}, \alpha_t\pmb{\hatSigma}^{-1}_t \rangle\right]}_{\textbf{Bonus}} 
 + 
 \underbrace{\mathbb{E}\left[\sum_{t=1}^{T} \langle\pmb{H}_t^{\calA_0}-\pmb{U}^{\calA_0},\hat{\gamma}_t -  \alpha_t\pmb{\hatSigma}^{-1}_t \rangle\right]}_{\textbf{FTRL-Reg}} 
 \end{align*}
 %By the standard FTRL analysis (\pref{lem: FTRL guarantee}), we can further bound 
 %\begin{align*}
 %&\textbf{FTRL-Reg} \leq 
 %\underbrace{\mathbb{E}\left[\frac{F(\pmb{U}^{\calA_0}) - \min_{\pmb{H} \in \mathcal{H}^{\calA_0}}F(\pmb{H})}{\eta_T}\right]}_{\textbf{Penalty}} \\
 %& + \underbrace{\mathbb{E}\left[\sum_{t=1}^{T}\max\limits_{\pmb{H} \in \mathcal{H}^{\calA_0}}\langle \pmb{H}_t^{\calA_0} - \pmb{H}, \hat{\gamma}_{t} \rangle - \frac{D(\pmb{H}, \pmb{H}_t^{\calA_0})}{2\eta_t}\right]}_{\textbf{Stability-1}} + \underbrace{\mathbb{E}\left[\sum_{t=1}^{T}\max\limits_{\pmb{H} \in \mathcal{H}^{\calA_0}}\langle \pmb{H}_t^{\calA_0} - \pmb{H}, -\alpha_t\pmb{\hatSigma}^{-1}_t \rangle - \frac{D(\pmb{H}, \pmb{H}_t^{\calA_0})}{2\eta_t}\right]}_{\textbf{Stability-2}} 
%\end{align*}
Each term can be bounded as follows:
\begin{itemize}[itemsep=-0.5em, topsep=0em, partopsep=0em]
    \item $\textbf{Bias} \leq  \order(d^2\sqrt{T}\log T) + \frac{1}{4}\sum_{t=1}^T \alpha_t\|u - x_t\|_{\hatSigma_t^{-1}}^2$ (discussed in \pref{sec: bias term}).  
    \item $\textbf{Bonus}\leq \order(d^2\sqrt{T}\log T) - \frac{1}{4}\sum_{t=1}^T \alpha_t \|u-x_t\|_{\hatSigma_t^{-1}}^2$ (discussed in \pref{sec: bonus term}).  
     \item $\textbf{FTRL-Reg} \leq \order(d^2\sqrt{T}\log T)$. 
    %\item $\textbf{Stability-1}\lesssim \sqrt{T}$. See \pref{lem: stability term for loss}. 
    %\item $\textbf{Stability-2}\lesssim \sqrt{T}$. See \pref{lem: bonus stability}. 
\end{itemize}
Combining all terms gives the desired bound.  The complete proof is provided in \pref{app: regret analysis}. 
\end{proof}

\subsection{Handling Misspecification}\label{sec: misspecification}
In this subsection, we show how our approach naturally handles the case when the expectation of the loss cannot be exactly realized by a linear function but with a misspecification error. In this case, we assume that the expectation of the loss is given by $\E[\ell_t|a_t=a]=f_t(a)$ for some $f_t: \mathbb{R}^d\rightarrow [-1,1]$, and the realized loss $\ell_t$ still lies in $[-1,1]$. We define the following notion of misspecification (slightly more refined than that in \cite{neu2020efficient}):
\begin{assumption}[misspecification] 
    $
         \sqrt{\frac{1}{T}\sum_{t=1}^T \inf_{y\in \mathbb{B}_2^d}\sup_{\calA\in\supp(D)}\sup_{a\in\calA}(f_t(a) - \langle a, y \rangle)^2} \leq \eps. 
    $
\end{assumption}
%In words, in each round, suppose that $\langle a, y_t\rangle$ is the best linear function that approximates $f_t(a)$, and the maximum squared error $\max_{a\in\calA}|f_t(a)-\langle a,y_t\rangle|^2$ under the context distribution $D$ is $\eps_t^2$. Then our assumption is $\sqrt{\frac{1}{T}\sum_{t=1}^T \eps_t^2}\leq \eps$. 

Based on previous discussions, the design idea of \pref{alg: FTRL} is to 1) identify the bias of the loss estimator, and 2) add necessary bonus to compensate the bias. When there is misspecification, this design idea still applies. The difference is that now the loss estimator $\hat{y}_t$ potentially has more bias due to misspecification. Therefore, the bias becomes larger by an amount related to $\eps$. Consequently, we need to enlarge bonus (raising $\alpha_t$) to compensate it. Due to the larger bonus, we further need to tune down the learning rate $\eta_t$ to make the algorithm stable. Overall, to handle misspecification, when $\eps$ is known, it boils down to using the same algorithm (\pref{alg: FTRL}) with adjusted $\alpha_t$ and $\eta_t$. The case of unknown $\eps$ can be handled by the standard meta-learning technique \emph{Corral} \citep{agarwal2017corralling, foster2020adapting,luo2022corralling}. We defer all details to \pref{app: misspecification} and only state the final bound here. 
\begin{theorem}\label{thm: misspeicifcation thm}
    Under misspecification, there is an algorithm ensuring $\Reg\leq \otil(d^2\sqrt{T} + \sqrt{d}\eps T)$, without knowing $\eps$ in advance.  
\end{theorem}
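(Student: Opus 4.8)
The plan is to retrace the analysis behind the main theorem (the $\otil(d^2\sqrt{T})$ bound) and track exactly where the misspecification error $\eps$ enters, then re-tune the parameters $\alpha_t$ and $\eta_t$ to absorb it. First I would fix a per-round comparator direction $y_t^\star \in \mathbb{B}_2^d$ achieving (or nearly achieving) the infimum in the misspecification assumption, so that $f_t(a) = \langle a, y_t^\star\rangle + \xi_t(a)$ with $\sup_{\calA,a}|\xi_t(a)| =: \eps_t$ and $\sqrt{\tfrac1T\sum_t \eps_t^2}\le\eps$. Running \pref{alg: FTRL} verbatim, the loss estimator $\hat y_t = \hatSigma_t^{-1}(a_t-\hat x_t)\ell_t$ now satisfies $\E_t[\ell_t\mid a_t=a]=\langle a,y_t^\star\rangle+\xi_t(a)$, so the conditional expectation of $\hat y_t$ picks up an extra additive term of the form $\hatSigma_t^{-1}\E_{a\sim p_t^{\calA}}[(a-\hat x_t)\xi_t(a)]$ on top of the (already analyzed) bias relative to $y_t^\star$. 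Hence in the regret decomposition the \textbf{Bias} term acquires an additional contribution
\begin{align*}
\E\left[\sum_{t=1}^T \langle x_t - u,\ \hatSigma_t^{-1}\,\E_{a\sim p_t^\calA}[(a-\hat x_t)\xi_t(a)]\rangle\right]
\le \E\left[\sum_{t=1}^T \|x_t-u\|_{\hatSigma_t^{-1}}\cdot \big\|\E_{a\sim p_t^\calA}[(a-\hat x_t)\xi_t(a)]\big\|_{\hatSigma_t^{-1}}\right].
\end{align*}
The second factor is bounded using $|\xi_t(a)|\le\eps_t$ together with $\E_{a\sim p_t^\calA}[(a-\hat x_t)(a-\hat x_t)^\top]\preceq \hatSigma_t$ (up to the $\beta_t I$ shift), Jensen, and a change-of-measure/concentration step analogous to \pref{lem: local norm concentration for vectors}; this yields roughly $\|\cdot\|_{\hatSigma_t^{-1}}\lesssim \eps_t$. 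So the extra bias is at most $\order\!\big(\sum_t \eps_t\|x_t-u\|_{\hatSigma_t^{-1}}\big)$, which by Cauchy--Schwarz is $\order\!\big(\sqrt{\sum_t\eps_t^2}\cdot\sqrt{\sum_t\|x_t-u\|_{\hatSigma_t^{-1}}^2}\big) \le \order\!\big(\eps\sqrt{T}\cdot\sqrt{\sum_t\|x_t-u\|_{\hatSigma_t^{-1}}^2}\big)$.

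Next I would re-balance against the bonus. As in \pref{sec: bonus term}, the \textbf{Bonus} term contributes $\order(d\sum_t\alpha_t) - \tfrac14\sum_t\alpha_t\|u-\hat x_t\|_{\hatSigma_t^{-1}}^2$, and after the usual $\|x_t-\hat x_t\|$ vs $\|\hat x_t - u\|$ split, the negative quadratic $-\tfrac14\sum_t\alpha_t\|x_t-u\|_{\hatSigma_t^{-1}}^2$ is available to cancel \emph{both} the original $\sqrt{d^3/t}\|x_t-u\|_{\hatSigma_t^{-1}}$ bias and the new $\eps_t\|x_t-u\|_{\hatSigma_t^{-1}}$ bias. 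Applying AM-GM, $\eps_t\|x_t-u\|_{\hatSigma_t^{-1}} \le \tfrac{\alpha_t}{8}\|x_t-u\|_{\hatSigma_t^{-1}}^2 + \tfrac{2\eps_t^2}{\alpha_t}$, so the residual is $\order\!\big(\sum_t \eps_t^2/\alpha_t\big)$; with $\alpha_t$ bumped up from $d/\sqrt t$ to something like $\alpha_t = d/\sqrt t + \sqrt d\,\eps$ (using the known-$\eps$ value), this residual is $\order(\sum_t \eps_t^2/(\sqrt d\,\eps)) \le \order(\sqrt{d}\,\eps T \cdot \tfrac{1}{d}) $ — more carefully, $\sum_t\eps_t^2 \le \eps^2 T$, so $\sum_t\eps_t^2/\alpha_t \lesssim \eps^2 T/(\sqrt d\,\eps) = \eps T/\sqrt d \le \sqrt d\,\eps T$, which matches the target. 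Meanwhile the larger $\alpha_t$ inflates the $\order(d\sum_t\alpha_t)$ term to $\order(d^2\sqrt T + d^{3/2}\eps T)$, again within the claimed bound.

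The remaining, and I expect main, obstacle is the \textbf{FTRL-Reg} / stability part: enlarging $\alpha_t$ makes the bonus $\alpha_t\pmb{\hatSigma}_t^{-1}$ larger, which via \pref{lem: bonus stability} increases the stability-term overhead of logdet-FTRL, forcing a smaller learning rate $\eta_t$, which in turn inflates the leading $F(\pmb H)/\eta_t$ penalty. I would re-examine \pref{lem: bonus stability} to see how the stability overhead scales with $\alpha_t$ (it should be roughly $\eta_t\alpha_t^2\|\pmb{\hatSigma}_t^{-1}\|^2$-type, controlled by self-concordance), and then pick $\eta_t \asymp \min\{1/(d\sqrt t),\ 1/(d\cdot\sqrt d\,\eps\sqrt t)\}$ or similar so that $\sqrt T$-scaling of the stability sum is preserved; the FTRL penalty $\order(d\log T/\eta_T)$ then becomes $\order(d^2\sqrt T\log T + d^{5/2}\eps\sqrt T\log T)$, still dominated by $\otil(d^2\sqrt T + \sqrt d\,\eps T)$ for the relevant range of $\eps$. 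Finally, for unknown $\eps$ I would instantiate a logarithmic grid of candidate values $\eps \in \{T^{-1/2}, 2T^{-1/2}, 4T^{-1/2},\dots,1\}$, run the tuned algorithm as a base learner for each, and wrap them in \emph{Corral} \citep{agarwal2017corralling, foster2020adapting, luo2022corralling}; since each base learner enjoys an anytime bound of the claimed form, the Corral master competes with the best grid point up to $\otil(\sqrt{T})$ overhead and logarithmic blow-up in the number of base learners, giving the final $\Reg \le \otil(d^2\sqrt T + \sqrt d\,\eps T)$ without prior knowledge of $\eps$. The details of verifying that all base learners satisfy the stability / smoothness conditions Corral requires, and that the extra bias contribution genuinely telescopes through the ghost-sample argument of \pref{eq: ghost view} even under misspecification, are where the bulk of the careful work lies.
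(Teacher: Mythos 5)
Your high-level plan matches the paper's: re-run \pref{alg: FTRL} with enlarged $\alpha_t$ and shrunken $\eta_t$, track the extra bias arising from the misspecification, use AM-GM against the bonus, and then apply Corral over a grid of guesses for $\eps$. Two points deserve attention, one an arithmetic slip and one a missing idea.

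First, your tuning of $\alpha_t$ is off by a factor of $d$. You propose $\alpha_t \approx d/\sqrt{t} + \sqrt{d}\,\eps$ and then observe that $d\sum_t\alpha_t \lesssim d^2\sqrt T + d^{3/2}\eps T$, claiming this is "within the claimed bound." It is not: $d^{3/2}\eps T$ exceeds $\sqrt d\,\eps T$ for $d>1$. The correct choice is $\alpha_t = d/\sqrt t + \eps/\sqrt d$ (and correspondingly $\eta_t = \Theta\big(1/(d\sqrt t + (\eps/\sqrt d)\,t)\big)$): this still makes $\sum_t \eps_t^2/\alpha_t \lesssim \eps^2 T / (\eps/\sqrt d) = \sqrt d\,\eps T$, while now $d\sum_t\alpha_t \lesssim d^2\sqrt T + \sqrt d\,\eps T$, which is what one wants. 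The rest of your known-$\eps$ bookkeeping (the extra bias $\eps_t\|x_t-u\|_{\hatSigma_t^{-1}}$, the AM-GM split against the bonus, the need to verify \textbf{Stability-2} under the constraint $\eta_t\alpha_t\lesssim 1/t$) lines up with the paper.

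Second, and more substantively, the Corral reduction is not a straightforward wrap. The base algorithm from the known-$\eps$ analysis is not directly "stable" in the sense Corral requires: under Corral's protocol the sampling probability $w_t$ of a base learner changes adversarially over rounds, which both (i) changes the effective amount of misspecification the base learner experiences, and (ii) invalidates a guarantee that depends on a fixed $\theta$. The paper addresses this with an intermediate layer (\stable, \pref{alg: split}): it discretizes $w_t$ into dyadic buckets, runs one copy of the base algorithm per bucket, equalizes each copy's feedback probability to a fixed $2^{-j-1}$, and scales its input parameter to $\theta_j \asymp 2^{-j}\eps T + \sqrt{2^{-j}T\log T}$ so that the misspecification each copy actually sees is matched. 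This construction is what makes the anytime bound in \pref{def: base alg} usable inside Corral without importance-weighting the base learner's losses, and it is what produces the $\sqrt{T\rho_T}$ dependence (\pref{thm: conversion thm}) that Corral's master then cancels via the log-barrier bonus term $r_{t,i}$. You acknowledge this is "where the bulk of the careful work lies," but without the bucketing-plus-equalization idea the stability condition you defer is not a detail — it is the crux of the unknown-$\eps$ part.
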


\section{Linear EXP4}
To tighten the $d$-dependence in the regret bound, we can use the computationally inefficient algorithm EXP4 \citep{auer2002nonstochastic}. The original regret bound for EXP4 has a polynomial dependence on the number of actions, but here we take the advantage of the linear structure to show a bound that only depends on the feature dimension $d$. The algorithm is presented in \pref{alg: linear exp4}. 
\begin{algorithm}[H]
    \caption{Linear EXP4}
    \label{alg: linear exp4}
    \textbf{input}: $\Pi,\eta,\gamma$. \\
     \For{$t=1, 2, \ldots$}{
         Receive $\calA_t\subset \mathbb{R}^d$. \\
         Construct $\nu_t\in\Delta(\calA_t)$ such that
         $
              \max_{a\in\calA_t} \|a\|^2_{G_t^{-1}} \leq d
         $, 
         where $G_t=\E_{a\sim \nu_t}[aa^\top]$. 
         Set 
         \begin{align*}
              P_{t,\pi} = \frac{\exp\left(-\eta\sum_{s=1}^{t-1}\hatell_{s,\pi}\right)}{\sum_{\pi'\in\Pi} \exp\left(-\eta\sum_{s=1}^{t-1}\hatell_{s,\pi'}\right) } 
         \end{align*}
         and define $p_{t,a} = \sum_{\pi\in\Pi} P_{t,\pi} \ind\{\pi(\calA_t)=a\} $\,. \\
         Sample $a_t\sim\tildep_t= (1-\gamma)p_t+\gamma\nu_t$ and receive $\ell_t\in[-1,1]$ with $\E[\ell_t] = \langle a_t, y_t\rangle$. \\
         Construct $\forall \pi\in\Pi
         $: 
         $
             \hatell_{t,\pi}=   \inner{\pi(\calA_t),\tildeH_t^{-1}a_t\ell_t}
         $,
         where
         $
             \tildeH_t = \E_{a\sim\tildep_t}[aa^\top]\,.
         $
     }
\end{algorithm}

To run \pref{alg: linear exp4}, we restrict ourselves to a finite policy class. The policy class we use in the algorithm is the set of linear policies defined as 
\begin{align}
     \Pi = \left\{\pi_\theta:~ \theta\in\Theta, ~~\pi_\theta(\calA) = \argmin_{a\in\calA} a^\top \theta \right\}   \label{eq: definition of Pi}
\end{align}
where $\Theta$ is an $1$-net of $[-T,T]^{d}$. The next theorem shows that this suffices to give us near-optimal bounds for our problem. The proof is given in \pref{app: EXP4}. 

%Assume for now we are dealing with a finite-sized policy class $\Pi$, then we can run \pref{alg: linear exp4}, a linearized version of EXP4. We deal with general policy set in \pref{sec: reduction from finite to infinite}. 

\begin{theorem}\label{thm: exp4 guarantee}
With $\gamma = 2d\sqrt{(\log T)/T}$ and $\eta=\sqrt{(\log T)/T}$, \pref{alg: linear exp4} with the policy class defined in \pref{eq: definition of Pi} guarantees $\Reg = \order\left(d\sqrt{T\log T}\right)$.

\end{theorem}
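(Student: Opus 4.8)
The plan is to run the standard EXP4 regret decomposition but exploit the linear structure of the policy class in two places: (i) to bound the variance of the importance-weighted loss estimators using the G-optimal design exploration distribution $\nu_t$, and (ii) to control the discretization error of the $1$-net $\Theta$. First I would decompose the regret against an arbitrary policy $\pi\in\Pi$ into three pieces: the exploration overhead from mixing in $\nu_t$ with probability $\gamma$, which costs at most $2\gamma T$; the EXP4/Hedge regret over the finite expert set $\Pi$, which is the classical $\frac{\log|\Pi|}{\eta} + \eta\sum_t \E_{\pi\sim P_t}[\hatell_{t,\pi}^2]$ bound; and finally a reduction from an arbitrary (possibly non-linear) comparator policy to the best policy in $\Pi$.

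The key quantitative step is to show $\E_t[\hatell_{t,\pi}^2] = O(d)$ for every $\pi$. Since $\hatell_{t,\pi} = \langle \pi(\calA_t), \tildeH_t^{-1} a_t \ell_t\rangle$ and $\ell_t^2\le 1$, we have $\E_t[\hatell_{t,\pi}^2] \le \pi(\calA_t)^\top \tildeH_t^{-1}\E_t[a_ta_t^\top]\tildeH_t^{-1}\pi(\calA_t) = \pi(\calA_t)^\top\tildeH_t^{-1}\pi(\calA_t) = \|\pi(\calA_t)\|^2_{\tildeH_t^{-1}}$, using $\tildeH_t = \E_{a\sim\tildep_t}[aa^\top]$. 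Because $\tildep_t \ge \gamma\nu_t$ we get $\tildeH_t \succeq \gamma G_t$, hence $\|\pi(\calA_t)\|^2_{\tildeH_t^{-1}} \le \gamma^{-1}\|\pi(\calA_t)\|^2_{G_t^{-1}} \le \gamma^{-1}\max_{a\in\calA_t}\|a\|^2_{G_t^{-1}} \le d/\gamma$, where the last inequality is the defining property of the G-optimal (Kiefer--Wolfowitz) design $\nu_t$. I should also note that $\hatell_{t,\pi}$ is conditionally unbiased for $\langle\pi(\calA_t),y_t\rangle$ since $\E_t[a_ta_t^\top\ell_t]$ needs care — actually $\E_t[a_t\ell_t] = \E_t[a_ta_t^\top y_t] = \tildeH_t y_t$, so $\E_t[\tildeH_t^{-1}a_t\ell_t] = y_t$ and unbiasedness follows; this makes the Hedge analysis go through with the $\E_t[\hatell_{t,\pi}^2]$ term as written. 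Combining, the Hedge regret over $\Pi$ is $O\!\big(\frac{\log|\Pi|}{\eta} + \eta T d/\gamma\big)$, and adding the $2\gamma T$ exploration cost and plugging $\gamma = 2d\sqrt{(\log T)/T}$, $\eta = \sqrt{(\log T)/T}$ gives $O(d\sqrt{T\log T})$ once we check $\log|\Pi| = O(d\log T)$.

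For the discretization reduction, I would argue that the pseudo-regret benchmark $\max_\pi \Reg(\pi)$ is attained (up to lower-order terms) within the linear class: given the ghost-sample identity, it suffices to show that for the fixed loss sequence the best arbitrary policy on a ghost context $\calA_0$ is matched by some $\pi_\theta\in\Pi$. One can take $\theta^\star$ to be a minimizer of $\sum_{t=1}^T \langle\cdot, y_t\rangle$-type potential; more carefully, the optimal comparator selects on each $\calA_0$ the minimizer of $\langle a, \sum_t y_t\rangle$ up to the per-round fluctuation, and since $\|\sum_t y_t\|\le T$, some grid point $\theta\in\Theta$ of the $1$-net of $[-T,T]^d$ is within $\ell_2$-distance $\sqrt d$ of $\sum_t y_t$; because all actions lie in $\mathbb{B}_2^d$, replacing $\sum_t y_t$ by $\theta$ changes the selected action's cumulative loss by at most $O(\sqrt d)$ per context switch — actually by $O(\sqrt d \cdot T)$ naively, so I would instead bound $\langle \pi_\theta(\calA_0)-\pi^\star(\calA_0), \sum_t y_t\rangle$ directly: both $\pi_\theta(\calA_0)$ and $\pi^\star(\calA_0)$ are exact minimizers over $\calA_0$ of $\langle a,\theta\rangle$ and $\langle a,\sum_t y_t\rangle$ respectively, so a standard optimization-error argument gives $\langle\pi_\theta(\calA_0)-\pi^\star(\calA_0),\sum_t y_t\rangle \le 2\|\theta - \sum_t y_t\|_2 \cdot \max_{a}\|a\|_2 \le 2\sqrt d$. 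Summing, the discretization loss is $O(\sqrt d)$, negligible. Also $|\Theta| = (2T+1)^d$ so $\log|\Pi|\le d\log(2T+1) = O(d\log T)$, as needed.

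The main obstacle I anticipate is the existence and computability of the G-optimal design $\nu_t$ with $\max_{a\in\calA_t}\|a\|^2_{G_t^{-1}}\le d$: this is exactly the Kiefer--Wolfowitz theorem, which guarantees a distribution supported on at most $d(d+1)/2$ actions achieving the value $d$ (it is $\le d$ in general, with equality when the set spans $\mathbb{R}^d$; if $\calA_t$ is lower-dimensional one works in its span and the bound is $\le\dim\le d$). I would cite this and note it does not affect the regret, only computation — consistent with the paper flagging \pref{alg: linear exp4} as computationally inefficient (the $|\Theta| = T^d$ expert set is the real inefficiency). A secondary technical point is ensuring the Hedge analysis tolerates possibly-negative and unbounded-in-magnitude $\hatell_{t,\pi}$: since $\eta\hatell_{t,\pi}$ can exceed $1$, I would use the version of the Hedge bound valid when $\hatell_{t,\pi} \ge -1/\eta$ pointwise (here $\hatell_{t,\pi}$ can be negative since $\ell_t\in[-1,1]$) and with the second-order term in expectation; alternatively shift losses to $[0,\cdot]$. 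This is routine but needs to be stated carefully, and I would defer the full details to \pref{app: EXP4}.
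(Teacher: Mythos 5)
Your second–moment bound is where the argument breaks. You state the goal as ``$\E_t[\hatell_{t,\pi}^2]=O(d)$ for every $\pi$,'' but your derivation actually produces only $\E_t[\hatell_{t,\pi}^2]\le \|\pi(\calA_t)\|^2_{\tildeH_t^{-1}}\le \gamma^{-1}\|\pi(\calA_t)\|^2_{G_t^{-1}}\le d/\gamma$, and this $d/\gamma$ is in fact tight for a fixed $\pi$ in the worst case (take $p_t$ concentrated on one direction; then $\|\pi(\calA_t)\|^2_{\tildeH_t^{-1}}$ for an orthogonal $\pi(\calA_t)$ really is $\Theta(d/\gamma)$). Plugging $d/\gamma$ into the Hedge bound gives a stability term of order $\eta T d/\gamma$, which with your stated parameters equals $T/2$ — linear regret — and even optimizing $\eta$ and $\gamma$ freely this route cannot beat $\widetilde{O}(d^{2/3}T^{2/3})$. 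The $O(d)$ bound that the theorem needs is \emph{not} a per-policy bound; it is a bound on the $P_t$-weighted sum $\sum_{\pi}P_{t,\pi}\hatell_{t,\pi}^2$. The paper uses the identity $\sum_\pi P_{t,\pi}\langle \pi(\calA_t), v\rangle^2 = \E_{a\sim p_t}[\langle a,v\rangle^2]$ (from $p_{t,a}=\sum_\pi P_{t,\pi}\ind\{\pi(\calA_t)=a\}$), then takes expectation over $a_t$ and reads off a trace: $\E_{a_t}\E_{a\sim p_t}[\langle a,\tildeH_t^{-1}a_t\rangle^2] \le \tr\bigl(H_t\tildeH_t^{-1}H_t\tildeH_t^{-1}\bigr)\le \frac{d}{(1-\gamma)^2}$ with $H_t=\E_{a\sim p_t}[aa^\top]$ and $\tildeH_t\succeq(1-\gamma)H_t$. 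The G-optimal design is used only to cap the \emph{magnitude} $|\hatell_{t,\pi}|\le d/\gamma$ (so that $\eta|\hatell_{t,\pi}|\le 1/2$ under $\gamma\ge 2d\eta$, which is what makes the second-order Hedge bound applicable), not to control the variance. You should replace the fixed-$\pi$ variance argument by this averaged trace computation; without it the claimed $\sqrt{T}$ rate does not follow.

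The rest of your outline (exploration cost $\gamma T$, unbiasedness of $\hat y_t$, the Kiefer--Wolfowitz existence argument for $\nu_t$, and the discretization reduction $\langle\pi_\theta(\calA_0)-\pi^\star(\calA_0),\sum_t y_t\rangle\le 2\|\theta-\sum_t y_t\|$) matches the paper and is fine, up to constants. One minor point: a $1$-net of $[-T,T]^d$ already has a point within $\ell_2$-distance $1$ of $\sum_t y_t$, so the extra factor $\sqrt d$ you wrote is unnecessary.
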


Note that this result technically also holds in the ``A-A'' category with respect to the policy class defined in \pref{eq: definition of Pi}.
However, this policy class is \emph{not} necessarily a sufficient cover of all policies of interest when the contexts and losses are adversarial.
%In the S-A setting following the same argument at \citet{hanna2022contexts}, we can construct a policy cover $\Pi$ of size $\ln\Pi=O(d\ln(T))$ such that 

%\begin{align*}
%\max_{\ell,\theta\in B_2}\min_{\pi\in\Pi}\E_{A\sim D}|\inner{\pi_\theta(A)-\pi(A),\ell}|\leq \frac{1}{T}\,.
%\end{align*}
%Combining this with linear exp4 achieves directly optimal $O(d\sqrt{T\ln T})$ regret for S-A. 

\section{Conclusions}
We derived the first efficient algorithm that obtains $\sqrt{T}$ regret in contextual linear bandits with stochastic action sets in the absence of a simulator or prior knowledge on the distribution. As a side result, we obtained the first computationally efficient $\poly(d)\sqrt{T}$ algorithm for adversarial sleeping bandits with general stochastic arm availabilities.
We believe the techniques in this paper will be useful for improving results for simulator-free linear MDPs as well.

%%%%%%%%%%%%%%%%%%%%%%%%%%%%%%%%%%%%%%%%%%%%%%%%%%%%%%%%%%%%
\newpage
\bibliography{ref}

\begin{thebibliography}{59}
\providecommand{\natexlab}[1]{#1}
\providecommand{\url}[1]{\texttt{#1}}
\expandafter\ifx\csname urlstyle\endcsname\relax
  \providecommand{\doi}[1]{doi: #1}\else
  \providecommand{\doi}{doi: \begingroup \urlstyle{rm}\Url}\fi

\bibitem[Abbasi-Yadkori et~al.(2011)Abbasi-Yadkori, P{\'a}l, and
  Szepesv{\'a}ri]{abbasi2011improved}
Yasin Abbasi-Yadkori, D{\'a}vid P{\'a}l, and Csaba Szepesv{\'a}ri.
\newblock Improved algorithms for linear stochastic bandits.
\newblock \emph{Advances in neural information processing systems},
  24:\penalty0 2312--2320, 2011.

\bibitem[Abernethy and Rakhlin(2009)]{abernethy2009beating}
Jacob Abernethy and Alexander Rakhlin.
\newblock Beating the adaptive bandit with high probability.
\newblock In \emph{2009 Information Theory and Applications Workshop}, pages
  280--289. IEEE, 2009.

\bibitem[Abernethy et~al.(2009)Abernethy, Hazan, and
  Rakhlin]{abernethy2009competing}
Jacob~D Abernethy, Elad Hazan, and Alexander Rakhlin.
\newblock Competing in the dark: An efficient algorithm for bandit linear
  optimization.
\newblock In \emph{Conference on Learning Theory}, 2009.

\bibitem[Agarwal et~al.(2012)Agarwal, Dud{\'\i}k, Kale, Langford, and
  Schapire]{agarwal2012contextual}
Alekh Agarwal, Miroslav Dud{\'\i}k, Satyen Kale, John Langford, and Robert
  Schapire.
\newblock Contextual bandit learning with predictable rewards.
\newblock In \emph{Artificial Intelligence and Statistics}, pages 19--26. PMLR,
  2012.

\bibitem[Agarwal et~al.(2014)Agarwal, Hsu, Kale, Langford, Li, and
  Schapire]{agarwal2014taming}
Alekh Agarwal, Daniel Hsu, Satyen Kale, John Langford, Lihong Li, and Robert
  Schapire.
\newblock Taming the monster: A fast and simple algorithm for contextual
  bandits.
\newblock In \emph{International Conference on Machine Learning}, pages
  1638--1646. PMLR, 2014.

\bibitem[Agarwal et~al.(2017)Agarwal, Luo, Neyshabur, and
  Schapire]{agarwal2017corralling}
Alekh Agarwal, Haipeng Luo, Behnam Neyshabur, and Robert~E Schapire.
\newblock Corralling a band of bandit algorithms.
\newblock In \emph{Conference on Learning Theory}, pages 12--38. PMLR, 2017.

\bibitem[Agrawal and Goyal(2013)]{agrawal2013thompson}
Shipra Agrawal and Navin Goyal.
\newblock Thompson sampling for contextual bandits with linear payoffs.
\newblock In \emph{International conference on machine learning}, pages
  127--135. PMLR, 2013.

\bibitem[Auer et~al.(2002)Auer, Cesa-Bianchi, Freund, and
  Schapire]{auer2002nonstochastic}
Peter Auer, Nicolo Cesa-Bianchi, Yoav Freund, and Robert~E Schapire.
\newblock The nonstochastic multiarmed bandit problem.
\newblock \emph{SIAM journal on computing}, 32\penalty0 (1):\penalty0 48--77,
  2002.

\bibitem[Bartlett et~al.(2008)Bartlett, Dani, Hayes, Kakade, Rakhlin, and
  Tewari]{bartlett2008high}
Peter~L Bartlett, Varsha Dani, Thomas~P Hayes, Sham~M Kakade, Alexander
  Rakhlin, and Ambuj Tewari.
\newblock High-probability regret bounds for bandit online linear optimization.
\newblock In \emph{COLT}, pages 335--342, 2008.

\bibitem[Beygelzimer et~al.(2011)Beygelzimer, Langford, Li, Reyzin, and
  Schapire]{beygelzimer2011contextual}
Alina Beygelzimer, John Langford, Lihong Li, Lev Reyzin, and Robert Schapire.
\newblock Contextual bandit algorithms with supervised learning guarantees.
\newblock In \emph{Proceedings of the Fourteenth International Conference on
  Artificial Intelligence and Statistics}, pages 19--26. JMLR Workshop and
  Conference Proceedings, 2011.

\bibitem[Chen et~al.(2019)Chen, Lee, Luo, and Wei]{chen2019new}
Yifang Chen, Chung-Wei Lee, Haipeng Luo, and Chen-Yu Wei.
\newblock A new algorithm for non-stationary contextual bandits: Efficient,
  optimal and parameter-free.
\newblock In \emph{Conference on Learning Theory}, pages 696--726. PMLR, 2019.

\bibitem[Chu et~al.(2011)Chu, Li, Reyzin, and Schapire]{chu2011contextual}
Wei Chu, Lihong Li, Lev Reyzin, and Robert Schapire.
\newblock Contextual bandits with linear payoff functions.
\newblock In \emph{Proceedings of the Fourteenth International Conference on
  Artificial Intelligence and Statistics}, pages 208--214. JMLR Workshop and
  Conference Proceedings, 2011.

\bibitem[Dai et~al.(2023)Dai, Luo, Wei, and Zimmert]{dai2023refined}
Yan Dai, Haipeng Luo, Chen-Yu Wei, and Julian Zimmert.
\newblock Refined regret for adversarial mdps with linear function
  approximation.
\newblock In \emph{International Conference on Machine Learning}, 2023.

\bibitem[Dann et~al.(2023{\natexlab{a}})Dann, Wei, and Zimmert]{dann2023best}
Christoph Dann, Chen-Yu Wei, and Julian Zimmert.
\newblock Best of both worlds policy optimization.
\newblock In \emph{International Conference on Machine Learning},
  2023{\natexlab{a}}.

\bibitem[Dann et~al.(2023{\natexlab{b}})Dann, Wei, and
  Zimmert]{dann2023blackbox}
Christoph Dann, Chen-Yu Wei, and Julian Zimmert.
\newblock A blackbox approach to best of both worlds in bandits and beyond.
\newblock In \emph{Conference on Learning Theory}, 2023{\natexlab{b}}.

\bibitem[Dudik et~al.(2011)Dudik, Hsu, Kale, Karampatziakis, Langford, Reyzin,
  and Zhang]{dudik2011efficient}
M~Dudik, D~Hsu, S~Kale, N~Karampatziakis, J~Langford, L~Reyzin, and T~Zhang.
\newblock Efficient optimal learning for contextual bandits.
\newblock In \emph{Proceedings of the 27th Conference on Uncertainty in
  Artificial Intelligence, UAI 2011}, page 169, 2011.

\bibitem[Emamjomeh-Zadeh et~al.(2021)Emamjomeh-Zadeh, Wei, Luo, and
  Kempe]{emamjomeh2021adversarial}
Ehsan Emamjomeh-Zadeh, Chen-Yu Wei, Haipeng Luo, and David Kempe.
\newblock Adversarial online learning with changing action sets: Efficient
  algorithms with approximate regret bounds.
\newblock In \emph{Algorithmic Learning Theory}, pages 599--618. PMLR, 2021.

\bibitem[Foster and Rakhlin(2020)]{foster2020beyond}
Dylan Foster and Alexander Rakhlin.
\newblock Beyond ucb: Optimal and efficient contextual bandits with regression
  oracles.
\newblock In \emph{International Conference on Machine Learning}, pages
  3199--3210. PMLR, 2020.

\bibitem[Foster et~al.(2018)Foster, Agarwal, Dud{\'\i}k, Luo, and
  Schapire]{foster2018practical}
Dylan Foster, Alekh Agarwal, Miroslav Dud{\'\i}k, Haipeng Luo, and Robert
  Schapire.
\newblock Practical contextual bandits with regression oracles.
\newblock In \emph{International Conference on Machine Learning}, pages
  1539--1548. PMLR, 2018.

\bibitem[Foster et~al.(2021)Foster, Rakhlin, Simchi-Levi, and
  Xu]{foster2021instance}
Dylan Foster, Alexander Rakhlin, David Simchi-Levi, and Yunzong Xu.
\newblock Instance-dependent complexity of contextual bandits and reinforcement
  learning: A disagreement-based perspective.
\newblock In \emph{Conference on Learning Theory}, pages 2059--2059. PMLR,
  2021.

\bibitem[Foster and Krishnamurthy(2021)]{foster2021efficient}
Dylan~J Foster and Akshay Krishnamurthy.
\newblock Efficient first-order contextual bandits: Prediction, allocation, and
  triangular discrimination.
\newblock In \emph{Advances in Neural Information Processing Systems}, 2021.

\bibitem[Foster et~al.(2020)Foster, Gentile, Mohri, and
  Zimmert]{foster2020adapting}
Dylan~J Foster, Claudio Gentile, Mehryar Mohri, and Julian Zimmert.
\newblock Adapting to misspecification in contextual bandits.
\newblock \emph{Advances in Neural Information Processing Systems},
  33:\penalty0 11478--11489, 2020.

\bibitem[Hazan and Koren(2016)]{hazan2016computational}
Elad Hazan and Tomer Koren.
\newblock The computational power of optimization in online learning.
\newblock In \emph{Proceedings of the forty-eighth annual ACM symposium on
  Theory of Computing}, pages 128--141, 2016.

\bibitem[Jin et~al.(2023)Jin, Liu, Rouyer, Chang, Wei, and Luo]{jin2023no}
Tiancheng Jin, Junyan Liu, Chlo{\'e} Rouyer, William Chang, Chen-Yu Wei, and
  Haipeng Luo.
\newblock No-regret online reinforcement learning with adversarial losses and
  transitions.
\newblock \emph{arXiv preprint arXiv:2305.17380}, 2023.

\bibitem[Kanade and Steinke(2014)]{kanade2014learning}
Varun Kanade and Thomas Steinke.
\newblock Learning hurdles for sleeping experts.
\newblock \emph{ACM Transactions on Computation Theory (TOCT)}, 6\penalty0
  (3):\penalty0 1--16, 2014.

\bibitem[Kanade et~al.(2009)Kanade, McMahan, and Bryan]{kanade2009sleeping}
Varun Kanade, H~Brendan McMahan, and Brent Bryan.
\newblock Sleeping experts and bandits with stochastic action availability and
  adversarial rewards.
\newblock In \emph{Artificial Intelligence and Statistics}, pages 272--279.
  PMLR, 2009.

\bibitem[Khachiyan and Todd(1990)]{khachiyan1990complexity}
Leonid~G Khachiyan and Michael~J Todd.
\newblock On the complexity of approximating the maximal inscribed ellipsoid
  for a polytope.
\newblock Technical report, Cornell University Operations Research and
  Industrial Engineering, 1990.

\bibitem[Kleinberg et~al.(2010)Kleinberg, Niculescu-Mizil, and
  Sharma]{kleinberg2010regret}
Robert Kleinberg, Alexandru Niculescu-Mizil, and Yogeshwer Sharma.
\newblock Regret bounds for sleeping experts and bandits.
\newblock \emph{Machine learning}, 80\penalty0 (2-3):\penalty0 245--272, 2010.

\bibitem[Kong et~al.(2023)Kong, Zhang, Wang, and Li]{kong2023improved}
Fang Kong, Xiangcheng Zhang, Baoxiang Wang, and Shuai Li.
\newblock Improved regret bounds for linear adversarial mdps via linear
  optimization.
\newblock \emph{arXiv preprint arXiv:2302.06834}, 2023.

\bibitem[Langford and Zhang(2007)]{langford2007epoch}
John Langford and Tong Zhang.
\newblock The epoch-greedy algorithm for contextual multi-armed bandits.
\newblock \emph{Advances in neural information processing systems}, 20\penalty0
  (1):\penalty0 96--1, 2007.

\bibitem[Lattimore and Szepesv{\'a}ri(2020)]{lattimore2020bandit}
Tor Lattimore and Csaba Szepesv{\'a}ri.
\newblock \emph{Bandit algorithms}.
\newblock Cambridge University Press, 2020.

\bibitem[Lee et~al.(2020)Lee, Luo, Wei, and Zhang]{lee2020bias}
Chung-Wei Lee, Haipeng Luo, Chen-Yu Wei, and Mengxiao Zhang.
\newblock Bias no more: high-probability data-dependent regret bounds for
  adversarial bandits and mdps.
\newblock In \emph{Advances in Neural Information Processing Systems}, 2020.

\bibitem[Li et~al.(2022)Li, Kamath, Foster, and Srebro]{li2022understanding}
Gene Li, Pritish Kamath, Dylan~J Foster, and Nati Srebro.
\newblock Understanding the eluder dimension.
\newblock \emph{Advances in Neural Information Processing Systems},
  35:\penalty0 23737--23750, 2022.

\bibitem[Li et~al.(2019)Li, Wang, and Zhou]{li2019nearly}
Yingkai Li, Yining Wang, and Yuan Zhou.
\newblock Nearly minimax-optimal regret for linearly parameterized bandits.
\newblock In \emph{Conference on Learning Theory}, pages 2173--2174. PMLR,
  2019.

\bibitem[Lu and Shiou(2002)]{lu2002inverses}
Tzon-Tzer Lu and Sheng-Hua Shiou.
\newblock Inverses of 2$\times$ 2 block matrices.
\newblock \emph{Computers \& Mathematics with Applications}, 43\penalty0
  (1-2):\penalty0 119--129, 2002.

\bibitem[Luo et~al.(2018)Luo, Wei, Agarwal, and Langford]{luo2018efficient}
Haipeng Luo, Chen-Yu Wei, Alekh Agarwal, and John Langford.
\newblock Efficient contextual bandits in non-stationary worlds.
\newblock In \emph{Conference On Learning Theory}, pages 1739--1776. PMLR,
  2018.

\bibitem[Luo et~al.(2021)Luo, Wei, and Lee]{luo2021policy}
Haipeng Luo, Chen-Yu Wei, and Chung-Wei Lee.
\newblock Policy optimization in adversarial mdps: Improved exploration via
  dilated bonuses.
\newblock \emph{Advances in Neural Information Processing Systems},
  34:\penalty0 22931--22942, 2021.

\bibitem[Luo et~al.(2022)Luo, Zhang, Zhao, and Zhou]{luo2022corralling}
Haipeng Luo, Mengxiao Zhang, Peng Zhao, and Zhi-Hua Zhou.
\newblock Corralling a larger band of bandits: A case study on switching regret
  for linear bandits.
\newblock In \emph{Conference on Learning Theory}, pages 3635--3684. PMLR,
  2022.

\bibitem[Nemirovski(2004)]{nemirovski2004interior}
Arkadi Nemirovski.
\newblock Interior point polynomial time methods in convex programming.
\newblock \emph{Lecture notes}, 42\penalty0 (16):\penalty0 3215--3224, 2004.

\bibitem[Neu and Olkhovskaya(2020)]{neu2020efficient}
Gergely Neu and Julia Olkhovskaya.
\newblock Efficient and robust algorithms for adversarial linear contextual
  bandits.
\newblock In \emph{Conference on Learning Theory}, pages 3049--3068. PMLR,
  2020.

\bibitem[Neu and Olkhovskaya(2021)]{neu2021online}
Gergely Neu and Julia Olkhovskaya.
\newblock Online learning in mdps with linear function approximation and bandit
  feedback.
\newblock \emph{Advances in Neural Information Processing Systems},
  34:\penalty0 10407--10417, 2021.

\bibitem[Neu and Valko(2014)]{neu2014online}
Gergely Neu and Michal Valko.
\newblock Online combinatorial optimization with stochastic decision sets and
  adversarial losses.
\newblock \emph{Advances in Neural Information Processing Systems}, 27, 2014.

\bibitem[Olkhovskaya et~al.(2023)Olkhovskaya, Mayo, van Erven, Neu, and
  Wei]{olkhovskaya2023first}
Julia Olkhovskaya, Jack Mayo, Tim van Erven, Gergely Neu, and Chen-Yu Wei.
\newblock First-and second-order bounds for adversarial linear contextual
  bandits.
\newblock \emph{arXiv preprint arXiv:2305.00832}, 2023.

\bibitem[Rakhlin and Sridharan(2016)]{rakhlin2016bistro}
Alexander Rakhlin and Karthik Sridharan.
\newblock Bistro: An efficient relaxation-based method for contextual bandits.
\newblock In \emph{International Conference on Machine Learning}, pages
  1977--1985. PMLR, 2016.

\bibitem[Russo and Van~Roy(2013)]{russo2013eluder}
Daniel Russo and Benjamin Van~Roy.
\newblock Eluder dimension and the sample complexity of optimistic exploration.
\newblock \emph{Advances in Neural Information Processing Systems}, 26, 2013.

\bibitem[Saha et~al.(2020)Saha, Gaillard, and Valko]{saha2020improved}
Aadirupa Saha, Pierre Gaillard, and Michal Valko.
\newblock Improved sleeping bandits with stochastic action sets and adversarial
  rewards.
\newblock In \emph{International Conference on Machine Learning}, pages
  8357--8366. PMLR, 2020.

\bibitem[Sherman et~al.(2023)Sherman, Koren, and Mansour]{sherman2023improved}
Uri Sherman, Tomer Koren, and Yishay Mansour.
\newblock Improved regret for efficient online reinforcement learning with
  linear function approximation.
\newblock In \emph{International Conference on Machine Learning}, 2023.

\bibitem[Simchi-Levi and Xu(2022)]{simchi2022bypassing}
David Simchi-Levi and Yunzong Xu.
\newblock Bypassing the monster: A faster and simpler optimal algorithm for
  contextual bandits under realizability.
\newblock \emph{Mathematics of Operations Research}, 47\penalty0 (3):\penalty0
  1904--1931, 2022.

\bibitem[Syrgkanis et~al.(2016{\natexlab{a}})Syrgkanis, Krishnamurthy, and
  Schapire]{syrgkanis2016efficient}
Vasilis Syrgkanis, Akshay Krishnamurthy, and Robert Schapire.
\newblock Efficient algorithms for adversarial contextual learning.
\newblock In \emph{International Conference on Machine Learning}, pages
  2159--2168. PMLR, 2016{\natexlab{a}}.

\bibitem[Syrgkanis et~al.(2016{\natexlab{b}})Syrgkanis, Luo, Krishnamurthy, and
  Schapire]{syrgkanis2016improved}
Vasilis Syrgkanis, Haipeng Luo, Akshay Krishnamurthy, and Robert~E Schapire.
\newblock Improved regret bounds for oracle-based adversarial contextual
  bandits.
\newblock \emph{Advances in Neural Information Processing Systems}, 29,
  2016{\natexlab{b}}.

\bibitem[Tewari and Murphy(2017)]{tewari2017ads}
Ambuj Tewari and Susan~A Murphy.
\newblock From ads to interventions: Contextual bandits in mobile health.
\newblock \emph{Mobile Health: Sensors, Analytic Methods, and Applications},
  pages 495--517, 2017.

\bibitem[Wei and Luo(2018)]{wei2018more}
Chen-Yu Wei and Haipeng Luo.
\newblock More adaptive algorithms for adversarial bandits.
\newblock In \emph{Conference on Learning Theory}, pages 1263--1291. PMLR,
  2018.

\bibitem[Wei et~al.(2022)Wei, Dann, and Zimmert]{wei2022model}
Chen-Yu Wei, Christoph Dann, and Julian Zimmert.
\newblock A model selection approach for corruption robust reinforcement
  learning.
\newblock In \emph{International Conference on Algorithmic Learning Theory},
  pages 1043--1096. PMLR, 2022.

\bibitem[Xu and Zeevi(2020)]{xu2020upper}
Yunbei Xu and Assaf Zeevi.
\newblock Upper counterfactual confidence bounds: a new optimism principle for
  contextual bandits.
\newblock \emph{arXiv preprint arXiv:2007.07876}, 2020.

\bibitem[Zhang(2022)]{zhang2022feel}
Tong Zhang.
\newblock Feel-good thompson sampling for contextual bandits and reinforcement
  learning.
\newblock \emph{SIAM Journal on Mathematics of Data Science}, 4\penalty0
  (2):\penalty0 834--857, 2022.

\bibitem[Zhao et~al.(2023)Zhao, He, Zhou, Zhang, and Gu]{zhao2023variance}
Heyang Zhao, Jiafan He, Dongruo Zhou, Tong Zhang, and Quanquan Gu.
\newblock Variance-dependent regret bounds for linear bandits and reinforcement
  learning: Adaptivity and computational efficiency.
\newblock In \emph{Conference on Learning Theory}, 2023.

\bibitem[Zhu et~al.(2022)Zhu, Foster, Langford, and Mineiro]{zhu2022contextual}
Yinglun Zhu, Dylan~J Foster, John Langford, and Paul Mineiro.
\newblock Contextual bandits with large action spaces: Made practical.
\newblock In \emph{International Conference on Machine Learning}, pages
  27428--27453. PMLR, 2022.

\bibitem[Zimmert and Lattimore(2022)]{zimmert2022return}
Julian Zimmert and Tor Lattimore.
\newblock Return of the bias: Almost minimax optimal high probability bounds
  for adversarial linear bandits.
\newblock In \emph{Conference on Learning Theory}, pages 3285--3312. PMLR,
  2022.

\bibitem[Zimmert et~al.(2022)Zimmert, Agarwal, and Kale]{zimmert2022pushing}
Julian Zimmert, Naman Agarwal, and Satyen Kale.
\newblock Pushing the efficiency-regret pareto frontier for online learning of
  portfolios and quantum states.
\newblock In \emph{Conference on Learning Theory}, pages 182--226. PMLR, 2022.

\end{thebibliography}
\bibliographystyle{plainnat}

\newpage
\appendix
\appendixpage

{
\startcontents[section]
\printcontents[section]{l}{1}{\setcounter{tocdepth}{2}}
}

\newpage
\section{Summary of Notation}
We summarize the notations that have been defined in \pref{alg: FTRL} and \pref{def: additional notation}. 
 
% \rho_t =   \frac{(d+1)\log(\frac{(d+1)}{\delta}) + (d+1)^3\log(48\eta(d+1)(t-1))}{t-1}
     \begin{align*}
            \beta_t &=  \Theta\left(\frac{(d+1)^3\log(T/\delta)}{t-1}\right)\\
             \hat{x}_t &= \emp \E_{a\sim p_t^{\calA_{\tau}}} [ a ]  \\
             \hat{H}_t &= \emp \E_{a\sim p_t^{\calA_{\tau}}}\left[(a-\hat{x}_t)(a-\hat{x}_t)^\top\right]\\
             \pmb{\hat{H}}_t &= \emp\E_{a\sim p_t^{\calA_{\tau}}} \begin{bmatrix}aa^\top &a \\ a^\top &1\end{bmatrix} 
             = \begin{bmatrix}
                  \hat{H}_t + \hat{x}_t  \hat{x}_t^\top  & \hat{x}_t \\
                  \hat{x}_t^\top & 1 
             \end{bmatrix} \\
             \hatSigma_t &= \hat{H}_t + \beta_t I \\
             \pmb{\hatSigma}_t &= \pmb{\hat{H}}_t + \beta_t  \pmb{I}
             = \begin{bmatrix}
                \hatSigma_t + \hat{x}_t  \hat{x}_t^\top  & \hat{x}_t \\
                  \hat{x}_t^\top  & 1 + \beta_t
             \end{bmatrix} \\
        x_t &= \E_{\calA\sim\calD}\E_{a\sim p^\calA_t}[a] \\
        H_t &= \E_{\calA\sim \calD}\E_{a\sim p_t^{\calA}}\left[(a-{\hat{x}_t})(a-{\hat{x}_t})^\top\right] \\
        \pmb{H}_t &= \E_{\calA\sim \calD} \E_{a\sim p_t^{\calA}} \begin{bmatrix}
            aa^\top & a \\
            a^\top & 1
        \end{bmatrix}
    \end{align*}

\section{Auxiliary Lemmas}
\begin{lemma}[FTRL regret bound, Lemma 18 of \cite{dann2023best}]\label{lem: FTRL guarantee}
   Let $\Omega\subset \mathbb{R}^d$ be a convex set, $g_1, \ldots, g_T  \in \mathbb{R}^d$, and $\eta_1,\ldots, \eta_T >0$.  Then the FTRL update 
   \begin{align*}
       w_t = \argmin_{w\in\Omega}\left\{ \left\langle w, \sum_{\tau=1}^{t-1} g_\tau\right\rangle + \frac{1}{\eta_t}\psi(w)\right\}
   \end{align*}
   ensures for any $u\in\Omega$ and $\eta_0>0$, 
   \begin{align*}
       &\sum_{t=1}^T  \langle w_t - u, g_t\rangle  \\
       &\leq \underbrace{\frac{\psi(u) - \min_{w\in\Omega}\psi(w)}{\eta_0} + \sum_{t=1}^T (\psi(u) - \psi(w_t))\left(\frac{1}{\eta_t} - \frac{1}{\eta_{t-1}}\right) }_{\textbf{\textup{Penalty}}} + \underbrace{\sum_{t=1}^T \left(\max_{w\in\Omega} \langle w_t - w, g_t\rangle - \frac{D_{\psi}(w,w_t)}{\eta_t}\right)}_{\textbf{\textup{Stability}}}. 
   \end{align*}
   When $\eta_0, \eta_1, \ldots, \eta_T$ is non-increasing, the penalty term can further be upper bounded by 
   \begin{align*}  
       \textbf{\textup{Penalty}}\leq \frac{\psi(u)-\min_{w\in\Omega}\psi(w)}{\eta_T}. 
   \end{align*} 
\end{lemma}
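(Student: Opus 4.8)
The statement is the standard time-varying-regularizer FTRL regret decomposition, and the plan is to prove it by a potential-function (``be-the-leader'') argument. Introduce two sequences of scalar potentials: $\Phi_t := \min_{w\in\Omega}\{\langle w,\sum_{\tau<t}g_\tau\rangle + \tfrac{1}{\eta_t}\psi(w)\}$, the optimal value of the optimization solved at round $t$, attained at $w_t$; and $\widetilde\Phi_t := \min_{w\in\Omega}\{\langle w,\sum_{\tau\le t}g_\tau\rangle + \tfrac{1}{\eta_t}\psi(w)\}$, the same objective after appending $g_t$ but keeping the weight $\tfrac{1}{\eta_t}$. The proof rests on three elementary comparisons among these quantities, followed by telescoping.

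First, a per-round ``stability'' bound: writing $g_t^\psi(w):=\langle w,\sum_{\tau<t}g_\tau\rangle+\tfrac1{\eta_t}\psi(w)$ for the round-$t$ objective (so $\Phi_t=g_t^\psi(w_t)$), the constrained first-order optimality condition $\langle\nabla g_t^\psi(w_t),w-w_t\rangle\ge 0$ for $w\in\Omega$ together with the Bregman identity — and the fact that linear terms do not affect a Bregman divergence, so $D_{g_t^\psi}=\tfrac1{\eta_t}D_\psi$ — gives $g_t^\psi(w)\ge \Phi_t+\tfrac1{\eta_t}D_\psi(w,w_t)$ for all $w\in\Omega$. Substituting this into $\widetilde\Phi_t=\min_{w\in\Omega}\{g_t^\psi(w)+\langle w,g_t\rangle\}$ and splitting $\langle w,g_t\rangle=\langle w_t,g_t\rangle+\langle w-w_t,g_t\rangle$ yields $\langle w_t,g_t\rangle\le \widetilde\Phi_t-\Phi_t+\max_{w\in\Omega}\big(\langle w_t-w,g_t\rangle-\tfrac1{\eta_t}D_\psi(w,w_t)\big)$, which is exactly one summand of the claimed Stability term. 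Second, a ``penalty'' bound: $\widetilde\Phi_t$ and $\Phi_{t+1}$ optimize the same linear part $\langle w,\sum_{\tau\le t}g_\tau\rangle$ but with regularizer weights $\tfrac1{\eta_t}$ and $\tfrac1{\eta_{t+1}}$; evaluating the $\widetilde\Phi_t$-objective at the minimizer $w_{t+1}$ of the $\Phi_{t+1}$-objective gives $\widetilde\Phi_t-\Phi_{t+1}\le(\tfrac1{\eta_t}-\tfrac1{\eta_{t+1}})\psi(w_{t+1})$. Third, by definition $\widetilde\Phi_T\le\langle u,\sum_t g_t\rangle+\tfrac1{\eta_T}\psi(u)$, i.e. $\langle u,\sum_t g_t\rangle\ge\widetilde\Phi_T-\tfrac1{\eta_T}\psi(u)$.

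The remainder is bookkeeping. Write $\sum_t\langle w_t-u,g_t\rangle=\sum_t\langle w_t,g_t\rangle-\langle u,\sum_t g_t\rangle$, lower-bound $\langle u,\sum_t g_t\rangle$ by the third comparison, upper-bound each $\langle w_t,g_t\rangle$ by the first, and telescope using $\sum_{t=1}^T(\widetilde\Phi_t-\Phi_t)-\widetilde\Phi_T=-\Phi_1+\sum_{t=1}^{T-1}(\widetilde\Phi_t-\Phi_{t+1})$. Apply the second comparison to each term of the last sum, and use $\Phi_1=\tfrac1{\eta_1}\min_{w\in\Omega}\psi(w)=\tfrac1{\eta_1}\psi(w_1)$, which holds because $w_1$ minimizes $\psi$ over $\Omega$. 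After reindexing, the identity $\tfrac1{\eta_T}\psi(u)=\tfrac{\psi(u)}{\eta_0}+\sum_{t=1}^T\psi(u)\big(\tfrac1{\eta_t}-\tfrac1{\eta_{t-1}}\big)$ rearranges everything into precisely $\textbf{Penalty}+\textbf{Stability}$; crucially the boundary terms at $t=1$ cancel exactly because $\psi(w_1)=\min_{w\in\Omega}\psi(w)$, which is what produces the $\tfrac1{\eta_0}$ normalization. For the final claim, when $\eta_0\ge\eta_1\ge\cdots\ge\eta_T$ every coefficient $\tfrac1{\eta_t}-\tfrac1{\eta_{t-1}}$ is nonnegative and $\psi(w_t)\ge\min_w\psi(w)$, so replacing each $\psi(w_t)$ by $\min_w\psi(w)$ only increases the Penalty, and the resulting telescoping sum collapses to $\tfrac{\psi(u)-\min_w\psi(w)}{\eta_T}$.

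The argument is essentially routine; the one place that needs care is the index accounting — ensuring the $\tfrac1{\eta_0}$ term and the $t=1$ summand of the Penalty emerge with exactly the right constants — which is handled by the observation that $w_1$ is the minimizer of $\psi$ over $\Omega$, so $\psi(w_1)=\min_{w\in\Omega}\psi(w)$. A secondary point is to phrase optimality in the constrained form $\langle\nabla g_t^\psi(w_t),w-w_t\rangle\ge 0$ for $w\in\Omega$ rather than $\nabla g_t^\psi(w_t)=0$, so that the stability step is valid even when $w_t$ lies on the boundary of $\Omega$.
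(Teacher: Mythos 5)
Your proof is correct and is the standard potential-function (``be-the-leader'') argument; the paper cites this result from \cite{dann2023best} without providing its own proof, and your argument is essentially the same as the one in that reference. The key accounting checks out: the three comparisons (stability, penalty, terminal) telescope as you describe, the boundary term at $t=1$ cancels precisely because $\psi(w_1)=\min_{w\in\Omega}\psi(w)$, and the free parameter $\eta_0$ emerges from the identity $\frac{1}{\eta_T}=\frac{1}{\eta_0}+\sum_{t=1}^T\bigl(\frac{1}{\eta_t}-\frac{1}{\eta_{t-1}}\bigr)$.
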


\begin{lemma}[Bernstein's inequality] \label{lem: Bernstein} Let $X_1, \cdots, X_n$ be iid random variables; let $\mathbb{E}[X]$ be the expectation and $\Var(X)$ be the variance of these random variables. If for any $i$, $|X_i - \mathbb{E}[X_i]| \le R$, then with probability of at least $1-\delta$,
\begin{equation*}
    \left|\frac{1}{n}\sum_{i=1}^n X_i - \mathbb{E}[X]\right| \le \sqrt{\frac{4\Var(X)\log{\frac{2}{\delta}}}{n}} + \frac{4R\log{\frac{2}{\delta}}}{3n}. 
\end{equation*}
\end{lemma}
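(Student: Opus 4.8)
The plan is to prove this by the standard Cram\'er--Chernoff (exponential-moment) argument, keeping track of constants so as to land on the stated form. First I would reduce to the centered case: put $Y_i = X_i - \E[X_i]$, so the $Y_i$ are i.i.d., mean zero, satisfy $|Y_i|\le R$, and have common variance $\sigma^2 := \Var(X)$; it then suffices to bound $\Pr[\tfrac1n\sum_i Y_i \ge \eps]$, together with the mirror-image bound on $\Pr[\tfrac1n\sum_i Y_i \le -\eps]$ obtained by applying the same argument to $-Y_i$, after which a union bound over the two tail events costs only the factor $2$ appearing inside the logarithm.

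The core estimate is a moment generating function bound for a single summand: for $0<\lambda<3/R$,
\[
   \E[e^{\lambda Y_i}] \;\le\; \exp\!\left(\frac{\lambda^2\sigma^2/2}{1-\lambda R/3}\right).
\]
I would get this by expanding $e^{\lambda Y_i} = 1 + \lambda Y_i + \sum_{k\ge 2}\frac{\lambda^k Y_i^k}{k!}$, taking expectations (the linear term vanishes), bounding $\E[|Y_i|^k]\le \sigma^2 R^{k-2}$ for $k\ge 2$ (since $|Y_i|^k\le R^{k-2}Y_i^2$), using $k!\ge 2\cdot 3^{k-2}$ to sum the resulting geometric series, and finishing with $1+x\le e^x$. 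By independence this tensorizes to $\E[e^{\lambda\sum_i Y_i}]\le \exp\!\big(\tfrac{n\lambda^2\sigma^2/2}{1-\lambda R/3}\big)$, so Markov's inequality gives, for any $t>0$, $\Pr[\sum_i Y_i\ge t]\le \exp\!\big(-\lambda t + \tfrac{n\lambda^2\sigma^2/2}{1-\lambda R/3}\big)$.

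Finally I would optimize the free parameter: choosing $\lambda = t/(n\sigma^2 + Rt/3)\in(0,3/R)$ collapses the exponent to $-\tfrac{t^2/2}{n\sigma^2+Rt/3}$, i.e.\ $\Pr[\tfrac1n\sum_i Y_i\ge\eps]\le \exp\!\big(-\tfrac{n\eps^2/2}{\sigma^2+R\eps/3}\big)$. Setting this equal to $\delta/2$ and solving the resulting quadratic in $\eps$, and using $\sqrt{a+b}\le\sqrt a+\sqrt b$ to split the discriminant, yields $\eps \le \sqrt{2\sigma^2\log(2/\delta)/n} + \tfrac{2R\log(2/\delta)}{3n}$, which is strictly smaller than the claimed bound (constants $4$ and $4/3$), so the looser stated inequality follows a fortiori; combining with the lower-tail estimate via the union bound completes the proof. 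The only genuinely delicate step is the single-variable MGF bound, since that is where the constants $\sigma^2$, $R/3$, and the $1/2$ all get fixed; everything downstream is routine calculus and algebra. If one prefers, that step can instead be done via monotonicity of $x\mapsto (e^x-1-x)/x^2$, giving $\E[e^{\lambda Y_i}]\le 1+\lambda^2\sigma^2\,\tfrac{e^{\lambda R}-1-\lambda R}{(\lambda R)^2}$ and then $\tfrac{e^{\lambda R}-1-\lambda R}{(\lambda R)^2}\le \tfrac{1/2}{1-\lambda R/3}$, with the rest of the argument unchanged.
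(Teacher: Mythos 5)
Your proof is correct, and in fact yields a sharper bound (constants $2$ and $2/3$ in place of the stated $4$ and $4/3$), so the lemma as stated follows a fortiori. The paper cites this as a standard auxiliary fact and gives no proof, so there is nothing internal to compare against; what you have written is the textbook Cram\'er--Chernoff argument for Bernstein's inequality, and every step checks out. In particular: the moment bound $\E[|Y_i|^k]\le \sigma^2 R^{k-2}$ from $|Y_i|^k \le R^{k-2}Y_i^2$ is right; the inequality $k!\ge 2\cdot 3^{k-2}$ for $k\ge 2$ is right (equality at $k=2,3$, strict beyond, by the ratio test $k+1\ge 3$); the geometric sum gives $\E[e^{\lambda Y_i}]\le 1+\frac{\lambda^2\sigma^2/2}{1-\lambda R/3}\le \exp\!\big(\frac{\lambda^2\sigma^2/2}{1-\lambda R/3}\big)$ for $0<\lambda<3/R$; the choice $\lambda = t/(n\sigma^2+Rt/3)$ lies in that range and collapses the exponent to $-\frac{t^2/2}{n\sigma^2+Rt/3}$; and substituting $t=n\eps$, setting the tail to $\delta/2$, solving the quadratic, and applying $\sqrt{a+b}\le\sqrt a+\sqrt b$ gives $\eps\le \sqrt{2\sigma^2\log(2/\delta)/n}+\frac{2R\log(2/\delta)}{3n}$, which is dominated by the lemma's right-hand side. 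The two-sided statement then follows by the symmetric argument for $-Y_i$ and a union bound, which is exactly what the $2$ inside the logarithm accommodates.
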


\begin{lemma}[Hoeffding's inequality]\label{lem: Hoeffding}
 Let $X_1, \cdots, X_n$ be iid random variables; let $a \le X_i \le b$ and let $\mathbb{E}[X]$ be the expectation. Then with probability of at least $1-\delta$,
\begin{equation*}
    \left|\frac{1}{n}\sum_{i=1}^n X_i - \mathbb{E}[X]\right| \le (b-a)\sqrt{\frac{1}{2n}\log(\frac{2}{\delta})}
\end{equation*}
\end{lemma}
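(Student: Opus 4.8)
This is the classical (two-sided) Hoeffding inequality, and the plan is the standard Cram\'er--Chernoff argument. I would first reduce to a one-sided tail estimate: put $Y_i = X_i - \mathbb{E}[X]$, so each $Y_i$ has mean zero and takes values in an interval of length $b-a$, and let $S_n = \sum_{i=1}^n Y_i$. It suffices to prove that for every $t>0$, $\Pr[S_n \ge nt] \le \exp\big(-2nt^2/(b-a)^2\big)$; applying the same bound to the variables $-Y_i$ handles the lower tail $\Pr[S_n \le -nt]$, and a union bound over the two events yields the two-sided statement. The factor $2$ inside the logarithm in the claimed bound is precisely the price of this union bound.

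For the one-sided estimate, fix $\lambda > 0$ and apply Markov's inequality to $e^{\lambda S_n}$: $\Pr[S_n \ge nt] \le e^{-\lambda n t}\,\mathbb{E}\big[e^{\lambda S_n}\big] = e^{-\lambda n t} \prod_{i=1}^n \mathbb{E}\big[e^{\lambda Y_i}\big]$, where the factorization uses independence. The crucial ingredient is Hoeffding's lemma: a mean-zero random variable $Y$ supported in an interval of length $L$ satisfies $\mathbb{E}[e^{\lambda Y}] \le e^{\lambda^2 L^2 / 8}$ for all $\lambda \in \mathbb{R}$. Granting this with $L = b-a$, we get $\Pr[S_n \ge nt] \le \exp\big(-\lambda n t + n\lambda^2 (b-a)^2/8\big)$; choosing $\lambda = 4t/(b-a)^2$ minimizes the exponent and gives $\Pr[S_n \ge nt] \le \exp\big(-2nt^2/(b-a)^2\big)$. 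Finally, setting this right-hand side equal to $\delta/2$ and solving for $t$ produces $t = (b-a)\sqrt{\tfrac{1}{2n}\log(2/\delta)}$, so that after the union bound both tails together have probability at most $\delta$ --- this is exactly the stated deviation at confidence $1-\delta$.

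The only step that requires genuine work is Hoeffding's lemma itself, which I would prove by bounding the cumulant generating function $\psi(\lambda) := \log \mathbb{E}[e^{\lambda Y}]$. Let $[a',b']$ be the support interval, so $b' - a' = L$. By convexity of $y \mapsto e^{\lambda y}$ on $[a',b']$ one has the pointwise inequality $e^{\lambda Y} \le \tfrac{b' - Y}{L}\,e^{\lambda a'} + \tfrac{Y - a'}{L}\,e^{\lambda b'}$; taking expectations and using $\mathbb{E}[Y] = 0$ rewrites $\mathbb{E}[e^{\lambda Y}]$ as the moment generating function of a two-point ``exponentially tilted'' distribution, and differentiating $\psi$ twice shows that $\psi''(\lambda)$ is the variance of a random variable confined to $[a',b']$, hence at most $(b'-a')^2/4 = L^2/4$ by Popoviciu's inequality. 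Since $\psi(0) = 0$ and $\psi'(0) = \mathbb{E}[Y] = 0$, a second-order Taylor expansion of $\psi$ around $0$ gives $\psi(\lambda) \le \lambda^2 L^2 / 8$, which is the lemma. Everything else in the argument is routine algebra; the one place to be careful is the variance bound, which holds because among all distributions supported on $[a',b']$ the variance is maximized by the two-point law on the endpoints.
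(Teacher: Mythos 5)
The paper states Hoeffding's inequality (\pref{lem: Hoeffding}) as a standard auxiliary result and cites it without proof, so there is no in-paper proof to compare against; your task was to supply one from scratch. Your argument is the textbook Cram\'er--Chernoff derivation: reduce to a one-sided tail, bound the moment generating function via Hoeffding's lemma, optimize the Chernoff parameter $\lambda$, and then take a union bound over the two tails (which is exactly where the $2$ inside the logarithm comes from). The constants check out: setting $\exp\bigl(-2nt^2/(b-a)^2\bigr) = \delta/2$ and solving for $t$ gives the stated deviation. One minor remark on your sketch of Hoeffding's lemma: once you observe that $\psi''(\lambda)$ equals the variance of $Y$ under the exponentially tilted measure, which is still supported in $[a',b']$ and hence bounded by $L^2/4$ via Popoviciu, the preliminary convexity step is actually superfluous---the Taylor argument can be applied to $\psi$ directly. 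Including both is harmless but slightly redundant, and as written the prose conflates $\psi$ with the log of the convexity upper bound; either path alone is clean.
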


Given $F(X) = -\log{\det(X)}$, $D^2F(X) = X^{-1} \otimes X^{-1}$ where $\otimes$ is the Kronecker product. For any matrix $A = \begin{bmatrix}
    a_1 & a_2 & \cdots & a_n
\end{bmatrix}$, let $\text{vec}(A) = \begin{bmatrix}
    a_1 \\ \vdots \\ a_n
\end{bmatrix}$ which vectorizes matrix $A$ to a column vector by stacking the columns $A$. The second order directional derivative for $F$ is $D^2F(X)[A, A] = \text{vec}(A)^T\left(X^{-1} \otimes X^{-1} \right)\text{vec}(A) = \tr(A^\top X^{-1}AX^{-1})$. We define $\|A\|_{\nabla^2F(X)} = \sqrt{\tr(A^\top X^{-1}AX^{-1})}$ and $\|A\|_{\nabla^{-2}F(X)} = \sqrt{\tr(A^\top XAX)}$. It is a pseudo-norm, and more discussion can be found in Appendix D of \cite{zimmert2022pushing}. In the following analysis, we will only use one property of this pseudo-norm which is similar to the Holder inequality. 

\begin{lemma} \label{lem:matrix norm holder}
For any two symmetric matrices $A,B$ and positive definite matrix $X$,
\begin{equation*}
\langle A, B \rangle \le \|A\|_{\nabla^{2}F(X)}\|B\|_{\nabla^{-2}F(X)}
\end{equation*}
\end{lemma}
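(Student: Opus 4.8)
The plan is to reduce the claim to the ordinary Cauchy--Schwarz inequality (Hölder with $p=q=2$) for the Frobenius inner product $\langle A,B\rangle = \tr(A^\top B)$, after conjugating by the symmetric square roots $X^{1/2}$ and $X^{-1/2}$; these exist and are symmetric because $X$ is positive definite, which is the only structural fact we need besides symmetry of $A$ and $B$.

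First I would use that $A$ is symmetric to write $\langle A,B\rangle = \tr(AB)$, and insert the identity $I = X^{-1/2}X^{1/2}$ on both sides to obtain
\[
\langle A, B\rangle = \tr\!\big( (X^{-1/2} A X^{-1/2})\,(X^{1/2} B X^{1/2}) \big).
\]
Setting $M = X^{-1/2} A X^{-1/2}$ and $N = X^{1/2} B X^{1/2}$, both symmetric, the Cauchy--Schwarz inequality for the Frobenius inner product gives $\tr(MN) = \tr(M^\top N) \le \|M\|_F\|N\|_F$, where $\|\cdot\|_F$ denotes the Frobenius norm. It then remains to identify the two factors with the claimed pseudo-norms, which is a short computation using cyclicity of the trace: $\|M\|_F^2 = \tr(M^2) = \tr(X^{-1/2} A X^{-1} A X^{-1/2}) = \tr(A X^{-1} A X^{-1}) = \tr(A^\top X^{-1} A X^{-1}) = \|A\|_{\nabla^2 F(X)}^2$, and likewise $\|N\|_F^2 = \tr(N^2) = \tr(X^{1/2} B X B X^{1/2}) = \tr(B X B X) = \tr(B^\top X B X) = \|B\|_{\nabla^{-2}F(X)}^2$. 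Combining the two displays yields the inequality.

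An equivalent route that avoids matrix square roots is to vectorize: $\langle A,B\rangle = \mathrm{vec}(A)^\top \mathrm{vec}(B)$, factor the PSD matrix $P := X^{-1}\otimes X^{-1}$ as $P^{1/2}P^{1/2}$, write $\mathrm{vec}(A)^\top \mathrm{vec}(B) = (P^{1/2}\mathrm{vec}(A))^\top(P^{-1/2}\mathrm{vec}(B))$, and apply the Euclidean Cauchy--Schwarz inequality; the identities $\mathrm{vec}(A)^\top (X^{-1}\otimes X^{-1})\mathrm{vec}(A) = \tr(A^\top X^{-1} A X^{-1})$ (stated in the excerpt) and $\mathrm{vec}(B)^\top (X\otimes X)\mathrm{vec}(B) = \tr(B^\top X B X)$, together with $(X^{-1}\otimes X^{-1})^{-1} = X\otimes X$, recover the two pseudo-norms. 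There is no genuine obstacle here; the only points to handle with care are that $X^{\pm 1/2}$ (equivalently $P^{\pm 1/2}$) are well defined and symmetric, and that symmetry of $A$ and $B$ is exactly what lets us drop transposes so that the standard Cauchy--Schwarz inequality applies directly.
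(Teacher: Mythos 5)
Your proof is correct, and your second route is essentially the paper's proof verbatim: the paper simply invokes the weighted Cauchy--Schwarz inequality $\langle u,v\rangle\le\|u\|_P\|v\|_{P^{-1}}$ with $P=X^{-1}\otimes X^{-1}$ applied to $\mathrm{vec}(A)$ and $\mathrm{vec}(B)$, which is what you spell out via the $P^{1/2}P^{1/2}$ factorization. Your first route, conjugating $A$ and $B$ by $X^{\mp 1/2}$ and then applying Frobenius Cauchy--Schwarz to $M=X^{-1/2}AX^{-1/2}$ and $N=X^{1/2}BX^{1/2}$, is the same change-of-variables idea carried out at the matrix level and is a slightly cleaner presentation that avoids Kronecker products, but it is not a genuinely different argument.
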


\begin{proof}
Since $(X \otimes X)^{-1} = X^{-1} \otimes X^{-1}$, from Holder inequality, we have
\begin{align*}
    \langle A, B\rangle = \langle \text{vec}(A), \text{vec}(B) \rangle \le \|\text{vec}(A)\|_{X^{-1} \otimes X^{-1} } \|\text{vec}(B)\|_{(X^{-1} \otimes X^{-1})^{-1} } = \|A\|_{\nabla^{2}F(X)}\|B\|_{\nabla^{-2}F(X)}
\end{align*}

% From Von Neumann's trace inequality, for any $d \times d$ matrices $P,Q$, we have $|\tr(PQ)| \le \sum_{i=1}^d \sigma_i(P) \sigma_i(Q)$ where $\sigma_i(P)$ and $\sigma_i(Q)$ is the $i$-th largest singular value of $P$ and $Q$. Let $\lambda_i(P^\top P)$ and $\lambda_i(Q^\top Q)$ be the $i$-th largest eigenvalue of $P^{\top}P$ and $Q^{\top}Q$, respectively. From the definition of singular value, we have $|\tr(PQ)| \le \sum_{i=1}^d \sqrt{\lambda_i(P^\top P) \lambda_i(Q^\top Q)}$. Thus, we have
% \begin{align*}
% \langle A, B \rangle^2 &= \tr(AX^{-1}XB)^2
% \\&\le \left(\sum_{i=1}^d \sqrt{\lambda_i(X^{-1}AAX^{-1}) \lambda_i(XBBX)}\right)^2
% \\&\le \left(\sum_{i=1}^d \lambda_i(X^{-1}AAX^{-1}) \right) \left(\sum_{i=1}^d \lambda_i(XBBX) \right)\tag{Cauchy-Schwarz inequality}
% \\&= \tr(X^{-1}AAX^{-1})\tr(XBXB)
% \\&= \|A\|_{\nabla^{2}F(X)}^2\|B\|_{\nabla^{-2}F(X)}^2
% \end{align*}
\end{proof}

\section{Concentration Inequalities} \label{app:concentration}
The goal of this section is to show \pref{lem: local norm concentration for vectors} and \pref{lem: local norm concentration for matrix}, which are key to bound the bias term. 
We first introduce a useful lemma from \cite{dai2023refined}, which will be used later to prove our concentration bounds.

\subsection{General Concentration Inequalities}
\begin{lemma}[Lemma A.4 in \cite{dai2023refined}] \label{lem: A.4} Let $H_1,H_2,\ldots,H_n$ be i.i.d. PSD matrices such that $\E[H_i]=H$, $H_i\preceq I$ almost surely and $H\succeq \frac{1}{dn}\log \frac d\delta I$. Then with probability $1-\delta$,
\begin{equation*}
\frac 1n\sum_{i=1}^n H_i-H \succeq -\sqrt{\frac dn\log \frac d\delta}H^{1/2}
\end{equation*}
\end{lemma}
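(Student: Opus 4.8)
The statement to prove is the auxiliary Lemma~\ref{lem: A.4} reproduced from \cite{dai2023refined}: for i.i.d.\ PSD matrices $H_1,\dots,H_n$ with $\E[H_i]=H$, $H_i\preceq I$ and $H\succeq \frac{1}{dn}\log\frac{d}{\delta}I$, one has $\frac1n\sum_i H_i - H \succeq -\sqrt{\frac{d}{n}\log\frac{d}{\delta}}\,H^{1/2}$ with probability $1-\delta$. The natural route is a \emph{whitening} reduction followed by a matrix Bernstein / matrix Chernoff argument. The plan is: first conjugate by $H^{-1/2}$ (well-defined since $H$ is strictly positive definite by hypothesis) and set $\tilde H_i = H^{-1/2}H_iH^{-1/2}$, so that $\E[\tilde H_i]=I$. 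The desired conclusion is then equivalent to $\frac1n\sum_i \tilde H_i \succeq I - \sqrt{\frac{d}{n}\log\frac{d}{\delta}}\,H^{-1/2}$, but it is cleaner to first prove the stronger-looking but standard multiplicative bound $\lambda_{\min}\!\big(\frac1n\sum_i\tilde H_i\big)\ge 1-\epsilon$ for an appropriate $\epsilon$, and then translate back.

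The key technical step is a matrix Chernoff bound for the minimum eigenvalue. Each $\tilde H_i$ satisfies $0\preceq \tilde H_i$ and $\tilde H_i \preceq H^{-1/2}IH^{-1/2} = H^{-1}\preceq \lambda_{\min}(H)^{-1}I =: RI$. Applying the matrix Chernoff inequality (Tropp) to $\sum_i \tilde H_i$, whose mean is $nI$ with $\mu_{\min}=n$, gives, for $\epsilon\in(0,1)$,
\begin{equation*}
\Pr\!\left[\lambda_{\min}\!\Big(\tfrac1n\sum_i\tilde H_i\Big)\le 1-\epsilon\right]\le d\cdot\exp\!\left(-\frac{\epsilon^2 n}{2R}\right).
\end{equation*}
Setting the right-hand side to $\delta$ yields $\epsilon = \sqrt{\frac{2R}{n}\log\frac{d}{\delta}}$. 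Here $R=\lambda_{\min}(H)^{-1}$, and the hypothesis $H\succeq \frac{1}{dn}\log\frac d\delta I$ guarantees $R\le \frac{dn}{\log(d/\delta)}$; one checks this keeps $\epsilon\le 1$ up to constants (if the constants do not line up exactly, absorb them — the lemma is quoted with an unspecified constant implicitly, or re-derive with the variance-based matrix Bernstein form which is tighter). So with probability $1-\delta$, $\frac1n\sum_i\tilde H_i \succeq (1-\epsilon)I \succeq I - \epsilon I$.

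Finally, undo the whitening: $\frac1n\sum_i H_i - H = H^{1/2}\big(\frac1n\sum_i\tilde H_i - I\big)H^{1/2}\succeq -\epsilon\,H^{1/2}H^{1/2}$? — no: this would give $-\epsilon H$, not $-\epsilon H^{1/2}$. The correct translation is to \emph{not} fully whiten the conclusion: from $\frac1n\sum_i\tilde H_i\succeq I-\epsilon I$ we get $\frac1n\sum_i H_i \succeq H - \epsilon H$, which is weaker than claimed when $H$ has small eigenvalues. To recover the $H^{1/2}$ form one should instead keep one factor: write the target as $H^{-1/2}\big(\frac1n\sum_i H_i - H\big)H^{-1/2}\succeq -\sqrt{\frac{d}{n}\log\frac d\delta}\,H^{-1/2}$, i.e.\ bound the whitened deviation in operator norm by $\epsilon$ and then note $H^{1/2}(\cdot)H^{1/2}\succeq -\epsilon H^{1/2}\cdot\|H^{1/2}\|^{?}$ — this is where care is needed. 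The cleanest correct argument uses the operator-norm deviation bound $\big\|\frac1n\sum_i\tilde H_i - I\big\|_{\mathrm{op}}\le\epsilon$ (two-sided matrix Chernoff/Bernstein), giving $-\epsilon I\preceq \frac1n\sum_i\tilde H_i - I$, hence $-\epsilon H\preceq \frac1n\sum_i H_i - H$; then since $H\preceq I$ we have $H=H^{1/2}H^{1/2}\preceq H^{1/2}$, so $-\epsilon H\succeq -\epsilon H^{1/2}$ — wait, $H\preceq H^{1/2}$ gives $-\epsilon H\succeq -\epsilon H^{1/2}$, which is the \emph{wrong direction} for a lower bound. The honest resolution: the $\sqrt{d/n\log(d/\delta)}$ factor in the statement is $\epsilon$ with $R=\lambda_{\min}(H)^{-1}$ suppressed via $\lambda_{\min}(H)\ge\frac{1}{dn}\log\frac d\delta$ so that $\epsilon\le\sqrt{2}$ is a constant, and one then simply uses $\frac1n\sum_i H_i-H\succeq -\epsilon H\succeq-\epsilon\|H\|^{1/2}_{\mathrm{op}}H^{1/2}\succeq -\epsilon H^{1/2}$ using $\|H\|_{\mathrm{op}}\le1$. \emph{This last reduction — getting the precise $H^{1/2}$-scaled bound with the right direction and constant — is the main obstacle}, and I would resolve it by invoking the intrinsic-dimension matrix Bernstein inequality applied directly to $\sum_i(H_i-H)$ with variance proxy $\sum_i\E[(H_i-H)^2]\preceq\sum_i\E[H_i^2]\preceq nH$ (using $H_i\preceq I$ so $H_i^2\preceq H_i$), which yields exactly $\|\sum_i(H_i-H)\|\lesssim\sqrt{n\|H\|_{\mathrm{op}}\log\frac d\delta}+\log\frac d\delta$ and, after the eigenvalue-lower-bound hypothesis absorbs the additive term, the stated $H^{1/2}$-form via a standard Lowner-order manipulation.
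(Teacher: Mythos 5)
The paper does not prove \pref{lem: A.4}; it is quoted verbatim from Lemma~A.4 of \cite{dai2023refined}, so there is no in-paper argument to compare against — only the correctness of your attempt is at issue, and it has a genuine gap. The whiten-then-apply-matrix-Chernoff route cannot deliver the claimed rate. After conjugating by $H^{-1/2}$ you only have $\tilde H_i\preceq H^{-1}\preceq\lambda_{\min}(H)^{-1}I$, so the Chernoff bound gives a multiplicative deviation $\epsilon=\sqrt{\tfrac{2}{n\lambda_{\min}(H)}\log\tfrac d\delta}$. Under the hypothesis $\lambda_{\min}(H)\ge\tfrac{1}{dn}\log\tfrac d\delta$ this is only bounded by $\sqrt{2d}$, which is generically enormous compared with the target $\sqrt{\tfrac dn\log\tfrac d\delta}$ (a quantity meant to be $\ll1$). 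This is not a slack constant that can be ``absorbed'': a uniform multiplicative bound on $\lambda_{\min}$ of the whitened matrix inevitably pays the condition number of $H$, whereas the lemma demands a deviation that scales like $\sqrt{\lambda_k}$ in each eigen-direction of $H$, a property that survives arbitrarily large $\lambda_{\max}(H)/\lambda_{\min}(H)$.

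Two further points. First, the step you second-guessed as ``the wrong direction'' is actually fine: $H\preceq I$ gives $H\preceq H^{1/2}$, so $-\epsilon H\succeq-\epsilon H^{1/2}$, and transitivity with $\tfrac1n\sum H_i-H\succeq-\epsilon H$ would indeed yield $\tfrac1n\sum H_i-H\succeq-\epsilon H^{1/2}$ — the L\"owner manipulation is correct; only the magnitude of $\epsilon$ is wrong. Second, the fallback at the end (matrix Bernstein on $\sum_i(H_i-H)$ with variance proxy $nH$) has the same defect: it gives an operator-norm bound of order $\sqrt{n\lambda_{\max}(H)\log(d/\delta)}+\log(d/\delta)$, and to convert $\|\tfrac1n\sum H_i-H\|_{\mathrm{op}}\le c$ into $\succeq-\sqrt{\tfrac dn\log\tfrac d\delta}\,H^{1/2}$ you need $c\le\sqrt{\tfrac dn\log\tfrac d\delta}\,\lambda_{\min}(H)^{1/2}$, which again forces a condition-number assumption that is not available. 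A correct proof must be direction-dependent (e.g., a covering argument with an eigenvalue-adaptive Bernstein tolerance $t_v\propto\sqrt{v^\top Hv\cdot\log(d/\delta)/n}$ in each direction $v$), not a single Chernoff/Bernstein estimate applied after a uniform whitening.
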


\begin{corollary} \label{cor: useful concentration}
Let $H_1,H_2,\ldots,H_n$ be i.i.d. PSD matrices such that $\E[H_i]=H$ and  $H_i\preceq cI$ almost surely for some positive constant $c$. Let $\hatH = \frac{1}{n}\sum_{i=1}^n H_i$, then with probability $1-\delta$,
\begin{equation}
\hat{H} + \frac{3c}{2}\cdot\frac{d}{n}\log\left(\frac{d}{\delta}\right)I \succeq \frac{1}{2}H
\label{eq: approx concentration}
\end{equation}
\end{corollary}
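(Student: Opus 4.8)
<br>

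\begin{proof}[Proof proposal]
The plan is to reduce to the case $c=1$ by homogeneity --- replacing $H_i$ by $H_i/c$ scales every matrix in \pref{eq: approx concentration} by $1/c$, so it suffices to treat $c=1$ --- and then to apply \pref{lem: A.4} to a mildly \emph{regularized} version of the $H_i$ whose mean automatically satisfies the eigenvalue precondition of that lemma. I would first dispose of a degenerate regime: if $\frac dn\log\frac d\delta\ge\frac13$, then (with $c=1$) $\frac32\cdot\frac dn\log\frac d\delta\ge\frac12\ge\frac12\lambda_{\max}(H)$ since $H=\E[H_i]\preceq I$, so $\hat H+\frac32\frac dn\log\frac d\delta I\succeq 0+\frac12 H$ holds deterministically. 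Hence assume $\frac dn\log\frac d\delta<\frac13$, which makes $\epsilon:=\frac1{dn}\log\frac d\delta<\frac1{3d^2}\le\frac13$, and in particular $1-\epsilon\ge\frac23$.

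Next, set $G_i:=(1-\epsilon)H_i+\epsilon I$. These are i.i.d.\ PSD matrices with $G_i\preceq(1-\epsilon)I+\epsilon I=I$ almost surely and $\E[G_i]=G:=(1-\epsilon)H+\epsilon I\succeq\epsilon I=\frac1{dn}\log\frac d\delta I$, so \pref{lem: A.4} applies (with the same ambient dimension $d$) and yields, with probability $1-\delta$,
\[
  (1-\epsilon)\bigl(\hat H-H\bigr)\;=\;\frac1n\sum_{i=1}^n G_i-G\;\succeq\;-\sqrt{\tfrac dn\log\tfrac d\delta}\;G^{1/2}.
\]
Since $H$ commutes with $I$, all matrices above are simultaneously diagonalizable with $H$, so it suffices to argue eigenvalue by eigenvalue. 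Fixing an eigenvalue $\mu\ge0$ of $H$ and adding back the $\tfrac12\mu$ term, the quantity to lower bound is $\tfrac12\mu-\tfrac{a}{1-\epsilon}\sqrt{(1-\epsilon)\mu+\epsilon}$ with $a:=\sqrt{\tfrac dn\log\tfrac d\delta}$ (note $a^2=d^2\epsilon$); this is convex in $\mu$ on $[0,\infty)$, with minimum $-\tfrac{(d^2+1)\epsilon}{2(1-\epsilon)}$ attained at $\mu^\star=\tfrac{(d^2-1)\epsilon}{1-\epsilon}$. Using $1-\epsilon\ge\tfrac23$ and $d^2+1\le 2d^2$ for $d\ge1$, this minimum is at least $-\tfrac32 d^2\epsilon=-\tfrac32\tfrac dn\log\tfrac d\delta$. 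Hence $\hat H-\tfrac12 H\succeq-\tfrac32\tfrac dn\log\tfrac d\delta I$, i.e. $\hat H+\tfrac32\tfrac dn\log\tfrac d\delta I\succeq\tfrac12 H$, and undoing the $c$-rescaling recovers \pref{eq: approx concentration}.

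The main obstacle is exactly that \pref{lem: A.4} needs $H\succeq\frac1{dn}\log\frac d\delta I$, which a general PSD $H$ need not satisfy (it may even be singular). The regularization $G_i=(1-\epsilon)H_i+\epsilon I$ fixes this without introducing fresh randomness and, importantly, does not change the dimension appearing in \pref{lem: A.4}; the only delicate point afterward is to carry out the per-eigenvalue minimization exactly, since bounding $G^{1/2}$ by a sum of square roots would inflate the constant beyond $\tfrac32$. A case split on whether $H$ already meets the precondition seems harder: the awkward case where $H$ has both tiny and large eigenvalues, handled via a spectral projection onto the large-eigenvalue subspace, leaves uncontrolled off-diagonal (cross-block) terms in $\hat H$, so the global regularization is the cleaner route.
\end{proof}
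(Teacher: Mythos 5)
Your proof is correct and follows the same high-level strategy as the paper's: regularize the $H_i$ so that the mean passes the eigenvalue precondition of \pref{lem: A.4}, apply that lemma, and handle a degenerate parameter regime separately. The implementations differ in detail. The paper first derives an intermediate corollary (under the Lemma~\ref{lem: A.4} preconditions, $\hat{H}+\frac{d}{2n}\log(\frac{d}{\delta})I\succeq\frac12 H$) via the AM-GM step $a H^{1/2}\preceq\frac12 H+\frac{a^2}{2}I$, and then applies it to the shifted matrices $H_i'=\frac{1}{2c}H_i+\frac{d}{2n}\log(\frac d\delta)I$; you instead apply \pref{lem: A.4} directly to the more lightly regularized $G_i=(1-\epsilon)H_i+\epsilon I$ with the minimal $\epsilon=\frac{1}{dn}\log\frac d\delta$, and recover the constant $\frac{3c}{2}$ by an exact per-eigenvalue minimization rather than AM-GM. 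Both routes are valid; your minimal $\epsilon$ also sidesteps a small $d=1$ edge case in the paper's shift (where $\frac{d}{2n}\log\frac{d}{\delta}\ge\frac{1}{dn}\log\frac{d}{\delta}$ needs $d\ge\sqrt{2}$). One wording nit: $\hat{H}$ and $H$ are not in general simultaneously diagonalizable, so ``all matrices above are simultaneously diagonalizable with $H$'' is imprecise as stated. What your argument actually uses (and suffices) is that the lower bound $\frac12 H-\frac{a}{1-\epsilon}G^{1/2}$ is a function of $H$ alone, hence jointly diagonalizable with $H$; the scalar bound $\frac12\mu-\frac{a}{1-\epsilon}\sqrt{(1-\epsilon)\mu+\epsilon}\ge-\frac32\frac dn\log\frac d\delta$ then gives $\frac12 H-\frac{a}{1-\epsilon}G^{1/2}\succeq-\frac32\frac dn\log(\frac d\delta)I$, and the conclusion follows from $\hat{H}-\frac12 H\succeq\frac12 H-\frac{a}{1-\epsilon}G^{1/2}$ by transitivity of $\succeq$.
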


\begin{proof}
A simple corollary of \pref{lem: A.4} under the condition of \pref{lem: A.4} is that 
\begin{align}
    &\frac{1}{n}\sum_{i=1}^n H_i - H \succeq -\sqrt{\frac{d}{n}\log \frac{d}{\delta}} H^{1/2} \succeq -\frac{1}{2}H - \frac{d}{2n}\log\left(\frac{d}{\delta}\right)I  \nonumber \\
    \Rightarrow\ \  & \frac{1}{n}\sum_{i=1}^n H_i + \frac{d}{2n}\log\left(\frac{d}{\delta}\right)I  \succeq \frac{1}{2}H,   \label{eq: in coro} 
\end{align}
where we use that $H^{\frac{1}{2}}\preceq \frac{k}{2}H + \frac{1}{2k}$ for any $k>0$. 

Now consider the condition of this corollary. We first consider the case where $\frac{d}{n}\log(\frac{d}{\delta})\le 1$. In this case, we apply \pref{eq: in coro} with $H_i'=\frac{1}{2c}H_i + \frac{d}{2n}\log(\frac{d}{\delta})I$, which satisfies the condition for \pref{eq: in coro} to hold. This gives 
\begin{align*}
    &\frac{1}{n}\sum_{i=1}^n \left(\frac{1}{2c}H_i + \frac{d}{2n}\log\left(\frac{d}{\delta}\right)I\right) + \frac{d}{2n}\log\left(\frac{d}{\delta}\right)I \succeq \frac{1}{2}\left(\frac{1}{2c}H + \frac{d}{2n}\log\left(\frac{d}{\delta}\right)I\right) \\
    \Rightarrow \ \ & \hat{H} + \frac{3c}{2}\cdot\frac{d}{n}\log\left(\frac{d}{\delta}\right)I \succeq \frac{1}{2}H  
 \end{align*}
with probability at least $1-\delta$. 
When $\frac{d}{n}\log(\frac{d}{\delta})>1$. \pref{eq: approx concentration} is trivial because $\frac{1}{2}H\preceq \frac{c}{2}I\preceq \frac{c}{2}\cdot \frac{d}{n}\log(\frac{d}{\delta})I$.

\end{proof}

\subsection{Concentration Inequalities under a Fixed Policy $p$}
In this subsection, we establish concentration bounds for a \emph{fixed} policy $p$ (with $p^{\calA}\in\Delta(\calA)$ denoting the action distribution it uses over $\calA$) over i.i.d. contexts. The results in this subsection are preparation for \pref{app: union bounds} where we take union bounds over policies. 

%we consider a fixed policy denoted as $p$, with $p^{\calA}\in\Delta(\calA)$ specifies the distribution over actions the policy uses on $\calA$. 

The setting and notation to be used in this subsection are defined in \pref{def: simple definition}. 
\begin{definition}\label{def: simple definition}
     Let $\{\calA_1, \ldots, \calA_n\}$ be i.i.d. context samples drawn from $D$. Let $\hat{D}$ be the uniform distribution over $\{\calA_1, \ldots, \calA_n\}$. 

     Over this set of context samples, define for any policy $p$, 
    \begin{align*}
          x(p)&= \E_{\calA\sim D} \E_{a\sim p^\calA} [a], \\
          \hat{x}(p) &= \E_{\calA\sim \hat{D}} \E_{a\sim p^\calA} [a], \\
          H(p) &= \E_{\calA\sim D}\E_{a\sim p^\calA} \left[(a-\hatx(p))(a-\hatx(p))^\top \right], \\
          \hatH(p) &= \E_{\calA\sim \hat{D}}\E_{a\sim p^\calA} \left[(a-\hatx(p))(a-\hatx(p))^\top \right], \\
          \pmb{H}(p) &= \E_{\calA\sim D}\E_{a\sim p^\calA} \left[\pmb{a}\pmb{a}^\top \right], \\
          \pmb{\hatH}(p) &= \E_{\calA\sim \hat{D}}\E_{a\sim p^\calA} \left[\pmb{a}\pmb{a}^\top \right], \\
          \hatSigma(p) &= \hatH(p) + \beta I, \\
          \pmb{\hatSigma}(p) &= \pmb{\hatH}(p) + \beta \pmb{I},
    \end{align*}
    where $\beta =  \frac{5d\log(6d/\delta)}{n}$. 
\end{definition}

%In this subsection, since we consider a fixed policy $p$, we will omit $p$ in all of the above notations in the proof for simplicity.

\begin{lemma} \label{lem: hatH > H lemma}
Under the setting of \pref{def: simple definition}, for any fixed $p$, with probability at least $1-\delta$, 
    \begin{align*}
        \hat{H}(p) + \frac{4d\log(6d/\delta)}{n} I &\succeq \frac{1}{2}H(p), \\
        \pmb{\hat{H}}(p) + \frac{3d\log(d/\delta)}{n} \pmb{I} &\succeq \frac{1}{2}\pmb{H}(p).  
    \end{align*}
\end{lemma}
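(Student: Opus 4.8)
The plan is to prove the two semidefinite inequalities separately via the i.i.d.\ matrix concentration bound \pref{cor: useful concentration}. Crucially, the policy $p$ is \emph{fixed} here, so no covering/union bound over policies is needed (that is postponed to \pref{app: union bounds}); the only subtlety is that the centered matrix $\hat{H}(p)$ uses the \emph{empirical} mean $\hat{x}(p)$ as its centering point, and $\hat{x}(p)$ depends on all of $\calA_1,\dots,\calA_n$, which prevents a naive application of concentration to $\hat{H}(p)-H(p)$.

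For the lifted inequality I would just note that $\pmb{\hat{H}}(p)=\frac1n\sum_{i=1}^n \pmb{N}_i$ with $\pmb{N}_i:=\E_{a\sim p^{\calA_i}}[\pmb{a}\pmb{a}^\top]$ i.i.d., $\E[\pmb{N}_i]=\pmb{H}(p)$, and $\pmb{a}\pmb{a}^\top\preceq\|\pmb{a}\|^2\pmb{I}=(\|a\|^2+1)\pmb{I}\preceq 2\pmb{I}$, hence $\pmb{N}_i\preceq 2\pmb{I}$. Applying \pref{cor: useful concentration} with $c=2$ in dimension $d+1$ then gives $\pmb{\hat{H}}(p)+\order(\tfrac{d\log(d/\delta)}{n})\pmb{I}\succeq\tfrac12\pmb{H}(p)$ with probability $1-\delta$, which is exactly the stated form after the routine $d+1$ versus $d$ bookkeeping is folded into the constant.

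For the centered inequality, write $x^\calA:=\E_{a\sim p^\calA}[a]$ and $C^\calA:=\Cov(p^\calA)$. Using $\E_{a\sim p^\calA}[(a-v)(a-v)^\top]=C^\calA+(x^\calA-v)(x^\calA-v)^\top$ for any fixed $v$, and then expanding the last square around the \emph{true} mean $x(p)$, I get $\hat{H}(p)=\frac1n\sum_i C^{\calA_i}+\frac1n\sum_i (x^{\calA_i}-x(p))(x^{\calA_i}-x(p))^\top-(\hat{x}(p)-x(p))(\hat{x}(p)-x(p))^\top$ (the last, negative, term is the usual empirical-covariance correction, since $\hat{x}(p)$ is the empirical mean of the $x^{\calA_i}$), and $H(p)=\E_{\calA\sim D}[C^\calA]+\Cov_{\calA\sim D}(x^\calA)+(\hat{x}(p)-x(p))(\hat{x}(p)-x(p))^\top$, where by the law of total variance $\E_D[C^\calA]+\Cov_D(x^\calA)=\E_D\E_{a\sim p^\calA}[aa^\top]-x(p)x(p)^\top$. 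Now $\{C^{\calA_i}\}$ are i.i.d.\ PSD with $C^{\calA_i}\preceq I$ and mean $\E_D[C^\calA]$, and $\{(x^{\calA_i}-x(p))(x^{\calA_i}-x(p))^\top\}$ are i.i.d.\ PSD with largest eigenvalue at most $4$ and mean $\Cov_D(x^\calA)$; applying \pref{cor: useful concentration} to each (with $c=1$ and $c=4$) and union-bounding, each empirical average dominates half its expectation up to an additive $\order(\tfrac{d\log(d/\delta)}{n})I$. Adding these and recombining via the same law-of-total-variance identity, $\tfrac12\E_D[C^\calA]+\tfrac12\Cov_D(x^\calA)=\tfrac12 H(p)-\tfrac12(\hat{x}(p)-x(p))(\hat{x}(p)-x(p))^\top$, I arrive at $\hat{H}(p)+\order(\tfrac{d\log(d/\delta)}{n})I\succeq\tfrac12 H(p)-\tfrac32(\hat{x}(p)-x(p))(\hat{x}(p)-x(p))^\top$. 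Finally $(\hat{x}(p)-x(p))(\hat{x}(p)-x(p))^\top\preceq\|\hat{x}(p)-x(p)\|^2 I$, and a coordinate-wise Hoeffding bound (\pref{lem: Hoeffding}) on $\hat{x}(p)=\frac1n\sum_i x^{\calA_i}$, each coordinate of $x^{\calA_i}$ lying in $[-1,1]$, together with a union bound over the $d$ coordinates, gives $\|\hat{x}(p)-x(p)\|^2\le\tfrac{2d\log(2d/\delta)}{n}$. Collecting the constants (using \pref{lem: A.4} directly wherever a sharper constant is required) yields the claimed $\tfrac{4d\log(6d/\delta)}{n}$.

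The main obstacle is exactly this circularity: $H(p)$ is centered at $\hat{x}(p)$, a function of the whole sample, so one cannot concentrate $\hat{H}(p)-H(p)$ directly, and a brute-force union bound over a net of possible centering points would cost an extra $d$ factor (turning $d$ into $d^2$). The decomposition above resolves this by only ever concentrating matrices centered at the true mean $x(p)$ (hence genuinely i.i.d.), recombining them with the law of total variance, and handling the $\hat{x}(p)-x(p)$ discrepancy as a cheap lower-order rank-one term. A secondary point of care is that one must split off the conditional covariance $C^\calA$ before concentrating rather than concentrating the uncentered second moments $\E_{a\sim p^{\calA_i}}[aa^\top]$ directly, since the factor $\tfrac12$ in \pref{cor: useful concentration} would otherwise irrecoverably lose an $\Omega(\|x(p)\|^2)$ amount of mass.
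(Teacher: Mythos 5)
Your proposal is correct and follows essentially the same route as the paper: the lifted inequality is a direct application of \pref{cor: useful concentration} with $c=2$, and the centered inequality is proved by re-centering $\hat H(p)$ and $H(p)$ around the true mean $x(p)$, applying matrix concentration to the i.i.d.\ pieces, and handling the residual rank-one term $(\hat x(p)-x(p))(\hat x(p)-x(p))^\top$ by Hoeffding. The only difference is cosmetic: the paper concentrates $\E_{a\sim p^{\calA_i}}[(a-x(p))(a-x(p))^\top]$ in a single shot, whereas you split it into $C^{\calA_i}+(x^{\calA_i}-x(p))(x^{\calA_i}-x(p))^\top$ and concentrate the two summands separately with a union bound; these are equivalent since the paper's quantity is exactly the sum of yours. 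Your closing caution about concentrating uncentered second moments $\E_{a\sim p^{\calA_i}}[aa^\top]$ is valid but not load-bearing here, since centering around the true mean $x(p)$ (without further splitting off $C^\calA$) already suffices.
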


\begin{proof}
In this proof, we use $\hat{x}, x, \hat{H}, H, \pmb{\hatH}, \pmb{H}$ to denote $\hat{x}(p), x(p), \hat{H}(p), H(p), \pmb{\hatH}(p), \pmb{H}(p)$  since $p$ is fixed throughout the proof. 

Since $\|a\| \le 1$, $\pmb{H} \preceq  2I$ and $\pmb{\hatH} \preceq 2I$. Thus, we can directly apply \pref{cor: useful concentration} with $c = 2$ to get with probability $1-\frac{\delta}{3}$
\begin{equation*}
    \pmb{\hat{H}} + \frac{3d\log(3d/\delta)}{n} \pmb{I} \succeq \frac{1}{2}\pmb{H}.  
\end{equation*}
To prove the first inequality, we first decompose $H$ and $\hatH$
\begin{align}
     H &= \E_{\calA\sim D}\E_{a\sim p^\calA} \left[(a-\hatx)(a-\hatx)^\top \right]   \nonumber \\
     &= \E_{\calA\sim D}\E_{a\sim p^\calA} \left[(a-x + x-\hatx)(a-x + x-\hatx)^\top \right]  \nonumber 
 \\
     &= \E_{\calA\sim D}\E_{a\sim p^\calA} \left[(a-x)(a-x)^\top \right] + (x-\hatx)(x-\hatx)^\top \tag{because $\E_{\calA\sim D}\E_{a\sim p^\calA}(a-x)=0$}  \\
     \  \label{eq: H formula 1} \\
     \hat{H} &= \E_{\calA\sim \hat{D}}\E_{a\sim p^\calA} \left[(a-\hatx)(a-\hatx)^\top \right]  \nonumber  \\
     &= \E_{\calA\sim \hat{D}}\E_{a\sim p^\calA} \left[(a-x + x-\hatx)(a-x + x-\hatx)^\top \right]  \nonumber \\
     &= \E_{\calA\sim \hat{D}}\E_{a\sim p^\calA} \left[(a-x)(a-x)^\top \right] - (x-\hatx)(x-\hatx)^\top \tag{because $\E_{\calA\sim \hat{D}}\E_{a\sim p^\calA}(a-x)=\hatx-x$} \\
      \label{eq: H formula 2}
\end{align}
From Hoeffding inequality (\pref{lem: Hoeffding}) and union bound, with probability $1-\frac{\delta}{3}$, for all $k \in [d]$, we have
\begin{equation*}
    |\e_k^\top x - \e_k^\top \hat{x}| \le \sqrt{\frac{1}{2n}\log\left(\frac{6d}{\delta}\right)}, 
\end{equation*}
which implies that $\e_k^\top (x-\hatx)(x-\hatx)^\top \e_k \leq \frac{1}{2n}\log(\frac{6d}{\delta})$ for all $k$, and thus 
\begin{equation}
    (x-\hatx)(x-\hatx)^\top \preceq \frac{1}{2n}\log\left(\frac{6d}{\delta}\right)I. \label{eq: H formula 3} 
\end{equation}
By directly applying \pref{cor: useful concentration} with $c = 2$, we get with probability at least $1-\frac{\delta}{3}$,
\begin{equation*}
    \E_{\calA\sim \hat{D}}\E_{a\sim p^\calA} \left[(a-x)(a-x)^\top \right] + \frac{3d\log(3d/\delta)}{n} I \succeq \frac{1}{2}\E_{\calA\sim D}\E_{a\sim p^\calA} \left[(a-x)(a-x)^\top \right] 
\end{equation*}
Further using \pref{eq: H formula 1}, \pref{eq: H formula 2} and \pref{eq: H formula 3}, we get with probability at least $1-\frac{2\delta}{3}$, 
\begin{equation*}
\hat{H} + \frac{4d\log(6d/\delta)}{n} I \succeq \frac{1}{2}H
\end{equation*}
Taking union bound for both inequality finishes the proof.
\end{proof}

\begin{lemma}\label{lem: local norm concentration for vectors 2}
Under the setting of \pref{def: simple definition}, for any fixed policy $p$, with probability at least $1-\order(\delta)$, 
\begin{align*}
     \|x(p) - \hat{x}(p)\|^2_{\hatSigma(p)^{-1}} & \leq \order\left(\frac{d\log(d/\delta)}{n}\right)
\end{align*}
\end{lemma}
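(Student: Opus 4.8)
The plan is to reduce the claimed bound to a coordinate-wise concentration statement about the \emph{whitened} deviation of an empirical mean, and then apply Bernstein's inequality. Throughout, $p$ is fixed, so I abbreviate $x=x(p)$, $\hatx=\hatx(p)$, $H=H(p)$, $\hatH=\hatH(p)$, $\hatSigma=\hatSigma(p)$, and I write $\phi(\calA):=\E_{a\sim p^\calA}[a]\in\mathbb{B}_2^d$, so that $x=\E_{\calA\sim D}[\phi(\calA)]$ and $\hatx=\frac1n\sum_{i=1}^n\phi(\calA_i)$ is its empirical average over the i.i.d.\ samples $\calA_1,\dots,\calA_n$. Let $\Sigma_\phi:=\Cov_{\calA\sim D}(\phi(\calA))\succeq 0$ denote the population covariance of $\phi$. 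The first step is to replace $\hatSigma^{-1}$ by $4(\Sigma_\phi+\beta I)^{-1}$: expanding $H$ around $x$ exactly as in \pref{eq: H formula 1} and using $\E_{a\sim p^\calA}[a]=\phi(\calA)$ to cancel the cross terms gives $H=\E_{\calA\sim D}[\Cov_{a\sim p^\calA}(a)]+\Sigma_\phi+(x-\hatx)(x-\hatx)^\top\succeq\Sigma_\phi$. Moreover, since $\beta=\tfrac{5d\log(6d/\delta)}{n}\ge\tfrac{4d\log(6d/\delta)}{n}$, the first inequality of \pref{lem: hatH > H lemma} gives $\hatSigma=\hatH+\beta I\succeq\tfrac12 H$ with probability $\ge 1-\tfrac{2\delta}{3}$; combined with the trivial bound $\hatSigma\succeq\beta I$ this yields $\hatSigma=\tfrac12\hatSigma+\tfrac12\hatSigma\succeq\tfrac14 H+\tfrac12\beta I\succeq\tfrac14(H+\beta I)\succeq\tfrac14(\Sigma_\phi+\beta I)$, hence $\hatSigma^{-1}\preceq 4(\Sigma_\phi+\beta I)^{-1}$ and $\|x-\hatx\|_{\hatSigma^{-1}}^2\le 4\,\|x-\hatx\|_{(\Sigma_\phi+\beta I)^{-1}}^2$. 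It therefore suffices to prove $\|x-\hatx\|_{(\Sigma_\phi+\beta I)^{-1}}^2\le\order\!\left(\tfrac{d\log(d/\delta)}{n}\right)$.

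For this, set $G:=(\Sigma_\phi+\beta I)^{-1/2}$ and let $g_k:=Ge_k$ be its $k$-th column, so that $\|x-\hatx\|_{(\Sigma_\phi+\beta I)^{-1}}^2=\|G(x-\hatx)\|_2^2=\sum_{k=1}^d\big(g_k^\top(x-\hatx)\big)^2$. For each fixed $k$, the variables $g_k^\top\phi(\calA_1),\dots,g_k^\top\phi(\calA_n)$ are i.i.d.\ with mean $g_k^\top x$, variance $g_k^\top\Sigma_\phi g_k\le g_k^\top(\Sigma_\phi+\beta I)g_k=e_k^\top G(\Sigma_\phi+\beta I)Ge_k=1$, and range bounded by $2\|g_k\|_2\le 2\sqrt{e_k^\top(\Sigma_\phi+\beta I)^{-1}e_k}\le 2/\sqrt\beta$ (using $\|\phi(\calA_i)\|_2\le 1$, $\|x\|_2\le 1$, and $(\Sigma_\phi+\beta I)^{-1}\preceq\tfrac1\beta I$). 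I would then apply Bernstein's inequality (\pref{lem: Bernstein}) with confidence $\delta/d$ and take a union bound over $k\in[d]$ to get, with probability $\ge 1-\delta$,
\[
|g_k^\top(x-\hatx)|\le\sqrt{\frac{4\log(2d/\delta)}{n}}+\frac{8\log(2d/\delta)}{3\sqrt\beta\,n}\qquad\text{for all }k\in[d].
\]
Squaring, summing over $k$, and using $(a+b)^2\le 2a^2+2b^2$ gives $\|x-\hatx\|_{(\Sigma_\phi+\beta I)^{-1}}^2\le\tfrac{8d\log(2d/\delta)}{n}+\tfrac{128\,d\log^2(2d/\delta)}{9\beta n^2}$, and plugging in $\beta=\tfrac{5d\log(6d/\delta)}{n}$ the second term is at most $\tfrac{128}{45}\cdot\tfrac{\log^2(2d/\delta)}{\log(6d/\delta)\,n}\le\order\!\left(\tfrac{\log(d/\delta)}{n}\right)$. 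Hence $\|x-\hatx\|_{(\Sigma_\phi+\beta I)^{-1}}^2=\order\!\left(\tfrac{d\log(d/\delta)}{n}\right)$, and combining with the first paragraph (total failure probability $\tfrac{2\delta}{3}+\delta=\order(\delta)$) finishes the argument.

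The routine part is the matrix-domination bookkeeping; the main point that needs care is the Bernstein step, specifically that the additive "range" term scales as $\tfrac{\log(2d/\delta)}{\sqrt\beta\,n}$, which is precisely why $\beta$ must be taken as large as $\Theta\!\big(\tfrac{d\log(d/\delta)}{n}\big)$: this choice is exactly what makes the squared range contribution $\tfrac{d\log^2(2d/\delta)}{\beta n^2}$ lower order than the target $\tfrac{d\log(d/\delta)}{n}$, while still being small enough not to spoil the bias analysis elsewhere. The other subtlety worth double-checking is that whitening by $(\Sigma_\phi+\beta I)^{-1/2}$ rather than covering the unit sphere is what keeps the log factor at $\log(d/\delta)$ (a union over the $d$ standard coordinates) instead of $\log(5^d/\delta)=\Theta(d\log(1/\delta))$.
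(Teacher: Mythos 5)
Your proof is correct and rests on the same core idea as the paper's: reduce $\|x-\hatx\|^2_{\hatSigma^{-1}}$ to a sum of $d$ scalar quantities by choosing a basis adapted to the variance structure, apply Bernstein's inequality to each and take a union bound, and exploit the fact that the per-coordinate variance is absorbed by the corresponding entry in the denominator so the net contribution per coordinate is $\order(\log(d/\delta)/n)$. The only (minor) difference is which basis you use: you whiten by $(\Sigma_\phi+\beta I)^{-1/2}$ with $\Sigma_\phi=\Cov_{\calA\sim D}(\E_{a\sim p^\calA}[a])$, which makes the per-coordinate variance automatically at most one; the paper instead eigendecomposes the full second moment $\E_{\calA\sim D}\E_{a\sim p^\calA}[(a-x)(a-x)^\top]$ (which contains extra within-context variance) and invokes Jensen's inequality to relate those eigenvalues to $\Var_{\calA}(e_k^\top V^\top\E_{a\sim p^\calA}[a])$. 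Your route is slightly more streamlined because $\Sigma_\phi$ is exactly the covariance of the random object $\hatx$ whose concentration you need, so the Jensen step disappears; both lead to the same $\order(d\log(d/\delta)/n)$ bound with failure probability $\order(\delta)$.
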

\begin{proof}
In this proof, we use $\hat{x}, x, \hat{H}, H, \pmb{\hatH}, \pmb{H}, \hatSigma, \pmb{\hatSigma}$ to denote $\hat{x}(p)$, $x(p)$, $\hat{H}(p)$, $H(p)$, $\pmb{\hatH}(p)$, $\pmb{H}(p)$, $\hatSigma(p)$, $\pmb{\hatSigma}(p)$  since $p$ is fixed throughout the proof. 

We first rewrite $H$.
\begin{align}
 H &= \E_{\calA\sim D}\E_{a\sim p^\calA} \left[(a-\hatx)(a-\hatx)^\top \right]   \nonumber \\
     &= \E_{\calA\sim D}\E_{a\sim p^\calA} \left[(a-x + x-\hatx)(a-x + x-\hatx)^\top \right]  \nonumber 
 \\
     &= \E_{\calA\sim D}\E_{a\sim p^\calA} \left[(a-x)(a-x)^\top \right] + (x-\hatx)(x-\hatx)^\top \tag{because $\E_{\calA\sim D}\E_{a\sim p^\calA}(a-x)=0$} \\
     \label{eq: H formula 12}
\end{align}
To simplify analysis, we perform diagonalization. 
Suppose that $\E_{\calA\sim D}\E_{a\sim p^\calA}[(a-x)(a-x)^\top]$ admits the following eigen-decomposition: 
\begin{align*}
    \E_{\calA\sim D}\E_{a\sim p^\calA}[(a-x)(a-x)^\top] = V\Lambda V^\top 
\end{align*}
where $V$ is an orthogonal matrix and $\Lambda$ is a diagonal matrix. 
By \pref{lem: hatH > H lemma} and the definition of $\beta$ in \pref{def: simple definition}, we have with probability $1-\delta$, 
\begin{align*}
    \hatSigma &\succeq \frac{1}{2}H + \rho I \succeq \frac{1}{2}V\Lambda V^\top + \rho I
\end{align*}
with some $\rho=\Theta\left(\frac{d\log(d/\delta)}{n}\right)$, 
where the second inequality is by \pref{eq: H formula 12}. 
Thus, 
\begin{align*}
 \|x -\hat{x}\|^2_{\hatSigma^{-1}}  &= (x - \hat{x})^{\top} \hatSigma^{-1}  (x - \hat{x}) 
 \\&\le (x - \hat{x})^{\top} \left(\frac{1}{2}V\Lambda V^\top + \rho I\right)^{-1}  (x - \hat{x}) 
 \\&= (\hat{x} -x)^{\top}V \left(\frac{1}{2}\Lambda + \rho I\right)^{-1} V^\top (\hat{x} - x). 
\end{align*}
Define 
\begin{equation*}
\Delta_k  = \e_k^\top V^\top(\hat{x} - x) = \frac{1}{n}\sum_{i=1}^n \underbrace{\e_k^\top V^\top \E_{a\sim p^{\calA_i}} [a]}_{\textbf{Define as $Z_k^{(i)}$}}- \underbrace{\e_k^\top V^\top \E_{\calA\sim D} \E_{a\sim p^\calA} [a]}_{\textbf{Define as $Z_k$}}
\end{equation*}
Since $\E_{\calA_i\sim D}\left[Z_k^{(i)}\right] = Z_k$, by  Bernstein's inequality, with probability at least $1-\delta$, we have 
\begin{align}
    |\Delta_{k}|  \leq \order\left(\sqrt{\frac{\Var(Z_k^{(i)})\log(d/\delta)}{n}} + \frac{\log(d/\delta)}{n}\right) \label{eq: apply bern}
\end{align}
for all $k$, where 
\begin{equation*}
    \Var(Z_k^{(i)}) = \E_{\calA\sim D}\left[\left(\e_k^\top V^\top \E_{a \sim p^{\calA}}[a] - \e_k^\top V^\top x\right)^2\right].
\end{equation*}

On the other hand, 
\begin{align*}
    \Lambda_{kk} &= \e_k^\top \E_{\calA\sim D}\E_{a\sim p^\calA}[V^\top(a-x)(a-x)^\top V ]\e_k
    \\&=  \E_{\calA\sim D}\E_{a\sim p^\calA}\left[\left(\e_k^\top V^\top a- \e_k^\top V^\top x\right)^2\right]. 
\end{align*}
From Jensen's inequality, 
\begin{equation*}
\Lambda_{kk} = \E_{\calA\sim D}\E_{a\sim p^\calA}\left[\left(\e_k^\top V^\top a- \e_k^\top V^\top x\right)^2\right] \ge \E_{\calA\sim D}\left[\left(\e_k^\top V^\top \E_{a\sim p^\calA}[a]- \e_k^\top V^\top x\right)^2\right] = \Var(Z_k^{(i)}) 
\end{equation*}

Thus, 
\begin{align*}
\|x -\hat{x}\|^2_{\hatSigma^{-1}}  &\le (\hat{x} -x)^{\top}V \left(\frac{1}{2}\Lambda + \rho I\right)^{-1} V^\top (\hat{x} - x) 
\\&= \sum_{k=1}^d \frac{(\Delta_k)^2}{\frac{1}{2}\Lambda_{kk} + \rho}
\\&\le \order\left( \frac{\log(d/\delta)}{n}\sum_{k=1}^d \frac{ \Var(Z_k^{(i)}) + \frac{\log(d/\delta)}{n}}{\Lambda_{kk} + \rho}\right) \tag{by \pref{eq: apply bern}}
\\&\le \order\left( \frac{d\log(d/\delta)}{n} \right). \tag{$\Lambda_{kk} \ge  \Var(Z_k^{(i)}) $ and $\rho = \Theta( \frac{d\log(d/\delta)}{n})$}
\end{align*}

\end{proof}

\begin{lemma} \label{lem: local norm concentration for matrix 2}
Under the setting of \pref{def: simple definition}, for any fixed policy $p$, with probability at least $1-\order(\delta)$, 
\begin{align*}
     \|(\hatSigma(p) - H(p))y\|^2_{\hatSigma(p)^{-1}}  &\leq \order\left(\frac{d\log(d/\delta)}{n} \right)
\end{align*}
for any $y\in\mathbb{B}^d_2$. 
\end{lemma}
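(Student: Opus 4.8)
The plan is to mirror the structure of the proof of \pref{lem: local norm concentration for vectors 2}, since the two statements have the same flavor: both control a local-norm quantity built from the deviation between the empirical context distribution $\hat{D}$ and the true distribution $D$, and both should come out at the order $\order(d\log(d/\delta)/n)$. The key difference is that here the deviation is matrix-valued, namely $\hatSigma(p) - H(p)$, contracted against an arbitrary unit vector $y$, so the first task is to write $\hatSigma(p) - H(p)$ explicitly in terms of a sum over the i.i.d. samples $\calA_1,\dots,\calA_n$ and separate out the ``honest'' empirical-mean-of-i.i.d.-matrices part from lower-order terms.

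Concretely, first I would use the decompositions \pref{eq: H formula 1} and \pref{eq: H formula 2} (valid for the fixed $p$): $H(p) = \E_{\calA\sim D}\E_{a\sim p^\calA}[(a-x)(a-x)^\top] + (x-\hatx)(x-\hatx)^\top$ and $\hatSigma(p) = \E_{\calA\sim \hat D}\E_{a\sim p^\calA}[(a-x)(a-x)^\top] - (x-\hatx)(x-\hatx)^\top + \beta I$. Subtracting, $\hatSigma(p) - H(p) = \big(\E_{\calA\sim\hat D} - \E_{\calA\sim D}\big)\E_{a\sim p^\calA}[(a-x)(a-x)^\top] - 2(x-\hatx)(x-\hatx)^\top + \beta I$. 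The last two terms are already controlled: $\|(x-\hatx)(x-\hatx)^\top\|$ is $\order(\log(d/\delta)/n)$ by \pref{eq: H formula 3} (or more directly \pref{lem: local norm concentration for vectors 2} gives the relevant bound after pairing with $\hatSigma^{-1}$), and $\beta = \Theta(d\log(d/\delta)/n)$, so both contribute $\order(d\log(d/\delta)/n)$ to the squared local norm once we note $\hatSigma(p)^{-1}\preceq \order(n/(d\log(d/\delta))) I$ is \emph{not} available in general — instead I would bound $\|M y\|^2_{\hatSigma^{-1}} \le \|M\|_{\text{op}} \cdot \|M y\|_{\hatSigma^{-1}}\cdot\|y\|$ type inequalities, or more cleanly use $\hatSigma(p)\succeq \tfrac12 H(p) + \rho I$ with $\rho = \Theta(d\log(d/\delta)/n)$ from \pref{lem: hatH > H lemma} (as in \pref{lem: local norm concentration for vectors 2}) and diagonalize $\E_{\calA\sim D}\E_{a\sim p^\calA}[(a-x)(a-x)^\top] = V\Lambda V^\top$.

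After diagonalization the problem reduces, as in the vector case, to controlling $(V^\top M V)_{jk}$ for the main term $M = (\E_{\hat D} - \E_D)\E_{a\sim p^\calA}[(a-x)(a-x)^\top]$, since $\|My\|^2_{\hatSigma^{-1}} \le \|My\|^2_{(\frac12 V\Lambda V^\top + \rho I)^{-1}} = \sum_j (\tfrac12\Lambda_{jj}+\rho)^{-1}(\e_j^\top V^\top M y)^2$ and $|\e_j^\top V^\top M y| \le \sum_k |(V^\top M V)_{jk}| |(V^\top y)_k| \le (\sum_k (V^\top M V)_{jk}^2)^{1/2}$ by Cauchy–Schwarz in $k$ (using $\|V^\top y\|\le 1$). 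Each entry $(V^\top M V)_{jk} = \frac1n\sum_i \big(\e_j^\top V^\top(a_i-x)(a_i-x)^\top V\e_k - \E[\cdot]\big)$ where I abbreviate $a_i$ for the conditional expectation $\E_{a\sim p^{\calA_i}}[\cdots]$ appropriately — the summands are bounded i.i.d. scalars, so Bernstein (\pref{lem: Bernstein}) plus a union bound over the $d^2$ pairs $(j,k)$ gives $|(V^\top M V)_{jk}| \le \order\big(\sqrt{\sigma_{jk}^2\log(d/\delta)/n} + \log(d/\delta)/n\big)$ with $\sigma_{jk}^2$ the variance of the $(j,k)$ entry. The crucial comparison, exactly parallel to the $\Lambda_{kk}\ge\Var(Z_k^{(i)})$ step, is a Jensen/Cauchy–Schwarz argument showing $\sigma_{jk}^2 \le \order(\sqrt{\Lambda_{jj}\Lambda_{kk}})$ or, more conveniently, $\sum_k \sigma_{jk}^2 \le \order(\Lambda_{jj})$ (the row-sum of second moments of a PSD matrix's entries is controlled by its diagonal). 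Plugging in: $\|My\|^2_{\hatSigma^{-1}} \le \sum_j (\tfrac12\Lambda_{jj}+\rho)^{-1}\cdot \order\big(\tfrac{\log(d/\delta)}{n}(\sum_k\sigma_{jk}^2) + \tfrac{d\log^2(d/\delta)}{n^2}\big) \le \order(\tfrac{\log(d/\delta)}{n}\sum_j \tfrac{\Lambda_{jj}}{\Lambda_{jj}+\rho}) + \order(\tfrac{d\log(d/\delta)}{n})\le \order(d\log(d/\delta)/n)$, using $\rho = \Theta(d\log(d/\delta)/n)$ for the second-order term.

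I expect the main obstacle to be the variance bound $\sum_k \sigma_{jk}^2 = \order(\Lambda_{jj})$ and making sure the entrywise Bernstein application is legitimate — i.e. verifying that the per-sample quantity $\e_j^\top V^\top \E_{a\sim p^{\calA_i}}[(a-x)(a-x)^\top] V\e_k$ is an honest bounded i.i.d. random variable (it is, since $\|a\|\le 1$ so the summand lies in a bounded interval and $\calA_i$ are i.i.d.), and that its expectation is exactly $(V^\top \E_{\calA\sim D}\E_{a\sim p^\calA}[(a-x)(a-x)^\top] V)_{jk} = \Lambda \delta_{jk}$ — wait, note $\E_{a\sim p^\calA}[(a-x)(a-x)^\top]\ne (\E_{a\sim p^\calA}[a]-x)(\E_{a\sim p^\calA}[a]-x)^\top$, so the ``mean of the i.i.d. matrices'' is $\E_{\calA\sim D}\E_{a\sim p^\calA}[(a-x)(a-x)^\top] = V\Lambda V^\top$ but each individual summand $\E_{a\sim p^{\calA_i}}[(a-x)(a-x)^\top]$ is itself an expectation over $a$, not a rank-one matrix; this is fine, the i.i.d. structure over $\calA_i$ is what matters. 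The subtlety worth care is whether the variance of the $(j,k)$ entry is genuinely controlled by $\Lambda_{jj}$ and $\Lambda_{kk}$; I would handle it by writing $\sigma_{jk}^2 \le \E_{\calA\sim D}[(\e_j^\top V^\top \E_{a\sim p^\calA}[(a-x)(a-x)^\top] V \e_k)^2]$ and applying Cauchy–Schwarz on the PSD matrix $\E_{a\sim p^\calA}[(a-x)(a-x)^\top]$ to get $|(\cdot)_{jk}| \le \sqrt{(\cdot)_{jj}(\cdot)_{kk}}$, then Jensen to pass to $\Lambda_{jj},\Lambda_{kk}$, and finally AM–GM plus summation over $k$. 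Everything else is routine bookkeeping paralleling the previous lemma.
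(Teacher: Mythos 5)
Your overall route is the same as the paper's: decompose $\hatSigma(p)-H(p)$ into the i.i.d.-sum term $\Gamma=\bigl(\E_{\hat D}-\E_D\bigr)\E_{a\sim p^\calA}[(a-x)(a-x)^\top]$ plus the rank-one centering error $-2(x-\hatx)(x-\hatx)^\top$ plus $\beta I$, dispose of the two small pieces using \pref{lem: local norm concentration for vectors 2} and $\|y\|^2_{\hatSigma^{-1}}\leq 1/\beta$, diagonalize $\E_D\E_a[(a-x)(a-x)^\top]=V\Lambda V^\top$, lower-bound $V^\top\hatSigma V\succeq\tfrac12\Lambda+\rho I$ via \pref{lem: hatH > H lemma}, apply scalar Bernstein to each entry of $\Delta=V^\top\Gamma V$ with a union bound over the $d^2$ entries, and reduce everything to $\sum_{j,k}\Delta_{jk}^2/(\Lambda_{jj}+\rho)$. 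That is exactly the paper's argument, so I only flag where the plan is actually in doubt.

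The one genuine gap is the variance comparison you describe in your final paragraph. The displayed calculation correctly assumes $\sum_k\sigma_{jk}^2\leq\order(\Lambda_{jj})$, but the method you spell out for proving it — entrywise Cauchy--Schwarz $|M_{jk}|\le\sqrt{M_{jj}M_{kk}}$ for the PSD matrix $M_\calA=V^\top\E_{a\sim p^\calA}[(a-x)(a-x)^\top]V$, then Jensen/Cauchy--Schwarz to pass to $\Lambda_{jj},\Lambda_{kk}$, then AM--GM and summation over $k$ — does not deliver that. It only gives $\sigma_{jk}^2\lesssim\sqrt{\Lambda_{jj}\Lambda_{kk}}$, hence after AM--GM and summing over $k$ you get $\sum_k\sigma_{jk}^2\lesssim d\,\Lambda_{jj}+\tr\Lambda$, which loses a factor of $d$ (and the $\tr\Lambda$ contribution, after dividing by $\Lambda_{jj}+\rho$ and summing over $j$, produces an extra constant-order term that is not $\order(d\log(d/\delta)/n)$). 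Tracking that through gives $\order(d^2\log(d/\delta)/n)$, not the claimed bound.

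What you need is the stronger row-sum inequality you also mention in passing, and it must exploit more than PSD-ness. The paper proves it by the Jensen step $\sum_k(\Lambda^{(i)})_{kh}^2=\|\Lambda^{(i)}\e_h\|^2=\e_h^\top(\Lambda^{(i)})^2\e_h\leq\E_{a\sim p^{\calA_i}}\!\bigl[\e_h^\top V^\top(a-x)(a-x)^\top(a-x)(a-x)^\top V\e_h\bigr]$, and then uses $\|a-x\|^2\leq 4$ to collapse $(a-x)(a-x)^\top(a-x)(a-x)^\top\preceq 4(a-x)(a-x)^\top$. Equivalently: $\|\Lambda^{(i)}\e_h\|^2\leq\opnorm{\Lambda^{(i)}}\cdot(\Lambda^{(i)})_{hh}$ with $\opnorm{\Lambda^{(i)}}\leq\tr(\Lambda^{(i)})=\E_a\|a-x\|^2=\order(1)$. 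This trace/operator-norm bound on the per-sample matrix is the extra leverage that the entrywise Cauchy--Schwarz route throws away, and without it the lemma degrades by a factor of $d$. Replace that step and the rest of your plan goes through.
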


% \order\left(\frac{d^3\log{\frac{d\eta t}{\delta}}}{t}\right)
\begin{proof}
In this proof, we use $\hat{x}, x, \hat{H}, H, \pmb{\hatH}, \pmb{H}, \hatSigma, \pmb{\hatSigma}$ to denote $\hat{x}(p)$, $x(p)$, $\hat{H}(p)$, $H(p)$, $\pmb{\hatH}(p)$, $\pmb{H}(p)$, $\hatSigma(p)$, $\pmb{\hatSigma}(p)$  since $p$ is fixed throughout the proof.

First, we re-write $H$ and $\hat{H}$: 
\begin{align}
     H &= \E_{\calA\sim D}\E_{a\sim p^\calA} \left[(a-\hatx)(a-\hatx)^\top \right]   \nonumber \\
     &= \E_{\calA\sim D}\E_{a\sim p^\calA} \left[(a-x + x-\hatx)(a-x + x-\hatx)^\top \right]  \nonumber 
 \\
     &= \E_{\calA\sim D}\E_{a\sim p^\calA} \left[(a-x)(a-x)^\top \right] + (x-\hatx)(x-\hatx)^\top \tag{because $\E_{\calA\sim D}\E_{a\sim p^\calA}(a-x)=0$}  \\
     \  \label{eq: H formula} \\
     \hat{H} &= \E_{\calA\sim \hat{D}}\E_{a\sim p^\calA} \left[(a-\hatx)(a-\hatx)^\top \right]  \nonumber  \\
     &= \E_{\calA\sim \hat{D}}\E_{a\sim p^\calA} \left[(a-x + x-\hatx)(a-x + x-\hatx)^\top \right]  \nonumber \\
     &= \E_{\calA\sim \hat{D}}\E_{a\sim p^\calA} \left[(a-x)(a-x)^\top \right] - (x-\hatx)(x-\hatx)^\top \tag{because $\E_{\calA\sim \hat{D}}\E_{a\sim p^\calA}(a-x)=\hatx-x$}
\end{align}
Then, by definition (in \pref{def: simple definition}) and the calculation above, 
\begin{align*}
    &\hatSigma- H \\
    &= \hat{H} - H + \beta I \\
    &= \underbrace{\frac{1}{n}\sum_{i=1}^n \E_{a\sim p^{\calA_i}} \left[(a-x)(a-x)^\top \right] - \E_{\calA\sim D}\E_{a\sim p^\calA} \left[(a-x)(a-x)^\top \right]}_{\text{define this as }\Gamma}   -2(x-\hatx)(x-\hatx)^\top + \beta I. 
\end{align*}
Using $\|a+b+c\|^2\leq 3\|a\|^2 + 3\|b\|^2 + 3\|c\|^2$, we have 
\begin{align}
    \|(\hatSigma-H)y\|^2_{\hatSigma^{-1}}
    &\leq 3\|\Gamma y\|_{\hatSigma^{-1}}^2 + 12\|(x-\hatx)(x-\hatx)^\top y\|^2_{\hatSigma^{-1}} + \beta^2 \|y\|^2_{\hatSigma^{-1}}  \nonumber \\
    &\leq 3\|\Gamma y\|_{\hatSigma^{-1}}^2 + 12\|x-\hatx\|^2_{\hatSigma^{-1}} + \order(\beta).   \label{eq: three decompose}
\end{align}
The second and third term are bounded by $\order\left(\frac{d\log(d/\delta)}{n}\right)$ using \pref{lem: local norm concentration for vectors 2} and the definition of $\beta$, with probability at least $1-\order(\delta)$. Below, we further deal with the first term. To simplify analysis, we perform diagonalization. 
Suppose that $\E_{\calA\sim D}\E_{a\sim p^\calA}[(a-x)(a-x)^\top]$ admits the following eigen-decomposition: 
\begin{align*}
    \E_{\calA\sim D}\E_{a\sim p^\calA}[(a-x)(a-x)^\top] = V\Lambda V^\top 
\end{align*}
where $V$ is an orthogonal matrix and $\Lambda$ is a diagonal matrix. 
Then 
\begin{align}
\left\|\Gamma y\right\|_{\hatSigma^{-1}}^2 
&= y^\top\Gamma \hatSigma^{-1}\Gamma y
=(V^\top y)^\top (V^\top \Gamma V)(V^\top \hatSigma V)^{-1} (V^\top \Gamma V) (V^\top y).  \label{eq: tmp equation}
\end{align}
Below, we further deal with the $V^\top \Gamma V$ and $V^\top \Lambda V$ terms in \pref{eq: tmp equation}. By \pref{lem: hatH > H lemma}, with probability at least $1-\delta$, 
\begin{align*}    
    \hatSigma \succeq \frac{1}{2} H + \rho I \succeq \frac{1}{2}V\Lambda V^\top + \rho I,    
\end{align*}
for some $\rho=\Theta\left(\frac{d\log(d/\delta)}{n}\right)$, where we use \pref{eq: H formula} in the second inequality. Therefore, 
\begin{align}
    V^\top \hatSigma V \succeq \frac{1}{2}\Lambda + \rho I.   \label{eq: lower bounding middle term}
\end{align}

Next, denote $\Delta=V^\top \Gamma V$. By definition, it can be written as the following:  
\begin{align*}
    \Delta 
    = \frac{1}{n}\sum_{i=1}^n \underbrace{\E_{a\sim p^{\calA_i}} \left[V^\top (a-x)(a-x)^\top V\right]}_{\text{defining this as } \Lambda^{(i)}} - \underbrace{\E_{\calA\sim D}\E_{a\sim p^\calA} \left[V^\top (a-x)(a-x)^\top V\right]}_{=\Lambda} 
\end{align*}
with $\Lambda^{(i)}$ being i.i.d. samples with mean $\E[\Lambda^{(i)}]=\Lambda$. 
%\begin{align*}
%\\&\le (V^\top y)^\top\left(D_t + \beta_tI\right)\left(\frac{1}{8}H_t + \rho_t I \right)^{-1}\left(D_t + \beta_tI\right)y_t\tag{\pref{lem: main concentration}}
%\\&\le 2y^\top D_t\left(\frac{1}{8}H_t + \rho_t I \right)^{-1}D_t y_t + 2\beta_t^2 y_t^\top\left(\frac{1}{8}H_t + \rho_t I \right)^{-1}y_t\tag{For all positive definite $A,B,C$, $(A+B)C(A+B) \preceq 2ACA + 2BCB$}
%\\&\le 2\tr\left(D_t\left(\frac{1}{8}H_t + \rho_t I \right)^{-1}D_t\right) + 2\beta_t^2\lambda_{\max}\left(\left(\frac{1}{8}H_t + \rho_t I\right)^{-1}\right)\tag{$\|y\|_2\le 1$}
%\end{align*} 
While these are $d\times d$ matrices, we will apply concentration inequalities to individual entries. 

Let $\lambda_{ikh} = \e_k^\top \Lambda^{(i)} \e_h$ be the $(k,h)$-th entry of $\Lambda^{(i)}$. Notice that $\E[\lambda_{ikh}]=\e_k^\top \Lambda \e_h=\Lambda_{kh}$, the $(k,h)$-th entry of $\Lambda$.  

By Bernstein's inequality, with probability at least $1-\delta$, we have 
\begin{align}
    %|\Delta_{kk}| 
    %&= \left|\frac{1}{n}\sum_{i=1}^n  (\lambda_{ikk} - \Lambda_{kk})\right| \leq  \sqrt{\frac{\Var(\lambda_{ikk})\log(1/\delta)}{n}} + \frac{\log(1/\delta)}{n}.   \\
    |\Delta_{kh}| &= \left|\frac{1}{n}\sum_{i=1}^n (\lambda_{ikh} - \Lambda_{kh})\right| \leq \order\left(\sqrt{\frac{\Var(\lambda_{ikh})\log(d/\delta)}{n}} + \frac{\log(d/\delta)}{n}\right).   \label{eq: Bernstein result}
\end{align}
With the manipulations and notations above, we continue to bound \pref{eq: tmp equation} by 
\begin{align*}
     \|\Gamma y\|^2_{\hatSigma^{-1}}& = y'^\top \Delta(V^\top \hatSigma V)^{-1}\Delta y'    \tag{let $y'=V^\top y$}\\
     &\leq 2y'^\top \Delta \left(\Lambda + \rho I\right)^{-1} \Delta y'  \tag{by \pref{eq: lower bounding middle term}}\\ 
     &\leq 2\tr\left(\Delta \left(\Lambda + \rho I\right)^{-1} \Delta\right)  
\end{align*}
By direct expansion and the fact that $\Lambda$ is diagonal, 
\begin{align}
    \tr\left(\Delta \left(\Lambda + \rho I\right)^{-1} \Delta\right) &= \sum_{k=1}^d  
    \left(\Delta \left(\Lambda + \rho I\right)^{-1} \Delta\right)_{kk} \nonumber \\
    &= \sum_{k=1}^d \sum_{h=1}^d  \frac{\Delta_{kh}\Delta_{hk}}{\Lambda_{hh} + \rho} \nonumber\\
    &\leq \order\left(\sum_{k=1}^d\sum_{h=1}^d \frac{1}{\Lambda_{hh} + \rho} \left(\frac{\Var(\lambda_{ikh})\log(d/\delta)}{n} + \frac{\log^2(d/\delta)}{n^2}\right)\right) \tag{by \pref{eq: Bernstein result}}\\
    &\leq \order\left(\sum_{k=1}^d\sum_{h=1}^d \frac{1}{\Lambda_{hh}+\rho}\frac{\E(\lambda_{ikh}^2)\log(d/\delta)}{n} + \frac{d^2\log^2(d/\delta)}{\rho n^2} \right)\label{eq: trance tmp} 
\end{align}
By definition, 
\begin{align*}
    \lambda_{ikh} = \E_{a\sim p^{\calA_i}} \left[\e_k V^\top (a-x)(a-x)^\top  V \e_h\right] 
\end{align*}
and thus
\begin{align*}
    \sum_{k=1}^d \lambda_{ikh}^2 &\leq \E_{a\sim p^{\calA_i}} \left[\sum_{k=1}^d\left(\e_k V^\top (a-x)(a-x)^\top V \e_h\right)^2\right] \\
    &= \E_{a\sim p^{\calA_i}} \left[\sum_{k=1}^d \e_h^\top V^\top (a-x)(a-x)^\top V \e_k \e_k^\top V^\top (a-x)(a-x)^\top V \e_h\right] \\
    &= \E_{a\sim p^{\calA_i}}\left[\e_h^\top V^\top (a-x)(a-x)^\top (a-x)(a-x)^\top V \e_h\right] \\
    &\leq \E_{a\sim p^{\calA_i}}\left[\e_h^\top V^\top (a-x)(a-x)^\top V\e_h\right] \\
    &= \lambda_{ihh}
\end{align*}
and $\sum_{k=1}^d \E[\lambda_{ikh}^2] \leq \E[\lambda_{ihh}]=\Lambda_{hh}$. 
Continuing from \pref{eq: trance tmp} and using that $\rho=\Theta\left(\frac{d\log(d/\delta)}{n}\right)$,  
\begin{align*}
    \tr\left(\Delta(\Lambda+\rho I)^{-1} \Delta\right)  \leq \order\left(\sum_{h=1}^d \frac{\Lambda_{hh}\log(d/\delta)}{(\Lambda_{hh}+\rho)n} + \frac{d^2\log^2(d/\delta)}{n^2}\right) \leq \order\left(\frac{d\log(d/\delta)}{n}\right). 
\end{align*}
This gives a bound on $\|\Gamma y\|^2_{\hatSigma^{-1}}$ and finishes the proof after combining \pref{eq: three decompose}.

\end{proof}

\subsection{Union Bound over Policies}\label{app: union bounds}

In \pref{lem: hatH > H lemma}, \pref{lem: local norm concentration for vectors 2}, and 
\pref{lem: local norm concentration for matrix 2}, we have obtained the desired concentration inequalities \emph{under a fixed policy $p$. }
In this subsection, we proceed to take union bound over \emph{all policies} that are possibly used by \pref{alg: FTRL}.

The set of policies that could be generated by \pref{alg: FTRL} is the following: 
\begin{align*}   
    \mathbf{P} = \left\{p:~ \widehat{\Cov}(p^{\calA}) = \argmin_{\pmb{H}\in\calH^{\calA}} \left\{\left\langle \pmb{H}, \pmb{Z}\right\rangle + F(\pmb{H}) \right\}, \text{for\ } \pmb{Z}\in \calZ \right\}
\end{align*}
where $\calZ=[-T^2, T^2]^{(d+1)\times (d+1)} \cap \mathbb{S}$ with $\mathbb{S}$ denoting the set of symmetric matrices. To see this, notice that \pref{alg: FTRL} at round $t$ corresponds to the policy defined above with $\pmb{Z} = \eta_t\sum_{s=1}^{t-1}(\hat{\gamma}_s - \alpha_s \pmb{\hatSigma}_s^{-1})$.

Our goal is to construct a $\epsilon$-cover $\mathbf{P}'$ so that every policy $p\in\mathbf{P}$ can find a policy $p'\in\mathbf{P}'$ making $- \epsilon I \preceq \widehat{\Cov}(p^{\calA}) - \widehat{\Cov}(p'^{\calA}) \preceq \epsilon I$ on \emph{every} action set $\calA$. The size of such a cover is bounded in the Proposition below.  %If $\|\widehat{\Cov}(p^{\calA})-\widehat{\Cov}(p'^{\calA})\|_F$ is small on every action set $\calA$, all vectors/matrices defined in \pref{def: simple definition} induced by $p$ and $p'$ are also close. This further allows us to show properties in \pref{lem: hatH > H lemma}, \pref{lem: local norm concentration for vectors 2}, and 
%\pref{lem: local norm concentration for matrix 2} \emph{for all policies possibly used by the algorithm}. This will be carefully justified in \pref{lem: main concentration}, \pref{lem: local norm concentration for vectors} and \pref{lem: local norm concentration for matrix}. 

\begin{lemma}\label{prop: cover}
      There exists an $\epsilon$-cover $\mathbf{P}'$ of $\mathbf{P}$ with size $\log |\mathbf{P}'|=\order\left(d^2\log\frac{d}{\epsilon}\right)$ such that for any $p\in\mathbf{P}$, there exists an $p'\in\mathbf{P'}$ satisfying 
      \begin{align*}
           \left\| \widehat{\Cov}(p^{\calA}) - \widehat{\Cov}(p'^{\calA}) \right\|_F\leq \epsilon
      \end{align*}
      for all $\calA$. 
\end{lemma}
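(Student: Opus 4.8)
The plan is to exploit the finite-dimensional parametrization of $\mathbf{P}$ by the matrix $\pmb{Z}$. For $\pmb{Z}\in\calZ$, let $p_{\pmb{Z}}$ denote the policy with $\widehat{\Cov}(p_{\pmb{Z}}^{\calA})=\argmin_{\pmb{H}\in\calH^{\calA}}\{\langle \pmb{H},\pmb{Z}\rangle+F(\pmb{H})\}$ on every action set $\calA$, so that $\mathbf{P}=\{p_{\pmb{Z}}:\pmb{Z}\in\calZ\}$ by construction. The crux is a \emph{uniform} (in $\calA$) Lipschitz bound on this parametrization: for all $\pmb{Z}_1,\pmb{Z}_2\in\calZ$ and every $\calA$,
\begin{align*}
\left\|\widehat{\Cov}(p_{\pmb{Z}_1}^{\calA})-\widehat{\Cov}(p_{\pmb{Z}_2}^{\calA})\right\|_F\ \le\ 4\left\|\pmb{Z}_1-\pmb{Z}_2\right\|_F .
\end{align*}
Once this is in hand, I would take $\calZ'$ to be an $(\epsilon/4)$-net of $\calZ$ in Frobenius norm and set $\mathbf{P}'=\{p_{\pmb{Z}}:\pmb{Z}\in\calZ'\}$; the Lipschitz bound then immediately converts $(\epsilon/4)$-closeness of parameters into the claimed $\epsilon$-closeness of the induced lifted covariances, simultaneously on all $\calA$. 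Since $\calZ$ is the symmetric part of a cube of side $2T^2$ in $\mathbb{R}^{(d+1)\times(d+1)}$, it has $N=\binom{d+2}{2}=\order(d^2)$ free coordinates, so a coordinatewise grid gives $\log|\mathbf{P}'|\le\log|\calZ'|=\order(d^2\log(dT/\epsilon))$, which is $\order(d^2\log(d/\epsilon))$ once $\epsilon\le 1/T$ (as it will be in every application).

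To establish the Lipschitz bound, fix $\calA$, write $\pmb{H}_i=\widehat{\Cov}(p_{\pmb{Z}_i}^{\calA})\in\calH^{\calA}$ and $\Delta=\pmb{H}_1-\pmb{H}_2$, and note that $\calH^{\calA}$ is convex because $p\mapsto\widehat{\Cov}(p)=\E_{a\sim p}[\pmb{a}\pmb{a}^\top]$ is affine. Adding the two first-order optimality (variational) inequalities for $\pmb{H}_1$ and $\pmb{H}_2$ over $\calH^{\calA}$, the linear terms $\langle\pmb{H},\pmb{Z}_i\rangle$ collapse and one obtains
\begin{align*}
\big\langle \nabla F(\pmb{H}_1)-\nabla F(\pmb{H}_2),\,\Delta\big\rangle\ \le\ \big\langle \pmb{Z}_2-\pmb{Z}_1,\,\Delta\big\rangle\ \le\ \|\pmb{Z}_1-\pmb{Z}_2\|_F\,\|\Delta\|_F .
\end{align*}
Using the explicit form $\nabla F(\pmb{H})=-\pmb{H}^{-1}$ of the log-determinant barrier, the left-hand side equals $\tr(\pmb{H}_1^{-1}\Delta\,\pmb{H}_2^{-1}\Delta)$ (from $\pmb{H}_2^{-1}-\pmb{H}_1^{-1}=\pmb{H}_2^{-1}\Delta\,\pmb{H}_1^{-1}$ and cyclicity of the trace). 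Since every $a\in\mathbb{B}_2^d$ has $\|\pmb{a}\|^2=\|a\|^2+1\le 2$, we get $\pmb{H}_i=\E_{a\sim p_{\pmb{Z}_i}^{\calA}}[\pmb{a}\pmb{a}^\top]\preceq 2I$, hence $\pmb{H}_i^{-1}\succeq\tfrac12 I$; writing $\pmb{H}_i^{-1}=\tfrac12 I+M_i$ with $M_i\succeq 0$ and repeatedly using that the trace of a product of two PSD matrices is nonnegative, $\tr(\pmb{H}_1^{-1}\Delta\,\pmb{H}_2^{-1}\Delta)\ge\tfrac12\tr(\pmb{H}_1^{-1}\Delta^2)\ge\tfrac14\|\Delta\|_F^2$. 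Combining the two displays gives $\tfrac14\|\Delta\|_F^2\le\|\pmb{Z}_1-\pmb{Z}_2\|_F\|\Delta\|_F$, i.e. the Lipschitz bound.

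The step I expect to be the main obstacle is exactly this uniform curvature estimate at the minimizers: $F$ is \emph{not} uniformly strongly convex on $\calH^{\calA}$ (its Hessian degenerates as $\pmb{H}\to 2I$ and blows up near the boundary), so a generic strong-convexity argument would produce a constant depending on $\calA$ and on $T$ through how far $\pmb{Z}$ pushes the minimizer toward the boundary. The resolution above bypasses this entirely by working directly with $\nabla F=-(\cdot)^{-1}$ and the crude a priori bound $\pmb{H}_i\preceq 2I$, which the lifting $\pmb{a}=(a,1)$ makes automatic and $\calA$-independent. The only remaining technicality, which I would dispose of in one line, is that when $\calA$ does not affinely span $\mathbb{R}^d$ the set $\calH^{\calA}$ is lower-dimensional and $F$ is finite only on its relative interior; there $\pmb{H}_1,\pmb{H}_2$ share the range $W=\operatorname{span}\{\pmb{a}:a\in\calA\}$, $\Delta$ is supported on $W$, and the identical computation goes through with $I$ replaced by $\Pi_W$ and $(\cdot)^{-1}$ by the Moore--Penrose pseudo-inverse, leaving the constant $4$ unchanged.
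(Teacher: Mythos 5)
Your proposal is correct, and the high-level structure matches the paper's: both parametrize $\mathbf{P}$ by $\pmb{Z}\in\calZ$, cover $\calZ$ in Frobenius norm, push the cover forward, and establish a uniform Lipschitz bound with constant $4$ from first-order optimality together with the a priori bound $\pmb{H}\preceq 2\pmb{I}$ coming from the lifting. The step where you diverge is how the variational inequality is converted into the Frobenius-norm bound. The paper phrases first-order optimality via Bregman divergences, uses the identity $D_F(\pmb{H}_1,\pmb{H}_2)+D_F(\pmb{H}_2,\pmb{H}_1)=\langle\nabla F(\pmb{H}_1)-\nabla F(\pmb{H}_2),\Delta\rangle$, takes the smaller of the two Bregman divergences and represents it by a second-order Taylor remainder $\tfrac12\|\Delta\|^2_{\nabla^2 F(\pmb{H}')}$ at an intermediate point $\pmb{H}'$ on the segment, and then invokes their pseudo-norm H\"older inequality (\pref{lem:matrix norm holder}) at that same $\pmb{H}'$ together with $\pmb{H}'\preceq 2\pmb{I}$. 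You bypass the Taylor expansion, the intermediate point, and the pseudo-norm machinery entirely: you evaluate $\langle\nabla F(\pmb{H}_1)-\nabla F(\pmb{H}_2),\Delta\rangle$ in closed form as $\tr(\pmb{H}_1^{-1}\Delta\pmb{H}_2^{-1}\Delta)$ via $\nabla F=-(\cdot)^{-1}$ and the resolvent identity, then lower-bound this trace elementarily by repeatedly dropping the nonnegative trace of a product of PSD matrices, using $\pmb{H}_i^{-1}\succeq\tfrac12 I$ at the two minimizers themselves rather than at an intermediate point. Both roads land on $\|\Delta\|_F\le 4\|\pmb{Z}_1-\pmb{Z}_2\|_F$. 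Your version is a bit more self-contained and avoids depending on a lemma that the paper otherwise reuses in the stability analysis; the paper's version is shorter given that the pseudo-norm lemma is already in place. Two minor points worth flagging: the cover size should carry the range $T^2$ of $\calZ$ inside the logarithm, which both you and the paper suppress, but as you note this is absorbed once $\epsilon\lesssim 1/T$, the regime in which the lemma is applied; and the degenerate-span remark you make in one line (replace $I$ by the projector and inverses by pseudoinverses) is also left implicit in the paper, so you are not any less rigorous there.
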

\begin{proof}
It is straightforward to construct an $\frac{\epsilon}{4}$-cover $\calC$ for $\calZ=[-T^2, T^2]^{(d+1)\times (d+1)} \cap \mathbb{S}$ in Frobenius norm with size $|\calC|= (\frac{24(d+1)^2}{\epsilon})^{(d+1)^2}$ (Exercise 27.6 of \cite{lattimore2020bandit}).  Now define $\mathbf{P}'$ as  
\begin{align}   
    \mathbf{P}' = \left\{p:~ \widehat{\Cov}(p^{\calA}) = \argmin_{\pmb{H}\in\calH^{\calA}} \left\{\left\langle \pmb{H}, \pmb{Z}\right\rangle + F(\pmb{H}) \right\}, \text{for\ } \pmb{Z}\in \calC \right\}  \label{eq: policy cover}
\end{align}

Below, we show that this is a $\epsilon$-cover for $\mathbf{P}$. 
%\begin{align*}
%    \pmb{Z} \rightarrow \argmin_{\pmb{H}\in\calH^{\calA}} \left\{\left\langle \pmb{H}, \pmb{Z}\right\rangle + F(\pmb{H})\right\}
%\end{align*}
%is $C$-Lipshitz for any $\calA$. Below, we show that $C\leq 4$. 

Consider two policies $p_1$ and $p_2$ defined as the following: 
\begin{align*}
\widehat{\Cov}(p_1^{\calA}) &=  \argmin_{\pmb{H}\in\calH^{\calA}} \left\{\left\langle \pmb{H}, \pmb{Z}_1\right\rangle + F(\pmb{H})\right\}
\\\widehat{\Cov}(p_2^{\calA})  &=  \argmin_{\pmb{H}\in\calH^{\calA}} \left\{\left\langle \pmb{H}, \pmb{Z}_2\right\rangle + F(\pmb{H})\right\}
\end{align*}
with $\|\pmb{Z}_1 - \pmb{Z}_2\|_F \le \frac{\epsilon}{4}$. Consider an arbitrary $\calA$ and define $\pmb{H}_1 = \widehat{\Cov}(p_1^{\calA})$, $\pmb{H}_2 = \widehat{\Cov}(p_2^{\calA})$. Below we show $\|\pmb{H}_1 - \pmb{H}_2\|_F\leq \epsilon$.  %By our assumption on the action set, it holds that $\pmb{H}_1 \preceq 2\pmb{I}$ and $\pmb{H}_2 \preceq 2\pmb{I}$. 
%\end{proof}
%\begin{proposition} \label{prop:hatH close}
%If $\|\pmb{Z}_1 - \pmb{Z}_2\|_F \le \epsilon$, then 
%\begin{align*}
%\|\pmb{H}_1 - \pmb{H}_2\|_F \le 4\epsilon \quad \text{and} \quad -4 \epsilon I \preceq \pmb{H}_1 - \pmb{H}_2 \preceq 4 \epsilon I
%\end{align*}
%\end{proposition}
%\begin{proof}

Since $F(\pmb{H})$ is convex for $\pmb{H}$, from the first-order optimality condition for convex function, we have 
\begin{align*}
\left\langle \pmb{H}_1, \pmb{Z}_1\right\rangle +  F(\pmb{H}_1) &\le \left\langle \pmb{H}_2, \pmb{Z}_1\right\rangle +  F(\pmb{H}_2) - D_F(\pmb{H}_2, \pmb{H}_1) 
\\& = \left\langle \pmb{H}_2, \pmb{Z}_2 \right\rangle + \left\langle \pmb{H}_2, \pmb{Z}_1 - \pmb{Z}_2 \right\rangle +   F(\pmb{H}_2) - D_F(\pmb{H}_2, \pmb{H}_1) 
\\ \left\langle \pmb{H}_2, \pmb{Z}_2\right\rangle +  F(\pmb{H}_2) &\le \left\langle \pmb{H}_1, \pmb{Z}_2\right\rangle +F(\pmb{H}_1) - D_F(\pmb{H}_1, \pmb{H}_2) 
\\&= \left\langle \pmb{H}_1, \pmb{Z}_1\right\rangle + \left\langle \pmb{H}_1, \pmb{Z}_2 - \pmb{Z}_1\right\rangle +  F(\pmb{H}_1) - D_F(\pmb{H}_1, \pmb{H}_2) 
\end{align*}
Adding up these the two inequalities, we get
\begin{equation*}
2\min\{D_F(\pmb{H}_1, \pmb{H}_2),  D_F(\pmb{H}_2, \pmb{H}_1) \} \le D_F(\pmb{H}_1, \pmb{H}_2) +  D_F(\pmb{H}_2, \pmb{H}_1) \le \left\langle \pmb{Z}_1 - \pmb{Z}_2, \pmb{H}_2- \pmb{H}_1 \right\rangle
\end{equation*}
Since the second order directional derivative for $F$ is $D^2F(\pmb{H})[\pmb{X}, \pmb{X}] = \tr(\pmb{X}\pmb{H}^{-1}\pmb{X}\pmb{H}^{-1})$ for any symmetric matrix $\pmb{X}$, from the Taylor series, there exists $\pmb{H}'$ that is a  line segment between $\pmb{H}_1$ and $\pmb{H}_2$ such that
\begin{align*}
\|\pmb{H}_1 - \pmb{H}_2\|_{\nabla^{2}F(\pmb{H}')} ^2 &=  2\min\{D_F(\pmb{H}_1, \pmb{H}_2),  D_F(\pmb{H}_2, \pmb{H}_1) \} \le \left\langle \pmb{Z}_1 - \pmb{Z}_2, \pmb{H}_2- \pmb{H}_1 \right\rangle 
\\& \le \|\pmb{Z}_1 - \pmb{Z}_2\|_{\nabla^{-2}F(\pmb{H}')}  \|\pmb{H}_1 - \pmb{H}_2\|_{\nabla^{2}F(\pmb{H}')}\tag{\pref{lem:matrix norm holder}}
\end{align*}
Thus we have $\|\pmb{H}_1 - \pmb{H}_2\|_{\nabla^{2}F(\pmb{H}')} \le \|\pmb{Z}_1 - \pmb{Z}_2\|_{\nabla^{-2}F(\pmb{H}')}$. Since $\|a\|_2 \le 1$,  $\pmb{H}' \preceq 2\pmb{I}$. The left-hand side and right-hand side can be bounded as follows,
\begin{align*}
&\|\pmb{H}_1 - \pmb{H}_2\|_{\nabla^{2}F(\pmb{H}')} = \sqrt{\tr\left((\pmb{H}_1 - \pmb{H}_2)(\pmb{H}')^{-1}(\pmb{H}_1 - \pmb{H}_2)(\pmb{H}')^{-1}\right)} \ge \frac{1}{2}\|\pmb{H}_1 - \pmb{H}_2\|_F
\\ &\|\pmb{Z}_1 - \pmb{Z}_2\|_{\nabla^{-2}F(\pmb{H}')} = \sqrt{\tr\left((\pmb{Z}_1 - \pmb{Z}_2) \pmb{H}'(\pmb{Z}_1 - \pmb{Z}_2)\pmb{H}'\right)} \le 2\|\pmb{Z}_1 - \pmb{Z}_2\|_F \le \frac{\epsilon}{2}
\end{align*}
Combining the three inequalities above, we conclude that
\begin{equation*}
 \|\pmb{H}_1 - \pmb{H}_2\|_F 
\leq 2\|\pmb{H}_1 - \pmb{H}_2\|_{\nabla^2 F(\pmb{H}')} \leq 
2\|\pmb{Z}_1 - \pmb{Z}_2\|_{\nabla^{-2} F(\pmb{H}')} \le 4\|\pmb{Z}_1 - \pmb{Z}_2\|_F \le \epsilon. 
\end{equation*}
%This implies that all eigenvalues of $\pmb{H}_1 - \pmb{H}_2$ are bounded in $[-\epsilon, \epsilon]$, which further implies
%\begin{align*}
%     \max\limits_i\{|\lambda_i\left(\pmb{H}_1 - \pmb{H}_2\right)|\} \le \|\pmb{H}_1 - \pmb{H}_2\|_F \leq \epsilon
%\end{align*}
%which implies
%\begin{equation*}
%    \lambda_{\max}(\pmb{H}_1 - \pmb{H}_2) \le \epsilon \quad \text{and} \quad \lambda_{\min}(\pmb{H}_1 - \pmb{H}_2) \ge -4 \epsilon
%\end{equation*}
%Thus, we have
\begin{equation*}
    -\epsilon \pmb{I} \preceq \pmb{H}_1 - \pmb{H}_2 \preceq \epsilon \pmb{I}. 
\end{equation*}
\end{proof}

\iffalse
\begin{proposition}\label{prop:Ex close}
If $\|\pmb{Z}_1 - \pmb{Z}_2\|_F \le \epsilon$, then for any context $\calA$, 
\begin{equation*}
    \|\E_{a \sim p_1^{\calA}}[a] - \E_{a \sim p_2^{\calA}}[a]\|_F \le 2\epsilon
\end{equation*}
\end{proposition}
\begin{proof}
From \pref{prop:hatH close}, we have $\|\pmb{H}_1 - \pmb{H}_2\|_F \le 4\epsilon$
\begin{align*}
\|\pmb{H}_1 - \pmb{H}_2\|_F &= \|  \mathbb{E}_{a \sim p_1^{\calA}}[aa^\top] - \mathbb{E}_{a \sim p_2^{\calA}}[aa^\top] \|_F + 2\|\mathbb{E}_{a \sim p_1^{\calA}}[a] - \mathbb{E}_{a \sim p_2^{\calA}}[a]\|_F \le 4\epsilon
\end{align*}
Thus we have
\begin{equation*}
\|\mathbb{E}_{a \sim p_1^{\calA}}[a] - \mathbb{E}_{a \sim p_2^{\calA}}[a]\|_F  \le 2\epsilon
\end{equation*}
\end{proof}
\fi

% where $x_1,x_2$ can be arbitrary vector with $\|x_1\|_2 \le 1$, $\|x_1\|_2 \le 1$ and $\|x_1 - x_2\|_2 \le 2\epsilon$
\begin{lemma} \label{prop:H close}
Suppose that $p, p'$ are two policies such that for all action set $\calA$, 
\begin{align}
    \left\| \widehat{\Cov}(p^{\calA}) - \widehat{\Cov}(p'^{\calA}) \right\|_F\leq \epsilon    \label{eq: close 0}
\end{align}
Then all quantities defined in \pref{def: simple definition} under 
$p$ and $p'$ are close. That is, 
\begin{gather}
     \|x(p) - x(p')\|\leq \epsilon     \label{eq: close 1}\\
    \|\hat{x}(p) - \hat{x}(p')\|\leq \epsilon    \label{eq: close 2}\\
    \| H(p) - H(p') \|_F \leq 7\epsilon \label{eq: close 3}\\
    \| \hatH(p) - \hatH(p')\|_F\leq 7\epsilon \label{eq: close 4}\\
    \| \pmb{H}(p) - \pmb{H}(p')\|_F\leq \epsilon  \label{eq: close 5}\\
    \| \pmb{\hatH}(p) -  \pmb{\hatH}(p') \|_F\leq \epsilon  \label{eq: close 6}\\
   \|\hatSigma(p) - \hatSigma(p')\|_F \leq 7\epsilon  \label{eq: close 7}\\
 \| \pmb{\hatSigma}(p)  - \pmb{\hatSigma}(p')\|_F\leq \epsilon  \label{eq: close 8}
\end{gather}

\iffalse
\begin{gather*}
     \left\|\E_{a\sim p^\calA}[a] - \E_{a\sim p'^\calA}[a]\right\| \leq \epsilon,  \\
      -4\epsilon I - \|v-v'\|^2 I \preceq \E_{a\sim p^\calA}[(a-v)(a-v)^\top] - \E_{a\sim p'^\calA}[(a-v')(a-v')^\top] \preceq 4\epsilon I + \|v-v'\|^2 I
\end{gather*}

For any context $\calA$, let
\begin{align*}
H_1 &= \mathbb{E}_{a \sim p_1^{\calA}}\left[(a- \hat{x}(p_1))(a- \hat{x}(p_1))^\top\right]
\\H_2 &= \mathbb{E}_{a \sim p_2^{\calA}}\left[(a- \hat{x}(p_2))(a- \hat{x}(p_2) )^\top\right]
\end{align*}
If $\|\pmb{Z}_1 - \pmb{Z}_2\|_F \le \epsilon$, then
\begin{align*}
    \|H_1 - H_2 \|_F \le 12\epsilon \quad \text{and} \quad  -12 \epsilon I \preceq H_1 - H_2 \preceq 12 \epsilon I
\end{align*}
\fi
\end{lemma}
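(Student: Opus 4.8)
The plan is to reduce all eight inequalities to two elementary pointwise consequences of the hypothesis \pref{eq: close 0}. Writing the lifted covariance in block form,
\[
\widehat{\Cov}(p^{\calA}) - \widehat{\Cov}(p'^{\calA}) = \begin{bmatrix} \E_{a\sim p^{\calA}}[aa^\top] - \E_{a\sim p'^{\calA}}[aa^\top] & \mu(p^{\calA}) - \mu(p'^{\calA}) \\ \big(\mu(p^{\calA}) - \mu(p'^{\calA})\big)^\top & 0 \end{bmatrix},
\]
so the Frobenius-norm bound of $\epsilon$ on the whole matrix forces, for every $\calA$, both $\|\mu(p^{\calA}) - \mu(p'^{\calA})\| \le \epsilon$ (the off-diagonal vector, which in fact appears twice) and $\|\E_{a\sim p^{\calA}}[aa^\top] - \E_{a\sim p'^{\calA}}[aa^\top]\|_F \le \epsilon$ (the top-left block, using $\Cov + \mu\mu^\top = \E[aa^\top]$). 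Every quantity in \pref{def: simple definition} is built from $\mu(\cdot^{\calA})$ and $\E_{a\sim\cdot^{\calA}}[aa^\top]$ by (i) averaging over $\calA\sim D$ or $\calA\sim\hat D$ and (ii) a fixed algebraic rearrangement, so the whole proof consists of pushing these two pointwise bounds through (i) and (ii).

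First I would dispatch the quantities that are linear in the two primitives. Since $x(p) - x(p') = \E_{\calA\sim D}[\mu(p^{\calA}) - \mu(p'^{\calA})]$ and $\hat{x}(p) - \hat{x}(p') = \E_{\calA\sim\hat D}[\mu(p^{\calA}) - \mu(p'^{\calA})]$, Jensen's inequality with the pointwise bound gives \pref{eq: close 1} and \pref{eq: close 2}. Likewise $\pmb H(p) = \E_{\calA\sim D}[\widehat{\Cov}(p^{\calA})]$ and $\pmb{\hatH}(p) = \E_{\calA\sim\hat D}[\widehat{\Cov}(p^{\calA})]$, so averaging \pref{eq: close 0} directly gives \pref{eq: close 5} and \pref{eq: close 6}; and $\pmb{\hatSigma}(p) - \pmb{\hatSigma}(p') = \pmb{\hatH}(p) - \pmb{\hatH}(p')$ since the $\beta\pmb I$ terms cancel, giving \pref{eq: close 8}. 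Reading off the top-left blocks of $\pmb H$ and $\pmb{\hatH}$ also records $\|\E_{\calA\sim D}\E_{a\sim p^{\calA}}[aa^\top] - \E_{\calA\sim D}\E_{a\sim p'^{\calA}}[aa^\top]\|_F \le \epsilon$ and its $\hat D$ analogue, which the next step reuses.

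For the centered second moments \pref{eq: close 3}, \pref{eq: close 4}, \pref{eq: close 7} I would expand, keeping track of the fact that both $H(p)$ and $\hatH(p)$ are centered at $\hat{x}(p)$ (the $\hat D$-mean) while $H(p)$ is averaged over $D$ and $\hatH(p)$ over $\hat D$. This yields $H(p) = \E_{\calA\sim D}\E_{a\sim p^{\calA}}[aa^\top] - x(p)\hat{x}(p)^\top - \hat{x}(p)x(p)^\top + \hat{x}(p)\hat{x}(p)^\top$ (the linear term survives because the center $\hat{x}(p)$ does not match the averaging measure $D$) and $\hatH(p) = \E_{\calA\sim\hat D}\E_{a\sim p^{\calA}}[aa^\top] - \hat{x}(p)\hat{x}(p)^\top$ (here center and measure match, so only one correction term remains). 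Subtracting the $p'$ versions, each summand is either a difference of second-moment matrices, bounded by $\epsilon$ from the previous step, or a difference of rank-one matrices $uv^\top - u'v'^\top$ with $u,v,u',v' \in \{x(p), \hat{x}(p), x(p'), \hat{x}(p')\}$; each such difference is at most $\|u - u'\|\,\|v\| + \|u'\|\,\|v - v'\| \le 2\epsilon$, since every mean of unit-ball actions has norm at most $1$. Summing, $\|H(p) - H(p')\|_F \le \epsilon + 3\cdot 2\epsilon = 7\epsilon$ and $\|\hatH(p) - \hatH(p')\|_F \le \epsilon + 2\epsilon \le 7\epsilon$, and $\hatSigma(p) - \hatSigma(p') = \hatH(p) - \hatH(p')$ gives \pref{eq: close 7}. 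I do not expect a genuine obstacle anywhere; the only point demanding care — and the reason the stated constant is $7$ rather than something smaller — is the center/measure mismatch in $H(p)$, which prevents the usual cancellation of the linear term and forces the three extra rank-one corrections.
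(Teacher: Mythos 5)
Your proof is correct and follows essentially the same route as the paper's: extract the mean and second-moment blocks from the hypothesis on $\widehat{\Cov}$, handle the linear quantities by averaging, and expand $H(p)$ and $\hatH(p)$ into a second-moment piece plus rank-one corrections bounded via $\|ab^\top - cd^\top\|_F \le \|a-c\|\,\|b\| + \|c\|\,\|b-d\|$. The only cosmetic difference is that the paper extracts $x(p)$ and $\hatx(p)$ from the off-diagonal blocks of the already-averaged matrices $\pmb{H}(p)$ and $\pmb{\hatH}(p)$, whereas you extract $\mu(p^\calA)$ pointwise and then apply Jensen; both yield the same bounds and the same constant $7$.
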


\begin{proof}

\pref{eq: close 5} and \pref{eq: close 6} are direct consequences of \pref{eq: close 0} since $\pmb{H}(p)$ and $\pmb{\hatH}(p)$ are expectations of $\widehat{\Cov}(p^\calA)$ over distributions over $\calA$. \pref{eq: close 8} is directly implied by \pref{eq: close 6} because $\pmb{\hatSigma}(p) = \pmb{\hatH}(p) + \beta \pmb{I}$. 

To show \pref{eq: close 1} and \pref{eq: close 2}, observe that by the definition of $x(p)$ and $\pmb{H}(p)$, 
\begin{align*}
     \pmb{H}(p) = \E_{\calA\sim D} \E_{a\sim p^\calA} \begin{bmatrix}
          a a^\top  & a \\
          a^\top   & 1 
     \end{bmatrix} &= 
     \begin{bmatrix}
          \E_{\calA\sim D}\E_{a\sim p^\calA}[a a^\top]  & \E_{\calA\sim D}\E_{a\sim p^\calA}[a] \\
          \E_{\calA\sim D}\E_{a\sim p^\calA}[a^\top]   & 1 
     \end{bmatrix} \\
     &= 
     \begin{bmatrix}
          \E_{\calA\sim D}\E_{a\sim p^\calA}[a a^\top]  & x(p) \\
          x(p)^\top   & 1 
     \end{bmatrix}
\end{align*}
Therefore, $\|x(p) - x(p')\|\leq \|\pmb{H}(p) - \pmb{H}(p')\|_F \leq \epsilon$. Similarly, $\|\hatx(p) - \hatx(p')\|\leq \|\pmb{\hatH}(p) - \pmb{\hatH}(p')\|_F \leq \epsilon$. 

If remains to show \pref{eq: close 3}, \pref{eq: close 4} and \pref{eq: close 7}. Next, we show \pref{eq: close 3}: 
\begin{align}
     &H(p) - H(p')  \nonumber  \\
     &= \E_{\calA\sim D} \left[ \E_{a\sim p^{\calA}}[(a - \hatx(p))(a - \hatx(p))^\top ] - \E_{a\sim p'^{\calA}}[(a - \hatx(p'))(a - \hatx(p'))^\top ] \right]   \nonumber \\
     &= \E_{\calA\sim D} \Big[ \E_{a\sim p^{\calA}}[aa^\top] - \E_{a\sim p'^{\calA}}[aa^\top] \Big]  \nonumber  \\
     &\qquad - x(p)\hatx(p)^\top -  \hatx(p) x(p)^\top   + x(p')\hatx(p')^\top +  \hatx(p') x(p')^\top  \tag{using $\E_{\calA\sim D}\E_{a\sim p^\calA}[a] = x(p)$}\\
     &\qquad \qquad + \hatx(p)\hatx(p)^\top - \hatx(p')\hatx(p')^\top   \label{eq: H - H'}
\end{align}
Using the property
\begin{align*}
     \|ab^\top - cd^\top\|_F \leq \|ab^\top - cb^\top\|_F + \|cb^\top - cd^\top\|_F \leq \|a-c\|\|b\| + \|c\|\|b-d\|
\end{align*}
we continue from \pref{eq: H - H'} and bound 
\begin{align*}
    &\|H(p) - H(p')\|_F \\
    &\leq \|\pmb{H}(p) - \pmb{H}(p')\|_F + 2(\|\hatx(p) - \hatx(p') \| + \|x(p) - x(p') \|) + \|\hatx(p) - \hatx(p') \| + \|\hatx(p) - \hatx(p') \| \\
    &\leq 7\epsilon. 
\end{align*}

\pref{eq: close 4} can be shown in the same manner, which further implies \pref{eq: close 7} by the definition of $\hatSigma(p)$. 

\end{proof}

\begin{lemma} \label{lem: main concentration}
With probability $1-\delta$, for all $t = 1, \cdots, T$, 
\begin{align*}
    \hatH_t + \frac{50(d+1)^3\log(3T/\delta)}{t-1}I  \succeq \frac{1}{2}H_t,
\\ \pmb{\hatH}_t + \frac{50(d+1)^3\log(3T/\delta)}{t-1}\pmb{I} \succeq \frac{1}{2}\pmb{H}_t.
\end{align*}
\end{lemma}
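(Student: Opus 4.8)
The statement concerns the data-dependent policy $p_t$ that \pref{alg: FTRL} uses at round $t$, and the catch is that the samples $\calA_1,\dots,\calA_{t-1}$ used to build $\hatH_t,\pmb{\hatH}_t$ are \emph{not} independent of $p_t$. The plan is to handle this via a union bound over the deterministic policy cover $\mathbf{P}'$ of \pref{prop: cover}, together with the fixed-policy concentration bound \pref{lem: hatH > H lemma} and the stability estimate \pref{prop:H close}. Throughout, $\hatH_t,H_t,\pmb{\hatH}_t,\pmb{H}_t$ are exactly the quantities $\hatH(p_t),H(p_t),\pmb{\hatH}(p_t),\pmb{H}(p_t)$ of \pref{def: simple definition} with $n=t-1$ samples, so it suffices to treat $t\ge 2$ (for $t=1$ the error coefficient is infinite and the claim is vacuous). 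As a preliminary step I would check that $p_t\in\mathbf{P}$, i.e.\ that $\pmb{Z}_t:=\eta_t\sum_{s=1}^{t-1}(\hatgamma_s-\alpha_s\pmb{\hatSigma}_s^{-1})\in\calZ=[-T^2,T^2]^{(d+1)\times(d+1)}\cap\mathbb{S}$; this is a routine magnitude estimate, since $\hatSigma_s\succeq\beta_s I$ and $\pmb{\hatSigma}_s\succeq\beta_s\pmb{I}$ with $\beta_s=\Theta(d^3\log T/s)$ give $\|\hat{y}_s\|_2\le 2/\beta_s$ and $\|\alpha_s\pmb{\hatSigma}_s^{-1}\|_{\mathrm{op}}\le\alpha_s/\beta_s$, hence $\|\hatgamma_s-\alpha_s\pmb{\hatSigma}_s^{-1}\|_\infty=\order(s)$, and multiplying by $\eta_t=1/(64d\sqrt t)$ and summing $s<t$ yields $\|\pmb{Z}_t\|_\infty=\order(t^{3/2}/d)\le T^2$.

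\emph{Union bound over the cover.} Fix $\epsilon=1/T$ (any $1/\poly(T)$ works). By \pref{prop: cover} there is a deterministic set $\mathbf{P}'$ with $\log|\mathbf{P}'|=\order(d^2\log(dT))$ such that every $p\in\mathbf{P}$ admits a neighbor $p'\in\mathbf{P}'$ with $\|\widehat{\Cov}(p^\calA)-\widehat{\Cov}(p'^\calA)\|_F\le\epsilon$ for all $\calA$. Since each $p'\in\mathbf{P}'$ is independent of $\calA_1,\dots,\calA_{t-1}$, I would apply \pref{lem: hatH > H lemma} with $n=t-1$ and failure probability $\delta'=\delta/(T|\mathbf{P}'|)$ to each pair $(t,p')$ and union bound over the at most $T|\mathbf{P}'|$ such pairs. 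On the resulting event, which has probability $\ge 1-\delta$, for all $t\in\{2,\dots,T\}$ and all $p'\in\mathbf{P}'$,
\[
\hatH(p')+\tfrac{4d\log(6d/\delta')}{t-1}I\succeq\tfrac12 H(p'),\qquad
\pmb{\hatH}(p')+\tfrac{3d\log(3d/\delta')}{t-1}\pmb{I}\succeq\tfrac12\pmb{H}(p'),
\]
where $\log(1/\delta')=\log(T/\delta)+\log|\mathbf{P}'|=\order(d^2\log(dT/\delta))$.

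\emph{Transfer to $p_t$ and conclusion.} On this event fix $t$ and let $p'\in\mathbf{P}'$ be the cover neighbor of $p_t\in\mathbf{P}$. By \pref{prop:H close}, $\|H(p_t)-H(p')\|_F\le 7\epsilon$ and $\|\hatH(p_t)-\hatH(p')\|_F\le 7\epsilon$, with the analogous $\le\epsilon$ bounds for the lifted matrices; since $\|\cdot\|_{\mathrm{op}}\le\|\cdot\|_F$ this gives $\hatH(p_t)\succeq\hatH(p')-7\epsilon I$ and $\tfrac12 H(p')\succeq\tfrac12 H(p_t)-\tfrac{7\epsilon}{2}I$. Chaining these with the displayed inequalities above,
\[
\hatH_t\ \succeq\ \tfrac12 H_t-\Big(\tfrac{4d\log(6d/\delta')}{t-1}+\tfrac{21\epsilon}{2}\Big)I,\qquad
\pmb{\hatH}_t\ \succeq\ \tfrac12\pmb{H}_t-\Big(\tfrac{3d\log(3d/\delta')}{t-1}+\tfrac{3\epsilon}{2}\Big)\pmb{I}.
\]
With $\epsilon=1/T\le 1/(t-1)$ and $\log(1/\delta')=\order(d^2\log(dT/\delta))$, the error coefficient is $\tfrac{1}{t-1}\cdot\order(d^3\log(dT/\delta))$, which — using the standard reduction $d\le T$, else the target $d^2\sqrt T$ regret bound is vacuous — is at most $\tfrac{50(d+1)^3\log(3T/\delta)}{t-1}$ after fixing the universal constants; this is precisely the asserted pair of inequalities.

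\emph{Main obstacle.} Given \pref{lem: hatH > H lemma}, \pref{prop: cover}, and \pref{prop:H close}, the argument is essentially bookkeeping; the two load-bearing points are the preliminary check that $\pmb{Z}_t$ never leaves the box $\calZ$ (so $p_t$ is actually within the covered family), and the constant tracking in the last step, where one must verify that the extra $\log|\mathbf{P}'|=\order(d^2\log(dT))$ cost of the union bound, once multiplied by the $d$ in front of $\log(6d/\delta')$ coming from \pref{lem: hatH > H lemma}, still fits inside the $(d+1)^3\log(3T/\delta)$ budget.
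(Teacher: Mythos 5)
Your proof follows the same route as the paper's: reduce to Definition \ref{def: simple definition} with $n=t-1$, apply Lemma \ref{lem: hatH > H lemma} with a union bound over the cover $\mathbf{P}'$ from Lemma \ref{prop: cover}, and transfer to the data-dependent $p_t$ via the stability estimates of Lemma \ref{prop:H close}. The chaining step at the end is identical to the paper's display.

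Two small points where you are, if anything, slightly more careful than the paper. First, you explicitly include a union bound over $t \in \{2,\dots,T\}$ (taking $\delta' = \delta/(T|\mathbf{P}'|)$), which is genuinely needed since each $t$ invokes Lemma \ref{lem: hatH > H lemma} with a different sample count $n=t-1$; the paper's proof only names the union bound over $\mathbf{P}'$. Second, you verify up front that $\pmb{Z}_t = \eta_t\sum_{s<t}(\hat\gamma_s-\alpha_s\pmb{\hatSigma}_s^{-1})$ actually lands in the box $\calZ=[-T^2,T^2]^{(d+1)\times(d+1)}$, so that $p_t\in\mathbf{P}$ and the cover argument even applies; the paper takes this for granted. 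Your magnitude estimate there is sound (the dominant contribution is $\|\hat{\gamma}_s\|_\infty\lesssim 1/\beta_s=\order(s/d^3)$, which after multiplying by $\eta_t$ and summing gives $\order(t^{3/2}/d)\leq T^2$). Your choice $\epsilon=1/T$ rather than the paper's $\epsilon=1/T^3$ is also fine for this particular lemma — one only needs $\epsilon\lesssim 1/(t-1)$ here, while the tighter $1/T^3$ is what the paper needs to simultaneously serve Lemmas \ref{lem: local norm concentration for vectors} and \ref{lem: local norm concentration for matrix} (where an $\order(\epsilon t^2)$ remainder appears). The cover size changes only by an $\order(1)$ factor inside the logarithm, so the constant budget $50(d+1)^3\log(3T/\delta)$ is still respected. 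In short, this is the paper's proof, with a bit of extra bookkeeping made explicit.
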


\begin{proof} 
Notice that $\hat{H}_t, \pmb{\hatH}_t, H_t, \pmb{H}_t$ corresponds to $\hat{H}(p_t), \pmb{\hatH}(p_t), H(p_t), \pmb{H}(p_t)$ defined in \pref{def: simple definition} with $n=t-1$. To show the lemma, our strategy is to argue the following two facts: 1) the two desired inequalities hold for all policies in the cover $\mathbf{P}'$ (defined in \pref{eq: policy cover}) with high probability. This is simply by applying \pref{lem: hatH > H lemma} with an union bound over policies in $\mathbf{P}'$. 2) $p_t$ is sufficiently close to the nearest element in $\mathbf{P}'$ so the desired inequalities still approximately hold. 

By \pref{prop: cover}, we can find $p'\in\mathbf{P}'$ such that for all $\calA$, 
\begin{align*}
    \left\| \widehat{\Cov}(p^{\calA}_t) - \widehat{\Cov}(p'^{\calA}) \right\|_F \leq \epsilon. 
\end{align*}
By \pref{prop:H close}, it holds that 
\begin{align}
     \| H(p_t) - H(p') \|_F &\leq 7\epsilon,  \quad  \| \hatH(p_t) - \hatH(p')\|_F \leq 7\epsilon  \label{eq: come from prop 3} \\
     \| \pmb{H}(p_t) - \pmb{H}(p') \|_F &\leq \epsilon,   \quad \|\pmb{\hatH}(p_t) - \pmb{\hatH}(p') \|_F\leq \epsilon   \label{eq: come from prop 32}
\end{align}

%because $\pmb{H}(p)$ and $\pmb{\hatH}(p)$ are both expectations of $\widehat{\Cov}(p^{\calA})$ over some distributions over $\calA$. By \pref{prop:H close}, we also have 

On the other hand, using \pref{lem: hatH > H lemma} and union bound, with probability $1-\delta$, we have
\begin{align}
        \hat{H}(p') + \frac{4d\log(6d|\mathbf{P}'|/\delta)}{n} I &\succeq \frac{1}{2}H(p'),   \label{eq: combine 1}\\
        \pmb{\hat{H}}(p') + \frac{3d\log(d|\mathbf{P}'|/\delta)}{n} \pmb{I} &\succeq \frac{1}{2}\pmb{H}(p').  \label{eq: combine 2}
\end{align}  
Combining \pref{eq: combine 1} and \pref{eq: come from prop 3}, we get 
\begin{align*}
    \hat{H}(p_t) + 7\epsilon I + \frac{4d\log(6d|\mathbf{P}'|/\delta)}{n} I  \succeq \hat{H}(p') + \frac{4d\log(6d|\mathbf{P}'|/\delta)}{n} I  \succeq \frac{1}{2}H(p') \succeq \frac{1}{2}H(p_t) - \frac{7}{2}\epsilon I 
\end{align*}
which implies the first inequality in the lemma by plugging in the choice of $\epsilon  = \frac{1}{T^3}$ and the upper bound of $\log|\mathbf{P}'|$ in \pref{prop:H close}. The second inequality in the lemma can be obtained similarly by combining \pref{eq: come from prop 32} and \pref{eq: combine 2}. 

\end{proof}

\begin{lemma}\label{lem: local norm concentration for vectors}
With probability of at least $1-\delta$, for all $t = 1, \cdots, T$, 
\begin{align*}
     \|x_t-\hat{x}_t\|^2_{\hatSigma_t^{-1}} & \leq \order\left(\frac{d^3\log{(dT/\delta)}}{t}\right)
\end{align*}
\end{lemma}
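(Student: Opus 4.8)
The plan is to upgrade the fixed-policy estimate of \pref{lem: local norm concentration for vectors 2} to a bound that holds uniformly over all policies the algorithm could play, by the same covering argument already used for \pref{lem: main concentration}. Observe first that $x_t,\hat x_t,\hatH_t$ coincide with $x(p_t),\hat x(p_t),\hatH(p_t)$ from \pref{def: simple definition} taken with $n=t-1$, where $p_t\in\mathbf{P}$; the only mismatch is that the algorithm's $\hatSigma_t=\hatH_t+\beta_t I$ uses the larger regularizer $\beta_t I$ rather than $\hatH_t+\beta I$, which only helps since a larger additive PSD term shrinks the inverse.

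I would then apply \pref{lem: local norm concentration for vectors 2} to every $p'\in\mathbf{P}'$, where $\mathbf{P}'$ is the $\epsilon$-cover of \pref{prop: cover} with $\epsilon=1/T^3$, and take a union bound over the $|\mathbf{P}'|\cdot T$ events indexed by $(p',t)$ — the extra factor $T$ is needed because the sample size $n=t-1$ varies with $t$. Replacing $\delta$ by $\delta/(|\mathbf{P}'|T)$ and using $\log|\mathbf{P}'|=\order(d^2\log(dT))$ from \pref{prop: cover}, the logarithmic factor becomes $\log(d|\mathbf{P}'|T/\delta)=\order(d^2\log(dT/\delta))$, so with probability at least $1-\delta$, simultaneously for all $p'\in\mathbf{P}'$ and all $t$,
\[
\|x(p')-\hat x(p')\|^2_{\hatSigma(p')^{-1}}\le\order\!\left(\frac{d\cdot d^2\log(dT/\delta)}{t}\right)=\order\!\left(\frac{d^3\log(dT/\delta)}{t}\right),
\]
where $\hatSigma(p')$ is understood with the regularization level $\beta$ from this union-bounded invocation.

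Finally I would transfer this to the realized $p_t$. By \pref{prop: cover} pick $p'\in\mathbf{P}'$ with $\|\widehat{\Cov}(p_t^\calA)-\widehat{\Cov}(p'^\calA)\|_F\le\epsilon$ for all $\calA$; then \pref{prop:H close} gives $\|x_t-x(p')\|\le\epsilon$, $\|\hat x_t-\hat x(p')\|\le\epsilon$, and $\|\hatH_t-\hatH(p')\|_F\le 7\epsilon$. Writing $x_t-\hat x_t=\big(x(p')-\hat x(p')\big)+\Delta$ with $\|\Delta\|\le 2\epsilon$, I split $\|x_t-\hat x_t\|^2_{\hatSigma_t^{-1}}\le 2\|x(p')-\hat x(p')\|^2_{\hatSigma_t^{-1}}+2\|\Delta\|^2_{\hatSigma_t^{-1}}$. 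The second term is at most $2\|\Delta\|^2/\beta_t=\order(\epsilon^2 t/(d^3\log(T/\delta)))=\order(1/T^5)$, negligible against the target. For the first term, $\hatH_t\succeq\hatH(p')-7\epsilon I$ together with the choice of the constant in $\beta_t$ (the same slack that makes \pref{lem: main concentration} go through) gives $\hatSigma_t=\hatH_t+\beta_t I\succeq\hatH(p')+\beta I=\hatSigma(p')$, hence $\hatSigma_t^{-1}\preceq\hatSigma(p')^{-1}$ and $\|x(p')-\hat x(p')\|^2_{\hatSigma_t^{-1}}\le\|x(p')-\hat x(p')\|^2_{\hatSigma(p')^{-1}}$. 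Combining the two pieces yields $\|x_t-\hat x_t\|^2_{\hatSigma_t^{-1}}\le\order(d^3\log(dT/\delta)/t)$ for all $t$ at once.

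The substantive work is entirely inside \pref{lem: local norm concentration for vectors 2}; here the only delicate point is the transfer step, in which the weighting matrix changes from $\hatSigma(p')$ to $\hatSigma_t$. This goes through cleanly precisely because both the vector perturbation ($\order(\epsilon)$ from \pref{prop:H close}) and the gap between the concentration-level regularizer and the algorithm's $\beta_t$ are absorbed by choosing the constant in front of $(d+1)^3\log(T/\delta)/(t-1)$ in $\beta_t$ large enough — exactly the device already exploited in \pref{lem: main concentration}.
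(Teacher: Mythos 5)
Your proposal follows the paper's strategy exactly at the structural level: reduce to the fixed-policy bound of \pref{lem: local norm concentration for vectors 2} via the covering set $\mathbf{P}'$ of \pref{prop: cover}, take a union bound (your explicit union over $t$ as well is a point the paper leaves implicit, and is indeed necessary since $n=t-1$ varies), and then transfer from the nearest cover element $p'$ back to the realized policy $p_t$. The only genuine divergence is in the transfer step. The paper writes the difference
\[
\theta_t^\top \hatSigma(p_t)^{-1}\theta_t - {\theta'}^\top \hatSigma(p')^{-1}\theta'
\]
with $\theta_t = x(p_t)-\hatx(p_t)$, $\theta' = x(p')-\hatx(p')$, expands it into three cross terms, and bounds each crudely by $\order(\epsilon t^2)$ using $\|\hatSigma^{-1}\|_{\mathrm{op}}\le t$ together with the $\order(\epsilon)$ closeness from \pref{prop:H close}; that argument requires no sign information about $\hatSigma_t - \hatSigma(p')$. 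You instead split $x_t-\hatx_t = (x(p')-\hatx(p'))+\Delta$ with $\|\Delta\|\le 2\epsilon$, kill $\|\Delta\|^2_{\hatSigma_t^{-1}}$ via $\hatSigma_t\succeq\beta_t I$, and then need the one-sided PSD comparison $\hatSigma_t\succeq\hatSigma(p')$ to change the weighting matrix in the main term. Both routes are valid; yours is a bit cleaner when it works, while the paper's is more robust because it only uses bounds on $\|\hatSigma(p_t)-\hatSigma(p')\|_F$ and never a PSD ordering.

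The one place where your argument is looser than it should be is exactly that PSD comparison. From \pref{prop:H close} you have $\hatH_t\succeq\hatH(p')-7\epsilon I$, so
$\hatSigma_t = \hatH_t+\beta_t I\succeq \hatH(p')+(\beta_t-7\epsilon)I$, and you need $\beta_t-7\epsilon \ge \beta'$ where $\beta'$ is the regularizer used in the union-bounded invocation of \pref{lem: local norm concentration for vectors 2}, namely $\beta' = \Theta\bigl(d\log(d|\mathbf{P}'|T/\delta)/(t-1)\bigr) = \Theta\bigl(d^3\log(dT/\delta)/(t-1)\bigr)$. The algorithm's $\beta_t = 100(d+1)^3\log(3T)/(t-1)$ does dominate this (with $\epsilon = T^{-3}$ the $7\epsilon$ term is negligible), but that is a concrete numerical fact about the constants $100$ vs.\ the implied constant in $\beta'$, not an automatic consequence of ``the same slack that makes \pref{lem: main concentration} go through'' --- that lemma asserts a different inequality ($\hatH_t + c\,I\succeq \tfrac12 H_t$) and its slack is not directly the slack you need here. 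A careful write-up should either verify $\beta_t\ge\beta'+7\epsilon$ explicitly, or simply instantiate $\hatSigma(p')$ with regularizer $\beta_t-7\epsilon$ (which is still $\ge$ the minimal $\beta$ that \pref{lem: local norm concentration for vectors 2} needs, since enlarging the ridge only weakens the left-hand side), so that $\hatSigma_t\succeq\hatSigma(p')$ becomes immediate. With that small repair the proof is complete.
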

\begin{proof}
Notice that $x_t, \hat{x}_t, \hatSigma_t$ corresponds to $x(p_t), \hat{x}(p_t), \hatSigma(p_t)$ defined in \pref{def: simple definition} with $n=t-1$. To show the lemma, our strategy is to argue the following two facts: 1) the two desired inequalities hold for all policies in the cover $\mathbf{P}'$ with high probability. This is simply by applying \pref{lem: local norm concentration for vectors 2} with an union bound over policies in $\mathbf{P}'$. 2) $p_t$ is sufficiently close to the nearest element in $\mathbf{P}'$ so the desired inequalities still approximately hold. 

By \pref{prop: cover}, we can find $p'\in\mathbf{P}'$ such that for all $\calA$, 
\begin{align*}
   \left\| \widehat{\Cov}(p^{\calA}_t) - \widehat{\Cov}(p'^{\calA}) \right\|_F\leq \epsilon. 
\end{align*}
By \pref{prop:H close}, we have 
\begin{align}
     \|x(p') - x(p_t)\| \leq \epsilon, \quad \|\hatx(p') - \hatx(p_t)\| \leq \epsilon, \quad \|\hatSigma(p') - \hatSigma(p_t)\|_F \leq  7\epsilon  \label{eq: key inequ 2}
\end{align}
Thus, 
\begin{align*}
    &\|x(p_t) - \hatx(p_t)\|^2_{\hatSigma(p_t)^{-1}} \\
    &= \left(\|x(p_t) - \hatx(p_t)\|^2_{\hatSigma(p_t)^{-1}} - \|x(p') - \hatx(p')\|^2_{\hatSigma(p')^{-1}}\right) + \|x(p') - \hatx(p')\|^2_{\hatSigma(p')^{-1}} \\
    &\leq \left(\|x(p_t) - \hatx(p_t)\|^2_{\hatSigma(p_t)^{-1}} - \|x(p') - \hatx(p')\|^2_{\hatSigma(p')^{-1}}\right) + \order\left(\frac{d\log(d|\mathbf{P}'|/\delta)}{t-1}\right)   \tag{by \pref{lem: local norm concentration for vectors 2} with an union bound over $\mathbf{P}'$} \\
    &= \theta_t^\top \hatSigma(p_t)^{-1} \theta_t - \theta'^\top \hatSigma(p')^{-1} \theta' + \order\left(\frac{d\log(d|\mathbf{P}'|/\delta)}{t-1}\right) \tag{define $\theta_t = x(p_t) - \hatx(p_t)$ and $\theta' = x(p') - \hatx(p')$} \\
    &= (\theta_t - \theta')^\top \hatSigma(p_t)^{-1} \theta_t + \theta'^\top \Big(\hatSigma(p_t)^{-1} - \hatSigma(p')^{-1}\Big)\theta_t + \theta'^\top \hatSigma(p')^{-1}(\theta_t - \theta') + \order\left(\frac{d\log(d|\mathbf{P}'|/\delta)}{t-1}\right) \\
    &\leq (\theta_t - \theta')^\top \Big(\hatSigma(p_t)^{-1}\theta_t + \hatSigma(p')^{-1}\theta'\Big) + \theta'^\top \hatSigma(p')^{-1}\Big( \hatSigma(p') - \hatSigma(p_t) \Big)\hatSigma(p_t)^{-1}\theta_t + \order\left(\frac{d\log(d|\mathbf{P}'|/\delta)}{t-1}\right)
\end{align*}
The first two terms above can be bounded by the order of $\order(\epsilon t^2)$ by \pref{eq: key inequ 2}. Using the choice $\epsilon = \frac{1}{T^3}$ and recalling that $\log|\mathbf{P}'| = \order(d^2 \log(d/\epsilon))$ finishes the proof. 

\end{proof}

\begin{lemma} \label{lem: local norm concentration for matrix}
With probability of at least $1-\delta$, for all $t=1,2,\ldots, T$, 
\begin{align*}
     \|(\hatSigma_t - H_t)y_t\|^2_{\hatSigma_t^{-1}}  &\leq \order\left(\frac{d^3\log{(dT/\delta)}}{t}\right)
\end{align*}
\end{lemma}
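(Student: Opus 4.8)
The plan is to mirror the proof of \pref{lem: local norm concentration for vectors} almost verbatim, replacing the appeal to \pref{lem: local norm concentration for vectors 2} by one to \pref{lem: local norm concentration for matrix 2}. As before, $\hatSigma_t$, $H_t$ and $y_t$ coincide with $\hatSigma(p_t)$, $H(p_t)$ (in the notation of \pref{def: simple definition} with $n=t-1$) and a vector in $\mathbb{B}_2^d$, and the only real issue is that $p_t$ depends on the data $\calA_1,\dots,\calA_{t-1}$. This is handled exactly as in the vector case: (i) prove the bound simultaneously for every policy in the finite cover $\mathbf{P}'$ of \pref{prop: cover} via a union bound, and (ii) show the quantity changes only negligibly when $p_t$ is replaced by its nearest cover element.

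For step (i): applying \pref{lem: local norm concentration for matrix 2} together with a union bound over $\mathbf{P}'$ and over $t\in\{1,\dots,T\}$ (so $\delta$ is replaced by $\delta/(|\mathbf{P}'|T)$), with probability at least $1-\order(\delta)$ we have, simultaneously for all $p'\in\mathbf{P}'$, all $t$, and all $y\in\mathbb{B}_2^d$,
\[
\|(\hatSigma(p') - H(p'))y\|^2_{\hatSigma(p')^{-1}} \;\le\; \order\!\left(\frac{d\log(d|\mathbf{P}'|T/\delta)}{t-1}\right).
\]
Since \pref{prop: cover} gives $\log|\mathbf{P}'| = \order\!\left(d^2\log\frac{d}{\epsilon}\right)$, with $\epsilon = 1/T^3$ this is $\order(d^2\log(dT))$, so the right-hand side is of order $\frac{d^3\log(dT/\delta)}{t}$, which is already the claimed bound.

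For step (ii): by \pref{prop: cover} pick $p'\in\mathbf{P}'$ with $\|\widehat{\Cov}(p_t^\calA)-\widehat{\Cov}(p'^\calA)\|_F\le\epsilon$ for all $\calA$; \pref{prop:H close} then yields $\|H(p_t)-H(p')\|_F\le 7\epsilon$ and $\|\hatSigma(p_t)-\hatSigma(p')\|_F\le 7\epsilon$. Writing $M=\hatSigma(p_t)-H(p_t)$, $S=\hatSigma(p_t)$, and $M',S'$ for the analogues at $p'$, I would expand
\[
M S^{-1} M - M' S'^{-1} M' = (M-M')S^{-1}M + M'(S^{-1}-S'^{-1})M + M'S'^{-1}(M-M')
\]
and bound each summand in operator norm: $\|M-M'\|_{\mathrm{op}}\le 14\epsilon$; $\|M\|_{\mathrm{op}},\|M'\|_{\mathrm{op}}=\order(1)$ (from $\|a\|\le 1$ we get $\|\hatH_t\|_{\mathrm{op}},\|H_t\|_{\mathrm{op}}\le 4$, and when $\beta_t\ge 1$ the target bound is already trivial so we may assume $\beta_t\le 1$); $\|S^{-1}\|_{\mathrm{op}},\|S'^{-1}\|_{\mathrm{op}}\le 1/\beta_t=\order(t)$; and $\|S^{-1}-S'^{-1}\|_{\mathrm{op}}=\|S^{-1}(S'-S)S'^{-1}\|_{\mathrm{op}}=\order(\epsilon t^2)$. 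Hence $|y_t^\top(MS^{-1}M-M'S'^{-1}M')y_t|=\order(\epsilon t^2)=\order(1/T)$ with $\epsilon=1/T^3$, which is dominated by $\order(d^3\log(dT/\delta)/t)$. Combining with step (i) completes the proof, and as in \pref{lem: local norm concentration for vectors} the bound is trivially true for the first $\order(\poly(d)\log T)$ rounds where $\beta_t\ge 1$.

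The main obstacle is the perturbation bookkeeping in step (ii): the target quantity is quadratic in several data-dependent matrices, and $\hatSigma_t^{-1}$ can have operator norm as large as $\order(t)$, so one must check that the cover resolution $\epsilon=1/T^3$ is fine enough to kill the polynomial-in-$t$ blow-up of the substitution error while still keeping $\log|\mathbf{P}'|=\order(d^2\log(dT))$. Everything else is a direct reuse of the vector argument and of \pref{lem: local norm concentration for matrix 2}.
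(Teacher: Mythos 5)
Your proposal is correct and follows essentially the same route as the paper's own proof: a union bound over the finite cover $\mathbf{P}'$ using \pref{lem: local norm concentration for matrix 2}, followed by a perturbation argument showing that replacing $p_t$ by its nearest cover element changes the quadratic form by only $\order(\epsilon t^2)$, which is killed by $\epsilon=1/T^3$. The paper writes the perturbation step in terms of the vectors $\theta_t=(\hatSigma(p_t)-H(p_t))y_t$ and $\theta'=(\hatSigma(p')-H(p'))y_t$, whereas you expand the matrix $MS^{-1}M-M'S'^{-1}M'$ and take the quadratic form in $y_t$ at the end, but after contracting with $y_t$ the two decompositions are term-for-term identical, and both yield the same $\order(\epsilon t^2)$ bound via the same operator-norm estimates.
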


% \order\left(\frac{d^3\log{\frac{d\eta t}{\delta}}}{t}\right)
\begin{proof}
Notice that $x_t, \hat{x}_t, \hatSigma_t$ corresponds to $x(p_t), \hat{x}(p_t), \hatSigma(p_t)$ defined in \pref{def: simple definition} with $n=t-1$. To show the lemma, our strategy is to argue the following two facts: 1) the two desired inequalities hold for all policies in the cover $\mathbf{P}'$ with high probability. This is simply by applying \pref{lem: local norm concentration for vectors 2} with an union bound over policies in $\mathbf{P}'$. 2) $p_t$ is sufficiently close to the nearest element in $\mathbf{P}'$ so the desired inequalities still approximately hold. 

By \pref{prop: cover}, we can find $p'\in\mathbf{P}'$ such that for all $\calA$, 
\begin{align*}
   \left\| \widehat{\Cov}(p^{\calA}_t) - \widehat{\Cov}(p'^{\calA}) \right\|_F\leq \epsilon. 
\end{align*}
By \pref{prop:H close}, we have 
\begin{align}
     \|x(p') - x(p_t)\| \leq \epsilon, \quad \|\hatx(p') - \hatx(p_t)\| \leq \epsilon, \quad \|\hatSigma(p') - \hatSigma(p_t)\|_F \leq  7\epsilon  \label{eq: key inequ}
\end{align}
Thus, for any $\|y_t\|_2 \le 1$,
\begin{align*}
    & \|(\hatSigma(p_t) - H(p_t))y_t\|^2_{\hatSigma(p_t)^{-1}}  \\
    &= \left(\|(\hatSigma(p_t) - H(p_t))y_t\|^2_{\hatSigma(p_t)^{-1}}  - \|(\hatSigma(p') - H(p'))y_t\|^2_{\hatSigma(p')^{-1}} \right) +\|(\hatSigma(p') - H(p'))y_t\|^2_{\hatSigma(p')^{-1}} \\
    &\leq\left(\|(\hatSigma(p_t) - H(p_t))y_t\|^2_{\hatSigma(p_t)^{-1}}  - \|(\hatSigma(p') - H(p'))y_t\|^2_{\hatSigma(p')^{-1}} \right) + \order\left(\frac{d\log(d|\mathbf{P}'|/\delta)}{t-1}\right)   \tag{by \pref{lem: local norm concentration for matrix 2} with an union bound over $\mathbf{P}'$} \\
    &= \theta_t^\top \hatSigma(p_t)^{-1} \theta_t - \theta'^\top \hatSigma(p')^{-1} \theta' + \order\left(\frac{d\log(d|\mathbf{P}'|/\delta)}{t-1}\right) \tag{define $\theta_t = (\hatSigma(p_t) - H(p_t))y_t$ and $\theta' = (\hatSigma(p') - H(p'))y_t$} \\
    &= (\theta_t - \theta')^\top \hatSigma(p_t)^{-1} \theta_t + \theta'^\top \Big(\hatSigma(p_t)^{-1} - \hatSigma(p')^{-1}\Big)\theta_t + \theta'^\top \hatSigma(p')^{-1}(\theta_t - \theta') + \order\left(\frac{d\log(d|\mathbf{P}'|/\delta)}{t-1}\right) \\
    &\leq (\theta_t - \theta')^\top \Big(\hatSigma(p_t)^{-1}\theta_t + \hatSigma(p')^{-1}\theta'\Big) + \theta'^\top \hatSigma(p')^{-1}\Big( \hatSigma(p') - \hatSigma(p_t) \Big)\hatSigma(p_t)^{-1}\theta_t + \order\left(\frac{d\log(d|\mathbf{P}'|/\delta)}{t-1}\right)
\end{align*}

%Since for any policy $p$, $\hatSigma(p) \preceq \order(1+ \frac{d\log(\frac{d}{\delta})}{n} )I$. $\|\theta_t\|_2, \|\theta'\|_2 \le \order(1 + \frac{d\log(\frac{d}{\delta})}{n})I$. 
The first two terms above can be bounded by the order of $\order(\epsilon t^2)$ by \pref{eq: key inequ}. Plugging in the choice of $\epsilon =  \frac{1}{T^3}$ and recalling that $\log|\mathbf{P}'|=\order(d^2 \log(d/\epsilon))$ finishes the proof.

\end{proof}

\section{Regret Analysis}\label{app: regret analysis}

Consider the regret decomposition in \pref{sec: overall regret}. 
\begin{align*}
&\Reg(u)=\mathbb{E}\left[\sum_{t=1}^{T} \left\langle a_t-u^{\calA_t}, y_t \right\rangle\right] = \mathbb{E}\left[\sum_{t=1}^{T} \left\langle \pmb{H}_t^{\calA_t}-\pmb{U}^{\calA_t}, \gamma_t \right\rangle\right] =  \mathbb{E}\left[\sum_{t=1}^{T} \left\langle \pmb{H}_t^{\calA_0}-\pmb{U}^{\calA_0}, \gamma_t \right\rangle\right]
\\
&\le 
\underbrace{\mathbb{E}\left[\sum_{t=1}^{T} \left\langle \pmb{H}_t^{\calA_0}-\pmb{U}^{\calA_0}, \gamma_t - \hat{\gamma}_t \right\rangle\right]}_{\textbf{Bias}}+
 \underbrace{\mathbb{E}\left[\sum_{t=1}^{T} \left\langle\pmb{H}_t^{\calA_0}-\pmb{U}^{\calA_0}, \alpha_t\pmb{\hatSigma}^{-1}_t \right\rangle\right]}_{\textbf{Bonus}} 
 + 
 \underbrace{\mathbb{E}\left[\sum_{t=1}^{T} \left\langle\pmb{H}_t^{\calA_0}-\pmb{U}^{\calA_0},\hat{\gamma}_t -  \alpha_t\pmb{\hatSigma}^{-1}_t \right\rangle\right]}_{\textbf{FTRL-Reg}} 
 \end{align*}
 where $\calA_0$ is drawn from $D$ and is independent from the interaction between the learning and the environment. Recall that our algorithm is FTRL: 
 \begin{align*}
      \pmb{H}_t^{\calA_0} = \argmin\limits_{\pmb{H}\in \mathcal{H}^{\calA_0}} \left\{ \sum_{s=1}^{t-1}\left\langle \pmb{H}, \hat{\gamma}_s - \alpha_s \pmb{\hat\Sigma}_s^{-1}\right\rangle + \frac{F(\pmb{H})}{\eta_t} \right\}.
 \end{align*}
The \textbf{FTRL-Reg} term can be handled by the standard FTRL analysis (\pref{lem: FTRL guarantee}). In order to deal with the issue that $F$ can be unbounded on the boundary of $\mathcal{H}^{\calA_0}$, we apply \pref{lem: FTRL guarantee} with the regret comparator $\overline{\pmb{U}}^{\calA_0}$ defined as 
\begin{align*}
    \overline{\pmb{U}}^{\calA_0} = \left(1-\frac{1}{T^2}\right) \pmb{U}^{\calA_0} + \frac{1}{T^2} \pmb{H}^{\calA_0}_*
\end{align*}
where $\pmb{H}^{\calA_0}_*\triangleq \argmin_{\pmb{H} \in \mathcal{H}^{\calA_0}}F(\pmb{H})$. 
Thus, 
\begin{align}
&\textbf{FTRL-Reg}   \nonumber  \\
&\leq \mathbb{E}\left[\sum_{t=1}^{T} \left\langle\pmb{H}_t^{\calA_0}-\overline{\pmb{U}}^{\calA_0},\hat{\gamma}_t -  \alpha_t\pmb{\hatSigma}^{-1}_t \right\rangle\right] + \mathbb{E}\left[\sum_{t=1}^{T} \left\langle\overline{\pmb{U}}^{\calA_0}-\pmb{U}^{\calA_0},\hat{\gamma}_t -  \alpha_t\pmb{\hatSigma}^{-1}_t \right\rangle\right] \nonumber  \\
&\leq 
\underbrace{\mathbb{E}\left[\frac{F(\overline{\pmb{U}}^{\calA_0}) - \min_{\pmb{H} \in \mathcal{H}^{\calA_0}}F(\pmb{H})}{\eta_T}\right]}_{\textbf{Penalty}}  + \underbrace{\mathbb{E}\left[\sum_{t=1}^{T}\max\limits_{\pmb{H} \in \mathcal{H}^{\calA_0}}\left\langle \pmb{H}_t^{\calA_0} - \pmb{H}, \hat{\gamma}_{t} \right\rangle - \frac{D(\pmb{H}, \pmb{H}_t^{\calA_0})}{2\eta_t}\right]}_{\textbf{Stability-1}} \nonumber   \\ 
&\qquad + \underbrace{\mathbb{E}\left[\sum_{t=1}^{T}\max\limits_{\pmb{H} \in \mathcal{H}^{\calA_0}}\left\langle \pmb{H}_t^{\calA_0} - \pmb{H}, -\alpha_t\pmb{\hatSigma}^{-1}_t \right\rangle - \frac{D(\pmb{H}, \pmb{H}_t^{\calA_0})}{2\eta_t}\right]}_{\textbf{Stability-2}} + \underbrace{\mathbb{E}\left[\sum_{t=1}^{T} \left\langle\overline{\pmb{U}}^{\calA_0}-\pmb{U}^{\calA_0},\hat{\gamma}_t -  \alpha_t\pmb{\hatSigma}^{-1}_t \right\rangle\right]}_{\textbf{Error}}   \label{eq: FTRL decomposition}
\end{align}

In the rest of this section, we bound the following terms individually: \textbf{Bias}, \textbf{Bonus}, \textbf{Penalty}, \mbox{\textbf{Stability-1}}, \mbox{\textbf{Stability-2}}, \textbf{Error}. 

For any $t=2, \cdots, T$, let $\calE_{t-1}$ be the event that the high-probability event in \pref{lem: main concentration}, \pref{lem: local norm concentration for vectors}, and \pref{lem: local norm concentration for matrix} happens for all $1,\cdots, t-1$ and $\overline{\calE_{t-1}}$ be the opposite event of $\calE_{t-1}$(i.e. any of these three lemmas fails for any $1, \cdots, t-1$). We have $\calP[\calE_{t-1}] = 1-\order(\delta)$ and $\calP[\overline{\calE_{t-1}}] = \order(\delta)$. Let $\mathbb{E}\left[ \cdot ~|~ \calE_{t-1} \right]$ be the conditional expectation that event $\calE_{t-1}$ happens and let $\E_{t}^{\calE} = \E[\cdot ~|~ \calF_{t-1}, \calE_{t-1}]$
 
 % and  define $\mathbb{E}_{t}^{\calE} = \left[ \cdot ~|~ \calF_{t-1}, \calE \right]$ \CW{Should define $\calE_{t-1}$ as the event that the good event happen in $1,\ldots, t-1$ and only condition on this. }

\subsection{Bounding the Bias term} \label{app: Bound the fifth term}

\begin{lemma}\label{lem: handling bias}
% \CW{The original $\textbf{Bias}$ is defined without conditioning on $\calE$}
% With probability $1-\delta$
\begin{align*}
    \textbf{\textup{Bias}} = \E\left[\sum_{t=1}^T \left\langle \pmb{H}_t^{\calA_0} - \pmb{U}^{\calA_0}, \gamma_t - \hat{\gamma}_t \right\rangle \right] \leq   \frac{1}{4}\sum_{t=1}^T \alpha_t \|x_t-u\|_{\hatSigma_t^{-1}}^2 + \order\left(\delta T^2 + \sum_{t=1}^T \frac{d^3\log(T/\delta)}{\alpha_t t}\right) 
\end{align*}
%$\left\langle \pmb{H}_t^{\calA_0} - \pmb{U}^{\calA_0}, \gamma_t - \hat{\gamma}_t \right\rangle \le   O\left(\frac{d^3\log{\frac{d\eta t}{\delta}}}{\alpha_t t}\right) + \frac{\alpha_t}{3} \|x_t-u\|_{\hatSigma_t^{-1}}^2$
\end{lemma}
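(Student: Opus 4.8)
The plan is to reduce the lifted inner product back to the original feature space, take conditional expectations via the ghost-sample argument, and then feed the resulting bias into the local-norm concentration bounds already established. First I would use the lifting identities $\langle \widehat{\Cov}(p),\gamma_t\rangle=\langle\mu(p),y_t\rangle$ and $\langle \widehat{\Cov}(p),\hat\gamma_t\rangle=\langle\mu(p),\hat y_t\rangle$ to rewrite the summand as $\langle x_t^{\calA_0}-u^{\calA_0},\,y_t-\hat y_t\rangle$, where $x_t^{\calA_0}=\mu(p_t^{\calA_0})$ and $u^{\calA_0}=\mu(p_\star^{\calA_0})$ as in \pref{def: additional notation}. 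Since $\calA_0\sim D$ is independent of $(\calF_{t-1},\calA_t,a_t,\ell_t)$ and $x_t^{\calA_0},u^{\calA_0}$ are determined by $\calA_0$ and $\calF_{t-1}$, taking $\E_t[\cdot]=\E[\cdot\mid\calF_{t-1}]$ and averaging out $\calA_0$ turns the per-round term into $\langle x_t-u,\,y_t-\E_t[\hat y_t]\rangle$.

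The core computation is the conditional mean of the estimator. Expanding $\hat y_t=\hatSigma_t^{-1}(a_t-\hat x_t)\ell_t$ and using $\E_t[(a_t-\hat x_t)\ell_t]=\E_{\calA\sim D}\E_{a\sim p_t^\calA}[(a-\hat x_t)a^\top]\,y_t$ together with the centralisation identity $\E_{\calA\sim D}\E_{a\sim p_t^\calA}[aa^\top]=H_t+x_t\hat x_t^\top+\hat x_t x_t^\top-\hat x_t\hat x_t^\top$ (obtained by expanding $(a-\hat x_t)(a-\hat x_t)^\top$ in the definition of $H_t$ and using $\E_{\calA\sim D}\E_{a\sim p_t^\calA}[a]=x_t$), I get $\E_t[\hat y_t]=\hatSigma_t^{-1}\bigl(H_t+(x_t-\hat x_t)\hat x_t^\top\bigr)y_t$. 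Substituting this back and using $I-\hatSigma_t^{-1}H_t=\hatSigma_t^{-1}(\hatSigma_t-H_t)$ recovers exactly the two-term decomposition preceding \pref{eq: bias term decompose}, namely $(x_t-u)^\top\hatSigma_t^{-1}(\hatSigma_t-H_t)y_t-(x_t-u)^\top\hatSigma_t^{-1}(x_t-\hat x_t)\hat x_t^\top y_t$. Applying Cauchy--Schwarz in the $\hatSigma_t^{-1}$ inner product to each term and bounding $|\hat x_t^\top y_t|\le\|\hat x_t\|\,\|y_t\|\le1$ then yields the per-round conditional-bias bound $\|x_t-u\|_{\hatSigma_t^{-1}}\bigl(\|(\hatSigma_t-H_t)y_t\|_{\hatSigma_t^{-1}}+\|x_t-\hat x_t\|_{\hatSigma_t^{-1}}\bigr)$.

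On the high-probability event of \pref{lem: local norm concentration for matrix} and \pref{lem: local norm concentration for vectors} the bracket is $\order\bigl(\sqrt{d^3\log(dT/\delta)/t}\bigr)$, so I would finish with AM--GM in the form $ab\le\tfrac{\alpha_t}{4}a^2+\tfrac1{\alpha_t}b^2$ to reach a per-round bound of $\tfrac{\alpha_t}{4}\|x_t-u\|_{\hatSigma_t^{-1}}^2+\order\!\bigl(\tfrac{d^3\log(T/\delta)}{\alpha_t t}\bigr)$; summing over $t$ accounts for the $\tfrac14\sum_t\alpha_t\|x_t-u\|_{\hatSigma_t^{-1}}^2$ term (understood in expectation, as in the statement) and the $\order(\sum_t \tfrac{d^3\log(T/\delta)}{\alpha_t t})$ term of the claim. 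For the complementary event $\overline{\calE_{t-1}}$ I would use only the deterministic facts $\hatSigma_t\succeq\beta_t I$ and $\beta_t=\Theta\bigl((d+1)^3\log(T/\delta)/(t-1)\bigr)$, which give $\|\hat y_t\|\le 2/\beta_t=\order(t)$ and hence $|\langle\pmb{H}_t^{\calA_0}-\pmb{U}^{\calA_0},\gamma_t-\hat\gamma_t\rangle|\le\order(T)$ pointwise; since $\overline{\calE_{t-1}}$ is nondecreasing in $t$ with $\calP[\overline{\calE_{T-1}}]=\order(\delta)$, pulling the indicator outside the sum shows the total contribution of these rounds is $\order(\delta T^2)$. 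Adding this to the good-event sum gives the stated inequality.

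Both concentration estimates the argument rests on are already proved, so no deep obstacle remains. The care required is twofold: first, getting the centralisation algebra exactly right so that the bias collapses onto the two $\hatSigma_t^{-1}$-local norms that the concentration lemmas control — any leftover cross terms would not be covered — and second, the bad-event bookkeeping, where one must check that $\|\hat y_t\|$ is only $\order(t)$ and not larger, which is precisely why $\beta_t$ is taken of order $d^3\log(T/\delta)/t$ in \pref{alg: FTRL}.
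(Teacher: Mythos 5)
Your proposal is correct and follows essentially the same route as the paper's proof: reduce the lifted inner product to $\langle x_t-u,\,y_t-\hat y_t\rangle$ via the ghost sample, compute the conditional mean of $\hat y_t$ to obtain the two-term bias $\hatSigma_t^{-1}(\hatSigma_t-H_t)y_t-\hatSigma_t^{-1}(x_t-\hat x_t)\hat x_t^\top y_t$, then apply Cauchy--Schwarz, the two local-norm concentration lemmas, and AM--GM on the good event, with a crude $\order(T)$-per-round bound on the bad event. The only cosmetic difference is that you compute $\E_t[\hat y_t]$ in one shot using the centralisation identity, whereas the paper adds and subtracts $\hat x_t$ inside $a_t^\top$ before taking expectation; the resulting decomposition is identical.
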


\begin{proof}
For any $t$, we have
\begin{align*}
    &\E_t^{\calE}\left[ \left\langle \pmb{H}_t^{\calA_0} - \pmb{U}^{\calA_0}, \gamma_t - \hat{\gamma}_t \right\rangle \right] \\ 
    &= \E_t^{\calE}\left[ \left\langle \pmb{H}_t - \pmb{U}, \gamma_t - \hat{\gamma}_t \right\rangle \right]   \tag{taking expectation over $\calA_0$}  \\
    &= \E_t^{\calE}\left[\left\langle x_t - u,  y_t - \hat{y}_t \right\rangle \right]  \tag{by the definition of lifting} \\
    &= \E_t^{\calE}\left[ (x_t - u)^\top\left(y_t - \hatSigma_t^{-1}(a_t-\hat{x}_t)a_t^\top y_t \right) \right] \tag{by the definition of $\hat{y}_t$} \\
    &= \E_t^{\calE}\left[ (x_t - u)^\top \left(y_t - \hatSigma_t^{-1}(a_t-\hat{x}_t)(a_t - \hat{x}_t)^\top y_t\right)\right] - \E_t^{\calE}\left[ (x_t - u)^\top \hatSigma_t^{-1}(a_t-\hat{x}_t)\hat{x}_t^\top y_t \right] \\
%Assume $\pmb{H}_t^{\calA_0} = \widehat{\Cov}(p_t^{\calA_0})$, $x_t^{\calA_0} = \mathbb{E}_{a \sim p_t^{\calA_0}}[a]$, $\pmb{U}^{\calA_0} = \widehat{\Cov}(p_u^{\calA_0})$, and $u^{\calA_0} = \mathbb{E}_{a \sim p_u^{\calA_0}}[a]$. We have
%\begin{align*}
%    \left\langle \pmb{H}_t^{\calA_0} - \pmb{U}^{\calA_0}, \gamma_t - \hat{\gamma}_t \right\rangle &= \left\langle x_t^{\calA_0} - u^{\calA_0}, y_t - \hat{y}_t \right\rangle 
%    = (x_t^{\calA_0} - u^{\calA_0})^\top \left(y_t - \hatSigma_t^{-1}(a_t-\hat{x}_t)a_t^\top y_t\right) \\
%    &= (x_t^{\calA_0} - u^{\calA_0})^\top \left(y_t - \hatSigma_t^{-1}(a_t-\hat{x}_t)(a_t - \hat{x}_t)^\top y_t\right) - (x_t^{\calA_0} - u^{\calA_0})^\top \hatSigma_t^{-1}(a_t-\hat{x}_t)\hat{x}_t^\top y_t
%\end{align*}
%Taking expectation over $\calA_t$ and $a_t$: 
    &=\E_t^{\calE}\left[ (x_t - u)^\top \left(I - \hatSigma_t^{-1} \E_{\calA\sim \calD}\E_{a_t\sim p_t^{\calA}}\left[(a_t-\hat{x}_t)(a_t-\hat{x}_t)^\top\right] \right) y_t\right] \\
    &\qquad - \E_t^{\calE}\left[(x_t - u)^\top \hatSigma_t^{-1}\left(\E_{\calA\sim \calD}\E_{a_t\sim p_t^{\calA}}[a_t] - \hat{x}_t\right)\hat{x}_t^\top y_t \right]   \tag{taking expectation over $\calA_t$ and $a_t$}  \\
    &= \E_t^{\calE}\left[ (x_t - u)^\top \hatSigma_t^{-1}\left(\hatSigma_t - H_t\right) y_t\right]- \E_t^{\calE}\left[ (x_t - u)^\top \hatSigma_t^{-1}\left(x_t - \hat{x}_t\right)\hat{x}_t^\top y_t\right] \tag{by the definition of $H_t$ and $x_t$}
    \\&\leq \E_t^{\calE}\left[(x_t - u)^\top \hatSigma_t^{-1} \left(\hatSigma_t - H_t\right) y_t\right] +  \E_t^{\calE} \left[\left|(x_t - u)^\top \hatSigma_t^{-1}\left(x_t - \hat{x}_t\right)\right|\right] \tag{$|\hat{x}_t^\top  y_t| \le 1$}
    \\&\leq \E_t^{\calE}\left[ 
  \|x_t-u\|_{\hatSigma_t^{-1}} \left(\|(\hatSigma_t - H_t)y_t\|_{\hatSigma_t^{-1}} + \|x_t-\hat{x}_t\|_{\hatSigma_t^{-1}} \right)\right] \tag{Cauchy-Schwarz}
 %\\ &\leq \order\left(\E_t\left[\sqrt{\frac{d^3\log (dt/\delta)}{t}}\|x_t-u\|_{\hatSigma_t^{-1}}\right]\right) + \E\left[ \textbf{Bias} ~|~ \overline{\calE_t}\right]\tag{\pref{lem: local norm concentration for matrix} and \pref{lem: local norm concentration for vectors}}
 % \\ &\le \frac{1}{4}\E\left[\sum_{t=1}^T \alpha_t \|x_t-u\|_{\hatSigma_t^{-1}}^2\right] + \order\left(\sum_{t=1}^T \frac{d^3\log(dt/\delta)}{\alpha_t t} \right) + \E\left[ \textbf{Bias} ~|~ \overline{\calE_t}\right] \tag{AM-GM inequality}
 \\&\leq \order\left(\sqrt{\frac{d^3\log (T/\delta)}{t}}\|x_t-u\|_{\hatSigma_t^{-1}}\right)\tag{\pref{lem: local norm concentration for matrix} and \pref{lem: local norm concentration for vectors} given $\calE_{t-1}$}
 \\&\le \frac{\alpha_t}{4} \|x_t-u\|_{\hatSigma_t^{-1}}^2 + \order\left(\frac{d^3\log(T/\delta)}{\alpha_t t} \right) \tag{AM-GM inequality}
 \end{align*}
% For any $t = 1, \cdots, T$, we have
% \begin{align*}
% \E_t^{\calE}\left[ \left\langle \pmb{H}_t^{\calA_0} - \pmb{U}^{\calA_0}, \gamma_t - \hat{\gamma}_t \right\rangle ~|~ \calE_{t}\right] &\le \E_t^{\calE}\left[ 
%   \|x_t-u\|_{\hatSigma_t^{-1}} \left(\|(\hatSigma_t - H_t)y_t\|_{\hatSigma_t^{-1}} + \|x_t-\hat{x}_t\|_{\hatSigma_t^{-1}} \right) ~|~ \calE_t \right] 
% \\&\leq \order\left(\E_t^{\calE}\left[\sqrt{\frac{d^3\log (dt/\delta)}{t}}\|x_t-u\|_{\hatSigma_t^{-1}}\right]\right)\tag{\pref{lem: local norm concentration for matrix} and \pref{lem: local norm concentration for vectors}}
%  \\&\le \frac{\alpha_t}{4} \|x_t-u\|_{\hatSigma_t^{-1}}^2 + \order\left(\frac{d^3\log(dt/\delta)}{\alpha_t t} \right) \tag{AM-GM inequality}
% \end{align*}
On the other hand, since $\hatSigma_t \succeq \frac{1}{t}I \succeq \frac{1}{T}I$,  for any $t = 1, \cdots, T$, 
\begin{equation*}
\|\hat{y}_t\|_2 = \|\Sigma_t^{-1}(a_t - \hat{x}_t)a_t^\top y_t\|_2 \le \|\Sigma_t^{-1}(a_t - \hat{x}_t)\|_2 \le \order(T)
\end{equation*}

Thus, we have trivial bound
\begin{align*}
\E_t\left[ \left\langle \pmb{H}_t^{\calA_0} - \pmb{U}^{\calA_0}, \gamma_t - \hat{\gamma}_t \right\rangle ~\Big|~  \overline{\calE_{t-1}} \right] = \E_t\left[ \left\langle \pmb{H}_t - \pmb{U}, \gamma_t - \hat{\gamma}_t \right\rangle ~|~  \overline{\calE_{t-1}} \right] = \E_t\left[\left\langle x_t - u,  y_t - \hat{y}_t \right\rangle ~|~ \overline{\calE_{t-1}} \right] \le \order(T)
\end{align*}

 Therefore, we have
 \begin{align*}
    \textbf{\textup{Bias}} &= \E\left[\sum_{t=1}^T \left\langle \pmb{H}_t^{\calA_0} - \pmb{U}^{\calA_0}, \gamma_t - \hat{\gamma}_t \right\rangle \right] 
    \\&= \E\left[\sum_{t=1}^T \E_t\left[ \left\langle \pmb{H}_t^{\calA_0} - \pmb{U}^{\calA_0}, \gamma_t - \hat{\gamma}_t \right\rangle \right] \right]
    \\&=  \E\left[\sum_{t=1}^T \E_t\left[ \left\langle \pmb{H}_t^{\calA_0} - \pmb{U}^{\calA_0}, \gamma_t - \hat{\gamma}_t \right\rangle ~\Big|~ \calE_{t-1}\right]\ind\{\calE_{t-1}\}\right] + \E\left[ \sum_{t=1}^T \E_t\left[ \left\langle \pmb{H}_t^{\calA_0} - \pmb{U}^{\calA_0}, \gamma_t - \hat{\gamma}_t \right\rangle ~\Big|~ \overline{\calE_{t-1}} \right]\ind\{\overline{\calE_{t-1}}\}\right] 
 \\&\le \frac{1}{4}\sum_{t=1}^T \alpha_t \|x_t-u\|_{\hatSigma_t^{-1}}^2 + \order\left(\sum_{t=1}^T \frac{d^3\log(T/\delta)}{\alpha_t t} + \delta T^2\right) 
\end{align*}
\end{proof}

\subsection{Bounding the Bonus term} \label{app: bound the second term}
We first prove the following useful technique lemma to bound the inner product of lifted matrices.
\begin{lemma}\label{lem: lifted trace bound}
Let $\pmb{G} = \begin{bmatrix} G+gg^{\top}&g \\g^{\top}&1\end{bmatrix}$,  $\pmb{H} = \begin{bmatrix} H+hh^{\top}&h \\h^{\top}&1\end{bmatrix}$ where $G$ and $H$ are positive semi-definite, and $\pmb{H}' = \pmb{H} + vv^{\top}$ where $v = \begin{bmatrix}
                  0\\ \sqrt{\beta}
             \end{bmatrix} \in \mathbb{R}^{d+1}$. Then we have
\begin{enumerate}
    \item $\tr\left(\pmb{H}^{-1}\pmb{G}\right) =  \tr(H^{-1}G) + \|g-h\|_{H^{-1}}^2 + 1$
    \item $\tr\left((\pmb{H}')^{-1}\pmb{G}\right) \ge \frac{1}{2\left(1+\frac{\beta}{1+\beta} \|h\|^2_{H^{-1}}\right)}\|g-h\|^2_{H^{-1}} - \frac{\beta^2}{(1+\beta)^2} \|h\|^2_{H^{-1}}$
\end{enumerate}
\end{lemma}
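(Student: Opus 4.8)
The plan is to reduce both identities to the vector level by exploiting a Cholesky-type block factorization of lifted covariance matrices. For $v\in\mathbb{R}^d$ write $L_v = \begin{bmatrix} I & v\\ 0 & 1\end{bmatrix}$, so that $L_v^{-1}=L_{-v}$ and $L_u^{-1}L_v = L_{v-u}$. A direct computation gives $\pmb{H} = L_h\begin{bmatrix}H & 0\\ 0 & 1\end{bmatrix}L_h^\top$ and $\pmb{G} = L_g\begin{bmatrix}G & 0\\ 0 & 1\end{bmatrix}L_g^\top$ (and I assume $H$ is positive definite, the regime in which the lemma is used, so $H^{-1}$ and $\pmb{H}^{-1}$ exist). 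Hence $\pmb{H}^{-1} = L_h^{-\top}\begin{bmatrix}H^{-1} & 0\\ 0 & 1\end{bmatrix}L_h^{-1}$, and using the cyclic property of the trace together with $L_h^{-1}L_g=L_{g-h}$,
\[
\tr(\pmb{H}^{-1}\pmb{G}) = \tr\!\left(\begin{bmatrix}H^{-1} & 0\\ 0 & 1\end{bmatrix}L_{g-h}\begin{bmatrix}G & 0\\ 0 & 1\end{bmatrix}L_{g-h}^\top\right).
\]
Expanding $L_{g-h}\,\mathrm{diag}(G,1)\,L_{g-h}^\top = \begin{bmatrix}G+(g-h)(g-h)^\top & g-h\\ (g-h)^\top & 1\end{bmatrix}$ and multiplying out yields $\tr(H^{-1}G) + \tr(H^{-1}(g-h)(g-h)^\top) + 1 = \tr(H^{-1}G) + \|g-h\|_{H^{-1}}^2 + 1$, which is Part 1.

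For Part 2, note $\pmb{H}' = \begin{bmatrix}H+hh^\top & h\\ h^\top & 1+\beta\end{bmatrix}$, and the same maneuver with the $(1+\beta)$ in the corner gives $\pmb{H}' = M\begin{bmatrix}\widetilde{H} & 0\\ 0 & 1+\beta\end{bmatrix}M^\top$ with $M = \begin{bmatrix}I & \frac{h}{1+\beta}\\ 0 & 1\end{bmatrix}$ and $\widetilde{H} = H + \frac{\beta}{1+\beta}hh^\top$. Then $M^{-1}L_g = L_w$ with $w = g - \frac{h}{1+\beta}$, and exactly as above one obtains the exact identity
\[
\tr\big((\pmb{H}')^{-1}\pmb{G}\big) = \tr(\widetilde{H}^{-1}G) + \|w\|_{\widetilde{H}^{-1}}^2 + \tfrac{1}{1+\beta}.
\]
Since $G\succeq 0$ and $\widetilde{H}^{-1}\succeq 0$ we have $\tr(\widetilde{H}^{-1}G)\ge 0$, and $\frac{1}{1+\beta}\ge 0$, so it suffices to lower bound $\|w\|_{\widetilde{H}^{-1}}^2$.

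The last step is a pair of routine PSD estimates. Using $hh^\top \preceq \|h\|_{H^{-1}}^2\,H$ (which is just Cauchy–Schwarz in the $H^{-1}$ inner product), $\widetilde{H} \preceq \big(1+\frac{\beta}{1+\beta}\|h\|_{H^{-1}}^2\big)H$, hence $\widetilde{H}^{-1} \succeq \frac{1}{1+\frac{\beta}{1+\beta}\|h\|_{H^{-1}}^2}H^{-1}$. Writing $w = (g-h) + \frac{\beta}{1+\beta}h$ and applying $\|a+b\|^2 \ge \tfrac12\|a\|^2 - \|b\|^2$ in the $H^{-1}$ norm,
\[
\|w\|_{\widetilde{H}^{-1}}^2 \;\ge\; \frac{1}{1+\frac{\beta}{1+\beta}\|h\|_{H^{-1}}^2}\left(\tfrac12\|g-h\|_{H^{-1}}^2 - \tfrac{\beta^2}{(1+\beta)^2}\|h\|_{H^{-1}}^2\right),
\]
and finally, since $\frac{1}{1+\frac{\beta}{1+\beta}\|h\|_{H^{-1}}^2}\le 1$, the subtracted term is at least $-\frac{\beta^2}{(1+\beta)^2}\|h\|_{H^{-1}}^2$, giving the claimed inequality. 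The only mildly delicate point — and the step I would be most careful about — is keeping the factorization of $\pmb{H}'$ and the bookkeeping of the vector $w$ straight; everything else is bookkeeping with block matrices and standard norm inequalities. (If one wanted to avoid the invertibility assumption on $H$, one could run the argument with $H+\epsilon I$ and let $\epsilon\to 0$, but I would simply state the positive-definite hypothesis, consistent with how the lemma is applied.)
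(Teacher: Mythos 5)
Your proof is correct, and it takes a cleaner route than the paper's while landing on the same bound. For Part~1 the paper computes $\pmb{H}^{-1}$ explicitly via a $2\times 2$ block-inverse formula and multiplies out the trace; you instead factor $\pmb{H}=L_h\,\mathrm{diag}(H,1)\,L_h^\top$ with the unipotent $L_h$ and use cyclicity, which collapses the same algebra into one line. For Part~2 the paper pulls a scalar factor $(1+\beta)$ out of $\pmb{H}'$ to recast it in the Part-1 form with rescaled $h'=\tfrac{h}{1+\beta}$ and $H'=\tfrac{1}{1+\beta}H+\tfrac{\beta}{(1+\beta)^2}hh^\top$, applies Part~1, and then lower bounds $\|g-h'\|^2_{H'^{-1}}$ by expanding $H'^{-1}$ via Sherman--Morrison and applying Cauchy--Schwarz inside the quadratic form. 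You instead factor $\pmb{H}'$ directly, obtain the exact identity $\tr((\pmb{H}')^{-1}\pmb{G})=\tr(\widetilde{H}^{-1}G)+\|w\|^2_{\widetilde H^{-1}}+\tfrac{1}{1+\beta}$ (which is the same identity after the rescaling is unwound), and then bound $\widetilde H^{-1}\succeq \bigl(1+\tfrac{\beta}{1+\beta}\|h\|^2_{H^{-1}}\bigr)^{-1}H^{-1}$ from the elementary PSD fact $hh^\top\preceq \|h\|^2_{H^{-1}}H$; combined with $\|a+b\|^2\ge \tfrac12\|a\|^2-\|b\|^2$ and the observation that shrinking a nonpositive subtracted term only helps, this delivers the claimed inequality without ever writing down $\widetilde H^{-1}$. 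Your route replaces the Sherman--Morrison bookkeeping with a one-line Loewner comparison, which is easier to verify and arguably the proof one should prefer; the paper's route is more explicit and would matter only if one wanted tighter control on the dropped $\tr(H^{-1}G)$ term, which neither proof uses. The positive-definiteness caveat on $H$ you flag is correct and implicitly required by the statement as written.
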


\begin{proof}
From Theorem 2.1 of \cite{lu2002inverses}, for any block matrix $R = \begin{bmatrix}
    A & B\\C&D
\end{bmatrix}$ if $A$ is invertible and its Schur complement $S_A = D-CA^{-1}B$ is invertible, then 
\begin{equation*}
    R^{-1} = \begin{bmatrix}
        A^{-1} + A^{-1}BS_A^{-1}CA^{-1} & -A^{-1}BS_A^{-1}
        \\-S_A^{-1}CA & S_A^{-1}
    \end{bmatrix}
\end{equation*}

Using above equation, for the first equation, Since $(H+hh^{\top})^{-1} = H^{-1} - \frac{H^{-1}hh^{\top}H^{-1}}{1+h^{\top}H^{-1}h}$. The inverse Schur complement of $H+hh^\top$ is $1+h^{\top}H^{-1}h$. Thus 
\begin{equation*}
    \pmb{H}^{-1} = \begin{bmatrix} (I+H^{-1}hh^{\top})(H+hh^{\top})^{-1} &-H^{-1}h \\ -h^{\top}H^{-1} & 1 + h^{\top}H^{-1}h\end{bmatrix} = \begin{bmatrix} H^{-1} &-H^{-1}h \\ -h^{\top}H^{-1} & 1 + h^{\top}H^{-1}h\end{bmatrix}
\end{equation*}
and
\begin{align*}
\tr(\pmb{H}^{-1}\pmb{G}) &= \tr\left( H^{-1}G + H^{-1}gg^\top - H^{-1}hg^\top\right) - h^\top H^{-1} g + 1 + h^\top H^{-1} h
\\&= \tr\left( H^{-1}G\right)  + g^\top H^{-1}g- 2g^\top H^{-1}h + h^\top H^{-1} h + 1 
\\&= \tr(H^{-1}G) + \|g-h\|_{H^{-1}}^2 + 1.
\end{align*}

For the second equation, observe that 
\begin{align*}
    \pmb{H}' = \begin{bmatrix}
         H + hh^\top & h \\
         h^\top & 1+\beta
    \end{bmatrix}
    = (1+\beta)\begin{bmatrix}
         \frac{1}{1+\beta}(H + hh^\top) & \frac{1}{1+\beta}h \\
         \frac{1}{1+\beta} h^\top & 1
    \end{bmatrix} 
    = (1+\beta) \begin{bmatrix}
         H' + h'h'^\top  & h' \\
         h'^\top & 1
    \end{bmatrix}
\end{align*}
where $h'=\frac{1}{1+\beta}h$ and $H'=\frac{1}{1+\beta}H + (\frac{1}{1+\beta} - \frac{1}{(1+\beta)^2})hh^\top = \frac{1}{1+\beta}H + \frac{\beta}{(1+\beta)^2}hh^\top\succeq 0$. 

Applying the first equality, we have 
\begin{align*}
    \tr((\pmb{H}')^{-1}\pmb{G}) = \frac{1}{1+\beta}\left(\tr((H')^{-1}G) + \|g-h'\|^2_{H'^{-1}}+1\right) \geq \frac{1}{1+\beta} \|g-h'\|^2_{H'^{-1}}. 
\end{align*}
Below, we continue to lower bound this term. By the same formula above, we have 
\begin{align*}
    H'^{-1} = \left(\frac{1}{1+\beta}H + \frac{\beta}{(1+\beta)^2}hh^\top\right)^{-1} = (1+\beta) H^{-1} - \frac{\beta H^{-1}hh^\top H^{-1} }{1 + \frac{\beta}{1+\beta}h^\top H^{-1}h}. 
\end{align*}
Thus
\begin{align*}
    &\frac{1}{1+\beta}\|g-h'\|^2_{H'^{-1}} \\
    &\geq \frac{1}{2(1+\beta)} \|g-h\|_{H'^{-1}}^2 - \frac{1}{1+\beta}\|h-h'\|_{H'^{-1}}^2 \tag{using $\|a+b\|^2\leq 2\|a\|^2 + 2\|b\|^2$}\\
    &= \frac{1}{2}(g-h)^\top \left( H^{-1} - \frac{\frac{\beta}{1+\beta}H^{-1}hh^\top H^{-1}}{1+\frac{\beta}{1+\beta} h^\top H^{-1}h}\right)(g-h) - (h-h')^\top\left( H^{-1} - \frac{\frac{\beta}{1+\beta}H^{-1}hh^\top H^{-1}}{1+\frac{\beta}{1+\beta} h^\top H^{-1}h}\right)(h-h') \\
    &\geq \frac{1}{2}\|g-h\|_{H^{-1}}^2 - \frac{\frac{\beta}{1+\beta} ((g-h)^\top H^{-1}h)^2}{2\left(1+\frac{\beta}{1+\beta} \|h\|^2_{H^{-1}}\right)} - \frac{\beta^2}{(1+\beta)^2} \|h\|^2_{H^{-1}}   \tag{using $h-h'=\frac{\beta}{1+\beta}h$} \\
    &\geq \frac{1}{2}\|g-h\|^2_{H^{-1}} - \frac{\frac{\beta}{1+\beta} \|h\|^2_{H^{-1}}}{2\left(1+\frac{\beta}{1+\beta} \|h\|^2_{H^{-1}}\right)}\|g-h\|^2_{H^{-1}} - \frac{\beta^2}{(1+\beta)^2} \|h\|^2_{H^{-1}}  \tag{Cauchy-Schwarz} \\
    &= \frac{1}{2\left(1+\frac{\beta}{1+\beta} \|h\|^2_{H^{-1}}\right)}\|g-h\|^2_{H^{-1}} - \frac{\beta^2}{(1+\beta)^2} \|h\|^2_{H^{-1}}. 
\end{align*}

\end{proof}

\noindent  Using \pref{lem: lifted trace bound}, we are able to show \pref{cor: special lifted trace bound} which bound part of the second term.
\begin{corollary} \label{cor: special lifted trace bound}
    $\tr(\pmb{U}\pmb{\hat\Sigma}^{-1}_t) \ge  \frac{1}{4}\|u-\hat{x}_t\|^2_{\hatSigma_t^{-1}} - \frac{1}{4}$. 
\end{corollary}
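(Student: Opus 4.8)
The plan is to invoke \pref{lem: lifted trace bound} (part 2) after identifying the lifted matrices correctly, and then to kill the residual term using the crude bound $\hatSigma_t\succeq\beta_t I$.

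First I would set up the correspondence. Recall from the notation summary that $\pmb{\hatSigma}_t = \pmb{\hatH}_t + \beta_t\pmb{I}$ has the block form with top-left $\hatSigma_t+\hatx_t\hatx_t^\top$, off-diagonal block $\hatx_t$, and bottom-right $1+\beta_t$; and that $\pmb{U}=\E_{\calA\sim D}\E_{a\sim p_\star^\calA}[\pmb{a}\pmb{a}^\top]$ has top-left $U+uu^\top$, off-diagonal block $u$, and bottom-right $1$, where $U=\E_{\calA\sim D}\E_{a\sim p_\star^\calA}[aa^\top]-uu^\top\succeq 0$ is the covariance of the mixture $\calA\sim D,\ a\sim p_\star^\calA$. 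I would therefore apply \pref{lem: lifted trace bound} with $\pmb{G}=\pmb{U}$ (so $G=U\succeq 0$, $g=u$), with $H=\hatSigma_t$ (positive definite for $t\ge 2$ since $\hatSigma_t=\hatH_t+\beta_t I\succeq\beta_t I$), $h=\hatx_t$, $\beta=\beta_t$, and $v=(0,\sqrt{\beta_t})^\top$; then the $\pmb{H}$ built from $H,h$ has top-left $\hatSigma_t+\hatx_t\hatx_t^\top$, off-diagonal block $\hatx_t$, bottom-right $1$, and $\pmb{H}'=\pmb{H}+vv^\top=\pmb{\hatSigma}_t$. Part 2 of the lemma then yields
\[
\tr(\pmb{U}\,\pmb{\hatSigma}_t^{-1}) \;\ge\; \frac{1}{2\bigl(1+\tfrac{\beta_t}{1+\beta_t}\|\hatx_t\|_{\hatSigma_t^{-1}}^2\bigr)}\,\|u-\hatx_t\|_{\hatSigma_t^{-1}}^2 \;-\; \frac{\beta_t^2}{(1+\beta_t)^2}\,\|\hatx_t\|_{\hatSigma_t^{-1}}^2 .
\]

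Next I would bound $\|\hatx_t\|_{\hatSigma_t^{-1}}^2$. Since $\hatSigma_t=\hatH_t+\beta_t I\succeq\beta_t I$ (as $\hatH_t\succeq 0$) and $\|\hatx_t\|\le 1$ ($\hatx_t$ is an average of points in $\mathbb{B}_2^d$), we have $\|\hatx_t\|_{\hatSigma_t^{-1}}^2\le\|\hatx_t\|^2/\beta_t\le 1/\beta_t$. Hence $\tfrac{\beta_t}{1+\beta_t}\|\hatx_t\|_{\hatSigma_t^{-1}}^2\le\tfrac{1}{1+\beta_t}\le 1$, so the first coefficient is at least $\tfrac14$; and $\tfrac{\beta_t^2}{(1+\beta_t)^2}\|\hatx_t\|_{\hatSigma_t^{-1}}^2\le\tfrac{\beta_t}{(1+\beta_t)^2}\le\tfrac14$, using that $x\mapsto x/(1+x)^2$ attains its maximum $\tfrac14$ at $x=1$ over $(0,\infty)$. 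Substituting both estimates into the display gives $\tr(\pmb{U}\,\pmb{\hatSigma}_t^{-1})\ge\tfrac14\|u-\hatx_t\|_{\hatSigma_t^{-1}}^2-\tfrac14$, as claimed. For $t=1$ the statement is trivial since $\pmb{\hatSigma}_1^{-1}$ and $\hatSigma_1^{-1}$ are defined as zero, so it reads $0\ge-\tfrac14$.

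There is no real obstacle: the only care required is the block-matrix bookkeeping that identifies $\pmb{\hatSigma}_t$ with the ``$\pmb{H}'$'' of \pref{lem: lifted trace bound} and checks that $G=U$ and $H=\hatSigma_t$ are positive semidefinite, together with the elementary scalar inequalities $\tfrac1{1+\beta_t}\le 1$ and $\tfrac{\beta_t}{(1+\beta_t)^2}\le\tfrac14$.
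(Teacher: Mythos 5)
Your proof is correct and takes essentially the same approach as the paper: both apply part 2 of Lemma~\ref{lem: lifted trace bound} with the same block identifications ($G=U$, $g=u$, $H=\hatSigma_t$, $h=\hat{x}_t$, $\beta=\beta_t$), bound $\|\hat{x}_t\|_{\hatSigma_t^{-1}}^2 \le 1/\beta_t$, and then use the elementary estimates $\tfrac{1}{1+\beta_t}\le 1$ and $\tfrac{\beta_t}{(1+\beta_t)^2}\le\tfrac14$ (the paper via AM-GM $(1+\beta_t)^2\ge 4\beta_t$, you via the equivalent calculus fact). The explicit $t=1$ remark is a harmless extra.
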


% \frac{1}{3}\|u^{\calA_0} -  \hat{x}_t\|_{\hatSigma_t^{-1}}^2

\begin{proof}
From \pref{lem: lifted trace bound}, we have 
\begin{equation*}
   \tr(\pmb{U}\pmb{\hat\Sigma}^{-1}_t) \ge  \frac{1}{2\left(1+\frac{\beta_t}{1+\beta_t} \|\hat{x}_t\|^2_{\hatSigma_t^{-1}}\right)}\|u-\hat{x}_t\|^2_{\hatSigma_t^{-1}} - \frac{\beta_t^2}{(1+\beta_t)^2} \|\hat{x}_t\|^2_{\Sigma_t^{-1}}. 
\end{equation*}

Since $\hatSigma_t \succeq \beta_tI$, $\hatSigma_t^{-1} \preceq \frac{1}{\beta_t}I$. Since $\|\hat{x}_t\|_2 \le 1$, we have $\|\hat{x}_t\|_{\hatSigma_t^{-1}}^2 \le \frac{1}{\beta_t}$. Then 
\begin{align*}
   \tr(\pmb{U}\pmb{\hat\Sigma}^{-1}_t) &\ge  \frac{1}{2\left(1+\frac{1}{1+\beta_t}\right)}\|u-\hat{x}_t\|^2_{\hatSigma_t^{-1}} - \frac{\beta_t}{(1+\beta_t)^2}
   \\ &\ge \frac{1}{4}\|u-\hat{x}_t\|^2_{\hatSigma_t^{-1}} - \frac{\beta_t}{(2\sqrt{\beta_t})^2}\tag{$\beta_t \ge 0$} \\
   &= \frac{1}{4}\|u-\hat{x}_t\|^2_{\hatSigma_t^{-1}} - \frac{1}{4}. 
\end{align*}

% $$\tr(\pmb{U}^{\calA_0}\pmb{\hat\Sigma}^{-1}_t) \ge \frac{1}{1+\beta_t(1+\hat{x}^{\top}\hatSigma_t^{-1}\hat{x})} \|u^{\calA_0} -  \hat{x}_t\|_{\hatSigma_t^{-1}}$$
% Since $\hatSigma_t \succeq \beta_t I$, we have $\hatSigma_t^{-1} \preceq \frac{1}{\beta_t} I$. Thus 
% $\beta_t(1+\hat{x}^{\top}\hatSigma_t^{-1}\hat{x}) \le \beta_t(1 + \frac{1}{\beta_t}\hat{x}^{\top}\hat{x}) \le \beta_t + 1$ and we have $\tr(\pmb{U}^{\calA_0}\pmb{\hat\Sigma}^{-1}_t) \ge \frac{1}{\beta_t + 2}\|u^{\calA_0} -  \hat{x}_t\|_{\hatSigma_t^{-1}}$. Given $\beta_t = \le 1$, we have $\tr(\pmb{U}^{\calA_0}\pmb{\hatSigma}^{-1}_t) \ge \frac{1}{3}\|u^{\calA_0} -  \hat{x}_t\|_{\hatSigma_t^{-1}}^2$
\end{proof}

\begin{lemma}\label{lem: bonus term}
% \CW{Revise this to handle high probability event more properly (like \pref{lem: handling bias})}
\begin{align*}
\textbf{\textup{Bonus}} 
&= \mathbb{E}\left[\sum_{t=1}^T\left\langle \pmb{H}_t^{\calA_0} - \pmb{U}^{\calA_0}, \alpha_t\pmb{\hat\Sigma}^{-1}_t  \right\rangle \right] \\
&\le 2(d+2)\sum_{t=1}^T\alpha_t - \frac{1}{4}\sum_{t=1}^T \alpha_t  \|u - x_t\|_{\hatSigma_t^{-1}}^2 + \order\left( \sum_{t=1}^T\frac{d^3\alpha_t\log{(T/\delta)}}{t} + \delta T\sum_{t=1}^T \alpha_t \right).
\end{align*}
\end{lemma}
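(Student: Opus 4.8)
The plan is to follow the template already used for the \textbf{Bias} term: strip off the ghost sample $\calA_0$ via a conditional expectation, rewrite the lifted inner product as two traces, bound one from above and the other from below using \pref{lem: lifted trace bound} and \pref{cor: special lifted trace bound}, and dispatch the failure event $\overline{\calE_{t-1}}$ with crude deterministic estimates. Concretely, since $\pmb{\hatSigma}_t^{-1}$ is $\calF_{t-1}$-measurable and $\E[\pmb{H}_t^{\calA_0}\mid\calF_{t-1}]=\pmb{H}_t$, $\E[\pmb{U}^{\calA_0}\mid\calF_{t-1}]=\pmb{U}$, the first step is to observe that on $\calE_{t-1}$ we have $\E_t^{\calE}[\langle \pmb{H}_t^{\calA_0}-\pmb{U}^{\calA_0},\alpha_t\pmb{\hatSigma}_t^{-1}\rangle]=\alpha_t\tr(\pmb{\hatSigma}_t^{-1}\pmb{H}_t)-\alpha_t\tr(\pmb{\hatSigma}_t^{-1}\pmb{U})$.

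For the first trace I would write $\pmb{\hatSigma}_t=\pmb{H}+vv^\top$ with $v=(0,\dots,0,\sqrt{\beta_t})^\top$ and $\pmb{H}$ the lifted matrix with blocks $\hatSigma_t+\hat{x}_t\hat{x}_t^\top,\ \hat{x}_t,\ \hat{x}_t^\top,\ 1$; adding $vv^\top$ only shrinks the inverse, so $\tr(\pmb{\hatSigma}_t^{-1}\pmb{H}_t)\le\tr(\pmb{H}^{-1}\pmb{H}_t)$. Since $\pmb{H}_t$ has the block form with top-left $G_t+x_tx_t^\top$, where $G_t:=\E_{\calA}\E_{a}[(a-x_t)(a-x_t)^\top]\succeq 0$, off-diagonal $x_t$, and bottom-right $1$, Part~1 of \pref{lem: lifted trace bound} gives $\tr(\pmb{H}^{-1}\pmb{H}_t)=\tr(\hatSigma_t^{-1}G_t)+\|x_t-\hat{x}_t\|_{\hatSigma_t^{-1}}^2+1$. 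Now $G_t\preceq H_t$, and on $\calE_{t-1}$ \pref{lem: main concentration} together with the choice $\beta_t=\Theta((d+1)^3\log(3T)/t)$ yields $H_t\preceq 2\hatSigma_t$, so $\tr(\hatSigma_t^{-1}G_t)\le 2d$; \pref{lem: local norm concentration for vectors} bounds $\|x_t-\hat{x}_t\|_{\hatSigma_t^{-1}}^2$ by $\order(d^3\log(dT/\delta)/t)$. Hence $\tr(\pmb{\hatSigma}_t^{-1}\pmb{H}_t)\le 2d+1+\order(d^3\log(dT/\delta)/t)$.

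For the second trace, \pref{cor: special lifted trace bound} gives $\tr(\pmb{\hatSigma}_t^{-1}\pmb{U})\ge\frac14\|u-\hat{x}_t\|_{\hatSigma_t^{-1}}^2-\frac14$. I would then expand $\|u-\hat{x}_t\|_{\hatSigma_t^{-1}}^2=\|u-x_t\|_{\hatSigma_t^{-1}}^2+2\langle u-x_t,\hatSigma_t^{-1}(x_t-\hat{x}_t)\rangle+\|x_t-\hat{x}_t\|_{\hatSigma_t^{-1}}^2$, control the cross term by Cauchy--Schwarz with the $\order(\sqrt{d^3\log(dT/\delta)/t})$ estimate from \pref{lem: local norm concentration for vectors}, and absorb the residual into the error term $\order(d^3\alpha_t\log(T/\delta)/t)$ and into the slack in $2(d+2)\sum_t\alpha_t$, leaving $-\frac{\alpha_t}{4}\|u-x_t\|_{\hatSigma_t^{-1}}^2$. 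Combining the two traces and using $2d+\tfrac54\le 2(d+2)$ gives the per-round bound on $\calE_{t-1}$. On $\overline{\calE_{t-1}}$ one uses only $\pmb{\hatSigma}_t^{-1}\preceq\beta_t^{-1}\pmb{I}$ and $-2\pmb{I}\preceq\pmb{H}_t^{\calA_0}-\pmb{U}^{\calA_0}\preceq 2\pmb{I}$, so the summand is $\order(\alpha_t(d+1)/\beta_t)=\order(t\alpha_t)$; multiplying by $\calP[\overline{\calE_{t-1}}]=\order(\delta)$ and summing gives $\order(\delta\sum_t t\alpha_t)\le\order(\delta T\sum_t\alpha_t)$. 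Summing both contributions over $t$ yields the lemma.

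The main obstacle is the norm conversion $\|u-\hat{x}_t\|_{\hatSigma_t^{-1}}^2\mapsto\|u-x_t\|_{\hatSigma_t^{-1}}^2$: this is the ``wrong direction'' for the triangle inequality (a lower bound on a perturbed quadratic form), so the cross term cannot simply be dropped and must be charged against the error term or the $\Theta(d)$-size leading term, which is why the statement carries the loose constant $2(d+2)$ rather than $2d+\order(1)$. A cleaner alternative is to keep $\|u-\hat{x}_t\|_{\hatSigma_t^{-1}}^2$ throughout and reconcile with the \textbf{Bias} bound only at the very end, since \textbf{Bias} can likewise be written with $\hat{x}_t$ in place of $x_t$ up to an $\order(d^3\log(T/\delta)/t)$ error. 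The only other point needing care is ensuring $H_t\preceq 2\hatSigma_t$ on $\calE_{t-1}$, which is exactly what the magnitude of $\beta_t$ was chosen to guarantee.
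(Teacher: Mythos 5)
Your proposal follows the same line as the paper's proof: on $\calE_{t-1}$, rewrite the lifted inner product as $\alpha_t\tr(\pmb{H}_t\pmb{\hatSigma}_t^{-1})-\alpha_t\tr(\pmb{U}\pmb{\hatSigma}_t^{-1})$, lower-bound the second trace via \pref{cor: special lifted trace bound}, upper-bound the first by $\order(d)$, convert $\|u-\hat{x}_t\|_{\hatSigma_t^{-1}}$ to $\|u-x_t\|_{\hatSigma_t^{-1}}$ using \pref{lem: local norm concentration for vectors}, and handle $\overline{\calE_{t-1}}$ with a crude deterministic bound. For the first trace the paper goes directly from $\pmb{\hatSigma}_t\succeq\frac12\pmb{H}_t$ (\pref{lem: main concentration}) to $\tr(\pmb{H}_t\pmb{\hatSigma}_t^{-1})\le 2(d+1)$; your detour through Part~1 of \pref{lem: lifted trace bound} and the intermediate matrix $G_t=\E[(a-x_t)(a-x_t)^\top]\preceq H_t$ reaches the same conclusion with somewhat more work, but is equally valid. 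Your concern about the norm-conversion step is well founded: the paper's inequality $-\|u-\hat{x}_t\|_{\hatSigma_t^{-1}}^2\le-\|u-x_t\|_{\hatSigma_t^{-1}}^2+\|\hat{x}_t-x_t\|_{\hatSigma_t^{-1}}^2$ is not valid in general (the cross term $2(u-x_t)^\top\hatSigma_t^{-1}(x_t-\hat{x}_t)$ has no definite sign), so after Cauchy--Schwarz and AM--GM one only retains a fraction $(1-\lambda)/4$ of $\|u-x_t\|_{\hatSigma_t^{-1}}^2$ for some $\lambda>0$; this is harmless since the matching coefficient $\frac14$ in \pref{lem: handling bias} is equally adjustable, and your alternative of carrying $\hat{x}_t$ through to the comparison with the \textbf{Bias} term also works cleanly.
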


\begin{proof}
For any $t$, we have
\begin{align*}
&\mathbb{E}_t^{\calE}\left[\left\langle \pmb{H}_t^{\calA_0} - \pmb{U}^{\calA_0}, \alpha_t\pmb{\hat\Sigma}^{-1}_t  \right\rangle \right] \\
&= \mathbb{E}_t^{\calE}\left[\tr\left(\alpha_t\left(\pmb{H}_t - \pmb{U}\right)\pmb{\hat\Sigma}^{-1}_t\right) \right]   \tag{taking expectation over $\calA_0$}
\\
&=  \mathbb{E}_t^{\calE}\left[\alpha_t \tr\left(\pmb{H}_t\pmb{\hat\Sigma}^{-1}_t\right)  - \alpha_t\tr\left(\pmb{U}\pmb{\hat\Sigma}^{-1}_t\right) \right]
\\
&\le \alpha_t \tr\left(\mathbb{E}_t^{\calE}\left[\pmb{H}_t\right]\pmb{\hat\Sigma}^{-1}_t\right)  - \mathbb{E}_t^{\calE}\left[\frac{\alpha_t}{4}\|u -  \hat{x}_t\|_{\hatSigma_t^{-1}}^2 \right] + \frac{1}{4}\alpha_t\tag{\pref{cor: special lifted trace bound}}
%\\&\le 2\alpha_t (d+1) - \mathbb{E}_t^{\calE}\left[\frac{\alpha_t}{4}  \|u^{\calA_0} - u\|_{\hatSigma_t^{-1}}^2 + \frac{\alpha_t}{4}  \|u - \hat{x}_t\|_{\hatSigma_t^{-1}}^2 - \frac{\alpha_t}{2} \left\langle u^{\calA_0} - u, u - \hat{x}_t \right\rangle \right] \tag{$\pmb{\hat\Sigma}_t \succeq \frac
%{1}{2}\mathbb{E}_{\calA_0 \sim D}[\pmb{H}_t^{\calA_0}]$ given $\calE_{t-1}$}
\\&\le 2\alpha_t(d+2) - \E_t^{\calE}\left[\frac{\alpha_t}{4}  \|u - \hat{x}_t\|_{\hatSigma_t^{-1}}^2\right]
\\&\le 2\alpha_t(d+2) - \E_t^{\calE}\left[\frac{\alpha_t}{4}  \|u - x_t\|_{\hatSigma_t^{-1}}^2 - \frac{\alpha_t}{4} \|\hat{x}_t- x_t\|_{\hatSigma_t^{-1}}^2 \right]
\\&\le 2\alpha_t(d+2) - \frac{\alpha_t}{4}  \|u - x_t\|_{\hatSigma_t^{-1}}^2 + \order\left(\frac{d^3\alpha_t\log{(T/\delta)}}{t}\right)
\tag{\pref{lem: local norm concentration for vectors}}
\end{align*}
On the other hand, since $\pmb{\hat\Sigma}_t\succeq \frac{1}{t}\pmb{I}  \succeq \frac{1}{T}\pmb{I}$, we have trivial bound
\begin{align*}
\mathbb{E}_t\left[\left\langle \pmb{H}_t^{\calA_0} - \pmb{U}^{\calA_0}, \alpha_t\pmb{\hat\Sigma}^{-1}_t \right\rangle ~\Big|~ \overline{\calE_{t-1}}  \right] &\le \order(\alpha_t T)
\end{align*}

Therefore, we have
 \begin{align*}
   \textbf{\textup{Bonus}} &= \mathbb{E}\left[\sum_{t=1}^T\left\langle \pmb{H}_t^{\calA_0} - \pmb{U}^{\calA_0}, \alpha_t\pmb{\hat\Sigma}^{-1}_t  \right\rangle \right] 
    \\&= \E\left[\sum_{t=1}^T \E_t\left[ \left\langle \pmb{H}_t^{\calA_0} - \pmb{U}^{\calA_0}, \alpha_t\pmb{\hat\Sigma}^{-1}_t  \right\rangle \right] \right]
    \\&=  \E\left[\sum_{t=1}^T \E_t\left[ \left\langle \pmb{H}_t^{\calA_0} - \pmb{U}^{\calA_0}, \alpha_t\pmb{\hat\Sigma}^{-1}_t \right\rangle ~\Big|~ \calE_{t-1}\right]\ind\{\calE_{t-1}\}\right] + \E\left[ \sum_{t=1}^T \E_t\left[ \left\langle \pmb{H}_t^{\calA_0} - \pmb{U}^{\calA_0}, \alpha_t\pmb{\hat\Sigma}^{-1}_t \right\rangle ~\Big|~ \overline{\calE_{t-1}} \right]\ind\{\overline{\calE_{t-1}}\}  \right]
    \\&\le 2(d+2)\sum_{t=1}^T\alpha_t - \frac{(1-\delta)}{4}\sum_{t=1}^T \alpha_t  \|u - x_t\|_{\hatSigma_t^{-1}}^2 + \order\left( \sum_{t=1}^T\frac{d^3\alpha_t\log(T/\delta)}{t} + \delta T\sum_{t=1}^T \alpha_t \right) 
    \\&\leq 2(d+2)\sum_{t=1}^T\alpha_t - \frac{1}{4}\sum_{t=1}^T \alpha_t  \|u - x_t\|_{\hatSigma_t^{-1}}^2 + \order\left( \sum_{t=1}^T\frac{d^3\alpha_t\log{(T/\delta)}}{t} + \delta T \sum_{t=1}^T \alpha_t \right)
\end{align*}
\end{proof}

\subsection{Bounding the Penalty term}
\begin{lemma}\label{lem:bound penalty} 
$\overline{\pmb{U}}^{\calA_0}$, we have
\begin{equation*}
\frac{F(\overline{\pmb{U}}^{\calA_0}) - \min_{\pmb{H} \in \mathcal{H}^{\calA_0}}F(\pmb{H})}{\eta_T} \le \frac{2d\log(T)}{\eta_T}
\end{equation*}
\end{lemma}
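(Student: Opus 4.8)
The plan is to exploit the multiplicative structure of $F=-\log\det$ together with the observation that mixing any comparator with the barrier minimizer $\pmb H_*^{\calA_0}$ by the tiny amount $1/T^2$ keeps us a controlled distance from the boundary of $\mathcal{H}^{\calA_0}$.

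Two structural facts come first. (i) Since $\widehat{\Cov}(p)=\E_{a\sim p}[\pmb a\pmb a^\top]$ is \emph{linear} in $p$, and writing $\pmb U^{\calA_0}=\widehat{\Cov}(p_\star^{\calA_0})$ and $\pmb H_*^{\calA_0}=\widehat{\Cov}(p_*^{\calA_0})$ for distributions $p_\star^{\calA_0},p_*^{\calA_0}\in\Delta(\calA_0)$ realizing them (the former is the benchmark distribution from \pref{def: additional notation}, the latter exists because $\pmb H_*^{\calA_0}\in\mathcal{H}^{\calA_0}$), we get $\overline{\pmb U}^{\calA_0}=\widehat{\Cov}(\overline p)$ with $\overline p=(1-T^{-2})p_\star^{\calA_0}+T^{-2}p_*^{\calA_0}$. (ii) Applying the Schur-complement identity $\det\!\begin{bmatrix}A&b\\ b^\top&1\end{bmatrix}=\det(A-bb^\top)$ to $\widehat{\Cov}(p)=\begin{bmatrix}\Cov(p)+\mu(p)\mu(p)^\top&\mu(p)\\ \mu(p)^\top&1\end{bmatrix}$ yields $\det(\widehat{\Cov}(p))=\det(\Cov(p))$, so $F(\widehat{\Cov}(p))=-\log\det(\Cov(p))$ — the barrier only sees the $d\times d$ covariance. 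This is the step that produces the precise constant $2d$ rather than $2(d+1)$.

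Then I would invoke the concavity of the covariance in the matrix order: for $p=(1-s)p_1+sp_2$, a direct expansion gives $\Cov(p)=(1-s)\Cov(p_1)+s\Cov(p_2)+s(1-s)(\mu(p_1)-\mu(p_2))(\mu(p_1)-\mu(p_2))^\top\succeq(1-s)\Cov(p_1)+s\Cov(p_2)$. Taking $s=T^{-2}$, $p_1=p_\star^{\calA_0}$, $p_2=p_*^{\calA_0}$ and dropping $(1-T^{-2})\Cov(p_\star^{\calA_0})\succeq 0$ gives $\Cov(\overline p)\succeq T^{-2}\Cov(p_*^{\calA_0})$, hence by monotonicity of $\det$ on PSD matrices $\det(\Cov(\overline p))\ge T^{-2d}\det(\Cov(p_*^{\calA_0}))$. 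Combining with (ii), $F(\overline{\pmb U}^{\calA_0})-\min_{\pmb H\in\mathcal{H}^{\calA_0}}F(\pmb H)=\log\det(\Cov(p_*^{\calA_0}))-\log\det(\Cov(\overline p))\le 2d\log T$, and dividing by $\eta_T$ concludes.

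The only point requiring care rather than computation is the degenerate case in which $\calA_0$ does not affinely span $\mathbb{R}^d$: then $\Cov(p)$ is singular for every $p$, $F\equiv+\infty$ on $\mathcal{H}^{\calA_0}$, and $\pmb H_*^{\calA_0}$ must be read as the minimizer over the relative interior. In that case the same argument runs verbatim inside the affine hull of $\calA_0$ with $d$ replaced by $\dim(\mathrm{aff}(\calA_0))\le d$, which only strengthens the bound; so the stated inequality holds in all cases.
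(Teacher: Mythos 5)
Your proof is correct and in fact tighter than the paper's own argument. The paper reasons directly on the $(d+1)\times(d+1)$ lifted matrices: since $\pmb{U}^{\calA_0}\succeq 0$, $\overline{\pmb{U}}^{\calA_0}\succeq T^{-2}\pmb{H}_*^{\calA_0}$, hence $\det(\overline{\pmb{U}}^{\calA_0})\ge T^{-2(d+1)}\det(\pmb{H}_*^{\calA_0})$. Read strictly, that computation yields $2(d+1)\log T$, not $2d\log T$; the paper silently absorbs the off-by-one in the constant (which is harmless for the $\order(\cdot)$ in the theorem). Your two extra steps — the Schur-complement identity $\det(\widehat{\Cov}(p))=\det(\Cov(p))$, which drops the problem from $(d+1)\times(d+1)$ to $d\times d$, and the expansion of $\Cov$ under mixtures showing $\Cov(\overline p)\succeq T^{-2}\Cov(p_*^{\calA_0})$ — legitimately recover the stated $2d\log T$. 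So you are following the same high-level plan (mix with the barrier minimizer to stay $T^{-2}$-far from the boundary, then bound the $\log\det$ ratio by monotonicity of $\det$ on the PSD cone) but you carry out the determinant bookkeeping more carefully than the paper does. Your closing remark about the degenerate case (contexts not affinely spanning $\mathbb{R}^d$, so $F\equiv+\infty$ on all of $\mathcal{H}^{\calA_0}$) is also a genuine subtlety the paper leaves implicit; the relative-interior patch you sketch is the right fix and, as you note, only improves the constant.
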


\begin{proof}
Since $ \overline{\pmb{U}}^{\calA_0} = \left(1-\frac{1}{T^2}\right) \pmb{U}^{\calA_0} + \frac{1}{T^2} \pmb{H}^{\calA_0}_*$, we have $\overline{\pmb{U}}^{\calA_0} \succeq \frac{1}{T^2} \pmb{H}^{\calA_0}_*$. Then 

% Define $\mathcal{U}^{\calA_0} = (1-\frac{1}{T})\mathcal{H}^{\calA_0} + \frac{1}{T}\pmb{H}_0^*$.
% We have $\mathcal{U}^{\calA_0}  \in \mathcal{H}^{\calA_0}$ because $\mathcal{H}^{\calA_0}$ is convex and $\pmb{H}_0^* \in \mathcal{H}^{\calA_0}$. 

% Thus, for all $\pmb{U}' \in \mathcal{U}^{\calA_0}$ and $\pmb{H}_0 \in \mathcal{H}^{\calA_0}$, 

% since $\pmb{U}' \succeq\frac{1}{T}\pmb{H}_0^*$, we have
\begin{equation*}
   \frac{F(\overline{\pmb{U}}^{\calA_0}) - \min_{\pmb{H} \in \mathcal{H}^{\calA_0}}F(\pmb{H})}{\eta_T} = \frac{1}{\eta_T}\log{\frac{\det(\pmb{H}^{\calA_0}_*)}{\det(\overline{\pmb{U}}^{\calA_0})}} \le \frac{2d\log(T)}{\eta_T}.
\end{equation*}

% Thus, given $\eta_0 = 1$, the Penalty term is bounded by
% \begin{equation*}
% d\log(T) + \sum_{t=1}^T \left(\frac{1}{\eta_t} - \frac{1}{\eta_{t-1}}\right)d\log(T)
% \end{equation*}
 
% For any $\pmb{U}^{\calA_0}_1 \in \mathcal{H}^{\calA_0}$, there is a $\pmb{U}^{\calA_0}_2 \in \mathcal{U}^{\calA_0}$ such that $\pmb{U}_2^{\calA_0} - \pmb{U}^{\calA_0}_1 = \frac{1}{T}(\pmb{H}_0^*  - \pmb{U}_1^{\calA_0})$. Thus, we have $\left\langle \pmb{H}_t^{\calA_0}-\pmb{U}_1^{\calA_0}, \gamma_t \right\rangle = \left\langle \pmb{H}_t^{\calA_0}-\pmb{U}_2^{\calA_0}, \gamma_t \right\rangle +  \left\langle \pmb{U}_2^{\calA_0} - \pmb{U}^{\calA}_1, \gamma_t \right\rangle =  \left\langle \pmb{H}_t^{\calA_0}-\pmb{U}_2^{\calA_0}, \gamma_t \right\rangle +  \left\langle \frac{1}{T}(\pmb{H}_0^*  - \pmb{U}_1^{\calA_0}), \gamma_t \right\rangle$ where the bound of the first term will be discussed later given 
%  $\pmb{U}_2^{\calA_0} \succeq \frac{1}{T}\pmb{H}_0^*$ and the second term is in the order of $\order(\frac{1}{T})$, which will not affect the final regret. 
\end{proof}

\subsection{Bounding the Stability-1 term}\label{app: bound the third term}
\cite{zimmert2022return} gave a useful identity to bound the Bregman divergence. We restate it in \pref{lem:bregman bound} for completeness.
\begin{lemma} \label{lem:bregman bound}
Let $\pmb{G} = \begin{bmatrix} G+gg^{\top}&g \\g^{\top}&1\end{bmatrix}$ and $\pmb{H} = \begin{bmatrix} H+hh^{\top}&h \\h^{\top}&1\end{bmatrix}$, we have 
\begin{equation*}
D(\pmb{G}, \pmb{H}) = D(G, H) +  \|g-h\|_{H^{-1}}^2 \ge  \|g-h\|_{H^{-1}}^2
\end{equation*}
\end{lemma}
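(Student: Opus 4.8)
## Proof Proposal for Lemma~\ref{lem:bregman bound}

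The plan is to compute the Bregman divergence $D_F(\pmb{G},\pmb{H})$ for $F(\pmb{X}) = -\log\det(\pmb{X})$ directly from its definition, using the block structure of $\pmb{G}$ and $\pmb{H}$, and then identify the resulting expression as $D_F(G,H) + \|g-h\|_{H^{-1}}^2$. Recall that for the log-determinant barrier, $\nabla F(\pmb{X}) = -\pmb{X}^{-1}$, so
\begin{equation*}
D_F(\pmb{G},\pmb{H}) = -\log\det(\pmb{G}) + \log\det(\pmb{H}) + \langle \pmb{H}^{-1}, \pmb{G} - \pmb{H}\rangle = -\log\det(\pmb{G}\pmb{H}^{-1}) + \tr(\pmb{H}^{-1}\pmb{G}) - (d+1).
\end{equation*}
The key computational input is the inverse of the block matrix $\pmb{H}$. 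As already derived in the proof of \pref{lem: lifted trace bound}, since the Schur complement of the $(d{+}1,d{+}1)$ entry in $\pmb{H} = \begin{bmatrix} H+hh^\top & h \\ h^\top & 1\end{bmatrix}$ works out cleanly, one has
\begin{equation*}
\pmb{H}^{-1} = \begin{bmatrix} H^{-1} & -H^{-1}h \\ -h^\top H^{-1} & 1 + h^\top H^{-1}h\end{bmatrix}.
\end{equation*}

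First I would handle the trace term. Using the block form of $\pmb{H}^{-1}$ and $\pmb{G} = \begin{bmatrix} G+gg^\top & g \\ g^\top & 1\end{bmatrix}$, a direct multiplication (exactly the computation carried out in the proof of \pref{lem: lifted trace bound}, part 1) gives $\tr(\pmb{H}^{-1}\pmb{G}) = \tr(H^{-1}G) + \|g-h\|_{H^{-1}}^2 + 1$. Next I would handle the log-determinant term: by the Schur complement / block-determinant formula, $\det(\pmb{H}) = \det(H + hh^\top) \cdot (1 - h^\top(H+hh^\top)^{-1}h)$. Using the matrix-determinant lemma $\det(H+hh^\top) = \det(H)(1+h^\top H^{-1}h)$ together with the identity $1 - h^\top(H+hh^\top)^{-1}h = (1+h^\top H^{-1}h)^{-1}$ (which follows from Sherman--Morrison), the two factors cancel and we get $\det(\pmb{H}) = \det(H)$; similarly $\det(\pmb{G}) = \det(G)$. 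Hence $-\log\det(\pmb{G}\pmb{H}^{-1}) = -\log\det(G H^{-1})$, and the dimension counts match up: $(d+1)$ from the lifted space versus $d$ from the base space, with the extra $+1$ supplied by the trace computation.

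Assembling the pieces,
\begin{align*}
D_F(\pmb{G},\pmb{H}) &= -\log\det(GH^{-1}) + \tr(H^{-1}G) + \|g-h\|_{H^{-1}}^2 + 1 - (d+1) \\
&= \Big(-\log\det(GH^{-1}) + \tr(H^{-1}G) - d\Big) + \|g-h\|_{H^{-1}}^2 = D_F(G,H) + \|g-h\|_{H^{-1}}^2,
\end{align*}
and since $D_F(G,H) \ge 0$ by convexity of $-\log\det$ on the PSD cone (Bregman divergences are nonnegative), the claimed inequality $D_F(\pmb{G},\pmb{H}) \ge \|g-h\|_{H^{-1}}^2$ follows. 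I expect the main obstacle to be the determinant cancellation step: one must be careful that $\det(\pmb{H}) = \det(H)$ really holds (it does, because the ``$+hh^\top$'' contribution to the top-left block is exactly compensated by the Schur complement factor), and that the base-space Bregman divergence $D_F(G,H)$ is taken with respect to the $d$-dimensional log-determinant, so the constant bookkeeping ($d+1$ versus $d$, plus the stray $+1$ from the trace) lines up precisely. Everything else is routine block-matrix algebra, and since this lemma is quoted verbatim from \cite{zimmert2022return}, these computations are already established there.
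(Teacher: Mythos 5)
Your proof is correct and follows essentially the same route as the paper's: expand the Bregman divergence, apply the block-inverse/trace identity of \pref{lem: lifted trace bound}, and observe that the determinant of the lifted matrix equals the determinant of the base matrix. One minor simplification: the determinant cancellation $\det(\pmb{H}) = \det(H)$ follows in one line by taking the Schur complement of the $(d{+}1,d{+}1)$ entry, namely $\det(\pmb{H}) = 1\cdot\det\bigl((H+hh^\top) - h\cdot 1\cdot h^\top\bigr) = \det(H)$, which avoids the matrix-determinant-lemma and Sherman--Morrison detour.
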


\begin{proof}
\begin{align*}
D(\pmb{G}, \pmb{H}) &= F(\pmb{G}) - F(\pmb{H}) - \left\langle \nabla F(\pmb{H}), \pmb{G} - \pmb{H}\right\rangle
\\&= \log\left(\frac{\det(\pmb{H})}{\det(\pmb{G})}\right) + \tr(\pmb{H}^{-1}(\pmb{G} - \pmb{H}))
\\&= \log\left(\frac{\det(\pmb{H})}{\det(\pmb{G})}\right) + \tr(\pmb{H}^{-1}\pmb{G}) - d -1
\\&= \log\left(\frac{\det(\pmb{H})}{\det(\pmb{G})}\right) + \tr(\pmb{H}^{-1}\pmb{G}) - d -1
\\&= \log\left(\frac{\det(H)}{\det(G)}\right) + \tr(H^{-1}G) + \|g-h\|_{H^{-1}}^2 - d  \tag{\pref{lem: lifted trace bound}}
\\&= D(G, H) +  \|g-h\|_{H^{-1}}^2
\\&\ge  \|g-h\|_{H^{-1}}^2
\end{align*}
\end{proof}

\begin{lemma}\label{lem: stability term for loss}
For any  $\pmb{H} \in \mathcal{H}^{\calA_0}$, we have
\begin{equation*}
\textbf{\textup{Stability-1}} = \mathbb{E}\left[ \sum_{t=1}^T \left\langle \pmb{H}_t^{\calA_0} - \pmb{H}, \hat{\gamma}_{t} \right\rangle - \frac{D(\pmb{H}, \pmb{H}_t^{\calA_0})}{2\eta_t}  \right] \le 2d \sum_{t=1}^T \eta_t + \order(\delta T^2)
\end{equation*}
\end{lemma}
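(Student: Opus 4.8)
# Proof proposal for Lemma (Stability-1 bound)

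The plan is to reduce this to a per-round, $\sigma$-algebra-conditioned bound on the quantity $\langle \pmb{H}_t^{\calA_0} - \pmb{H}, \hat{\gamma}_t\rangle - \frac{D(\pmb{H},\pmb{H}_t^{\calA_0})}{2\eta_t}$, using the self-concordance of the log-determinant barrier. First I would recall the FTRL iterate structure: on $\calA_0$, $\pmb{H}_t^{\calA_0}$ is the minimizer over $\calH^{\calA_0}$ of $\sum_{s<t}\langle \pmb{H}, \hat{\gamma}_s - \alpha_s\pmb{\hatSigma}_s^{-1}\rangle + F(\pmb{H})/\eta_t$, so the stability term is exactly the per-step regret-vs-divergence tradeoff that appears in \pref{lem: FTRL guarantee}. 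Because $F = -\log\det$ is a self-concordant barrier, the standard route is: for any $\pmb{H}$ in the feasible set, $\langle \pmb{H}_t^{\calA_0} - \pmb{H}, \hat{\gamma}_t\rangle \le \langle \pmb{H}_t^{\calA_0} - \pmb{H}_{t+1}', \hat{\gamma}_t\rangle$ where $\pmb{H}_{t+1}'$ is the one-step-ahead update with the same learning rate $\eta_t$; then bound the displacement $\|\pmb{H}_t^{\calA_0} - \pmb{H}_{t+1}'\|$ in the local norm $\|\cdot\|_{\nabla^2 F(\pmb{H}_t^{\calA_0})}$ by $\eta_t \|\hat{\gamma}_t\|_{\nabla^{-2}F(\pmb{H}_t^{\calA_0})}$ (valid once $\eta_t\|\hat\gamma_t\|_{\nabla^{-2}F}$ is small enough, which is guaranteed by the choice $\eta_t = 1/(64 d\sqrt t)$), and use \pref{lem:bregman bound} together with \pref{lem:matrix norm holder} to absorb half of it into $-D(\pmb{H},\pmb{H}_t^{\calA_0})/(2\eta_t)$.

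Concretely, the per-round bound I aim for is
\begin{align*}
    \left\langle \pmb{H}_t^{\calA_0} - \pmb{H}, \hat{\gamma}_t\right\rangle - \frac{D(\pmb{H},\pmb{H}_t^{\calA_0})}{2\eta_t} \;\le\; 2\eta_t \left\|\hat{\gamma}_t\right\|_{\nabla^{-2}F(\pmb{H}_t^{\calA_0})}^2 ,
\end{align*}
valid on the high-probability event $\calE_{t-1}$, via the self-concordance argument above (the factor $2$ and constants come from the standard FTRL stability lemma for log-barriers, e.g. as used in \cite{zimmert2022return}). Then I would take $\E_t^{\calE}[\cdot]$ and show $\E_t^{\calE}\big[\|\hat{\gamma}_t\|_{\nabla^{-2}F(\pmb{H}_t^{\calA_0})}^2\big] \le d$. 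For this, by the block structure of $\hat\gamma_t = \begin{bmatrix}0 & \frac12 \hat y_t\\ \frac12 \hat y_t^\top & 0\end{bmatrix}$ and of $\pmb{H}_t^{\calA_0} = \widehat{\Cov}(p_t^{\calA_0})$, expand $\|\hat\gamma_t\|_{\nabla^{-2}F(\pmb{H}_t^{\calA_0})}^2 = \tr(\hat\gamma_t \pmb{H}_t^{\calA_0} \hat\gamma_t \pmb{H}_t^{\calA_0})$ and reduce it to a quadratic form in $\hat y_t$ against blocks of $\pmb{H}_t^{\calA_0}$; using the explicit inverse/block identities already established in \pref{lem: lifted trace bound} this becomes $\lesssim \hat y_t^\top \Cov(p_t^{\calA_0}) \hat y_t$ (up to the lifting bookkeeping). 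Then since $\hat y_t = \hatSigma_t^{-1}(a_t - \hat x_t)\ell_t$ with $\ell_t^2 \le 1$, taking the conditional expectation over $\calA_0$, $\calA_t$, $a_t$ and using $\E_{\calA_t,a_t}[(a_t-\hat x_t)(a_t-\hat x_t)^\top] = H_t$ gives a bound of the form $\tr(\hatSigma_t^{-1} H_t \hatSigma_t^{-1} \Cov_{\text{avg}}) \le \tr(\hatSigma_t^{-1} H_t) \le d$ (using $H_t \preceq \hatSigma_t$ and $\Cov \preceq I$). That yields $\E_t^{\calE}[\text{per-round term}] \le 2d\eta_t$, and summing over $t$ gives $2d\sum_t \eta_t$.

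Finally, I would handle the complement event $\overline{\calE_{t-1}}$ exactly as in the Bias and Bonus lemmas: since $\hatSigma_t \succeq \frac1t I \succeq \frac1T I$ and $\|\hat y_t\| = O(T)$, and $F$ restricted to the comparator $\overline{\pmb{U}}^{\calA_0}$ (which is bounded away from the boundary by $1/T^2$) satisfies crude polynomial bounds, the per-round contribution is at most $\poly(T)$; multiplying by $\P[\overline{\calE_{t-1}}] = O(\delta)$ and summing over $T$ rounds gives the $O(\delta T^2)$ slack (with $\delta$ chosen polynomially small, e.g. $\delta = d/T$, this is lower order). Combining the two cases gives $\textbf{Stability-1} \le 2d\sum_{t=1}^T \eta_t + O(\delta T^2)$.

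The main obstacle I expect is the second step: carefully controlling $\E_t^{\calE}\big[\|\hat\gamma_t\|_{\nabla^{-2}F(\pmb{H}_t^{\calA_0})}^2\big]$ through the lifting. The pseudo-norm $\|\cdot\|_{\nabla^{-2}F}$ on the $(d+1)\times(d+1)$ lifted space must be unpacked correctly — one has to verify that the cross-block structure of $\hat\gamma_t$ interacts with $\widehat{\Cov}(p_t^{\calA_0})$ so that the trace collapses to the $d$-dimensional covariance quadratic form, and that conditioning on $\calE_{t-1}$ (hence on the high-probability control of $\hatSigma_t$ vs $H_t$) does not distort the unbiasedness-type identity $\E_{\calA_t,a_t}[(a_t-\hat x_t)(a_t-\hat x_t)^\top \mid \calF_{t-1}] = H_t$. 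A secondary subtlety is ensuring $\eta_t$ is small enough for the self-concordance step to apply uniformly, which is where the explicit constant $1/(64d\sqrt t)$ and the boundedness $\|\hat\gamma_t\|_{\nabla^{-2}F} = O(1)$ on $\calE_{t-1}$ are used.
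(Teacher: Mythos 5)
Your high-level skeleton — bound a per-round stability quantity conditionally on $\calE_{t-1}$, take expectations to get $O(d\eta_t)$, handle $\overline{\calE_{t-1}}$ with a crude $\poly(T)$ bound times $\Pr[\overline{\calE_{t-1}}]$ — matches the paper. But the core per-round step you propose is wrong, and the error is exactly in the place you flag as the main obstacle.

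You aim for $\langle \pmb H_t^{\calA_0} - \pmb H, \hat\gamma_t\rangle - \frac{D(\pmb H,\pmb H_t^{\calA_0})}{2\eta_t} \lesssim \eta_t\|\hat\gamma_t\|^2_{\nabla^{-2}F(\pmb H_t^{\calA_0})}$ via the self-concordance displacement argument, and then claim the dual norm collapses to $\hat y_t^\top\Cov(p_t^{\calA_0})\hat y_t$ ``up to the lifting bookkeeping.'' It does not collapse. Writing $\pmb H_t^{\calA_0}=\begin{bmatrix}\Cov + mm^\top & m\\ m^\top & 1\end{bmatrix}$ with $m=x_t^{\calA_0}$ and $\hat\gamma_t=\begin{bmatrix}0 & \tfrac12\hat y_t\\ \tfrac12\hat y_t^\top & 0\end{bmatrix}$, one gets
\begin{align*}
\|\hat\gamma_t\|^2_{\nabla^{-2}F(\pmb H_t^{\calA_0})} \;=\; \tr\bigl(\hat\gamma_t\pmb H_t^{\calA_0}\hat\gamma_t\pmb H_t^{\calA_0}\bigr) \;=\; (m^\top\hat y_t)^2 + \tfrac12\,\hat y_t^\top\Cov(p_t^{\calA_0})\hat y_t .
\end{align*}
The second term behaves exactly as you expect and, after expectation, gives $O(d)$. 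But the extra $(x_t^{\calA_0\top}\hat y_t)^2$ term does not. After $\E_t^{\calE}$ over $\calA_t,a_t$ it reduces to $x_t^{\calA_0\top}\hatSigma_t^{-1}H_t\hatSigma_t^{-1}x_t^{\calA_0} \le 2\|x_t^{\calA_0}\|^2_{\hatSigma_t^{-1}}$, and there is no reason for $\E_{\calA_0}\|x_t^{\calA_0}\|^2_{\hatSigma_t^{-1}}$ to be $O(d)$: the covariance matrix $\hatSigma_t$ is centered at $\hat x_t$, so if all actions cluster near $\hat x_t$ with $\|\hat x_t\|\approx 1$ then $\hatSigma_t\approx\beta_t I$ and $\|x_t^{\calA_0}\|^2_{\hatSigma_t^{-1}} = \Theta(1/\beta_t)=\Theta(t/d^3)$. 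Feeding that back in gives a per-round contribution of order $\eta_t\cdot t/d^3$, which sums to $\Theta(T^{3/2}/d^4)$ — far worse than $\sqrt T$. A secondary issue is that the self-concordance displacement step requires $\eta_t\|\hat\gamma_t\|_{\nabla^{-2}F(\pmb H_t^{\calA_0})}$ to be pointwise small, but $\hat y_t=\hatSigma_t^{-1}(a_t-\hat x_t)\ell_t$ can have magnitude $\Theta(1/\beta_t)$ ex post, so the dual norm of $\hat\gamma_t$ is not $O(1)$ and the local argument does not apply uniformly. (In Stability-2 this is fine because the bonus $\alpha_t\pmb\hatSigma_t^{-1}$ is deterministic given $\calF_{t-1}$ and its dual norm is controlled directly.)

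The paper takes a genuinely different and simpler route for Stability-1 that sidesteps both problems. It does not invoke self-concordance at all. Instead it uses two structural facts: (i) because the diagonal blocks of $\hat\gamma_t$ are zero, $\langle \pmb H_t^{\calA_0} - \pmb H, \hat\gamma_t\rangle = \langle x_t^{\calA_0}-\mu(p), \hat y_t\rangle$ depends only on the mean components; and (ii) \pref{lem:bregman bound} gives the \emph{exact} decomposition $D(\pmb G,\pmb H)=D(G,H)+\|g-h\|^2_{H^{-1}}$, so in particular $D(\pmb H,\pmb H_t^{\calA_0})\ge \|\mu(p)-x_t^{\calA_0}\|^2_{\Cov(p_t^{\calA_0})^{-1}}$. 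One then applies Cauchy–Schwarz and AM–GM entirely in the $d$-dimensional metric defined by $\Cov(p_t^{\calA_0})$, yielding the clean per-round bound $\tfrac{\eta_t}{2}\|\hat y_t\|^2_{\Cov(p_t^{\calA_0})}$ with no extra term and no condition on $\eta_t$. If you want a sound version of your argument, replace the generic local-norm stability bound with this block-aware Cauchy–Schwarz step.
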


\begin{proof}
%First, assume that $\calE_{t-1}$ holds. 

%Recall that $\pmb{H}_t^{\calA_0} = \widehat{\Cov}(p_t^{\calA_0})$,  $H_t^{\calA_0} = \mathbb{E}_{a \sim p_t^{\calA_0}}[(a - \hat{x}_t)(a - \hat{x}_t)^\top]$, and $\Cov(p_t^{\calA_0}) = \mathbb{E}_{a \sim p_t^{\calA_0}}[(a - \mu(p_t^{\calA_0}))(a - \mu(p_t^{\calA_0}))^\top]$. We have
%\begin{align*}
%\lambda_{\max}\left(H_t^{\calA_0}\right) &= \max\limits_{\|x\|_2 \le 1} x^\top H_t^{\calA_0}x =  \mathbb{E}_{a \sim p_t^{\calA_0}}\left[\|a - \hat{x}_t\|_2^2\right] \ge \mathbb{E}_{a \sim p_t^{\calA_0}}\left[\|a - \mu(p_t^{\calA_0})\|_2^2\right]  = \lambda_{\max}\left(\Cov(p_t^{\calA_0})\right)
%\end{align*}
%Thus, given that $\hatSigma_t \succeq \frac{1}{2}\mathbb{E}_{\calA_0 \sim D}[H_t^{\calA_0}]$, we have $\hatSigma_t \succeq \frac{1}{2}\mathbb{E}_{\calA_0 \sim D}[\Cov(p_t^{\calA_0})]$.

\noindent Recall that $\pmb{H}^{\calA_0}_t=\widehat{\Cov}(p^{\calA_0}_t)$ and $\widehat{\Cov}(p) = \begin{bmatrix} \Cov(p) + \mu(p)\mu(p)^{\top} & \mu(p)\\ \mu(p)^{\top} & 1 \end{bmatrix}$, we have

\begin{align*}\left\langle \pmb{H}_t^{\calA_0} - \pmb{H}, \hat{\gamma}_{t} \right\rangle - \frac{D(\pmb{H}, \pmb{H}_t^{\calA_0})}{2\eta_t} &\le \left\langle x_t^{\calA_0}- \mu(p), \hat{y}_t \right\rangle - \frac{\|\mu(p)- x_t^{\calA_0}\|_{\Cov(p_t^{\calA_0})^{-1}}^2}{2\eta_t} \tag{\pref{lem:bregman bound}}
\\&\le \|x_t^{\calA_0} - \mu(p)\|_{\Cov(p_t^{\calA_0})^{-1}}\|\hat{y}_t\|_{\Cov(p_t^{\calA_0})}- \frac{\|\mu(p)- x_t^{\calA_0}\|_{\Cov(p_t^{\calA_0})^{-1}}^2}{2\eta_t}
\\&\le \frac{\eta_t}{2}\|\hat{y}_t\|_{\Cov(p_t^{\calA_0})}^2 \tag{AM-GM inequality}
\\&=  \frac{\eta_t}{2}\|\hat{\Sigma}_t^{-1}(a_t- \hat{x}_t)\ell_t \|_{\Cov(p_t^{\calA_0})}^2
\\&\le \frac{\eta_t}{2}(a_t- \hat{x}_t)^{\top}\hat{\Sigma}_t^{-1}\Cov(p_t^{\calA_0})\hat{\Sigma}_t^{-1}(a_t- \hat{x}_t)\tag{$|\ell_t| \le 1$}
\\& =  \frac{\eta_t}{2} \tr\left((a_t- \hat{x}_t)(a_t- \hat{x}_t)^{\top}\hat{\Sigma}_t^{-1}\Cov(p_t^{\calA_0})\hat{\Sigma}_t^{-1}\right)
\end{align*}
Since $\E_{\calA\sim\calD}\E_{a\sim p^{\calA}}\left[(a- \hat{x}_t)(a- \hat{x}_t)^{\top} \right]= H_t$, taking expectations over $\calA_t$, $a_t$ and $\calA_0$ conditioned on $\calE_{t-1}$, we have
\begin{align*}
\E_t^{\calE}\left[\left\langle \pmb{H}_t^{\calA_0} - \pmb{H}, \hat{\gamma}_{t} \right\rangle - \frac{D(\pmb{H}, \pmb{H}_t^{\calA_0})}{2\eta_t} \right] 
&\le  \E_t^{\calE} \left[ \frac{\eta_t}{2}\tr\left((a_t- \hat{x}_t)(a_t- \hat{x}_t)^{\top}\hat{\Sigma}_t^{-1}\Cov(p_t^{\calA_0})\hat{\Sigma}_t^{-1}\right) \right] 
\\&= \E_t^{\calE} \left[ \frac{\eta_t}{2}\tr\left(H_t\hatSigma_t^{-1}\E_{\calA_0 \sim D }\left[\Cov(p_t^{\calA_0})\right]\hatSigma_t^{-1}\right) \right]. 
\end{align*}
Notice that given $\calE_{t-1}$, 
\begin{align*}
    \hatSigma_t \succeq \frac{1}{2}H_t  = \frac{1}{2} \E_{\calA\sim D}[\Cov(p_t^\calA)]   + \frac{1}{2}(\hatx_t - x_t)(\hatx_t - x_t)^\top  \succeq \frac{1}{2}\E_{\calA\sim D}[\Cov(p_t^\calA)]
\end{align*}
Hence we continue to upper bound the last expression by
\begin{align*}
\E_t^{\calE} \left[ \eta_t\tr\left(H_t\hatSigma_t^{-1}\hatSigma_t\hatSigma_t^{-1}\right) \right]
\le \E_t^{\calE} \left[ \eta_t\tr\left(H_t\hatSigma_t^{-1}\right) \right] \leq 2\eta_t d. 
\end{align*}
On the other hand, since $\hat\Sigma_t\succeq \frac{1}{t}I  \succeq \frac{1}{T}I$, we have trivial bound
\begin{align*}
\E_t\left[\left\langle \pmb{H}_t^{\calA_0} - \pmb{H}, \hat{\gamma}_{t} \right\rangle - \frac{D(\pmb{H}, \pmb{H}_t^{\calA_0})}{2\eta_t} ~\bigg|~ \overline{\calE_{t-1}} \right] &\le \order(T)
\end{align*}
Combining everything, we get 
\begin{align*}
    \textbf{\textup{Stability-1}} &= \mathbb{E}\left[ \sum_{t=1}^T \left\langle \pmb{H}_t^{\calA_0} - \pmb{H}, \hat{\gamma}_{t} \right\rangle - \frac{D(\pmb{H}, \pmb{H}_t^{\calA_0})}{2\eta_t} \right]
    \\&= \E\left[\sum_{t=1}^T \E_t\left[\left\langle \pmb{H}_t^{\calA_0} - \pmb{H}, \hat{\gamma}_{t} \right\rangle - \frac{D(\pmb{H}, \pmb{H}_t^{\calA_0})}{2\eta_t} \right]\right]  
    \\&= \E\left[\sum_{t=1}^T \E_t\left[\left\langle \pmb{H}_t^{\calA_0} - \pmb{H}, \hat{\gamma}_{t} \right\rangle - \frac{D(\pmb{H}, \pmb{H}_t^{\calA_0})}{2\eta_t} ~\bigg|~ \calE_{t-1}\right]\ind\{\calE_{t-1}\}\right]
    \\& \qquad +  \E\left[ \sum_{t=1}^T \E_t\left[\left\langle \pmb{H}_t^{\calA_0} - \pmb{H}, \hat{\gamma}_{t} \right\rangle - \frac{D(\pmb{H}, \pmb{H}_t^{\calA_0})}{2\eta_t} ~\bigg|~ \overline{\calE_{t-1}}\right]\ind\{\overline{\calE_{t-1}}\}\right]
    \\&\le 2d \sum_{t=1}^T \eta_t + \order(\delta T^2).
\end{align*}

% Taking expectation over $\calA_t$ and $a_t$: 
% \begin{align*}
%     &\frac{\eta_t}{8} \tr(\hatSigma_t^{-1}\E_{\calA_t\sim\calD}\E_{a_t\sim p^{\calA_t}}\left[(a_t- \hat{x}_t)^{\top}(a_t- \hat{x}_t)\right]\hatSigma_t^{-1}\Cov(p_t^{\calA_0})) \\
%     &\leq \frac{\eta_t}{8} \tr(\hatSigma_t^{-1} H_t\hatSigma_t^{-1}\Cov(p_t^{\calA_0})) \\
%     &\leq \eta_t \tr(\hatSigma_t^{-1}\Cov(p_t^{\calA_0})) \tag{$\hatSigma_t \succeq \frac{1}{8}H_t$}
% \end{align*}
% Taking expectation over $\calA_0$: 
% \begin{align*}
%      \eta_t\tr(\hatSigma_t^{-1}\mathbb{E}_{\calA_0 \sim D}[\Cov(p_t^{\calA_0})]) \leq 8\eta_t d \tag{$\hatSigma_t \succeq \frac{1}{8}\mathbb{E}_{\calA_0 \sim D}[\Cov(p_t^{\calA_0})]$} 
% \end{align*}
\end{proof}

\subsection{Bounding the Stability-2 term} \label{app: bound the fourth term}
% For any matrix $A$ and $X$, define $\|A\|_{X} = \tr(XAXA)$. 
% \begin{lemma} \label{lem:matrix norm holder}
% For any two symmetric matrices $A,B$ and positive definite matrix $X$,  $\left\langle A, B \right\rangle^2 \le \|A\|_{X}\|B\|_{X^{-1}}$.
% \end{lemma}

% \begin{proof}
% From Von Neumann's trace inequality, for any $d \times d$ matrices $P,Q$, we have $|\tr(PQ)| \le \sum_{i=1}^d \sigma_i(P) \sigma_i(Q)$ where $\sigma_i(P)$ and $\sigma_i(Q)$ is the $i$-th largest singular value of $P$ and $Q$. Let $\lambda_i(P^\top P)$ and $\lambda_i(Q^\top Q)$ be the $i$-th largest eigenvalue of $P^{\top}P$ and $Q^{\top}Q$. From the definition of singular value, we have $|\tr(PQ)| \le \sum_{i=1}^d \sqrt{\lambda_i(P^\top P) \lambda_i(Q^\top Q)}$. Thus, we have
% \begin{align*}
% \left\langle A, B \right\rangle^2 &= \tr(AXX^{-1}B)^2
% \\&\le \left(\sum_{i=1}^d \sqrt{\lambda_i(XAAX) \lambda_i(X^{-1}BBX^{-1})}\right)^2
% \\&\le \left(\sum_{i=1}^d \lambda_i(XAAX) \right) \left(\sum_{i=1}^d \lambda_i(X^{-1}BBX^{-1}) \right)\tag{Cauchy-Schwarz inequality}
% \\&= \tr(XAXA)\tr(X^{-1}BX^{-1}B)
% \\&= \|A\|_{X}\|B\|_{X^{-1}}
% \end{align*}
% \end{proof}

\noindent Note that \pref{lem:matrix norm holder} does not require matrix $A,B$ to be positive semi-definite. We will use it to prove the following lemma based on Lemma 34 in \cite{dann2023blackbox}.

\begin{lemma}\label{lem: bonus stability}
If $\eta_t\alpha_t \le \frac{1}{64t}$, then 
\begin{equation*}
    \textbf{\textup{Stability-2}} = \mathbb{E}\left[\sum_{t=1}^T \max\limits_{\pmb{H} \in \mathcal{H}^{\calA_0}}\left\langle \pmb{H}_t^{\calA_0} - \pmb{H}, -\alpha_t\pmb{\hat\Sigma}^{-1}_t \right\rangle - \frac{D(\pmb{H}, \pmb{H}_t^{\calA_0})}{2\eta_t} \right] \le d\sum_{t=1}^T \alpha_t + \order\left(\delta T^2\right)
\end{equation*}
\end{lemma}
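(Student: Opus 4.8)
The goal is to bound $\textbf{Stability-2} = \mathbb{E}\big[\sum_{t=1}^T \max_{\pmb{H}\in\mathcal{H}^{\calA_0}}\langle \pmb{H}_t^{\calA_0}-\pmb{H}, -\alpha_t\pmb{\hatSigma}_t^{-1}\rangle - \frac{D(\pmb{H},\pmb{H}_t^{\calA_0})}{2\eta_t}\big]$ by $d\sum_t\alpha_t + \order(\delta T^2)$. The bonus "loss" fed to FTRL is $-\alpha_t\pmb{\hatSigma}_t^{-1}$, which is a \emph{negative} quantity (it encourages exploration), so the inner product $\langle\pmb{H}_t^{\calA_0}-\pmb{H},-\alpha_t\pmb{\hatSigma}_t^{-1}\rangle$ can be positive. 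The plan is to exploit the self-concordance of $F(\pmb{H})=-\log\det(\pmb{H})$ to show that the maximizing $\pmb{H}$ in each term stays in a small Dikin-type neighborhood of $\pmb{H}_t^{\calA_0}$, so that the Bregman divergence $D(\pmb{H},\pmb{H}_t^{\calA_0})\ge \frac{1}{4}\|\pmb{H}-\pmb{H}_t^{\calA_0}\|_{\nabla^2 F(\pmb{H}_t^{\calA_0})}^2$ (a standard consequence of self-concordance valid when the local norm is bounded by a constant), and then apply a Cauchy-Schwarz / AM-GM argument with the pseudo-norm from \pref{lem:matrix norm holder}. First I would invoke the abstract stability bound (Lemma 34 of \cite{dann2023blackbox}, referenced in the excerpt): for a self-concordant regularizer, the per-round stability term $\max_{\pmb{H}}\langle \pmb{H}_t-\pmb{H}, g_t\rangle - \frac{D(\pmb{H},\pmb{H}_t)}{2\eta_t}$ is bounded by $\eta_t\|g_t\|_{\nabla^{-2}F(\pmb{H}_t)}^2$ provided $\eta_t\|g_t\|_{\nabla^{-2}F(\pmb{H}_t)}$ is at most a small absolute constant. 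Here $g_t = -\alpha_t\pmb{\hatSigma}_t^{-1}$.

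**The key estimate.**

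The crux is to bound $\|\alpha_t\pmb{\hatSigma}_t^{-1}\|_{\nabla^{-2}F(\pmb{H}_t^{\calA_0})}^2 = \alpha_t^2\,\tr\big(\pmb{\hatSigma}_t^{-1}\pmb{H}_t^{\calA_0}\pmb{\hatSigma}_t^{-1}\pmb{H}_t^{\calA_0}\big)$. Using $\pmb{\hatSigma}_t = \pmb{\hatH}_t + \beta_t\pmb{I}$ and the fact that $\pmb{\hatH}_t = \frac{1}{t-1}\sum_{\tau=1}^{t-1}\pmb{H}_t^{\calA_\tau}$ is an average of matrices of the same type $\pmb{H}_t^{\calA}=\widehat{\Cov}(p_t^{\calA})$, I want to argue that $\pmb{H}_t^{\calA_0}\preceq c\,t\,\pmb{\hatSigma}_t$ for some absolute constant $c$ — more precisely, since $\pmb{\hatSigma}_t\succeq\beta_t\pmb{I}$ with $\beta_t=\Theta(d^3\log T/t)$ and $\pmb{H}_t^{\calA_0}\preceq 2\pmb{I}$, we get $\pmb{\hatSigma}_t^{-1}\pmb{H}_t^{\calA_0}\preceq \frac{2}{\beta_t}\pmb{I} = \order(t/d^3)\cdot\pmb{I}$, hence $\tr(\pmb{\hatSigma}_t^{-1}\pmb{H}_t^{\calA_0}\pmb{\hatSigma}_t^{-1}\pmb{H}_t^{\calA_0})\le \frac{2}{\beta_t}\tr(\pmb{\hatSigma}_t^{-1}\pmb{H}_t^{\calA_0})$. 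Then I'd bound $\tr(\pmb{\hatSigma}_t^{-1}\pmb{H}_t^{\calA_0})$ — but this is over a ghost sample $\calA_0$ independent of $\pmb{\hatSigma}_t$, so in expectation (over $\calA_0$) it equals $\tr(\pmb{\hatSigma}_t^{-1}\pmb{H}_t)$, and I would try to show this is $\order(d)$ using a concentration statement akin to \pref{lem: main concentration} (which gives $\pmb{\hatH}_t + \order(d^3\log T/t)\pmb{I}\succeq\frac12\pmb{H}_t$, i.e. $\pmb{\hatSigma}_t\succeq\frac12\pmb{H}_t$ on the good event $\calE_{t-1}$, whence $\tr(\pmb{\hatSigma}_t^{-1}\pmb{H}_t)\le 2(d+1)$). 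Combining: on $\calE_{t-1}$, $\|\alpha_t\pmb{\hatSigma}_t^{-1}\|_{\nabla^{-2}F(\pmb{H}_t^{\calA_0})}^2 \le \alpha_t^2\cdot\frac{2}{\beta_t}\cdot 2(d+1) = \order(\alpha_t^2 t/(d^2\log T))$, and since $\alpha_t=d/\sqrt t$, $\alpha_t^2 t = d^2 t^2/t$... wait, $\alpha_t^2 t = (d^2/t)\cdot t = d^2$ — good, so this is $\order(d^2/(d^2\log T)) = \order(1/\log T)$ uniformly. Then $\eta_t\alpha_t\le\frac1{64t}$ ensures the small-step condition, and the per-round contribution is $\le\eta_t\cdot\order(\alpha_t^2 t/(d^2\log T))$; I must check this sums correctly to land inside $d\sum_t\alpha_t$. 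Actually I expect the cleaner route is: per-round stability $\le \frac{\eta_t\alpha_t^2}{2}\tr(\pmb{\hatSigma}_t^{-1}\pmb{H}_t^{\calA_0}\pmb{\hatSigma}_t^{-1}\pmb{H}_t^{\calA_0})\le \eta_t\alpha_t\cdot\frac{\alpha_t}{\beta_t}\cdot\tr(\pmb{\hatSigma}_t^{-1}\pmb{H}_t^{\calA_0}) \le \frac{1}{64t}\cdot\frac{d/\sqrt t}{\Theta(d^3\log T/t)}\cdot 2(d+1)\cdot\alpha_t^{-1}\cdot\alpha_t$... I will tune the constants so the bound becomes $\le d\alpha_t$ per round after taking expectation.

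**Handling the bad event and assembling.**

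On the complementary event $\overline{\calE_{t-1}}$, which has probability $\order(\delta)$, I use the crude deterministic bound $\pmb{\hatSigma}_t\succeq\frac1t\pmb{I}$ (since $\pmb{\hatH}_t\succeq 0$ and... actually $\beta_t\ge$ something, but even $\pmb{\hatSigma}_t\succeq\beta_t\pmb I$ suffices with $\beta_t\gtrsim d^3\log T/t\ge 1/t$), giving $\|\alpha_t\pmb{\hatSigma}_t^{-1}\|_{\text{op}}\le\alpha_t t\le dT^{3/2}$ or so, and the per-round stability term is crudely $\le\order(\text{poly}(T))$; multiplied by $\order(\delta)$ and $T$ rounds this contributes $\order(\delta T^2)$ after choosing $\delta$ small (as elsewhere in the paper, $\delta$ is eventually set to something like $1/T$, so $\delta T^2 = T$, absorbed into the main $\order(d^2\sqrt T)$ term). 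I'd split the expectation over $\calE_{t-1}$ and $\overline{\calE_{t-1}}$ exactly as in the proofs of \pref{lem: handling bias} and \pref{lem: bonus term}: $\textbf{Stability-2} = \sum_t\mathbb{E}[\,\cdot\,\mathbb{I}\{\calE_{t-1}\}] + \sum_t\mathbb{E}[\,\cdot\,\mathbb{I}\{\overline{\calE_{t-1}}\}]\le d\sum_t\alpha_t + \order(\delta T^2)$.

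The main obstacle I anticipate is the self-concordance step — making rigorous that the maximizer $\pmb{H}^\star$ of $\langle\pmb{H}_t^{\calA_0}-\pmb{H},-\alpha_t\pmb{\hatSigma}_t^{-1}\rangle-\frac{D(\pmb{H},\pmb{H}_t^{\calA_0})}{2\eta_t}$ lies within the Dikin ellipsoid of $\pmb{H}_t^{\calA_0}$ (radius governed by $\eta_t\|\alpha_t\pmb{\hatSigma}_t^{-1}\|_{\nabla^{-2}F}$), so that the quadratic lower bound $D(\pmb{H}^\star,\pmb{H}_t^{\calA_0})\ge\frac14\|\pmb{H}^\star-\pmb{H}_t^{\calA_0}\|_{\nabla^2 F(\pmb{H}_t^{\calA_0})}^2$ applies; this is precisely where I rely on Lemma 34 of \cite{dann2023blackbox}, whose hypothesis $\eta_t\alpha_t\le\frac1{64t}$ together with the bound $\|\pmb{\hatSigma}_t^{-1}\|_{\nabla^{-2}F(\pmb{H}_t^{\calA_0})}\le\order(t/d^2)$ on the good event (roughly $\|\pmb{\hatSigma}_t^{-1}\|_{\nabla^{-2}F}^2\le\frac{2}{\beta_t}\tr(\pmb{\hatSigma}_t^{-1}\pmb{H}_t^{\calA_0})\le\order(t/(d^3\log T))\cdot d$) must combine to make $\eta_t\alpha_t\|\pmb{\hatSigma}_t^{-1}\|_{\nabla^{-2}F}$ at most the constant required by that lemma. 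I'll need to double-check the constant $64$ and the exact constant in $\beta_t=100(d+1)^3\log(3T)/(t-1)$ are chosen to make this go through with room to spare.
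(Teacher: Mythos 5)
Your overall plan mirrors the paper's: invoke a self-concordant stability bound (Lemma~34 of Dann et al., as the paper does), bound the dual norm $\|\alpha_t\pmb{\hatSigma}_t^{-1}\|_{\nabla^{-2}F(\pmb{H}_t^{\calA_0})}$ by something involving $\tr(\pmb{\hatSigma}_t^{-1}\pmb{H}_t^{\calA_0})$, use $\pmb{\hatSigma}_t\succeq\frac{1}{2}\pmb{H}_t$ on the good event to get $\E_{\calA_0}[\tr(\pmb{\hatSigma}_t^{-1}\pmb{H}_t^{\calA_0})]\le 2(d+1)$, and handle the bad event crudely. Your route from $\tr((\pmb{\hatSigma}_t^{-1}\pmb{H}_t^{\calA_0})^2)$ to $\tr(\pmb{\hatSigma}_t^{-1}\pmb{H}_t^{\calA_0})$ via the operator-norm bound $\lambda_{\max}(\pmb{\hatSigma}_t^{-1}\pmb{H}_t^{\calA_0})\le\frac{2}{\beta_t}$ is a valid alternative to the paper's eigenvalue inequality $\sqrt{\tr(A^2)}\le\tr(A)$ (for $A$ with nonnegative spectrum) followed by $\eta_t\lambda\le\frac{1}{32}$; both land inside $d\sum_t\alpha_t$ with the stated parameters, and yours is in fact numerically slacker than needed.

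There is, however, a genuine gap in your verification of the small-step condition. You write that $\|\pmb{\hatSigma}_t^{-1}\|_{\nabla^{-2}F(\pmb{H}_t^{\calA_0})}\le\order(\sqrt{t}/d)$ holds \emph{on the good event} because $\tr(\pmb{\hatSigma}_t^{-1}\pmb{H}_t^{\calA_0})\le\order(d)$ there. But the good event $\calE_{t-1}$ controls $\pmb{\hatSigma}_t$ relative to the \emph{averaged} matrix $\pmb{H}_t=\E_{\calA_0\sim D}[\pmb{H}_t^{\calA_0}]$, so it only gives $\E_{\calA_0}[\tr(\pmb{\hatSigma}_t^{-1}\pmb{H}_t^{\calA_0})]\le 2(d+1)$; for an individual $\calA_0$ the quantity $\tr(\pmb{\hatSigma}_t^{-1}\pmb{H}_t^{\calA_0})$ can be as large as $\order(t)$ even on $\calE_{t-1}$. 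Since the Dikin-ellipsoid / self-concordance argument needs the condition $\eta_t\|\alpha_t\pmb{\hatSigma}_t^{-1}\|_{\nabla^{-2}F(\pmb{H}_t^{\calA_0})}\le\text{const}$ to hold for the specific $\calA_0$ and $\pmb{\hatSigma}_t$ realized in that term (it is a precondition for the pointwise bound on $\max_{\pmb{H}}G(\pmb{H})$, not a bound that can be averaged), you cannot use an expectation-over-$\calA_0$ bound here. The paper closes this by establishing the condition \emph{deterministically}: using the structural trace bound $\tr(\pmb{H}_t^{\calA_0})=\E_{a\sim p_t^{\calA_0}}[\|a\|^2+1]\le 2$ together with $\pmb{\hatSigma}_t\succeq\frac{1}{t}\pmb{I}$, it gets $\|\alpha_t\pmb{\hatSigma}_t^{-1}\|_{\nabla^{-2}F(\pmb{H}_t^{\calA_0})}\le\alpha_t\tr(\pmb{\hatSigma}_t^{-1}\pmb{H}_t^{\calA_0})\le 2\alpha_t t$ always, so $\eta_t\lambda\le 2\eta_t\alpha_t t\le\frac{1}{32}$ under the hypothesis $\eta_t\alpha_t\le\frac{1}{64t}$. (Note the operator-norm bound $\pmb{H}_t^{\calA_0}\preceq 2\pmb{I}$ alone is \emph{not} enough here — it loses a $\sqrt{d+1}$ factor and the check fails — so the trace bound is essential.) Your operator-norm route can also be made deterministic, via $\tr((\pmb{\hatSigma}_t^{-1}\pmb{H}_t^{\calA_0})^2)\le\frac{2}{\beta_t}\cdot\frac{1}{\beta_t}\tr(\pmb{H}_t^{\calA_0})\le\frac{4}{\beta_t^2}$, but as written you reach for the good-event bound, which does not apply pointwise.
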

\begin{proof}
We first show that $\max\limits_{\pmb{H} \in \mathcal{H}^{\calA_0}}\left\langle \pmb{H}_t^{\calA_0} - \pmb{H}, -\alpha_t\pmb{\hat\Sigma}^{-1}_t \right\rangle - \frac{D(\pmb{H}, \pmb{H}_t^{\calA_0})}{2\eta_t} \le \frac{\alpha_t}{2}\|\pmb{\hat\Sigma}^{-1}_t \|_{\nabla^{-2}F(\pmb{H}_t^{\calA_0})}.$

 \noindent Define 
 \begin{equation*}
 G(\pmb{H}) = \left\langle \pmb{H}_t^{\calA_0} - \pmb{H}, -\alpha_t\pmb{\hat\Sigma}^{-1}_t \right\rangle - \frac{D(\pmb{H}, \pmb{H}_t^{\calA_0})}{2\eta_t}    
 \end{equation*}
 and $\lambda = \|\alpha_t \pmb{\hat\Sigma}^{-1}_t \|_{\nabla^{-2}F(\pmb{H}_t^{\calA_0})}$. Since $\pmb{\hatSigma}_t \succeq \frac{1}{t}I$, $\pmb{H}^{\calA_0}_t \preceq 2I$, $\eta_t\alpha_t \le \frac{1}{64t}$, we have
\begin{equation*}
\eta_t\lambda = \eta_t\|\alpha_t \pmb{\hat\Sigma}^{-1}_t \|_{\nabla^{-2}F(\pmb{H}_t^{\calA_0})}  
 = \eta_t\alpha_t \sqrt{\tr(\pmb{H}_t^{\calA_0}\pmb{\hat\Sigma}_t^{-1}\pmb{H}_t^{\calA_0}\pmb{\hat\Sigma}_t^{-1})} \le 2\eta_t\alpha_t t  \le  \frac{1}{32}.    
\end{equation*}
 Let $\pmb{H}'$ be the maximizer of $G$. Since $G(\pmb{H}_t^{\calA_0}) = 0$, we have $G(\pmb{H}') \ge 0$. It suffices to show $\|\pmb{H}' - \pmb{H}_t^{\calA_0}\|_{\nabla^{2}F(\pmb{H}_t^{\calA_0})} \le 16\eta_t\lambda$ because from \pref{lem:matrix norm holder}, it leads to
 \begin{equation*}
G(\pmb{H}') \le \| \pmb{H}_t^{\calA_0} - \pmb{H}'\|_{\nabla^{2}F(\pmb{H}_t^{\calA_0})} \|\alpha_t\pmb{\hat\Sigma}^{-1}_t\|_{\nabla^{-2}F(\pmb{H}_t^{\calA_0})} \le 16\eta_t \lambda \alpha_t \|\pmb{\hat\Sigma}^{-1}_t\|_{\nabla^{-2}F(\pmb{H}_t^{\calA_0})} = \frac{\alpha_t}{2}\|\pmb{\hat\Sigma}^{-1}_t \|_{\nabla^{-2}F(\pmb{H}_t^{\calA_0})}
 \end{equation*}

\noindent To show $\|\pmb{H}' - \pmb{H}_t^{\calA_0}\|_{\nabla^{2}F(\pmb{H}_t^{\calA_0})} \le 16\eta_t\lambda$, it suffices to show that for all $\pmb{U}$ such that $\|\pmb{U} - \pmb{H}_t^{\calA_0}\|_{\nabla^{2}F(\pmb{H}_t^{\calA_0})} = 16\eta_t \lambda$, $G(\pmb{U}) \le 0$. This is because given this condition, if $\|\pmb{H}' - \pmb{H}_t^{\calA_0}\|_{\nabla^{2}F(\pmb{H}_t^{\calA_0})} > 16\eta_t\lambda$, then there is a $\pmb{U}$ in the line segment between $\pmb{H}_t^{\calA_0}$ and $\pmb{H}'$ such that  $\|\pmb{U} - \pmb{H}_t^{\calA_0}\|_{\nabla^{2}F(\pmb{H}_t^{\calA_0})} = 16\eta_t\lambda$. From the condition, $G(\pmb{U}) \le 0 \le \min\{G(\pmb{H}_t^{\calA_0}), G(\pmb{H}')\}$ which contradicts to the strictly concave of $G$.

\noindent Now consider any $\pmb{U}$ such that $\|\pmb{U} - \pmb{H}_t^{\calA_0}\|_{\nabla^{2}F(\pmb{H}_t^{\calA_0})} = 16\eta_t \lambda$. By Taylor expansion, there exists $\pmb{U}'$ in the line segment between $\pmb{U}$ and $\pmb{H}_t^{\calA_0}$ such that
\begin{equation*}
G(\pmb{U}) \le \|\pmb{U} -  \pmb{H}_t^{\calA_0}\|_{\nabla^{2}F(\pmb{H}_t^{\calA_0})} \|\alpha_t\pmb{\hat\Sigma}^{-1}_t\|_{\nabla^{-2}F(\pmb{H}_t^{\calA_0})} - \frac{1}{4\eta_t}\|\pmb{U} -  \pmb{H}_t^{\calA_0}\|_{\nabla^{2}F(\pmb{U}')}^2
\end{equation*}
We have $\|\pmb{U}' - \pmb{H}_t^{\calA_0}\|_{\nabla^{2}F(\pmb{H}_t^{\calA_0})} \le \|\pmb{U} - \pmb{H}_t^{\calA_0}\|_{\nabla^{2}F(\pmb{H}_t^{\calA_0})} = 16\eta_t \lambda \le \frac{1}{2}$. From the Equation 2.2 in page 23 of \cite{nemirovski2004interior} (also appear in Eq.(5) of \cite{abernethy2009competing}) and $\log\det$ is a self-concordant function, we have  $\|\pmb{U} -  \pmb{H}_t^{\calA_0}\|_{\nabla^{2}F(\pmb{U}')}^2 \geq \frac{1}{4}\|\pmb{U} -  \pmb{H}_t^{\calA_0}\|_{\nabla^{2}F(\pmb{H}_t^{\calA_0})}^2$. Thus, we have 
\begin{equation*}
G(\pmb{U}) \le  \|\pmb{U} -  \pmb{H}_t^{\calA_0}\|_{\nabla^{2}F(\pmb{H}_t^{\calA_0})} \|\alpha_t\pmb{\hat\Sigma}^{-1}_t\|_{\nabla^{-2}F(\pmb{H}_t^{\calA_0})} - \frac{1}{16\eta_t}\|\pmb{U} -  \pmb{H}_t^{\calA_0}\|_{{(\pmb{H}_t^{\calA_0})^{-1}}}^2 = 16\eta_t \lambda^2 - \frac{(16\eta_t\lambda)^2}{16\eta_t} = 0
\end{equation*}

\noindent We have $\|\pmb{\hat\Sigma}^{-1}_t \|_{\nabla^{-2}F(\pmb{H}_t^{\calA_0})} = \sqrt{\tr(\pmb{H}_t^{\calA_0}\pmb{\hat\Sigma}^{-1}_t\pmb{H}_t^{\calA_0}\pmb{\hat\Sigma}^{-1}_t)}= \sqrt{\tr((\pmb{H}_t^{\calA_0}\pmb{\hat\Sigma}^{-1}_t)^2)}$.
Observe the following two facts: 1) all eigenvalues of $\pmb{H}_t^{\calA_0}\pmb{\hatSigma}_t^{-1}$ are non-negative since $\pmb{H}_t^{\calA_0}$ and $\pmb{\hatSigma}_t^{-1}$ are both positive semi-definite, 2) for a square matrix $A$ with all non-negative eigenvalues, $\tr(A^2)\leq \tr(A)^2$ because $\tr(A^2)=\sum_{i}\lambda_i(A^2)=\sum_i \lambda_i(A)^2 \leq (\sum_i \lambda_i(A))^2$. We have 
%Since $\pmb{H}_t^{\calA_0}$ is positive semi-definite, $\pmb{\hat\Sigma}^{-\frac{1}{2}}_t\pmb{H}_t^{\calA_0}\pmb{\hat\Sigma}^{-\frac{1}{2}}_t$ is positive semi-definite, we have  
\begin{align*}
    \sqrt{\tr((\pmb{H}_t^{\calA_0}\pmb{\hat\Sigma}^{-1}_t)^2)} \leq \tr(\pmb{H}_t^{\calA_0}\pmb{\hat\Sigma}^{-1}_t). 
\end{align*}

%Since the eigenvalues of $AB$ and $BA$ are the same, the eigenvalues of $\pmb{H}_t^{\calA_0}\pmb{\hat\Sigma}^{-1}_t$ are all non-negative. From the fact that $\tr(A^2) \le \tr(A)^2$ if all eigenvalues of $A$ are non-negative, we have  $\tr((\pmb{H}_t^{\calA_0}\pmb{\hat\Sigma}^{-1}_t)^2) \le \tr(\pmb{H}_t^{\calA_0}\pmb{\hat\Sigma}^{-1}_t)^2$. Thus, 
%we have $\|\pmb{\hat\Sigma}^{-1}_t \|_{\nabla^{-2}F(\pmb{H}_t^{\calA_0})} \le \tr(\pmb{H}_t^{\calA_0}\pmb{\hat\Sigma}^{-1}_t)$.
This allows us to conclude 
\begin{align*}    \mathbb{E}_t^{\calE}\left[\frac{\alpha_t}{2}\|\pmb{\hat\Sigma}^{-1}_t \|_{\nabla^{-2}F(\pmb{H}_t^{\calA_0})} \right] \le  \frac{\alpha_t}{2}\E_t^{\calE} \left[\tr(\pmb{H}_t^{\calA_0}\pmb{\hat\Sigma}^{-1}_t)\right] \le \alpha_t d
\end{align*}
where we use that $\pmb{\hat\Sigma}_t \succeq \frac{1}{2}\mathbb{E}_{\calA_0\sim D}[\pmb{H}_t^{\calA_0}]$ given $\calE_{t-1}$.%

On the other hand, since $\pmb{\hat\Sigma}_t\succeq \frac{1}{t}\pmb{I}  \succeq \frac{1}{T}\pmb{I}$, for any $t=1,\cdots, T$, we have trivial bound
\begin{align*}
\E_t\left[ \max\limits_{\pmb{H} \in \mathcal{H}^{\calA_0}}\left\langle \pmb{H}_t^{\calA_0} - \pmb{H}, -\alpha_t\pmb{\hat\Sigma}^{-1}_t \right\rangle - \frac{D(\pmb{H}, \pmb{H}_t^{\calA_0})}{2\eta_t} ~\bigg|~ \overline{\calE_{t-1}} \right] &\le \order(T)
\end{align*}
Overall, 
\begin{align*}
    \textbf{\textup{Stability-2}} &= \mathbb{E}\left[\sum_{t=1}^T \max\limits_{\pmb{H} \in \mathcal{H}^{\calA_0}}\left\langle \pmb{H}_t^{\calA_0} - \pmb{H}, -\alpha_t\pmb{\hat\Sigma}^{-1}_t \right\rangle - \frac{D(\pmb{H}, \pmb{H}_t^{\calA_0})}{2\eta_t} \right]
    \\&\le \E\left[\sum_{t=1}^T \E_t\left[\max\limits_{\pmb{H} \in \mathcal{H}^{\calA_0}}\left\langle \pmb{H}_t^{\calA_0} - \pmb{H}, -\alpha_t\pmb{\hat\Sigma}^{-1}_t \right\rangle - \frac{D(\pmb{H}, \pmb{H}_t^{\calA_0})}{2\eta_t} \right]\right]
     \\&= \E\left[\sum_{t=1}^T \E_t\left[ \max\limits_{\pmb{H} \in \mathcal{H}^{\calA_0}}\left\langle   \pmb{H}_t^{\calA_0} - \pmb{H}, \hat{\gamma}_{t} \right\rangle - \frac{D(\pmb{H}, \pmb{H}_t^{\calA_0})}{2\eta_t} ~\bigg|~ \calE_{t-1}\right]\ind\{\calE_{t-1}\} \right]
    \\& \qquad +  \E\left[ \sum_{t=1}^T\E_t\left[\max\limits_{\pmb{H} \in \mathcal{H}^{\calA_0}} \left\langle  \pmb{H}_t^{\calA_0} - \pmb{H}, \hat{\gamma}_{t} \right\rangle - \frac{D(\pmb{H}, \pmb{H}_t^{\calA_0})}{2\eta_t} ~\bigg|~ \overline{\calE_{t-1}}\right]\ind\{\overline{\calE_{t-1}}\}\right]
    \\&\le d\sum_{t=1}^T \alpha_t + \order\left(\delta T^2\right). 
\end{align*}
\end{proof}

\subsection{Bounding the Error term}
\begin{lemma}\label{lem:error term bound}
\begin{equation*}
    \textbf{\textup{Error}}= \mathbb{E}\left[\sum_{t=1}^{T} \left\langle\overline{\pmb{U}}^{\calA_0}-\pmb{U}^{\calA_0},\hat{\gamma}_t -  \alpha_t\pmb{\hatSigma}^{-1}_t \right\rangle\right] \le 
 \order(1). 
\end{equation*}
\end{lemma}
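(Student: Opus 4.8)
The plan is to exploit that the comparator perturbation $\overline{\pmb{U}}^{\calA_0} - \pmb{U}^{\calA_0}$ is of order $1/T^2$, which overwhelms the (at worst polynomial-in-$T$) growth of the loss-plus-bonus matrices $\hat{\gamma}_t - \alpha_t\pmb{\hatSigma}^{-1}_t$. First I would rewrite the perturbation: since $\overline{\pmb{U}}^{\calA_0} = (1-T^{-2})\pmb{U}^{\calA_0} + T^{-2}\pmb{H}^{\calA_0}_*$, we get $\overline{\pmb{U}}^{\calA_0}-\pmb{U}^{\calA_0} = T^{-2}(\pmb{H}^{\calA_0}_* - \pmb{U}^{\calA_0})$, so
\[
\textbf{\textup{Error}} = \frac{1}{T^2}\,\mathbb{E}\!\left[\sum_{t=1}^T \left\langle \pmb{H}^{\calA_0}_* - \pmb{U}^{\calA_0},\ \hat{\gamma}_t - \alpha_t\pmb{\hatSigma}^{-1}_t\right\rangle\right].
\]

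Then I would bound each summand by Cauchy--Schwarz in Frobenius norm. For the first factor: both $\pmb{H}^{\calA_0}_*$ and $\pmb{U}^{\calA_0}$ lie in $\calH^{\calA_0}$, i.e. they are lifted covariance matrices $\E_{a\sim p}[\pmb{a}\pmb{a}^\top]$ with $\|\pmb{a}\|^2 = \|a\|^2+1 \le 2$, so $0\preceq \pmb{H}^{\calA_0}_*, \pmb{U}^{\calA_0}\preceq 2\pmb{I}$ and hence $\|\pmb{H}^{\calA_0}_* - \pmb{U}^{\calA_0}\|_F \le 2\sqrt{d+1}$. For the second factor, the key point is that $\hatSigma_t = \hatH_t + \beta_t I \succeq \beta_t I$ and $\pmb{\hatSigma}_t = \pmb{\hatH}_t + \beta_t\pmb{I}\succeq \beta_t\pmb{I}$ with $\beta_t = \Theta((d+1)^3\log(3T)/(t-1))$, so $\|\hatSigma_t^{-1}\|_{\mathrm{op}}, \|\pmb{\hatSigma}_t^{-1}\|_{\mathrm{op}} \le \beta_t^{-1} \le T$ for $t\ge 2$; consequently $\|\hat{y}_t\|_2 = \|\hatSigma_t^{-1}(a_t-\hat{x}_t)\ell_t\|_2 \le 2\beta_t^{-1} = \order(T)$ (using $\|a_t-\hat{x}_t\|\le 2$ and $|\ell_t|\le 1$), whence $\|\hat{\gamma}_t\|_F = \tfrac{1}{\sqrt2}\|\hat{y}_t\|_2 = \order(T)$, and likewise $\alpha_t\|\pmb{\hatSigma}_t^{-1}\|_F \le \alpha_t\sqrt{d+1}\,\beta_t^{-1} = \order(T)$ since $\alpha_t = d/\sqrt{t}$. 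The $t=1$ term vanishes because $\hatSigma_1^{-1}$ and $\pmb{\hatSigma}_1^{-1}$ are defined to be zero.

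Combining these deterministic bounds, each summand is at most $2\sqrt{d+1}\cdot\order(T) = \order(T)$, so summing over $t\le T$ and multiplying by $T^{-2}$ gives $\textbf{\textup{Error}} = \order(1)$. I do not expect any genuine obstacle here; the only thing worth double-checking is the uniform lower bound $\lambda_{\min}(\hatSigma_t)\ge\beta_t\ge\Omega(1/T)$, which is immediate from $\hatSigma_t = \hatH_t+\beta_t I$ with $\hatH_t\succeq 0$. (If one wishes the constant to be literally dimension-free, the $(d+1)^3$ factor in $\beta_t$ more than compensates the $\sqrt{d+1}$ and $d$ factors above; in any case an $\order(\mathrm{poly}(d))$ bound here would already be absorbed by the $\order(d^2\sqrt{T}\log T)$ main term.)
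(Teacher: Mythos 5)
Your proposal is correct and follows essentially the same approach as the paper's own proof: rewrite $\overline{\pmb{U}}^{\calA_0}-\pmb{U}^{\calA_0}=T^{-2}(\pmb{H}^{\calA_0}_*-\pmb{U}^{\calA_0})$, then use $\hatSigma_t\succeq\frac1T I$ and $\pmb{\hatSigma}_t\succeq\frac1T\pmb{I}$ to bound $\|\hat\gamma_t\|_F$ and $\alpha_t\|\pmb{\hatSigma}_t^{-1}\|_F$ by $\order(T)$, so that the $T^{-2}$ prefactor makes the whole sum $\order(1)$. The paper states this more tersely (citing only those two facts and asserting the bound), whereas you spell out the intermediate Cauchy–Schwarz and norm computations; the content is identical.
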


\begin{proof}
Since  $\overline{\pmb{U}}^{\calA_0} = \left(1-\frac{1}{T^2}\right) \pmb{U}^{\calA_0} + \frac{1}{T^2} \pmb{H}^{\calA_0}_*$, and $\hatSigma_t \succeq \frac{1}{T}I, \pmb{\hatSigma}_t \succeq \frac{1}{T}\pmb{I}$ we have 
\begin{align*}
\textbf{\textup{Error}} &= \mathbb{E}\left[\sum_{t=1}^{T} \left\langle\overline{\pmb{U}}^{\calA_0}-\pmb{U}^{\calA_0},\hat{\gamma}_t -  \alpha_t\pmb{\hatSigma}^{-1}_t \right\rangle\right] 
\\&= \mathbb{E}\left[\frac{1}{T^2}\sum_{t=1}^{T} \left\langle - \pmb{U}^{\calA_0}+\pmb{H}_*^{\calA_0}, \hat{\gamma}_t -  \alpha_t\pmb{\hatSigma}^{-1}_t \right\rangle\right]
\\&\le \order(1). 
\end{align*}

\end{proof}

\subsection{Finishing up}
Recall the regret decomposition at the beginning of \pref{app: regret analysis}. From  \pref{lem:bound penalty}, \pref{lem: stability term for loss}, \pref{lem: bonus stability}, and \pref{lem:error term bound}, we have
\begin{align*}
\textbf{FTRL-Reg} &= \textbf{Penalty} + \textbf{Stability-1} + \textbf{Stability-2} + \textbf{Error}
%\\&\le \frac{2d\log(T)}{\eta_T} +  2d \sum_{t=1}^T \eta_t + \order(\delta T^2) +   d\sum_{t=1}^T \alpha_t + \order(\delta \sum_{t=1}^T \alpha_t T) + \order(1 + \sum_{i=1}^T \alpha_t)
\\&\le \order\left(\frac{d\log(T)}{\eta_T} + d \sum_{t=1}^T\eta_t + d \sum_{t=1}^T\alpha_t + \delta T^2 \right) 
\end{align*}
From \pref{lem: handling bias} and \pref{lem: bonus term}, we can cancel out the additional regret induced by bias through the well-designed bonus term. Namely,
\begin{align*}
\textbf{Bias} + \textbf{Bonus} &= \frac{1}{4}\sum_{t=1}^T \alpha_t \|x_t-u\|_{\hatSigma_t^{-1}}^2 + \order\left(\sum_{t=1}^T \frac{d^3\log(T/\delta)}{\alpha_t t} + \delta T^2\right) 
\\ &\qquad + 2(d+2)\sum_{t=1}^T\alpha_t - \frac{1}{4}\sum_{t=1}^T \alpha_t  \|u - x_t\|_{\hatSigma_t^{-1}}^2 + \order\left( \sum_{t=1}^T\frac{d^3\alpha_t\log{\frac{T}{\delta}}}{t} + \delta \sum_{t=1}^T \alpha_t T\right)
\\&= \order\left(d\sum_{t=1}^T \alpha_t +  \sum_{t=1}^T \frac{d^3\log(T/\delta)}{\alpha_t t}  + \sum_{t=1}^T\frac{d^3\alpha_t\log{(T/\delta)}}{t} + \delta T^2 \right)
\end{align*}
Thus, we have 
\begin{align*}
\Reg &= \textbf{Bias} + \textbf{Bonus} + \textbf{FTRL-Reg}
\\&= \order\left( \frac{d\log(T)}{\eta_T} + d \sum_{t=1}^T\eta_t + d \sum_{t=1}^T\alpha_t + \sum_{t=1}^T \frac{d^3\log(T/\delta)}{\alpha_t t}  + \sum_{t=1}^T\frac{d^3\alpha_t\log{(T/\delta)}}{t} + \delta T^2  \right)
\end{align*}
Recall that we have an additional condition in \pref{lem: bonus stability} such that for any $t$, $\eta_t \alpha_t \le \frac{1}{64t}$. Picking $\alpha_t = \frac{d}{\sqrt{t}}, \eta_t = \frac{1}{64d\sqrt{t}}$ and $\delta = \frac{1}{T^2}$, we get
\begin{equation*}
\Reg = \order\left(d^2\sqrt{T}\log(T) + d^4\log(T) \right) = \order(d^2\sqrt{T}\log(T)) 
\end{equation*}
where we assume $d^2\leq \sqrt{T}$ without loss of generality (otherwise the bound is vacuous).

% For $u$ such that $u = \E_{a \sim p^{\calA_0}}[a]$ with $\widehat{\Cov}(p^{\calA_0})\in \calH^{\calA_0}$, add up all above terms 

\section{Handling Misspecification}\label{app: misspecification}
In this section, we discuss how to handle misspecification as defined in \pref{sec: misspecification}. In \pref{app: known misspecification}, we study the case where the amount of misspecification $\eps$ is known by the learner. In \pref{app: unknown misspecification}, we use a blackbox approach to turn it into an algorithm that achieves almost the same regret bound (up to $\log T$ factors) without knowning $\eps$. 

\subsection{Known misspecification} \label{app: known misspecification}
As discussed in \pref{sec: misspecification}, when the amount of misspecification $\eps$ is known, we still use \pref{alg: FTRL}, but with different $\alpha_t$ and $\eta_t$. Throughout this subsection, we let $\alpha_t = \frac{d}{\sqrt{t}} + \frac{\eps}{\sqrt{d}}$ and  $\eta_t = \frac{1}{64\left(d\sqrt{t} + \frac{\eps}{\sqrt{d}}t\right)}$, and point out the modifications of the analysis from \pref{app: regret analysis}. 

We start with the regret decomposition similar to that in \pref{app: regret analysis}, but here we define 
\begin{align*}
    y_t &= \argmin_{y\in\mathbb{B}^d_2}\max_{\calA\in \supp(D)}\max_{a\in\calA}|f_t(a) - \langle a, y \rangle|, \\
    \eps_t &= \max_{\calA\in \supp(D)}\max_{a\in\calA}|f_t(a) - \langle a, y_t \rangle|, \\
    c_t(a) &= f_t(a) - \langle a, y_t \rangle. 
\end{align*}
The regret decomposition goes as follows: 
\begin{align*}
\Reg(u) 
&= \E\left[\sum_{t=1}^T 
\left(f_t(a_t) - f_t(u^{\calA_t})\right) \right]\\
&\leq \mathbb{E}\left[\sum_{t=1}^{T} \left\langle a_t-u^{\calA_t}, y_t \right\rangle\right] + \sum_{t=1}^T 
\eps_t \\
&\leq \mathbb{E}\left[\sum_{t=1}^{T} \left\langle \pmb{H}_t^{\calA_t}-\pmb{U}^{\calA_t}, \gamma_t \right\rangle\right] + \eps T=  \mathbb{E}\left[\sum_{t=1}^{T} \left\langle \pmb{H}_t^{\calA_0}-\pmb{U}^{\calA_0}, \gamma_t \right\rangle\right] + \eps T
\\
&\le 
\underbrace{\mathbb{E}\left[\sum_{t=1}^{T} \left\langle \pmb{H}_t^{\calA_0}-\pmb{U}^{\calA_0}, \gamma_t - \hat{\gamma}_t \right\rangle\right]}_{\textbf{Bias}}+
 \underbrace{\mathbb{E}\left[\sum_{t=1}^{T} \left\langle\pmb{H}_t^{\calA_0}-\pmb{U}^{\calA_0}, \alpha_t\pmb{\hatSigma}^{-1}_t \right\rangle\right]}_{\textbf{Bonus}} 
 \\
 &\qquad + 
 \underbrace{\mathbb{E}\left[\sum_{t=1}^{T} \left\langle\pmb{H}_t^{\calA_0}-\pmb{U}^{\calA_0},\hat{\gamma}_t -  \alpha_t\pmb{\hatSigma}^{-1}_t \right\rangle\right]}_{\textbf{FTRL-Reg}} + \eps T. 
 \end{align*}

Now $\hat{y}_t = \hatSigma_t^{-1}(a_t - \hat{x}_t)\ell_t$ with $\E[\ell_t]=a_t^\top y_t + c_t(a_t)$. 

For the \textbf{Bias} term, the proof is almost the same as \pref{lem: handling bias}. The only difference is that from the fourth line, we have 
\begin{align*}
    \E_t\left[ (x_t - u)^\top\left(y_t - \hatSigma_t^{-1}(a_t-\hat{x}_t)\left(a_t^\top y_t + c_t(a_t)\right) \right) \right] 
\end{align*}
for some $c_t(a_t)$ such that $|c_t(a_t)|\leq \eps_t$. This leads to an additional term of 
\begin{align*}
    &\E_t^{\calE}\left[- (x_t-u)^\top \hatSigma_t^{-1} (a_t-\hat{x}_t)c_t(a_t) \right] \\
    &\leq \E_t^{\calE}\left[\sqrt{(x_t-u)^\top \hatSigma_t^{-1} c_t(a_t)^2(a_t-\hatx_t)(a_t-\hatx_t)^\top \hatSigma_t^{-1} (x_t-u) } \right] \\
    &\leq \E_t^{\calE}\left[\sqrt{(x_t-u)^\top \hatSigma_t^{-1} \E_{\calA_t,a_t} \left[c_t(a_t)^2(a_t-\hatx_t)(a_t-\hatx_t)^\top \right]\hatSigma_t^{-1} (x_t-u) } \right] \\
    &\leq \E_t^{\calE}\left[\eps_t\sqrt{(x_t-u)^\top \hatSigma_t^{-1} \left(\E_{\calA_t,a_t} \left[(a_t-\hatx_t)(a_t-\hatx_t)^\top \right] \right)\hatSigma_t^{-1} (x_t-u) } \right] \\
    &\leq \E_t^{\calE}\left[\eps_t \sqrt{(x_t-u)^\top \hatSigma_t^{-1} 
H_t \hatSigma_t^{-1}(x_t-u) }\right] \\
&\leq \eps_t  \|x_t-u\|_{\hatSigma_t^{-1}}%  \\
%&\leq \alpha_t \|x_t-u\|^2_{\hatSigma_t^{-1}} + \order\left(\frac{\eps_t^2}{\alpha_t}\right)
\end{align*}

% We will pick $\alpha_t = \frac{d}{\sqrt{t}} + \frac{\eps}{\sqrt{d}}$. $\eta=\frac{1}{d\sqrt{t} + \frac{\eps}{\sqrt{d}}t}$

% For the \textbf{Bias} term, the proof is almost the same as \pref{lem: handling bias}. The only difference is that from the fourth line, we have 
% \begin{align*}
%     \E_t\left[ (x_t - u)^\top\left(y_t - \hatSigma_t^{-1}(a_t-\hat{x}_t)\left(a_t^\top y_t + c_t(a_t)\right) \right) \right] 
% \end{align*}
% for some $c_t(a_t)$ such that $|c_t(a_t)|\leq \eps_t$. This leads to an additional term of 
% \begin{align*}
%     &\E_t^{\calE}\left[- (x_t-u)^\top \hatSigma_t^{-1} (a_t-\hat{x}_t)c_t(a_t) \right] \\
%     &= \E_t^{\calE}\left[- (x_t-u)^\top \hatSigma_t^{-1} (x_t-\hat{x}_t)c_t(a_t) \right]  \tag{taking expectation of $a_t$}
%     \\&\leq \E_t^{\calE}\left[\left|(x_t-u)^\top \hatSigma_t^{-1} (x_t-\hat{x}_t)c_t(a_t)\right| \right]
%     \\&\le \eps_t \E_t^{\calE}\left[\left|(x_t-u)^\top \hatSigma_t^{-1} (x_t-\hat{x}_t)\right| \right]
%     \\&\le \eps_t \E_t^{\calE}\left[ \|x_t-u\|_{\hatSigma_t^{-1}}  \|x_t-\hat{x}_t\|_{\hatSigma_t^{-1}} \right]
% \\&\leq \order\left(\eps_t \sqrt{\frac{d^3\log(T/\delta)}{t}} \right) \|x_t-u\|_{\hatSigma_t^{-1}}\tag{\pref{lem: local norm concentration for vectors} given $\calE_{t-1}$} 
% \end{align*}
Plugging it into the proof of \pref{lem: handling bias}, we have
\begin{align*}
 \E_t^{\calE}\left[ \left\langle \pmb{H}_t^{\calA_0} - \pmb{U}^{\calA_0}, \gamma_t - \hat{\gamma}_t \right\rangle \right] &\le \order\left(\sqrt{\frac{d^3\log(T/\delta)}{t}}  + \eps_t \right) \|x_t-u\|_{\hatSigma_t^{-1}}
 \\& \le  \frac{\alpha_t}{4}\|x_t-u\|_{\hatSigma_t^{-1}}^2 +  \order\left(\frac{d^3\log(T/\delta)}{\alpha_t t}  + \frac{\eps_t^2 }{\alpha_t}\right)
\end{align*}

%On the other hand, since  $\|\hat{y}_t\|_2 \le \order(T)$,  we have trivial bound
%\begin{align*}
%\E_t\left[ \left\langle \pmb{H}_t^{\calA_0} - \pmb{U}^{\calA_0}, \gamma_t - \hat{\gamma}_t \right\rangle ~|~  \overline{\calE_{t-1}} \right] \le \order(T)
%\end{align*}

Other parts of the proof follow those in \pref{lem: handling bias}. Finally, we get 
 \begin{align*}
    \textbf{\textup{Bias}} &= \E\left[\sum_{t=1}^T \left\langle \pmb{H}_t^{\calA_0} - \pmb{U}^{\calA_0}, \gamma_t - \hat{\gamma}_t \right\rangle \right] 
    %\\&= \E\left[\sum_{t=1}^T \E_t\left[ \left\langle \pmb{H}_t^{\calA_0} - \pmb{U}^{\calA_0}, \gamma_t - \hat{\gamma}_t \right\rangle \right] \right]
    %\\&=  \E\left[\sum_{t=1}^T \E_t\left[ \left\langle \pmb{H}_t^{\calA_0} - \pmb{U}^{\calA_0}, \gamma_t - \hat{\gamma}_t \right\rangle ~|~ \calE_{t-1}\right]\ind\{\calE_{t-1}\}\right] + \E\left[ \sum_{t=1}^T \E_t\left[ \left\langle \pmb{H}_t^{\calA_0} - \pmb{U}^{\calA_0}, \gamma_t - \hat{\gamma}_t \right\rangle ~|~ \overline{\calE_{t-1}} \right]\ind\{\overline{\calE_{t-1}}\}\right] 
 \\&\le \frac{1}{4}\sum_{t=1}^T \alpha_t \|x_t-u\|_{\hatSigma_t^{-1}}^2 + \order\left(\sum_{t=1}^T \frac{d^3\log(T/\delta)}{\alpha_t t} + \sum_{t=1}^T \frac{\eps_t^2 }{\alpha_t} + \delta  T^2\right) 
\end{align*}

The \textbf{Bonus} term will not be affected, according to \pref{lem: bonus term}, we have
\begin{align*}
\textbf{\textup{Bonus}} \le 2(d+2)\sum_{t=1}^T\alpha_t - \frac{1}{4}\sum_{t=1}^T \alpha_t  \|u - x_t\|_{\hatSigma_t^{-1}}^2 + \order\left( \sum_{t=1}^T\frac{d^3\alpha_t\log{(T/\delta))}}{t} + \delta T^2\right)
\end{align*}

The \textbf{Penalty} term will not be affected, according to \pref{lem:bound penalty}, we have
\begin{equation*}
\frac{F(\overline{\pmb{U}}^{\calA_0}) - \min_{\pmb{H} \in \mathcal{H}^{\calA_0}}F(\pmb{H})}{\eta_T} \le \frac{2d\log(T)}{\eta_T}
\end{equation*}

\textbf{Stability-1} term is also unchanged, as we assume that $\ell_t$ still lies in $[-1,1]$ even under misspecification. We still have 
\begin{align*}
    \textbf{\textup{Stability-1}} 
    \le \order\left(d \sum_{t=1}^T \eta_t +  \delta T^2\right)
\end{align*}

The \textbf{Stability-2} term will not be affected as long as $\eta_t \alpha_t \le \frac{1}{64t}$. According to \pref{lem: bonus stability}, we have
\begin{equation*}
    \textbf{\textup{Stability-2}} \le \order\left(d\sum_{t=1}^T \alpha_t + \delta T^2\right)
\end{equation*}

The \textbf{Error} term is also unaffected. We still have $\textbf{Error} = \order(1)$. 
%the proof is almost the same as \pref{lem:error term bound}, for some $c_t(a_t)$ such that $|c_t(a_t)|\leq \eps_t = \order(1)$, we have 
\iffalse
\begin{align*}
\textbf{\textup{Error}} &= \mathbb{E}\left[\sum_{t=1}^{T} \left\langle\overline{\pmb{U}}^{\calA_0}-\pmb{U}^{\calA_0},\hat{\gamma}_t -  \alpha_t\pmb{\hatSigma}^{-1}_t \right\rangle\right] 
\\&= \mathbb{E}\left[\sum_{t=1}^{T} \left\langle\frac{1}{T^2}(\overline{\pmb{U}}^{\calA_0}-\pmb{H}_*^{\calA_0}), \hat{\gamma}_t -  \alpha_t\pmb{\hatSigma}^{-1}_t \right\rangle\right]
\\&\le \frac{1}{T^2}\order(T^2 + \sum_{i=1}^T \alpha_t T)\tag{$\|\hat{y}_t\| \le \order(T)$ and $\pmb{\hatSigma}_t \succeq \frac{1}{T}\pmb{I}$}
\\&= \order(1 + \sum_{i=1}^T \frac{\alpha_t}{T})
\end{align*}
\fi

Adding these terms together, the regret caused by bias and the negative term induced by bonus cancel out. We have
% \begin{align*}
% &\Reg =
% \\&\order\left( \frac{d\log(T)}{\eta_T} + d \sum_{t=1}^T(1+\eps_t)^2\eta_t + d \sum_{t=1}^T\alpha_t + \sum_{t=1}^T \frac{d^3\log(T/\delta)}{\alpha_t t}  + \sum_{t=1}^T\frac{d^3\alpha_t\log{(T/\delta)}}{t} + \sum_{t=1}^T \frac{\eps_t^2 }{\alpha_t} + \delta T\sum_{t=1}^T(1+\eps_t+\alpha_t) \right)
% \end{align*}
\begin{align*}
\Reg =\order\left( \frac{d\log(T)}{\eta_T} + d \sum_{t=1}^T(\eta_t + \alpha_t) + \sum_{t=1}^T \frac{d^3\log(T/\delta)}{\alpha_t t}  + \sum_{t=1}^T\frac{d^3\alpha_t\log{(T/\delta)}}{t} + \sum_{t=1}^T \frac{\eps_t^2 }{\alpha_t} + \delta T^2 \right)
\end{align*}
% It seems we can not pick $\alpha_t = \frac{d}{\sqrt{t}} + \frac{\eps}{\sqrt{d}}$. $\eta_t =\frac{1}{d\sqrt{t} + \frac{\eps}{\sqrt{d}}}$ to get the optimal solution?

% Picking $\alpha_t  = \frac{d}{\sqrt{t}}$ and $\eta_t = \frac{1}{d\sqrt{t}}$ will get $\otil(d^2\sqrt{T} + \eps T)$ 
%\CW{This choice makes $\sum_{t} \frac{\epsilon_t^2}{\alpha_t}=\frac{1}{d}\sum_t \epsilon_t^2 \sqrt{t}$ too large} 
 
Recall that we pick $\alpha_t = \frac{d}{\sqrt{t}} + \frac{\eps}{\sqrt{d}}$. $\eta_t = \frac{1}{64d\sqrt{t} + 64\frac{\eps}{\sqrt{d}}t}$ and $\delta = \frac{1}{T^2}$. This gives
\begin{equation*}
\Reg = \order(d^2\sqrt{T}\log(T) + d^4\log(T) + \sqrt{d}\eps T) = \order(d^2\sqrt{T}\log(T)+ \sqrt{d}\eps T) 
\end{equation*}
where we assume $d^2 \leq \sqrt{T}$ without loss of generality. 

%\begin{align*}
%    &\E_t\left[- (x_t-u)^\top \hatSigma_t^{-1} (a_t-\hat{x}_t)c_t(a_t) \right] \\
%    &\leq \eps_t\E_t\left[\|x_t-u\|_{\hatSigma_t^{-1}} \|a_t-\hat{x}_t\|_{\hatSigma_t^{-1}}  \right] \\
%    &\leq \alpha_t \|x_t-u\|^2_{\hatSigma_t^{-1}} + \frac{\eps_t^2}{\alpha_t} \E_t\left[\|a_t - \hat{x}_t\|^2_{\hatSigma_t^{-1}}\right] \\
%    &\leq \alpha_t \|x_t-u\|^2_{\hatSigma_t^{-1}} + \frac{\eps_t^2}{\alpha_t} \E_t\left[\|a_t - x_t\|^2_{\hatSigma_t^{-1}} + \|x_t - \hat{x}_t\|^2_{\hatSigma_t^{-1}}\right] \\
%    &\lesssim \alpha_t \|x_t-u\|^2_{\hatSigma_t^{-1}} + \frac{\eps_t^2}{\alpha_t}\left(\tr(H_t\hatSigma_t^{-1})\right) \\
%    &\lesssim \alpha_t \|x_t-u\|^2_{\hatSigma_t^{-1}} + \frac{d\eps_t^2}{\alpha_t}
%\end{align*}

\subsection{Unknown misspecification} \label{app: unknown misspecification}

In this subsection, we use a model selection technique to convert the algorithm in \pref{app: known misspecification} which requires knowledge on $\eps$ into an algorithm that achieves a similar regret bound without knowing $\eps$. Such a procedure to handle unknown misspecification/corruption has appeared in several previous works \citep{foster2020adapting, wei2022model}, though we adopt the technique in \cite{jin2023no} to handle the adversarial case. %\footnote{Since \cite{jin2023no} has not been published, for completeness, we restate all their results in \pref{app: unknown misspecification}. The goal is to use their reduction idea to handle the unknown misspecification case. We do not claim our contribution in the reduction idea. }

The idea here is a black-box reduction which turns an algorithm that only deals with known $\eps$ to one that handles unknown $\eps$. 
More specifically, the reduction has two layers. The bottom layer takes as input an arbitrary misspecification-robust algorithm that operates under known $\eps$ (e.g., \pref{alg: FTRL}), and outputs a \emph{stable} misspecification-robust algorithm (formally defined later) that still operates under known $\eps$. The top layer follows the standard Corral idea and takes as input a stable algorithm that operates under known $\eps$, and outputs an algorithm that operates under unknown $\eps$. Below, we explain these two layers of reduction in details. 

\paragraph{Bottom Layer (from an Arbitrary Algorithm to a Stable Algorithm)}

The input of the bottom layer is an arbitrary misspecification-robust algorithm, formally defined as: 
\begin{definition}\label{def: base alg}
An algorithm is misspecification-robust if it takes $\theta$ as input, and achieves the following regret for any random stopping time $t'\leq T$ and any policy~$u$: 
    \begin{align*}
        \E\left[\sum_{t=1}^{t'}  
(f_t(a_t) - f_t(u^{\calA_t})) \right] \leq \E\left[c_1 \sqrt{ t'} + c_2  \theta \right] + \Pr\left[\eps_{1;t'}>\theta\right] T 
    \end{align*}
    for problem-dependent and $\log(T)$ factors $c_1, c_2\geq 1$ and $\eps_{1:t'}\triangleq \sqrt{t'\sum_{\tau=1}^{t'}\eps_\tau^2}$. 
\end{definition}
In our case, $c_1 = \Theta(d^2\log T)$ and $c_2=\Theta(\sqrt{d})$. 
While the regret bound in \pref{def: base alg} might look cumbersome, it is in fact fairly reasonable: if the guess $\theta$ is not smaller than the true amount of $\eps_{1:t'}$, the regret should be of order $d^2\sqrt{t'}+\sqrt{d}\theta$; otherwise, the regret bound is vacuous since $T$ is its largest possible value.
The only extra requirement is that the algorithm needs to be \emph{anytime} (i.e., the regret bound holds for any stopping time $t'$),
but even this is known to be easily achievable by using a doubling trick over a fixed-time algorithm.
It is then clear that \pref{alg: FTRL} (together with a doubling trick) indeed satisfies \pref{def: base alg}. 

As mentioned, the output of the bottom layer is a stable robust algorithm. To characterize stability, we follow~\cite{agarwal2017corralling} and define a new learning protocol that abstracts the interaction between the output algorithm of the bottom layer and the master algorithm from the top layer:
\begin{protocol}\label{proto: stable protocol}
    \textup{In every round $t$, before the learner makes a decision, a probability $w_t\in[0,1]$ is revealed to the learner. After making a decision, the learner sees the desired feedback from the environment with probability $w_t$, and sees nothing with probability $1-w_t$. 
}
\end{protocol}
 
One can convert any misspecification-robust algorithm (defined in \pref{def: base alg}) into a stable misspecification-robust algorithm (characterized in \pref{thm: conversion thm}). 

This conversion is achieved by a procedure that called \stable (see \pref{alg: split} for details).
The high-level idea of \stable is as follows. 
Noticing that the challenge when learning in \pref{proto: stable protocol} is that $w_t$ varies over time,
we discretize the value of $w_t$ and instantiate one instance of the input algorithm to deal with one possible discretized value, so that it is learning in \pref{proto: stable protocol} but with a \emph{fixed} $w_t$, making it straightforward to bound its regret based on what it promises in \pref{def: base alg}.

\begin{algorithm}[t]
    \caption{\textbf{ST}able \textbf{A}lgorithm \textbf{B}y \textbf{I}ndependent \textbf{L}earners and \textbf{I}nstance \textbf{SE}lection (\stable)} \label{alg: split}
    \textbf{Input}: $\eps$ and a base algorithm satisfying \pref{def: base alg}. \\ 
    %\textbf{Define}: %Let $\cmax=2L$ be a global upper bound of $C_t$.  
    %\\
    \textbf{Initialize}: $\lceil \log_2 T \rceil$ instances of the base algorithm $\alg_{1}, \ldots, \alg_{\lceil \log_2 T \rceil}$, where $\alg_{j}$ is configured with the parameter 
    \begin{align*}
        \theta=\theta_j \triangleq 2^{-j}\eps T + 4\sqrt{2^{-j}T\log T} + 8\log(T). 
    \end{align*}
    
    \For{$t=1,2,\ldots$}{
         Receive $w_t$. \\
         \If{$w_t\leq \frac{1}{T}$}{
            play an arbitrary policy $\pi_t$ \\
            \textbf{continue} (without updating any instances)
         }
         Let $j_t$ be such that $w_t\in (2^{-j_t-1}, 2^{-j_t}]$. \\
         Let $\pi_t$ be the policy suggested by $\alg_{j_t}$. \\
         Output $\pi_t$. \\
         If feedback is received, send it to $\alg_{j_t}$ with probability $\frac{2^{-j_t-1}}{w_t}$, and discard it otherwise. 
    }
\end{algorithm}

More concretely, \stable instantiates $\order(\log_2 T)$ instances $\{\alg_j\}_{j=0}^{\lceil\log_2 T\rceil}$ of the input algorithm that satisfies \pref{def: base alg}, each with a different parameter $\theta_j$. 
Upon receiving $w_t$ from the environment, it dispatches round $t$ to the $j$-th instance where $j$ is such that $w_t\in(2^{-j-1}, 2^{-j}]$, and uses the policy generated by $\alg_{j}$ to interact with the environment (if $w_t\leq \frac{1}{T}$, simply ignore this round). 
Based on \pref{proto: stable protocol}, the feedback for this round is received with probability $w_t$.
To \emph{equalize} the probability of $\alg_j$ receiving feedback as mentioned in the high-level idea, 
when the feedback is actually obtained,
\stable sends it to $\alg_j$ only with probability $\frac{2^{-j-1}}{w_t}$ (and discards it otherwise). 
This way, every time $\alg_j$ is assigned to a round, it always receives the desired feedback with probability $w_t\cdot \frac{2^{-j-1}}{w_t}=2^{-j-1}$. This equalization step allows us to use the original guarantee of the base algorithm (\pref{def: base alg}) and run it as it is, without requiring it to perform extra importance weighting steps as in \cite{agarwal2017corralling}. 

The choice of $\theta_j$ is crucial in making sure that \stable only has $\eps T$ regret overhead instead of $\frac{\eps T}{\min_{t\in[T]}w_t}$. 
Since $\alg_j$ only receives feedback with probability $2^{-j-1}$,
the expected total misspecification it experiences is on the order of $2^{-j-1}\eps T$. Therefore, its input parameter $\theta_j$ only needs to be of this order instead of the total amount of misspecification $\eps T$. 

The formal guarantee of the conversion is stated in the following \pref{thm: conversion thm}. 

\begin{theorem}\label{thm: conversion thm}
    If an algorithm is misspecification robust according to \pref{def: base alg} for some constants $(c_1, c_2)$, then \pref{alg: split} ensures 
    \begin{align*}
        \Reg \leq \order\left(\E\left[c_1'\sqrt{T \rho_T} \right] + c_2' \eps T\right)
    \end{align*}
    under \pref{proto: stable protocol}, where $\rho_T=\frac{1}{\min_{t\in[T]}w_t}$, with $c_1'= \Theta((c_1+c_2)\sqrt{\log T})$.  
\end{theorem}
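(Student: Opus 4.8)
The plan is to prove \pref{thm: conversion thm} by analyzing the regret of \stable as a sum of the regrets of its constituent instances $\alg_0,\ldots,\alg_{\lceil\log_2 T\rceil}$, each of which runs in an environment with a \emph{fixed} effective feedback probability. First I would fix an arbitrary comparator policy $u$ and decompose the total regret of \stable over the $T$ rounds according to which instance $\alg_j$ was active in each round. Let $S_j\subseteq[T]$ be the (random) set of rounds in which $j_t=j$ (so $w_t\in(2^{-j-1},2^{-j}]$), and let $S_\perp$ be the rounds discarded because $w_t\le 1/T$. The rounds in $S_\perp$ contribute at most $\sum_{t\in S_\perp} 2 w_t \cdot \text{(something)}$ — more precisely, since the per-round regret is bounded and these rounds have $w_t\le 1/T$, and there are at most $T$ of them, the total contribution is $\order(1)$ after accounting that actually the regret contribution of a round to the master is weighted appropriately; I would handle this as a negligible $\order(1)$ term. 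For each $j$, the rounds in $S_j$ form a sub-sequence on which $\alg_j$ is run; crucially, within $S_j$, conditioned on the round being assigned to $\alg_j$, $\alg_j$ receives feedback with probability exactly $w_t\cdot\frac{2^{-j-1}}{w_t}=2^{-j-1}$, independent of $t$. So from $\alg_j$'s internal perspective it is interacting with a standard environment (with its own independent randomness for whether feedback arrives), and the ``effective'' misspecification it sees on each of its update rounds is still bounded by $\eps_t$.

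Next I would invoke \pref{def: base alg} for each $\alg_j$. The key point is to identify the random stopping time: let $t'_j=|S_j|$ be the number of rounds assigned to $\alg_j$, which is a valid (random) stopping time in $\alg_j$'s filtration. Applying \pref{def: base alg} with parameter $\theta_j$,
\begin{align*}
    \E\left[\sum_{t\in S_j}\bigl(f_t(a_t)-f_t(u^{\calA_t})\bigr)\right] \leq \E\left[c_1\sqrt{|S_j|} + c_2\theta_j\right] + \Pr\left[\eps_{1:t'_j}^{(j)}>\theta_j\right]\cdot T,
\end{align*}
where $\eps_{1:t'_j}^{(j)}=\sqrt{|S_j|\sum_{t\in S_j}\eps_t^2}$ is the misspecification accumulated over $\alg_j$'s assigned rounds (or, more carefully, over its \emph{update} rounds, which only makes this smaller). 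The heart of the argument is to show the failure probability $\Pr[\eps_{1:t'_j}^{(j)}>\theta_j]$ is $\order(1/T)$ given the choice $\theta_j=2^{-j}\eps T+4\sqrt{2^{-j}T\log T}+8\log T$. Since $\eps_t\le 1$ and $\eps_t^2\le\eps_t\cdot\max_\tau\eps_\tau\le\ldots$ — actually I would use $\sum_{t\in S_j}\eps_t^2\le\sum_{t\in S_j}\eps_t \le \frac{1}{2^{-j-1}}\cdot$(expected feedback mass), but cleaner: note $\sum_{t=1}^T\eps_t^2\le T\eps^2$ is NOT what we want; rather $\sum_{t\in S_j}\eps_t^2 \le |S_j|$ and also $|S_j|\le$ the number of rounds with $w_t>2^{-j-1}$. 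Hmm — the cleanest route: observe $\eps_{1:T}\triangleq\sqrt{T\sum_t\eps_t^2}\le\sqrt{T\cdot T\eps^2}=\eps T$ by the misspecification assumption $\frac1T\sum\eps_t^2\le\eps^2$. So $\sum_{t\in S_j}\eps_t^2\le\sum_{t=1}^T\eps_t^2\le T\eps^2$, hence $\eps_{1:t'_j}^{(j)}=\sqrt{|S_j|\sum_{t\in S_j}\eps_t^2}\le\sqrt{|S_j|\cdot T\eps^2}=\eps T\sqrt{|S_j|/T}$. Now I need $|S_j|$ concentrated: each round $t$ is in $S_j$ deterministically given $w_t$ (which is revealed by the master), so actually $|S_j|$ is \emph{not} random from \stable's side — it is determined by the master's weight sequence. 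But the master's weights \emph{are} random. The relevant fact is $|S_j|\le$ (number of rounds where $w_t\ge 2^{-j-1}$). One then argues, via the master's guarantee, that $\sum_t w_t$ concentrates, so $\E[|S_j|]\lesssim 2^{j}\cdot\text{(budget)}$; but for a clean statement I would instead just use the deterministic bound $\sum_j 2^{-j}|S_j|\le\sum_t w_t\le T$ and Cauchy–Schwarz. Concretely, $\sqrt{|S_j|/T}\le 1$ always, so $\eps_{1:t'_j}^{(j)}\le\eps T$; but we want it $\le\theta_j\approx 2^{-j}\eps T$. This requires $|S_j|\lesssim 2^{-2j}T$ up to logs, which follows because... actually $|S_j|$ is the count of rounds with $w_t\in(2^{-j-1},2^{-j}]$, and $\sum_t w_t\le T$ forces $\sum_j 2^{-j-1}|S_j|\le T$, i.e. $2^{-j-1}|S_j|\le T$, giving only $|S_j|\le 2^{j+1}T$ which is too weak. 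So the concentration must come from the master's \emph{regret} guarantee bounding how often a poorly-performing arm is pulled — this is exactly the Corral analysis. I would therefore defer the precise $|S_j|$ control to the top-layer Corral analysis and here only establish the \emph{conditional} statement: given the weight sequence, $\Pr[\eps_{1:t'_j}^{(j)}>\theta_j\mid |S_j|=n_j]$ — but $\eps_{1:t'_j}^{(j)}$ is \emph{deterministic} given the weight sequence and the adversary's losses (no internal randomness), so the failure probability is either $0$ or $1$. The design of $\theta_j$ with the additive $2^{-j}\eps T$ term is precisely so that it dominates $\eps T\sqrt{n_j/T}$ whenever $n_j\lesssim 2^{-2j}T+\text{poly}\log$, and the extra $\sqrt{2^{-j}T\log T}+\log T$ slack absorbs the case $n_j$ is moderately larger; on the bad event $n_j$ very large, we pay the $\Pr[\cdot]T$ term but show via a union bound that this bad event has probability $\order(1/T)$ over the master's randomness.

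Given these pieces, the rest is assembly. Summing over $j$ and using $\sum_j|S_j|\le T$ with Cauchy–Schwarz gives $\sum_j c_1\sqrt{|S_j|}\le c_1\sqrt{(\lceil\log_2 T\rceil+1)\sum_j|S_j|}\le c_1\sqrt{T\log T}$ — but we want the $\sqrt{T\rho_T}$ form. Here I would instead bound $\sqrt{|S_j|}$ by relating to $\rho_T=1/\min_t w_t$: only instances $\alg_j$ with $2^{-j}\ge \min_t w_t$, i.e. $j\le\log_2\rho_T$, are ever used, so there are only $\order(\log\rho_T)$ active instances, and $\sum_{j:\text{active}}\sqrt{|S_j|}\le\sqrt{\log\rho_T\cdot\sum_j|S_j|}\le\sqrt{T\log\rho_T}$; combined with a crude $\sqrt{T}\le\sqrt{T\rho_T}$ this yields the $c_1'\sqrt{T\rho_T}$ with $c_1'=\Theta((c_1+c_2)\sqrt{\log T})$. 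For the $c_2'\eps T$ term: $\sum_j c_2\theta_j = c_2\sum_j(2^{-j}\eps T+4\sqrt{2^{-j}T\log T}+8\log T)\le c_2(2\eps T + \order(\sqrt{T\log T})+\order(\log^2 T))=\order(c_2\eps T)+\order((c_1+c_2)\sqrt{T\log T})$, using $\sum_j 2^{-j}\le 2$ and $\sum_j\sqrt{2^{-j}}=\order(1)$. Finally the failure terms $\sum_j\Pr[\cdot]T$ total $\order(1)$ by the union bound argument above, and the $S_\perp$ rounds contribute $\order(1)$. Collecting everything: $\Reg\le\order(\E[c_1'\sqrt{T\rho_T}]+c_2'\eps T)$. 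The main obstacle I anticipate is rigorously controlling $|S_j|$ / the failure event $\{\eps_{1:t'_j}^{(j)}>\theta_j\}$: because \stable's behavior given the master's weights is deterministic in the relevant quantities, one must be careful about which filtration the ``random stopping time'' in \pref{def: base alg} lives in, and must show that the weight sequences produced by the (to-be-specified) master concentrate enough that the $\theta_j$ slack suffices — this is the point where the argument genuinely interacts with the top-layer Corral regret bound rather than being self-contained, and I would either carry a conditional statement or fold in the needed concentration from \cite{agarwal2017corralling,jin2023no}.
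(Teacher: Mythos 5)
Your decomposition by instance $j$ is the right start, and you correctly intuit that $\theta_j$ is tuned so that $\alg_j$'s accumulated misspecification should be $\approx 2^{-j}\eps T$. However, the central mechanism you need — and the one the paper actually uses — is missing, and the detours you take to compensate are wrong.

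\textbf{Misidentified source of randomness.} You claim that $\eps_{1:t'_j}^{(j)}$ is ``deterministic given the weight sequence and the adversary's losses (no internal randomness), so the failure probability is either $0$ or $1$.'' This is false, and the error cascades. Define $g_{t,j}=\ind\{w_t\in(2^{-j-1},2^{-j}]\}$ and $h_{t,j}=\ind\{\alg_j\text{ receives feedback at round }t\}$. The quantity controlled by $\theta_j$ in Definition~\ref{def: base alg} is $\sqrt{\left(\sum_t g_{t,j}h_{t,j}\right)\left(\sum_t\eps_t^2 g_{t,j}h_{t,j}\right)}$, indexed by $\alg_j$'s \emph{internal} time $\sum_t g_{t,j}h_{t,j}$ (the number of update rounds), not $|S_j|=\sum_t g_{t,j}$. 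The $h_{t,j}$'s are genuinely random: even given the weights, \stable flips independent coins to decide forwarding. You even write $\eps_{1:t'_j}^{(j)}=\sqrt{|S_j|\sum_{t\in S_j}\eps_t^2}$, which is the wrong quantity (over assigned rounds rather than update rounds); you note parenthetically that restricting to update rounds ``only makes this smaller,'' but that remark gives away the entire mechanism.

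\textbf{The equalization is the whole point.} Because \stable forwards with probability $2^{-j-1}/w_t$, each round in $S_j$ triggers an update for $\alg_j$ with \emph{fixed} probability $\E[h_{t,j}\mid g_{t,j}]=2^{-j-1}g_{t,j}$. This downsampling does two things that your proposal cannot do without it. (i)~It makes the accumulated misspecification concentrate: by Freedman's inequality, $\sum_t\eps_t^2 g_{t,j}h_{t,j}\lesssim 2^{-j}\eps^2 T+\log T$ and $\sum_t g_{t,j}h_{t,j}\lesssim 2^{-j}T+\log T$ with probability $1-\order(1/T^2)$, so the failure event $\{\eps_{1:t'_j}^{(j)}>\theta_j\}$ is rare regardless of how large $|S_j|$ is. Your alternative route — bounding $\eps_{1:t'_j}^{(j)}\le\eps T\sqrt{|S_j|/T}$ and then needing $|S_j|\lesssim 2^{-2j}T$ — is correct to flag as not derivable from within \stable, but the conclusion you draw (that one must defer to the Corral master) is wrong: the paper's argument is fully self-contained precisely because the downsampling makes $|S_j|$ control unnecessary. (ii)~The equalization also supplies the factor that turns $\alg_j$'s internal regret into regret over $S_j$: because $a_t$ is chosen before the forwarding coin flip, $\E\left[\sum_t\left(f_t(a_t)-f_t(u^{\calA_t})\right)g_{t,j}h_{t,j}\right]=2^{-j-1}\E\left[\sum_t\left(f_t(a_t)-f_t(u^{\calA_t})\right)g_{t,j}\right]$, and dividing by $2^{-j-1}$, then using $1/2^{-j-1}\le 2/w_t$ on $S_j$, produces $c_1\sqrt{\sum_t 2g_{t,j}/w_t}$, which is where the $\sqrt{T\rho_T}$ genuinely comes from. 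Your direct assertion that $\E[\sum_{t\in S_j}(f_t(a_t)-f_t(u^{\calA_t}))]\le\E[c_1\sqrt{|S_j|}+c_2\theta_j]+\ldots$ is not a valid application of Definition~\ref{def: base alg} (which bounds regret over $\alg_j$'s internal timeline, not over the possibly much larger assignment set), and it also silently drops the $\Theta(2^j)$ inflation factor, which then forces the hand-wave at the end where you try to manufacture the $\sqrt{T\rho_T}$ dependence from $\sqrt{T\log\rho_T}$.

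In short: identify the stopping time as $\sum_t g_{t,j}h_{t,j}$, use the fact that $h_{t,j}\mid g_{t,j}$ is Bernoulli$(2^{-j-1})$ to (a)~run Freedman and bound the failure probability by $\order(1/T^2)$ and (b)~rescale the regret from update rounds to assigned rounds, and the $\sqrt{T\rho_T}$ and $\eps T$ terms then fall out by summing over $j$ with Cauchy--Schwarz. No reference to the top-layer master is required.
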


\begin{proof}[Proof of \pref{thm: conversion thm}]
    Define indicators 
    \begin{align*}
        g_{t,j} &= \ind\{w_t\in (2^{-j-1}, 2^{-j}]\} \\
        h_{t,j} &= \ind\{\alg_j \text{\ receives the feedback for episode $t$}\}. 
    \end{align*}
    Now we consider the regret of $\alg_j$. Notice that $\alg_j$ makes an update only when $g_{t,j}h_{t,j}=1$. By the guarantee of the base algorithm (\pref{def: base alg}), we have 
    \begin{align}
        &\E\left[\sum_{t=1}^T (f_t(a_t) - f_t(u^{\calA_t})) g_{t,j}h_{t,j} \right]  \nonumber \\
        &\leq \E\left[c_1 \sqrt{\sum_{t=1}^T g_{t,j}h_{t,j} }+ c_2\theta_j \max_{t\leq T}g_{t,j}\right] + \Pr\left[ 
\sqrt{\left(\sum_{t=1}^T  g_{t,j}h_{t,j}\right)\left(\sum_{t=1}^T \eps_t^2 g_{t,j}h_{t,j}\right)} > \theta_j \right]T.   \label{eq: tmp 22}
    \end{align}
    We first bound the last term: Notice that $\E[h_{t,j}| g_{t,j}]=2^{-j-1} g_{t,j}$ by \pref{alg: split}. Therefore, 
    \begin{align}
        \sum_{t=1}^T \eps_t^2 g_{t,j}\E[h_{t,j}|g_{t,j}] &= 2^{-j-1}\sum_{t=1}^T \eps_t^2 g_{t,j} \leq 2^{-j-1}\eps^2 T \label{eq: tmp 33} \\
        \sum_{t=1}^T g_{t,j}\E[h_{t,j}|g_{t,j}] &= 2^{-j-1}\sum_{t=1}^T g_{t,j} \leq 2^{-j-1} T \label{eq: tmp 38}
    \end{align}
    By Freedman's inequality, with probability at least $1-\frac{1}{T^2}$, 
    \begin{align*}
         &\sum_{t=1}^T \eps_t^2 g_{t,j}h_{t,j} - \sum_{t=1}^T \eps_t^2 g_{t,j} \E[h_{t,j}| g_{t,j}] \\
         &\leq 2\sqrt{\sum_{t=1}^T (\eps_t)^4 g_{t,j}  \E [h_{t,j}|g_{t,j}]\log (T)} + 4 \log(T) \\
         &\leq 4\sqrt{\sum_{t=1}^T \eps_t^2 g_{t,j}  \E [h_{t,j}|g_{t,j}]\log (T)} + 4\log(T) \\
         &\leq \sum_{t=1}^T \eps_t^2 g_{t,j}  \E [h_{t,j}|g_{t,j}] + 8 \log(T)   \tag{AM-GM inequality}
    \end{align*}
    which gives 
    \begin{align*}
        \sum_{t=1}^T \eps_t^2 g_{t,j}h_{t,j} &\leq 2\sum_{t=1}^T \eps_t^2 g_{t,j}  \E [h_{t,j}|g_{t,j}] + 8\log(T) \leq  2^{-j}\eps^2 T + 8\log(T) %\leq \theta_j
    \end{align*}
    with probability at least $1-\frac{1}{T^2}$ using  \pref{eq: tmp 33}. 
    Similarly, 
    \begin{align*}
        \sum_{t=1}^T g_{t,j}h_{t,j} &\leq 2\sum_{t=1}^T g_{t,j}  \E [h_{t,j}|g_{t,j}] + 8\log(T) \leq  2^{-j} T + 8\log(T)
    \end{align*}
    with probability at least $1-\frac{1}{T^2}$. 
    Therefore, with probability at least $1-\frac{2}{T^2}$,  
    \begin{align*}
        \sqrt{\left(\sum_{t=1}^T  g_{t,j}h_{t,j}\right)\left(\sum_{t=1}^T \eps_t^2 g_{t,j}h_{t,j}\right)}
        &\leq \sqrt{2^{-2j}\eps^2 T^2 + 16\cdot 2^{-j}T\log T + 64\log^2 T} \\
        &\leq 2^{-j}\eps T + 4\sqrt{2^{-j}T\log T} + 8\log(T) \\
        &\leq \theta_j
    \end{align*}
    Therefore, the last term in \pref{eq: tmp 22} is bounded by $\frac{2}{T^2} T \leq \frac{2}{T}$. 

    Next, we deal with other terms in \pref{eq: tmp 22}. Again, by $\E[h_{t,j}| g_{t,j}]=2^{-j-1} g_{t,j}$, \pref{eq: tmp 22} implies 
    \begin{align*}
        &2^{-j-1}\E\left[\sum_{t=1}^T (f_t(a_t) - f_t(u^{\calA_t})) g_{t,j} \right]  \leq \E\left[c_1 \sqrt{2^{-j-1}  \sum_{t=1}^T g_{t,j}} +  c_2 \theta_j\max_{t\leq T} g_{t,j}\right] + \frac{2}{T}.  
    \end{align*}
    which implies after rearranging:
    \begin{align*}
        &\E\left[\sum_{t=1}^T (f_t(a_t) - f_t(u^{\calA_t})) g_{t,j} \right] \\
        &\leq \E\left[ c_1  \sqrt{\frac{1}{2^{-j-1}} \sum_{t=1}^T g_{t,j} } + \left( \frac{c_2\theta_j}{2^{-j-1}}\right)\max_{t\leq T} g_{t,j}\right] + \frac{2}{T2^{-j-1}} \\
        &\leq \E\left[c_1 \sqrt{  \sum_{t=1}^T \frac{2g_{t,j}}{w_t} } + 4c_2 \left( \eps T + \sqrt{\frac{T\log T}{2^{-j}}} + \log T\right) \max_{t\leq T} g_{t,j}\right]  
+ \frac{2}{T2^{-j-1}}. \tag{using that when $g_{t,j}=1$, $\frac{1}{2^{-j-1}}\leq \frac{2}{w_t}$, and the definition of $\theta_j$}
    \end{align*}
    Now, summing this inequality over all $j\in\{0, 1, \ldots, \lceil \log_2 T\rceil\}$, we get 
    \begin{align*}
        &\E\left[\sum_{t=1}^T (f_t(a_t) - f_t(u^{\calA_t})) \ind\left\{w_t >  \frac{1}{T}\right\}   \right] \\
        &\leq \order\left(\E\left[c_1\sqrt{N \sum_{t=1}^T \frac{1}{w_t}}+ Nc_2 \eps T + c_2 \sqrt{\frac{T\log T}{\min_{t\leq T} w_t}} + c_2 N\log T\right] + 1\right) \\
        &\leq \order\left(\E\left[(c_1 + c_2)\sqrt{T\log(T) \rho_T} \right] + c_2 \eps T\log T\right)
    \end{align*}
    where $N\leq \order(\log T)$ is the number of $\alg_j$'s that has been executed at least once. 

    On the other hand,  
    \begin{align*}
        \E\left[\sum_{t=1}^T (f_t(a_t) - f_t(u^{\calA_t}))\ind\left\{ 
w_t\leq \frac{1}{T} \right\}\right] < T \E\left[\ind\left\{ 
\rho_T\geq T \right\}\right]\leq \E\left[\rho_T\right]. 
    \end{align*}
    Combining the two parts and using the assumption $c_2 \geq 1$ finishes the proof. 
\end{proof}

\paragraph{Top Layer (from Known $\eps$ to Unknown $\eps$)}  In this subsection, we use the algorithm that we construct in \pref{thm: conversion thm} as a base algorithm, and further construct an algorithm with $\sqrt{T} + \eps$ regret under unknown $\eps$. The idea is to run multiple base algorithms, each with a different hypothesis on $\eps$; on top of them, run another multi-armed bandit algorithm to adaptively choose among them. The goal is to let the top-level bandit algorithm perform almost as well as the best base algorithm. This is the Corral idea outlined in \cite{agarwal2017corralling, foster2020adapting, luo2022corralling}, and the algorithm is presented in \pref{alg: corral}. 

\begin{theorem} \label{thm:corral}
    Using an algorithm constructed in \pref{thm: conversion thm} as a base algorithm, \pref{alg: corral} ensures $\Reg=\order\left(c_1'\sqrt{T\log^3 T} +  c_2' \eps T \right)$ without knowing $\eps$. 
\end{theorem}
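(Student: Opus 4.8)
The plan is to combine the stable misspecification-robust base algorithm from \pref{thm: conversion thm} with a master bandit algorithm run over a geometric grid of guesses for $\eps$, following the Corral template of \cite{agarwal2017corralling, foster2020adapting, luo2022corralling}. Concretely, set up $M = \order(\log T)$ base instances $\alg^{(1)}, \ldots, \alg^{(M)}$, where $\alg^{(i)}$ is the algorithm from \pref{thm: conversion thm} instantiated with guess $\eps^{(i)} = 2^{i}/T$ (so the grid ranges from $\eps = 1/T$ up to $\eps = 1$, covering all meaningful scales). The master is a standard Corral-style learner (log-barrier OMD / Hedge with importance-weighted losses, as in \pref{alg: corral}) that in each round picks an instance $I_t \sim q_t$, plays the policy suggested by $\alg^{(I_t)}$, feeds back the observed loss to that instance, and updates $q_t$. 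The key structural point enabling this is exactly \pref{proto: stable protocol}: from the viewpoint of instance $i$, it is only ``active'' with probability $q_{t,i}$ each round, which is precisely the setting \pref{thm: conversion thm} was designed for, with $\rho_T = 1/\min_{t} q_{t,i}$ controlled by the master's regularization.

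**The steps, in order.** First, invoke \pref{thm: conversion thm} to get, for each $i$, the bound $\E[\sum_t (f_t(a_t) - f_t(u^{\calA_t})) \mathbb{I}\{I_t = i\}] \le \order(\E[c_1' \sqrt{T \rho_{T,i}}] + c_2' \eps^{(i)} T)$, where $\rho_{T,i} = 1/\min_t q_{t,i}$. Second, apply the standard Corral master regret guarantee: the master's own regret for competing with instance $i^\star$ (the one whose guess is the smallest power of two exceeding the true $\eps$, so $\eps^{(i^\star)} \le 2\eps + 2/T$) is bounded by a term that trades off against $\rho_{T,i^\star}$ — specifically the log-barrier master gives a regret of the form $\order(\sqrt{MT}/\eta + \text{(stability)})$ together with a guarantee that $\E[\rho_{T,i^\star}]$ is controlled by roughly $M/\eta$ up to lower-order terms (this is the ``$\rho_T$ trade-off'' lemma in the Corral literature). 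Third, tune the master learning rate $\eta$ to balance $c_1'\sqrt{T \cdot M/\eta}$ against $\sqrt{MT}/\eta$; the standard choice $\eta \asymp \sqrt{M/(c_1'^2 T)}$ (up to logs) makes both terms $\order(c_1' \sqrt{M^{3/2} T})$ or so, and since $M = \order(\log T)$ and $c_1' = \Theta((c_1 + c_2)\sqrt{\log T}) = \Theta(d^2 \log^{3/2} T)$ from our instantiation, the $\sqrt{T}$ part of the regret is $\order(c_1' \sqrt{T \log^3 T})$ after collecting the various $\log$ factors. Fourth, add the misspecification overhead $c_2' \eps^{(i^\star)} T \le 2 c_2' \eps T + 2c_2'$, giving the claimed $\order(c_1'\sqrt{T \log^3 T} + c_2' \eps T)$. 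Finally, observe that no knowledge of $\eps$ was used anywhere: the grid is fixed in advance and the master adapts automatically to the best instance.

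**The main obstacle** will be handling the $\rho_{T,i}$ (i.e., $1/\min_t q_{t,i}$) term carefully, because the base-algorithm bound from \pref{thm: conversion thm} contains $\sqrt{T \rho_{T,i}}$ inside an expectation, and $\rho_{T,i}$ is itself a random quantity determined by the master's trajectory. The resolution — which is the technical heart of all Corral-type arguments — is that the log-barrier (or equivalently, the negative-entropy with the right regularization) master is specifically designed so that its regret bound against arm $i$ \emph{decreases} when $\rho_{T,i}$ is large, in a way that exactly compensates: one gets an inequality of the shape $\text{(master regret vs. } i) + c_1'\sqrt{T \E[\rho_{T,i}]} \le \order(c_1' \sqrt{MT}/\sqrt{\eta} + \ldots)$ after the trade-off, so the product never blows up. I would cite the relevant lemma from \cite{agarwal2017corralling} or \cite{luo2022corralling} rather than re-derive it, and the only genuinely new bookkeeping is substituting our specific $(c_1', c_2')$ and the grid size $M = \lceil \log_2 T \rceil + 1$, then collecting logarithmic factors to arrive at the stated $\log^3 T$ inside the square root. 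A secondary (minor) point to check is that the geometric grid for $\eps$ is fine enough: since regret depends on $\eps$ only linearly, a factor-$2$ grid loses at most a factor $2$ in the $\eps T$ term, which is absorbed into the constant.
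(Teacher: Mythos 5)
Your proposal is correct and follows essentially the same route as the paper: run the stable base algorithm from \pref{thm: conversion thm} over a $\lceil\log_2 T\rceil$-size geometric grid of guesses for $\eps T$, run a log-barrier Corral master on top, let the master's $\rho$-dependent negative term cancel the base algorithm's $c_1'\sqrt{T\rho_{T,i^\star}}$ cost, and collect logs. The only cosmetic differences are that the paper injects the cancellation via an explicit bonus sequence $r_{t,i}=c_1'(\sqrt{\rho_{t,i}T}-\sqrt{\rho_{t-1,i}T})$ fed to the FTRL rather than via the AM--GM/$\rho$-control lemma you cite, and your parenthetical suggesting negative-entropy would work equally well is not accurate (the cancellation hinges on the log-barrier); neither affects the validity of your argument.
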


The top-level bandit algorithm is an FTRL with log-barrier regularizer. We first state the standard regret bound of FTRL under log-barrier regularizer, whose proof can be found in, e.g., Theorem 7 of \cite{wei2018more}. 
\begin{lemma}\label{lem: FTRL basic property}
    The FTRL algorithm over a convex subset $\Omega$ of the $(M-1)$-dimensional simplex $\Delta(M)$:
    \begin{align*}
        w_t = \argmin_{w\in\Omega} \left\{ \left\langle 
w, \sum_{\tau=1}^{t-1} \ell_\tau \right\rangle + \frac{1}{\eta}\sum_{i=1}^M \log \frac{1}{w_i} \right\}
    \end{align*}
    ensures for all $u\in\Omega$, 
    \begin{align*}
        \sum_{t=1}^T \langle w - u, \ell_t\rangle \leq \frac{M\log T}{\eta} + \eta \sum_{t=1}^T \sum_{i=1}^M w_{t,i}^2 \ell_{t,i}^2
    \end{align*}
    as long as $\eta w_{t,i}|\ell_{t,i}|\leq \frac{1}{2}$ for all $t,i$. 
\end{lemma}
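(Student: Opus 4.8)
The plan is to instantiate the general FTRL bound \pref{lem: FTRL guarantee} with the fixed learning rate $\eta_t=\eta$ and the separable log-barrier regularizer $\psi(w)=\sum_{i=1}^M\log(1/w_i)$, and then bound the two resulting terms. Since $\eta_0=\eta_1=\dots=\eta_T=\eta$ is non-increasing, \pref{lem: FTRL guarantee} yields
\begin{align*}
    \sum_{t=1}^T\langle w_t-u,\ell_t\rangle
    \le \underbrace{\frac{\psi(u)-\min_{w\in\Omega}\psi(w)}{\eta}}_{\textbf{Penalty}}
    + \underbrace{\sum_{t=1}^T\left(\max_{w\in\Omega}\langle w_t-w,\ell_t\rangle-\frac{D_\psi(w,w_t)}{\eta}\right)}_{\textbf{Stability}}.
\end{align*}
For \textbf{Penalty}, every $w\in\Omega\subseteq\Delta(M)$ satisfies $w_i\le 1$ and hence $\psi(w)\ge 0$, so $\min_{w\in\Omega}\psi(w)\ge0$; and the relevant comparator $u$ (whose coordinates are at least $1/T$, which is the standard restriction on $\Omega$ in the Corral setting) obeys $\psi(u)=\sum_i\log(1/u_i)\le M\log T$. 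Thus $\textbf{Penalty}\le M\log T/\eta$.

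The substance of the proof is the \textbf{Stability} term, and the key idea is to exploit that $\psi$ is separable so the per-round stability reduces to one dimension. Since $D_\psi(w,w_t)=+\infty$ whenever some coordinate of $w$ vanishes while the FTRL iterate $w_t$ stays strictly interior, the maximum over $\Omega$ is unchanged by relaxing to $\Omega\cap\mathbb{R}^M_{>0}$, and this is upper bounded by the unconstrained supremum over $\mathbb{R}^M_{>0}$, which splits coordinatewise:
\begin{align*}
    \max_{w\in\Omega}\langle w_t-w,\ell_t\rangle-\frac{D_\psi(w,w_t)}{\eta}
    \ \le\ \sum_{i=1}^M\ \sup_{w_i>0}\left((w_{t,i}-w_i)\ell_{t,i}-\frac{D_{\psi_i}(w_i,w_{t,i})}{\eta}\right),
\end{align*}
where $\psi_i(x)=\log(1/x)$ and $D_{\psi_i}(w_i,w_{t,i})=\tfrac{w_i}{w_{t,i}}-1-\log\tfrac{w_i}{w_{t,i}}$. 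After the change of variables $z=w_i/w_{t,i}>0$ and $a=w_{t,i}\ell_{t,i}$, the inner objective is $h(z)=-a(z-1)-\tfrac1\eta(z-1-\log z)$, which is strictly concave with unique maximizer $z^\star=1/(1+\eta a)$; this is admissible because the hypothesis $\eta|a|=\eta\,w_{t,i}|\ell_{t,i}|\le\tfrac12$ forces $1+\eta a\in[\tfrac12,\tfrac32]$. A short computation gives $h(z^\star)=a-\tfrac1\eta\log(1+\eta a)$, and the elementary bound $\log(1+x)\ge x-x^2$ for $x\ge-\tfrac12$ then yields $h(z^\star)\le\eta a^2=\eta\,w_{t,i}^2\ell_{t,i}^2$. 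Summing over $i$ and over $t$ gives $\textbf{Stability}\le\eta\sum_{t=1}^T\sum_{i=1}^M w_{t,i}^2\ell_{t,i}^2$; combining with the penalty bound finishes the proof.

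I do not expect a genuine obstacle here: the only points requiring care are (i) the relaxation from the constrained maximum to the separable unconstrained supremum, which rests on $D_\psi$ blowing up on $\partial\Delta(M)$ while $w_t$ is interior, and (ii) checking the admissibility $z^\star\in[\tfrac12,\tfrac32]$ and the scalar inequality $\log(1+x)\ge x-x^2$ on $[-\tfrac12,\infty)$ — both immediate consequences of the stated step-size condition $\eta\,w_{t,i}|\ell_{t,i}|\le\tfrac12$. The separability of the log-barrier is what makes the stability analysis essentially a one-variable calculus problem.
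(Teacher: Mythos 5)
Your proof is correct and is the standard log-barrier FTRL argument; the paper does not prove this lemma itself but cites Theorem~7 of \cite{wei2018more} (from which the result is taken with the same separable-regularizer, coordinatewise stability analysis). Two tiny remarks. First, $z^\star=1/(1+\eta a)$ with $1+\eta a\in[\tfrac12,\tfrac32]$ lies in $[\tfrac23,2]$, not $[\tfrac12,\tfrac32]$ as written in your closing summary; this is immaterial since all that is needed is $z^\star>0$. Second, you are right to flag that the $M\log T/\eta$ penalty bound only holds because the comparator lives in $\Omega\subseteq\{w:w_i\ge 1/T\}$, a constraint that is present in \pref{alg: corral} but left implicit in the lemma statement; it is worth noting this explicitly since the lemma is otherwise false for a general convex $\Omega\subseteq\Delta(M)$.
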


\begin{algorithm}[t]
    \caption{(A Variant of) Corral}\label{alg: corral}
    \textbf{Initialize}: a log-barrier algorithm with each arm being an instance of an algorithm satisfying the guarantee in \pref{thm: conversion thm}. The hypothesis on $\eps T$  is set to $2^{i}$ for arm $i$ ($i=1,2,\ldots,M\triangleq \lceil \log_2 T\rceil$). 

    \textbf{Initialize}: $\rho_{0,i} = M,\; \forall i$. \\

    \For{$t=1,2,\ldots, T$}{
        Let 
        \begin{align*}
            w_t = \argmin_{w\in \Delta(M), w_i\geq \frac{1}{T},\forall i}\left\{ \left\langle w, \sum_{\tau=1}^{t-1} 
(\hat{z}_\tau - r_\tau) \right\rangle + \frac{1}{\eta} \sum_{i=1}^M \log\frac{1}{w_i} \right\}
        \end{align*}
        where $\eta = \frac{1}{4c_1'\sqrt{ T}}$. \\
        For all $i$, send $w_{t,i}$ to instance $i$. \\
        Draw $i_t\sim w_t$. \\
        Execute the $a_t$ output by instance $i_t$\\
        Receive the loss $z_{t,i_t}$ for action $a_t$ (whose expectation is $f_t(a_t)$)  and send it to instance $i_t$.  \\
        Define for all $i$:
        \begin{align*}
             \hat{z}_{t,i} &= \frac{z_{t,i}\ind[i_t=i]}{w_{t,i}}, \\
             \rho_{t,i} &= \min_{\tau\leq t}\frac{1}{w_{\tau,i}}, \\
             r_{t,i} &= c_1'\left(\sqrt{\rho_{t,i} T} - \sqrt{\rho_{t-1,i} T}\right). 
        \end{align*} 
    }
\end{algorithm}

\begin{proof}[Proof of \pref{thm:corral}]
The Corral algorithm is essential an FTRL with log-barrier regularizer. To apply \pref{lem: FTRL basic property}, we first verify the condition $\eta w_{t,i}|\ell_{t,i}|\leq \frac{1}{2}$ where $\ell_{t,i}=\hat{z}_{t,i}-r_{t,i}$. By our choice of $\eta$, 
\begin{align*}
    \eta w_{t,i} |\hat{z}_{t,i}| 
    &\leq \eta z_{t,i} \leq \frac{1}{4}, \tag{because $c_1'\geq 1$}\\
    \eta w_{t,i} r_{t,i} 
    &= \eta c_1'\sqrt{T} w_{t,i}(\sqrt{\rho_{t,i}} - \sqrt{\rho_{t-1,i}}). 
\end{align*}
The right-hand side of the last equality is non-zero only when $\rho_{t,i} > \rho_{t-1,i}$, implying that $\rho_{t,i} = \frac{1}{w_{t,i}}$. Therefore, we further bound it by 
\begin{align}
    \eta w_{t,i}r_{t,i} 
    &\leq \eta c_1'\sqrt{ T} \frac{1}{\rho_{t,i}}(\sqrt{\rho_{t,i}} - \sqrt{\rho_{t-1,i}})\nonumber \\
    &= \eta c_1 '\sqrt{T} \left(\frac{1}{\sqrt{\rho_{t,i}}} - \frac{\sqrt{\rho_{t-1,i}}}{\rho_{t,i}}\right) \nonumber \\
    &\leq \eta c_1 '\sqrt{T} \left(\frac{1}{\sqrt{\rho_{t-1,i}}} - \frac{1}{\sqrt{\rho_{t,i}}}\right)  \tag{$\frac{1}{\sqrt{a}} - \frac{\sqrt{b}}{a}\leq \frac{1}{\sqrt{b}} - \frac{1}{\sqrt{a}}$ for $a, b > 0$} \\
    & \label{eq: from the middle}\\
    &\leq \eta c_1 '\sqrt{T}  \tag{$\rho_{t,i}\geq 1$}\nonumber \\
    &= \frac{1}{4}   \nonumber \tag{definition of $\eta$}
\end{align}
which can be combined to get the desired property $\eta w_{t,i}|\hat{z}_{t,i} - r_{t,i}|\leq \frac{1}{2}$. 

Hence, by the regret guarantee of log-barrier FTRL (\pref{lem: FTRL basic property}), we have 
\begin{align*}
    &\E\left[\sum_{t=1}^T (z_{t,i_t} - z_{t,i^\star}) \right]  \\
    &\leq \order\Bigg(\frac{M\log T}{\eta} + \eta  \E\Bigg[\underbrace{\sum_{t=1}^T \sum_{i=1}^M w_{t,i}^2 (\hat{z}_{t,i} - r_{t,i})^2 }_{\textbf{term}_1}\Bigg] \Bigg) + \E\Bigg[\underbrace{\sum_{t=1}^T 
\left(\sum_{i=1}^M w_{t,i} r_{t,i} - r_{t,i^\star}\right)}_{\textbf{term}_2} \Bigg] 
\end{align*}
where $i^\star$ is the smallest $i$ such that $2^{i}$ upper bounds the true total misspecification amount $\eps T$. 

\textbf{Bounding $\textbf{term}_1$}: 
\begin{align*}
    \textbf{term}_1 \leq 2\eta \sum_{t=1}^T \sum_{i=1}^M w_{t,i}^2 (\hat{z}_{t,i}^2 + r_{t,i}^2) 
\end{align*}
where 
\begin{align*}
2\eta \sum_{t=1}^T\sum_{i=1}^M w_{t,i}^2\hat{z}_{t,i}^2 = 2\eta \sum_{t=1}^T \sum_{i=1}^M z_{t,i}^2 \ind\{i_t=i\} \leq \order(\eta T) 
\end{align*}
and
\begin{align*}
    2\eta \sum_{t=1}^T \sum_{i=1}^M  w_{t,i}^2 r_{t,i}^2 &\leq 4\eta \sum_{t=1}^T \sum_{i=1}^M  (c_1'\sqrt{ T})^2 \left(\frac{1}{\sqrt{\rho_{t-1,i}}} - \frac{1}{\sqrt{\rho_{t,i}}}\right)^2 \tag{continue from \pref{eq: from the middle}}\\
    &\leq 4\eta c_1'^2 T \times \sum_{t=1}^T \sum_{i=1}^M \left(\frac{1}{\sqrt{\rho_{t-1,i}}} - \frac{1}{\sqrt{\rho_{t,i}}}\right) \tag{$\frac{1}{\sqrt{\rho_{t-1,i}}} - \frac{1}{\sqrt{\rho_{t,i}}} \leq 1$ and $1-a\leq -\ln a$} \\
    &\leq 4\eta c_1'^2 T M^{\frac{3}{2}}. \tag{telescoping and using $\rho_{0,i} =M$ and $\rho_{T,i} \leq T$}
\end{align*}
\textbf{Bounding $\textbf{term}_2$}: 
\begin{align*}
    \textbf{term}_2&= \sum_{t=1}^T \sum_{i=1}^M w_{t,i} r_{t,i} - \sum_{t=1}^T r_{t,i^\star} \\
    &\leq  c_1'\sqrt{T} \sum_{t=1}^T \sum_{i=1}^M  \left(\frac{1}{\sqrt{\rho_{t-1,i}}} - \frac{1}{\sqrt{\rho_{t,i}}}\right)  - \left(c_1'\sqrt{\rho_{T,i^\star} T}  - c_1'\sqrt{\rho_{0,i^\star} T}  \right) \\
    \tag{continue from \pref{eq: from the middle} and using $1-a\leq -\ln a$}\\
    &\leq \order\left(c_1'\sqrt{ T}M^{\frac{3}{2}}\right)  - c_1'\sqrt{\rho_{T,i^\star} T}.
\end{align*}
Combining the two terms and using $\eta=\Theta\left(\frac{1}{c_1'\sqrt{T} + c_2'}\right)$, $M=\Theta(\log T)$, we get 
\begin{align}
     \E\left[\sum_{t=1}^T (f_t(a_t) - z_{t,i^\star}) \right] 
 &= \E\left[\sum_{t=1}^T (z_{t,i_t} - z_{t,i^\star}) \right]   \nonumber \\
 &= \order\left(c_1' \sqrt{T\log^3 T} \right)  - \E\left[c_1'\sqrt{\rho_{T,i^\star} T}  \right]   \label{eq: top layer guarantee}
\end{align}

On the other hand, by the guarantee of the base algorithm (\pref{thm: conversion thm}) and that $\eps T\in[2^{i^\star-1}, 2^{i^\star}]$, we have
\begin{align}
    \E\left[\sum_{t=1}^T (z_{t,i^\star} - f_t(u^{\calA_t})\right]\leq\E\left[ c_1'\sqrt{\rho_{T,i^\star}  T} \right]  + c_2' \eps T. 
 \label{eq: bottom layer guarantee} 
\end{align}
   Combining \pref{eq: top layer guarantee} and \pref{eq: bottom layer guarantee}, we get 
   \begin{align*}
       \E\left[\sum_{t=1}^T (f_t(a_t) - f_t(u^{\calA_t}))\right] \leq \order\left( c_1'\sqrt{ T\log^3 T} \right) + c_2' \eps T,
   \end{align*}
   which finishes the proof.
\end{proof}

\begin{proof}[Proof of \pref{thm: misspeicifcation thm}]
   As shown in \pref{app: known misspecification}, our \pref{alg: FTRL} can be adapted to satisfy \pref{def: base alg} with $c_1 = \Theta(d^2\log T)$ and $c_2 = \Theta(\sqrt{d})$. By a concatenation of \pref{thm: conversion thm} and \pref{thm:corral}, we conclude that there is an algorithm that achieves 
   \begin{align*}
       \order\left((c_1+c_2)\sqrt{T}\log^2 T + c_2 \eps T\log T\right) = \order\left(d^2\sqrt{T}\log^2 T + \sqrt{d}\eps T\log T\right).
   \end{align*}
   regret under unknown $\eps$. 
\end{proof}

\section{Analysis for Linear EXP4} \label{app: EXP4}

\begin{proof}[Proof of \pref{thm: exp4 guarantee}]
    We first show that 
    \begin{align}
    \forall \pi\in\Pi:\ \  \Reg(\pi) \triangleq  \E\left[\sum_{t=1}^T 
a_t^\top y_t - \sum_{t=1}^T \pi(\calA_t)^\top y_t \right] \leq \order\left(\gamma T + \frac{\ln|\Pi|}{\eta}+\eta d T\right)\,.  \label{eq: Reg for Pi}
\end{align}
    The magnitude of the loss is bounded by 
    \begin{align*}
        |\hatell_{t,\pi}| 
        & = \left|\left\langle \pi(\calA_t), \tildeH_t^{-1}a_t\ell_t \right\rangle\right|  \\
        &\leq \norm{ \pi(\calA_t)}_{\tildeH_t^{-1}}\norm{a_t}_{\tildeH_t^{-1}}\\&\leq \frac{1}{\gamma}\norm{ \pi(\calA_t)}_{G_t^{-1}}\norm{a_t}_{G_t^{-1}}\leq \frac{d}{\gamma}\,.
    \end{align*}
    If $\gamma \geq 2d\eta$, then we have $|\hatell_{t,\pi}|\leq \frac{1}{2}$ and we can use the standard regret bound of exponential weights: 
    \begin{align*}
        \forall \pi\in\Pi:\qquad \Reg(\pi) \leq \gamma T + \frac{\ln|\Pi|}{\eta} +\eta\sum_{t=1}^{T}\E\left[\E_{a_t\sim p_t}\left[\sum_{\pi\in\Pi}P_{t,\pi}\hatell_{t,\pi}^2\right]\right]\,.
    \end{align*}
    Let $H_t=\E_{a\sim p_t}[aa^\top]$. Then we have $\tildeH_t^{-1}\preceq \frac{1}{1-\gamma}H_t^{-1}$, and thus
    \begin{align*}
        \E_{a_t\sim p_t}\left[\sum_{\pi\in\Pi}P_{t,\pi}\hatell_{t,\pi}^2\right]
        &\leq \E_{a_t\sim p_t} \left[ 
\sum_{\pi\in\Pi} P_{t,\pi} \cdot  \langle \pi(\calA_t), \tildeH_t^{-1}a_t\rangle^2 \right]
             \\
        &= \E_{a_t\sim p_t}\E_{a\sim p_t} \left[\langle a, \tildeH_t^{-1} a_t\rangle^2\right]  \tag{by the definition of $p_{t,a}$} \\
        &\leq \frac{1}{(1-\gamma)^2}\operatorname{Tr}\left(H_tH_t^{-1}H_tH_t^{-1}\right)=\order(d)\,.
    \end{align*}
    Combining all proves \pref{eq: Reg for Pi}. 

   Next, we show that there exists $\theta\in\Theta$ such that 
    \begin{align}
         \E_{\calA\sim D} \left[ 
\sum_{t=1}^T  (\pi_\theta(\calA) - \pi^\star(\calA))^\top y_t \right] \leq \order(1).  \label{eq: suffice to show}
    \end{align}

    Let $\hat{\theta}$ be the closest element in $\Theta$ to $\sum_{t=1}^T y_t$. By the definition of $\Theta$ and the assumption that $\|y_t\|\leq 1$, we have $\norm{\hat{\theta} - \sum_{t=1}^T y_t}\leq \epsilon$. Thus, for any $\calA$, 
    \begin{align*}
        &\sum_{t=1}^T   (\pi_{\hat{\theta}}(\calA) - \pi^\star(\calA))^\top y_t 
        \leq \sum_{a\in\calA}  (\pi_{\hat{\theta}}(\calA) - \pi^\star(\calA))^\top \hat{\theta} + \epsilon \leq \epsilon
    \end{align*}
    where the last inequality is by the fact that $\pi_{\hat{\theta}}(\calA)=\argmin_{a\in\calA} a^\top \hat{\theta}$. Taking expectation over $\calA$ gives \pref{eq: suffice to show}. 

Finally, combining \pref{eq: Reg for Pi} and \pref{eq: suffice to show}, choosing $\epsilon=1$ and $\gamma = 2d\eta = 2d\sqrt{\frac{\log T}{T}}$, we get 
\begin{align*}
    \Reg 
    &= \E\left[\sum_{t=1}^T 
a_t^\top y_t - \sum_{t=1}^T \pi^\star(\calA_t)^\top y_t \right] \\
&= \E\left[\sum_{t=1}^T 
a_t^\top y_t - \sum_{t=1}^T \pi_{\hat\theta}(\calA_t)^\top y_t \right] +  \E_{\calA\sim D} \left[ 
\sum_{t=1}^T  (\pi_{\hat\theta}(\calA) - \pi^\star(\calA))^\top y_t \right] \\
    &= \order\left(\gamma T + \frac{\ln ((2T)^d)}{\eta} + \eta d T + 1\right) \\
    &= \order\left(d\sqrt{T\log T}\right),  
\end{align*}
finishing the proof. 
\end{proof}
\section{Comparison with \cite{dai2023refined} and \cite{sherman2023improved}} \label{app: comparison}

We state the exponential weight algorithm adopted by \cite{luo2021policy, dai2023refined, sherman2023improved} in \pref{alg: exponential weight}, which is an algorithm that we know to achieve the prior-art regret bound in our setting (though they studied a more general MDP setting). 

Their algorithm proceeds in \emph{epochs} (indexed by $k$), where every epoch consists of $W$ rounds. The policy on action set $\calA$ in the $k$-th epoch is defined as 
\begin{align*}
    p^{\calA}_k(a) \propto \exp\left(-\eta \sum_{s=1}^{k-1} (a^\top \hat{y}_s - b_s(a)) \right)
\end{align*}
where $\hat{y}_k$ is the loss estimator for epoch $k$, and $b_k(a)$ is a (non-linear) bonus. In all $W$ rounds in epoch $k$, the same policy is executed. The samples obtained in these $W$ rounds are randomly divided into two halfs. One half is used to estimate the covariance matrix $\hatSigma_k$, and the other half is used to construct the loss estimator $\hat{y}_k$ (see \pref{line: exponential construction} of \pref{alg: exponential weight}). 

\setcounter{AlgoLine}{0}
\begin{algorithm}
    \caption{Exponential weights with magnitude-reduced loss estimators} \label{alg: exponential weight}

    \nl 
    \For{$k=1,2,\ldots, \frac{T}{W}$}{
    \nl    For all $\calA$, define 
        \begin{align*}
            p_k^{\calA}(a) = \frac{\exp\left( -\eta \sum_{s=1}^{k-1} 
(a^\top \hat{y}_{s} - b_s(a)) \right)}{\sum_{a'\in\calA} \exp\left(-\eta \sum_{s=1}^{k-1} (a'^\top \hat{y}_s  - b_s(a'))\right) } \qquad \text{for all } a\in\calA. 
        \end{align*}
     \nl Randomly partition $\{(k-1)W+1, \ldots, kW\}$ into two equal parts $\calT_k, \calT_{k}'$. \\
     \ \\
     \nl \For{$t=(k-1)W+1, \ldots, kW$}{
         receive $\calA_t$, sample $a_t\sim p_k^{\calA_t}$, and receive $\ell_t$.  
     }
     \ \\
     \nl Define
     \begin{align*}
         \hatSigma_k &= \beta I + \frac{1}{|\calT_k|} \sum_{t\in\calT_k} a_t a_t^\top  \\
         \hat{y}_k &=  \hatSigma_k^{-1} \left(\frac{1}{|\calT_k'|}\sum_{t\in\calT_{k}'}a_t\ell_t\right) \\
         b_k(a) &= \alpha \|a\|_{\hatSigma_k^{-1}}. 
     \end{align*}
     \label{line: exponential construction}
    }
    
\end{algorithm}

\subsection{Regret Analysis Sketch}
The regret analysis starts with a standard decomposition that is similar to ours. We abuse the notation by defining $y_k = \frac{1}{W}\sum_{t=(k-1)W}^{kW} y_t$. Then
\begin{align*}
    \Reg&= W\E\left[\sum_{k=1}^{T/W} p^{\calA_0}_k(a) \langle 
 a- u^{\calA_0}, y_k \rangle\right] \\
 &= \underbrace{W\E\left[\sum_{k=1}^{T/W} p^{\calA_0}_k(a) \Big(\langle 
 a, \hat{y}_k \rangle - b_k(a)\Big) - \Big(u^{\calA_0} - b_k(u^{\calA_0})\Big) \right] }_{\textbf{EW-Reg}} + \underbrace{W \E\left[\sum_{k=1}^{T/W} p_k^{\calA_0}(a)b_k(a) - b_k(u^{\calA_0}) \right]}_{\textbf{Bonus}} \\
 &\qquad \qquad + \underbrace{W \E\left[ \sum_{k=1}^{T/W} p_k^{\calA_0}(a) \langle a-u^{\calA_0}, y_k - \hat{y}_k \rangle \right] }_{\textbf{Bias}}.
 \end{align*}
Bounding the regret term follows the standard analysis of exponential weight: 
\begin{align*}
    \textbf{EW-Reg} &\leq W\E\left[\frac{\ln|\calA_0|}{\eta} + \eta \sum_{k=1}^{T/W} \sum_{a\in\calA_0} p_k^{\calA_0}(a) \langle a, \hat{y}_k \rangle^2 + \eta \sum_{k=1}^{T/W}\sum_{a\in\calA_0} p_k^{\calA_0}(a) b_k(a)^2 \right] \\
    &\leq W\E\left[\frac{\ln|\calA_0|}{\eta} + \eta \sum_{k=1}^{T/W} \sum_{a\in\calA_0} p_k^{\calA_0}(a) a^\top \hatSigma_k^{-1} H_k \hatSigma_k^{-1} a+ \eta \sum_{k=1}^{T/W} \frac{\alpha^2}{\beta} \right]
\end{align*}
where $H_k = \E_{\calA\sim D}\E_{a\sim p^{\calA}_k}[aa^\top]$. 
Then they use the following fact to bound the stability term: as long as $W\geq \frac{d}{\beta^2}$, it holds with high probability that $\hatSigma_k^{-1} H_k \hatSigma_k^{-1} \preceq 2\hatSigma_k^{-1}$. Thus \textbf{EW-Reg} can be further bounded by 
\begin{align*}
    \textbf{EW-Reg} &\lesssim W\left(\frac{\ln |\calA_0|}{\eta} + \eta \E\left[\sum_{k=1}^{T/W} 
\sum_{a\in\calA_0} p_k^{\calA_0}(a) \|a\|^2_{\hatSigma_k^{-1}}\right] + \eta \frac{T}{W}\frac{\alpha^2}{\beta} \right) \\
&\leq \frac{W\ln|\calA_0|}{\eta} + \eta dT + \eta T\frac{\alpha^2}{\beta}. 
\end{align*}
By the definition of the bonus function $b_t$, it holds that
\begin{align*}
   \textbf{Bonus} = W\E\left[\alpha \sum_{k=1}^{T/W} \sum_{a\in\calA_0} p_k^{\calA_0}(a)\|a\|_{\hatSigma_k^{-1}} \right] - W\E\left[\alpha \sum_{k=1}^{T/W} \|u^{\calA_0}\|_{\hatSigma_k^{-1}} \right].
\end{align*}
Finally, the bias term can be bounded as follows: 
\begin{align*}
    \textbf{Bias} &= W\E\left[\sum_{k=1}^{T/W} p^{\calA_0}_k(a)(a - u^{\calA_0})^\top (y_k - \hatSigma_k^{-1}H_k y_k) \right] \\
    &= W\E\left[\sum_{k=1}^{T/W} p^{\calA_0}_k(a)(a - u^{\calA_0})^\top \hatSigma_k^{-1} (\hatSigma_k - H_k)y_k \right] \\
    &\leq W\E\left[\sum_{k=1}^{T/W} p^{\calA_0}_k(a)\|a - u^{\calA_0}\|_{\hatSigma_k^{-1}}  \|(\hatSigma_k - H_k)y_k\|_{\hatSigma_k^{-1}} \right].
\end{align*}
The bias here has a similar form as in our case. They use the following fact to bound the bias: as long as $W\geq \frac{d}{\beta^2}$, 
it holds that $\|(\hatSigma_k - H_k)y_k\|_{\hatSigma_k^{-1}}\leq \sqrt{\beta d}$. Therefore, the bias can further be upper bounded by 
\begin{align*}
   \textbf{Bias}\leq W\E\left[\sqrt{\beta d}\sum_{k=1}^{T/W} \sum_{a\in\calA_0} p^{\calA_0}_k(a)\|a\|_{\hatSigma_k^{-1}} + \sqrt{\beta d} \sum_{k=1}^{T/W} \|u^{\calA_0}\|_{\hatSigma_k^{-1}} \right]. 
\end{align*}

Combining the three parts, we get that the overall regret is of order
\begin{align*}
   \E\left[\frac{W\ln|\calA_0|}{\eta} + \eta dT + \eta T\frac{\alpha^2}{\beta} + W(\alpha + \sqrt{\beta d}) \sum_{k=1}^{T/W} \sum_{a\in\calA_0} p_k^{\calA_0}(a)\|a\|_{\hatSigma_k^{-1}} + W(\sqrt{\beta d} - \alpha) \sum_{k=1}^{T/W} \|u^{\calA_0}\|_{\hatSigma_k^{-1}}\right]. 
\end{align*}
Choosing $\alpha\approx \sqrt{\beta d}$, we further bound it by
\begin{align*}
   &\E\left[\frac{W\ln|\calA_0|}{\eta} + \eta dT + W\sqrt{\beta d} \sum_{k=1}^{T/W}\sum_{a\in\calA_0} p^{\calA_0}_k(a)\|a\|_{\hatSigma_k^{-1}}\right] \\
   &\leq \E\left[\frac{W\ln|\calA_0|}{\eta} + \eta dT +  W\sqrt{\beta d} \sum_{k=1}^{T/W} \sqrt{\sum_{a\in\calA_0} p^{\calA_0}_k(a)\|a\|_{\hatSigma_k^{-1}}^2} \right]\\
   &\leq \frac{W\ln|\calA_0|}{\eta} + \eta dT + \sqrt{\beta}dT. 
\end{align*}
Recall the constraint $W\geq \frac{d}{\beta^2}$. Choosing $W=\frac{d}{\beta^2}$ gives 
\begin{align}
    \frac{d\ln|\calA_0|}{\eta \beta^2} + \eta dT + \sqrt{\beta } dT \label{eq: last eq}
\end{align}
which gives $d(\ln|\calA_0|)^{\frac{1}{6}}T^{\frac{5}{6}}$ with the optimally chosen $\eta$ and $\beta$. 

\paragraph{Remark} Due to the restrictions on the magnitude of the loss estimator required by the exponential weight algorithm, there is actually another constraint $\frac{\eta}{\beta}\leq 1$, which 
makes \pref{eq: last eq} be $d(\ln|\calA_0|)^{\frac{1}{7}}T^{\frac{6}{7}}$ at best. This is exactly the bound obtained by \cite{sherman2023improved}. A more sophisticated way to construct $\hat{y}_k$ developed by \cite{dai2023refined} removes this additional requirement and allows a bound of $d(\ln|\calA_0|)^{\frac{1}{6}}T^{\frac{5}{6}}$. The sub-optimal bound $T^{\frac{8}{9}}$ reported in \cite{dai2023refined} is due to issues related to MDPs, which are not presented in the contextual bandit case here. 

\iffalse

\subsection{Implication of Our Result on Linear MDPs}
There are two key element to improve the $T^{\frac{5}{6}}$ regret bound to $\sqrt{T}$. The first is a tighter concentration bound analysis used in bounding the bias term, and the second is by data reusing. Applying the data reusing technique to MDPs is trickier because the state distribution changes with the policy of the learner. Therefore, it does not enjoy the i.i.d. context property as in our case. However, the improvement in concentration bound can be directly applied to MDPs. As shown in \pref{lem: local norm concentration for matrix 2}, to bound the bias term by $\sqrt{\beta d}$, only $\order\left(\frac{1}{\beta}\right)$ samples are required. This allows us to improve \pref{eq: last eq} to the order of 
\begin{align*}
    \frac{d}{\eta \beta} + \eta dT + \sqrt{\beta}dT
\end{align*}
and obtain a $\order(dT^{\frac{3}{4}})$ bound for linear MDPs (omitting dependencies on the horizon length $H$). This would improve the prior art $dT^{\frac{6}{7}}$ obtained by \cite{sherman2023improved}. Since our bound for the bias term is nearly tight, to improve the bound from $T^{\frac{3}{4}}$ to $\sqrt{T}$, the effort should be dedicated to reusing data. 

\fi

\end{document}